\theoremstyle{plain}
\newtheorem{theorem}{Theorem}[section]
\newtheorem{lemma}[theorem]{Lemma}
\theoremstyle{definition}
\newtheorem{assumption}[theorem]{Assumption}
\theoremstyle{remark}
\icmltitlerunning{Improving Generalization in Federated Learning via Momentum-Based
	Stochastic Controlled Weight Averaging}
\begin{document}

\twocolumn[
\icmltitle{Improving Generalization in Federated Learning with Highly Heterogeneous Data via Momentum-Based Stochastic Controlled Weight Averaging}




\begin{icmlauthorlist}
	\icmlauthor{Junkang Liu}{tju}
	\icmlauthor{Yuanyuan Liu$^*$}{xdu}
	\icmlauthor{Fanhua Shang$^*$}{tju}
	\icmlauthor{Hongying Liu$^*$}{tju2,pcl}
	\icmlauthor{Jin Liu}{xdu2}
	\icmlauthor{Wei Feng}{tju}
\end{icmlauthorlist}

\icmlaffiliation{tju}{College of Intelligence and Computing, Tianjin University, Tianjin, China}
\icmlaffiliation{tju2}{Medical School, Tianjin University, Tianjin, China}
\icmlaffiliation{pcl}{Peng Cheng Lab, Shenzhen, China}
\icmlaffiliation{xdu}{School of Artificial Intelligence, Xidian University, Xi'an, China}
\icmlaffiliation{xdu2}{School of Cyber Engineering, Xidian University, Xi'an, China}

\icmlcorrespondingauthor{Yuanyuan Liu}{yyliu@xidian.edu.cn}
\icmlcorrespondingauthor{Fanhua Shang}{fhshang@tju.edu.cn}
\icmlcorrespondingauthor{Hongying Liu}{hyliu2009@tju.edu.cn}




\vskip 0.2in
]



\printAffiliationsAndNotice{}  
\vspace{-3mm}
\begin{abstract}
For federated learning (FL) algorithms such as FedSAM, their generalization capability is crucial for real-word applications. In this paper, we revisit the generalization problem in FL and investigate the impact of data heterogeneity on FL generalization. We find that FedSAM usually performs worse than FedAvg in the case of highly heterogeneous data, and thus propose a novel and effective federated learning algorithm with Stochastic Weight Averaging (called \texttt{FedSWA}), which aims to find flatter minima in the setting of highly heterogeneous data. Moreover, we introduce a new momentum-based stochastic controlled weight averaging FL algorithm (\texttt{FedMoSWA}), which is designed to better align local and global models. 
 Theoretically, we provide both convergence analysis and generalization bounds for \texttt{FedSWA} and \texttt{FedMoSWA}. We also prove that the optimization and generalization errors of \texttt{FedMoSWA} are smaller than those of their counterparts, including FedSAM and its variants. Empirically, experimental results on CIFAR10/100 and Tiny ImageNet demonstrate the superiority of the proposed algorithms compared to their counterparts. Open source code at: \url{https://github.com/junkangLiu0/FedSWA}.
\end{abstract}

\section{Introduction}
\label{sec:intro}
\begin{figure}[tb]
	\begin{minipage}[b]{0.155\textwidth}
		\centering
		\subcaptionbox{ FedAvg (0.1)}{\includegraphics[width=\textwidth]{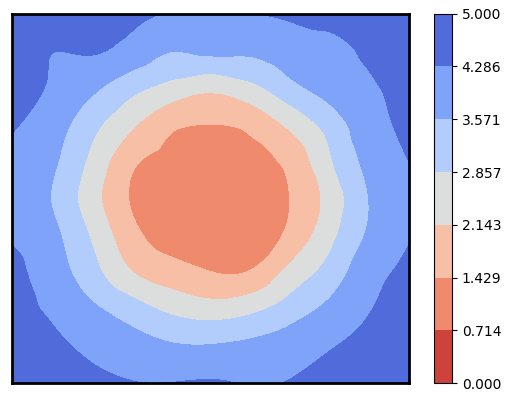}}
	\end{minipage}
	\begin{minipage}[b]{0.155\textwidth}
		\centering
		\subcaptionbox{FedSAM (0.1)}{\includegraphics[width=\textwidth]{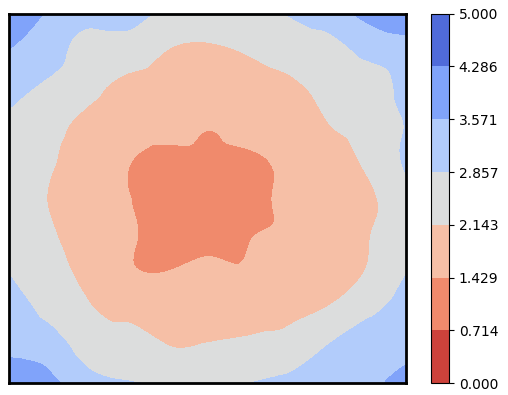}}
	\end{minipage}
	\begin{minipage}[b]{0.155\textwidth}
		\centering
		\subcaptionbox{\texttt{FedSWA} (0.1)}{\includegraphics[width=\textwidth]{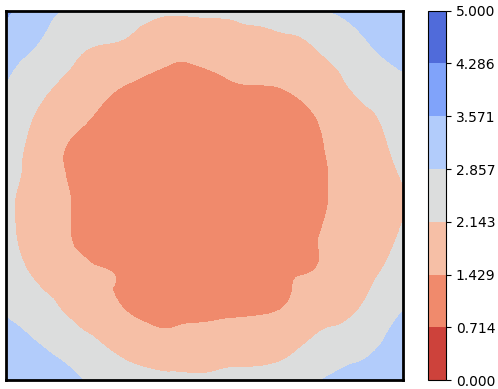}}
	\end{minipage}
	\begin{minipage}[b]{0.15\textwidth}
		\centering
		\subcaptionbox{FedSAM}{\includegraphics[width=\textwidth]{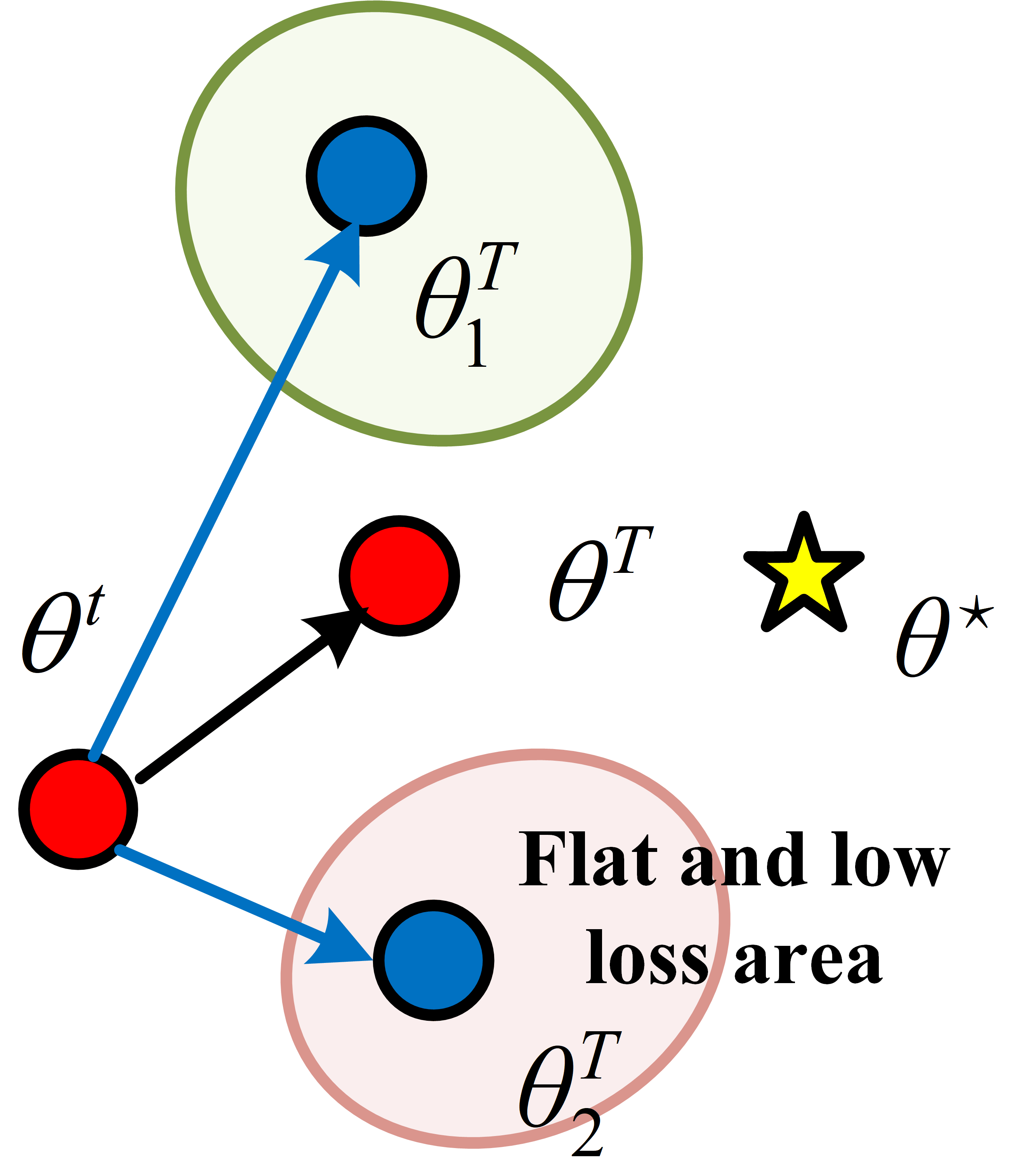}}
	\end{minipage}
	\begin{minipage}[b]{0.15\textwidth}
		\centering
		\subcaptionbox{\texttt{FedSWA}}{\includegraphics[width=\textwidth]{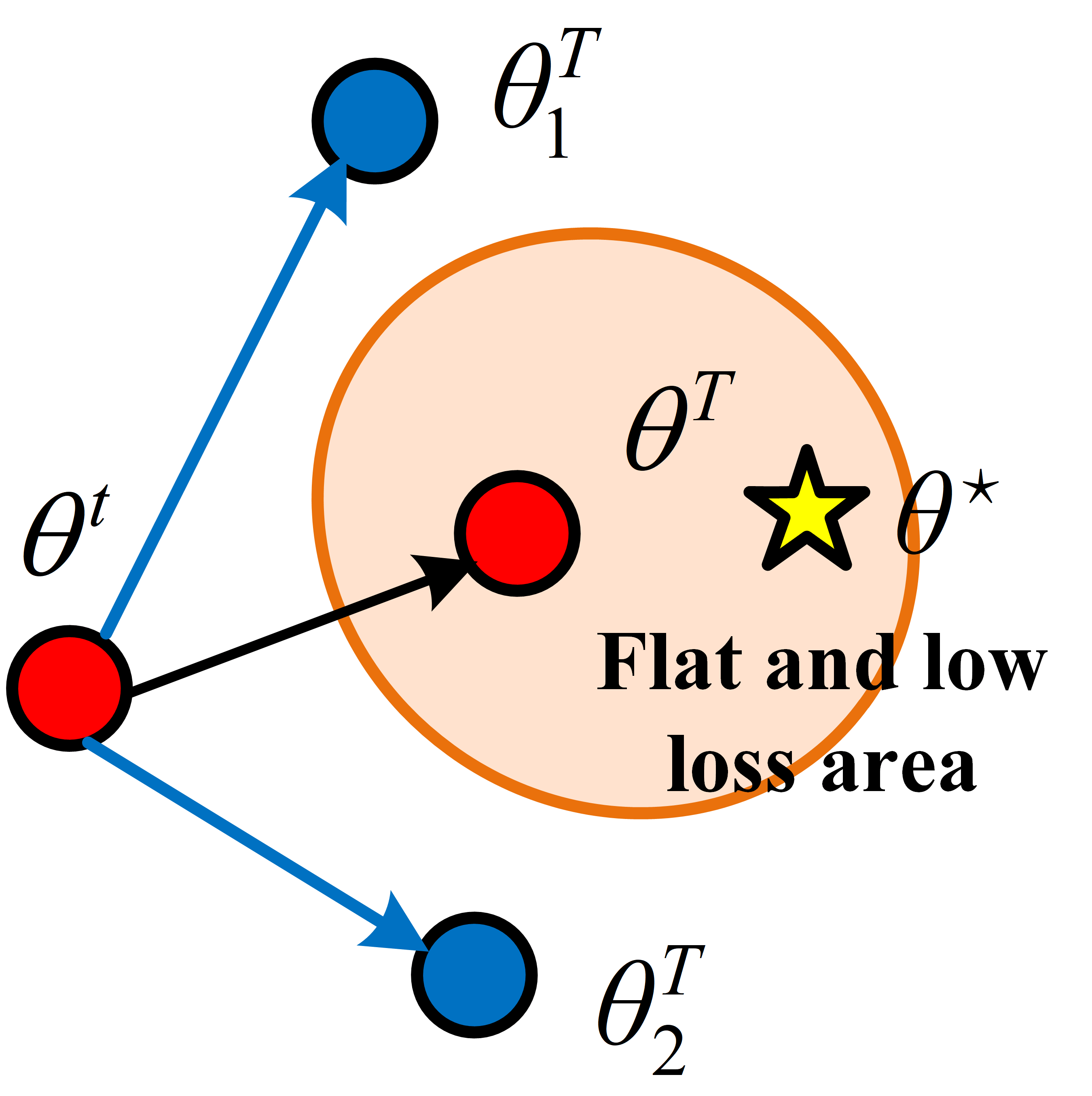}}
	\end{minipage}
	\begin{minipage}[b]{0.17\textwidth}
		\centering
		\subcaptionbox{\texttt{FedSWA} }{\includegraphics[width=\textwidth]{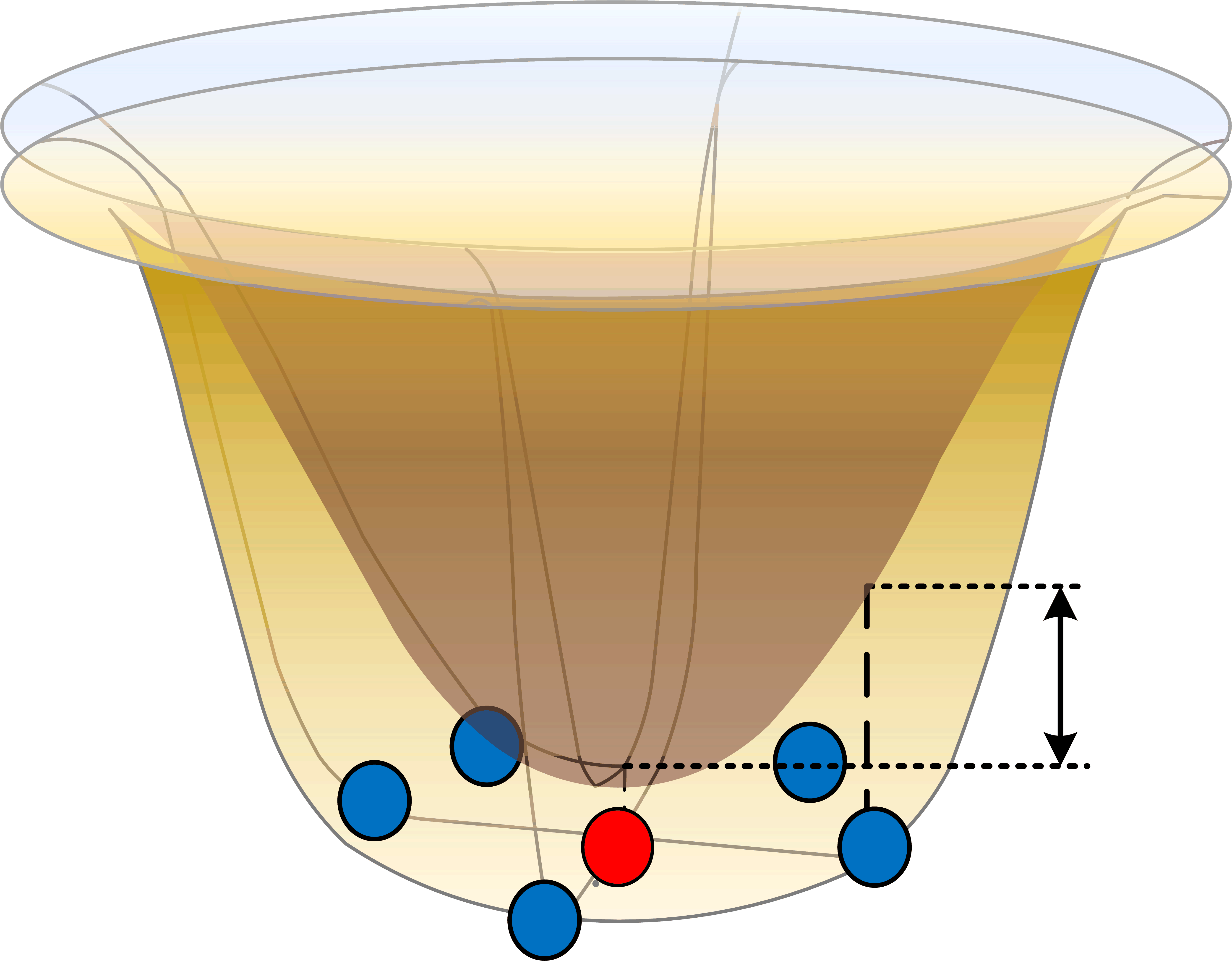}}
	\end{minipage}
	\caption{The training loss surfaces of ResNet-18 trained by FedAvg (a), FedSAM (b), and our FedSWA (c) on CIFAR-100 with Dirichlet-0.1 (high data heterogeneity).  The test accuracies of FedAvg, FedSAM, and \texttt{FedSWA} are 45.8\%, 40.1\%, and 50.3\%, respectively, and  FedSAM performs the worst and fails in this case. (d) highlights the sharpness minimizing conflicts due to discrepancies between local and global loss landscapes caused by data heterogeneity. (e) and (f) show that \texttt{FedSWA} can obtain global flat minima in the setting of  high data heterogeneity. The blue dots denote the client model, and the red dot is the server model in (f). The outer layer is the training loss surface, the inner layer is the test loss surface, and the global model generalizes well at a flat minimum in (f). The double arrow denotes the generalization error gap between clients and the server. 
	} 
	\label{figure 1}
\end{figure}
With increasing concerns about data privacy and security, federated learning (FL) has emerged as a potential distributed machine learning paradigm \cite{mcmahan2017communication}. FL allows for collaborative training of a global model across several clients without exchanging raw data, ensuring privacy while exploiting decentralized data. FL algorithms have considerable promise in various areas, including healthcare, finance, and personalized mobile services \cite{rieke2020future,antunes2022federated,byrd2020differentially,wang2024taming,bian2024lora,bian2025fedalt}.

FL confronts the issue of data heterogeneity because of the non-independent and identically distributed (Non-IID)  data across clients \cite{zhang2021fedpd,liu2024fedbcgd}, along with different computational capabilities and network connectivity. These characteristics bring complications, particularly regarding the model's capacity to generalize to  unseen data, which is critical to the practical applicability of FL algorithms. The generalization study assesses the model's capacity to generalize from empirical to theoretical optimal risks, providing vital insights into the efficacy of FL algorithms.

The generalization of many deep learning methods has been extensively studied to address the overfitting problem. Finding flat minima in model parameters has been demonstrated to improve generalization performance. The two methods, Stochastic Weight Averaging (SWA) \cite{izmailov2018averaging} and Sharpness-Aware Minimization (SAM) \cite{foret2020sharpness}, have received significant attention. In distributed learning, particularly FL, data heterogeneity always enables the global model to converge to a sharp local minimum \cite{fan2024locally}, and generalization of sharp local minimum is poor, as shown in Figure  \ref{figure 1}(a). 

To tackle this challenge, several studies have introduced SAM to local client optimization, which aims to enhance generalization by finding a flat minimum through minimizing the local loss after perturbation. \citet{qu2022generalized} adopted  SAM to FL and introduced FedSAM, and they also proposed  MoFedSAM, which incorporates local momentum acceleration. \citet{caldarola2022improving} applied  adaptive SAM optimizer in FL local training and proposed FedASAM. FedGAMMA \cite{dai2023fedgamma} improves FedSAM by using the SCAFFOLD's variance reduction technique \cite{karimireddy2020scaffold}. FedSpeed \cite{sun2023fedspeed}  utilized   the gradient computed at the SAM-based perturbed weight. \citet{fan2024locally} proposed FedLESAM  locally estimates the direction of global perturbation on the client side as the difference between global models received in the previous active and current rounds.  However, adequate experimental results show that FedSAM is extremely ineffective in the setting of highly heterogeneous data (see in Figure  \ref{figure 1}(b)).  Moreover, \citet{leerethinking} also pointed out that SAM in local clients can find a local flat minimum, not a global flat minimum, and FedSAM fails to generalize in the setting of high data heterogeneity.
In addition, \citet{kaddour2022flat} noted that SAM can be close to sharp directions,  which is unfriendly to the federated averaging algorithm.

To address the above shortcomings of  SAM-type algorithms as local optimizers, this paper introduces SWA to improve  generalization  of FL with highly  heterogeneous data. SWA improves the model's generalization capability by averaging different model weights at the final stage of training, thereby locating a model lying in a flat region of the loss landscape. Therefore, we propose a new Stochastic  Weight Averaging Federated Learning (\texttt{FedSWA}) algorithm, which aims to find flatter minima in the setting of highly  heterogeneous data, as shown in Figure \ref{figure 1}(e,f). In other words, the goal of  \texttt{FedSWA} is to find global flat minima, avoiding the drawbacks of  FedSAM, which finds local flat minima instead of global flat minima.
Moreover, SWA is computationally more efficient than  SAM since SWA does not require additional forward and backward propagations to compute the perturbations in SAM.  In this paper, we will revisit SAM and SWA on FL to improve the generalization ability on highly heterogeneous data.

In Figure  \ref{figure 1}(d), we can observe that FedSAM finds a global flat minima  but a high loss surface, and the generalization to the test loss surface does not work very well. This is because  FedSAM focuses too much on local flat minima, not global flat minima, causing it to overlook  superior global minima. The  flatness and low loss surface of FedSAM in local client training does not imply that the global model is flat \cite{leerethinking}. Our \texttt{FedSWA} algorithm aims to find a  flat and low global loss surface, which generalizes to the test set much better than FedSAM, as shown in Figure  \ref{figure 1}(c,e). 

While \texttt{FedSWA} enhances the global model's generalization compared to FedSAM, it does not guarantee a  consistent flat and low minimum for the local model to agree with the global model in the setting of highly heterogeneous data. To seamlessly integrate the smoothness of both local and global models, we present a more advanced algorithm within our framework, called Federated Learning via Momentum-Based Stochastic Controlled Weight Averaging (\texttt{FedMoSWA}). 
The key idea of  \texttt{FedMoSWA} is inspired by the momentum-based variance reduction \cite{2020Momentum}.
Intuitively, \texttt{FedMoSWA} estimates the update direction for the server variable $\boldsymbol{m}$ and the  client variable  $\boldsymbol{c_i}$.
Their difference, ($\boldsymbol{m}-\boldsymbol{c_i}$), is used to correct the estimation of the client drift of local updates, maintaining the consistency of local updates with global updates. This strategy successfully overcomes data heterogeneity. 

\definecolor{LightRed}{RGB}{255,182,193} 
\definecolor{LightBlue}{RGB}{173, 216, 230}
\begin{table*}[tb]
	\caption{Comparison of generalization  and optimization errors in the non-convex setting. 
		$\sigma$ is the variance of stochastic gradients, $\sigma_g$ is data heterogeneity, $s$ is the number of participating clients, and $T$ is the number of communication rounds. $K$ is the number of local iterations. $\ell(\boldsymbol{x},\boldsymbol{y}; \boldsymbol{\theta})$ is  $L$-Lipschitz continous and  $\beta$-smooth. 
		$F^{\star}$ is a minimum value of Problem (\ref{eq 1}) below, $F:=F_0-F^{\star}$, $\overline{c}\! >\tilde{c}\!=\!1+\left(2+1/KT\right)^{K-1}/T\gg 1$, which means the generalization error of \texttt{FedSWA} is smaller than that of FedSAM. The generalization error of \texttt{MoFedSWA} is smaller than that of \texttt{FedSWA}.  
		The optimization bounds of FedSAM and MoFedSAM are proved in \cite{patel2022towards}. Generalization bounds without client sampling.
	}
	\setlength{\tabcolsep}{10pt}
	\centering
	\begin{threeparttable}          
		\begin{tabular}{*3{c}}\midrule[1.5pt]	         
			\text { Algorithm } & \text { Generalization error} & \text { Optimization error}  \\ \midrule
			
			\text {FedSAM} \cite{qu2022generalized} &$
			\mathcal{O}\big(\frac{L}{m n \beta}e^{1+\frac{1}{T}}(\overline{c}L+ \overline{c}\textcolor{red}{\sigma_g}+\overline{c} \sigma) \big)$
			
			&$\mathcal{O}\left(\frac{\beta F}{\sqrt{T K s}}+\frac{\sqrt{K} \textcolor{red}{\sigma_g}^2}{\sqrt{T s}}+\frac{L^2 \sigma^2}{T^{3 / 2} K}+\frac{L^2}{T^2} \right)$  \\
			\text { MoFedSAM } \cite{qu2022generalized} &$
			\mathcal{O}\big(\frac{L}{m n \beta}e^{1+\frac{1}{T}}(\overline{c} L+ \overline{c}\textcolor{red}{\sigma_g}+\overline{c}\sigma) \big)$
			
			&$\mathcal{O}\left(\frac{\beta L F}{\sqrt{T K s}}+\frac{\beta \sqrt{K} \textcolor{red}{\sigma_g}^2}{\sqrt{T s}}+\frac{L^2 \sigma^2}{T^{3 / 2} K}+\frac{\sqrt{K} L^2}{T^{3 / 2} \sqrt{s}} \right)$ \\	
			\rowcolor{LightBlue}
			\text{\texttt{FedSWA} (ours)}  &$
			\mathcal{O}\big(\frac{L}{m n \beta}e^{1+\frac{1}{T}}(\tilde{c} L+ \tilde{c}\textcolor{red}{\sigma_g}+\tilde{c} \sigma) \big)$   
			& $\mathcal{O}\left(\frac{\beta \left(\sigma+\sqrt{K }\textcolor{red}{\sigma_g}\right)\sqrt{F}}{\sqrt{T K s}}+\frac{F^{2 / 3}\left(\beta \textcolor{red}{\sigma_g}^2\right)^{1 / 3}}{T^{2 / 3}}+\frac{ \beta F}{T} \right)$ \\	
			\rowcolor{LightRed}
			\text {\texttt{FedMoSWA} (ours) } &$  \mathcal{O}\big(\frac{L}{m n \beta}e^{1+\frac{1}{T}}(\tilde{c} L+ \textcolor{red}{\sigma_g}+\tilde{c} \sigma) \big)$ &$\mathcal{O}\left(\frac{\sigma \sqrt{F}}{\sqrt{T K s}}\left(\sqrt{1+\frac{s}{\alpha^2}}\right)+\frac{\beta F}{T}\left(\frac{m}{s}\right)^{\frac{2}{3}} \right)$\\
			\midrule[1.5pt]			
		\end{tabular}
	\end{threeparttable} 
	\label{table 1}
\end{table*}

Despite that there is  considerable research on optimization error and convergence in FL, the analysis of generalization remains significantly challenging. To theoretically characterize generalization performance, researchers have developed a variety of analysis techniques, including uniform convergence methods \cite{vapnik1998statistical}, operator approximation techniques \cite{smale2007learning}, information-theoretic tools \cite{russo2019much}, and algorithmic stability analysis \cite{rogers1978finite,hardt2016train}. Among these methods, stability analysis has attracted particular attention due to its theoretical guarantees that are independent of the capacity measurement of the hypothesis function space. It is also widely applicable and sensitive to data distribution. This  paper proposes an analytical framework based on uniform stability, which is particularly useful for studying generalization errors by considering the dependence on specific FL algorithms and data heterogeneity.


\textbf{Contributions:}  
We propose a new FL generalization analysis framework and develop two efficient algorithms to  improve the generalization and optimization errors.  Our main contributions can  be summarized as follows:\\
\textbf{$\bullet$ New FL algorithm inspired by SWA with better generalization:} We revisit the role of SWA and SAM for federated learning and discuss the superiority of  SWA with heterogeneous data. 
To generalize the global model, we first propose a novel algorithm, \texttt{FedSWA}, to improve the generalization of FL, which is better than FedSAM.\\
\textbf{$\bullet$  Momentum-based stochastic controlled weight averaging algorithm:} To help local models  find  consistent flat minima aligned with the global model, we develop a new \texttt{FedMoSWA} algorithm. We also propose one  momentum-based stochastic controlled method, which is  designed to increase FL generalization and overcome data heterogeneity. Our theoretical result shows that \texttt{FedMoSWA} provides theoretically better upper bounds for  generalization error than FedSAM. Experimentally, \texttt{FedMoSWA} is better than both FedSAM and his variants.\\
\textbf{$\bullet$ Novel FL generalization analysis framework:} 
We propose a new  FL generalization error analysis framework, which  can account for the effect of data heterogeneity on generalization error. Our theoretical results show  that the generalization bound of our \texttt{FedSWA} is $\mathcal{O}\big(\frac{L}{m n \beta}e^{\frac{1}{T}+1}(\tilde{c} L+ \tilde{c}\sigma_g+\tilde{c} \sigma) \big)$, which is correlated with data heterogeneity and is superior to  that of  FedSAM.
We prove that the generalization  error of  \texttt{FedMoSWA} is   $\mathcal{O}\big(\frac{L}{m n\beta} e^{\frac{1}{T}+1}(\tilde{c} L+ \sigma_g+\tilde{c} \sigma) \big)$, which is better than  those of FedSAM and  \texttt{FedSWA}, where  $\tilde{c}=1+\left(2+1/KT\right)^{K-1}/T\gg 1$.\\ 

\section{Related work}
\textbf{$\bullet$ Heterogeneity Issues in FL:} In past years, various strategies have been proposed to solve
the heterogeneity issues in FL.  FedAvgM \cite{hsu2019measuring} is a proposed method that introduces momentum terms during global model updating. FedACG \cite{kim2024communication} improves the inter-client interoperability by the server broadcasting a global model with a prospective gradient consistency. SCAFFOLD \cite{karimireddy2020scaffold} uses SAGA-like control variables to mitigate client-side drift, which can be regarded as adopting the idea of variance reduction at the client side. Our \texttt{FedMoSWA} method is based on momentum-based variance reduction, which is different from SCAFFOLD and other momentum methods.\\
\textbf{$\bullet$ Generalization Analysis of FL:}
\citet{hu2022generalization} provided a systematic analysis of the generalization error of FL in the two-level framework, which captures the missed participating gap. \citet{sefidgaran2022rate} used tools from rate-distortion theory to establish new upper bounds on the generalization error of statistical distributed learning algorithms. 
\citet{sun2024understanding} analyzed the generalization performance of federated learning through algorithm stability, but did not propose  an improved algorithm to overcome data heterogeneity.\\ 
\textbf{$\bullet$ Uniform Stability Generalization Analysis of Algorithms:}
Uniform stability \cite{bousquet2002stability} is a classical tool to analyze the generalization error of an algorithm. For instance, \citet{hardt2016train} and \citet{zhang2022stability} analyzed the generalization of stochastic gradient descent (SGD) \cite{bottou2010large} via uniform stability.  \citet{yuan2019stagewise} investigated  stagewise SGD and showed the advantages of the statewise strategy.\\
\textbf{$\bullet$ SWA:} The concept of averaging weights can be traced back to early efforts to accelerate the convergence of SGD \cite{polyak1992acceleration,kaddour2022stop}. SWA is motivated by an observation about SGD’s behavior when training neural networks: although SGD frequently explores regions in the weight space associated with high-performing models, it seldom reaches the central points of this optimal set. By averaging parameter values, SWA guides the solution closer to the centroid of this space  \cite{izmailov2018averaging}.

\section{Proposed Algorithm}
\subsection{Problem Setup}
FL aims to optimize global model with the collaboration local clients, i.e., minimizing the following population risk:
\begin{align}
	F(\boldsymbol{\theta})=\frac{1}{m} \sum_{i=1}^m\left(F_i(\boldsymbol{\theta}):=\mathbb{E}_{\zeta_i\sim \mathcal{D}_i}\left[F_i\left(\boldsymbol{\theta} ; \zeta_i\right)\right]\right).
	\label{eq 1}
\end{align}
The function $F_i$ represents the loss function on client $i$. $\mathbb{E}_{\zeta_i \sim \mathcal{D}_i}[\cdot]$ denotes the conditional expectation with respect to  the sample $\zeta_i$. Generally, the unattainability of the local population risk $F_i\left(\boldsymbol{\theta}\right)$ forces us to train the model by minimizing the following empirical approximation of population risk $F_{S_i}\left(\boldsymbol{\theta}\right)$. For the generalization problem, we consider:
\begin{align}
	& F_{\mathcal{S}}\big(\boldsymbol{\theta}\big) \! =\! \frac{1}{m} \sum_{i=1}^m F_{\mathcal{S}_i}\big(\boldsymbol{\theta}\big)\!\! =\!\! \frac{1}{mn}  \sum_{i=1}^m \sum_{j=1}^n \ell\big(\boldsymbol{x}_{i,j}, \boldsymbol{y}_{i,j} ; \boldsymbol{\theta}\big),
	\label{eq 2}
\end{align}
where $F_S(\boldsymbol{\theta})$ and $F_{S_i}\left(\boldsymbol{\theta}\right)$ indicate the empirical risks for the global model and the local model of the $i$-th client, respectively. $F_{\mathcal{S}_i}(\boldsymbol{\theta}) = \frac{1}{n} \sum_{i=1}^n \ell\left(\boldsymbol{x}_{i,j}, \boldsymbol{y}_{i,j} ; \boldsymbol{\theta}\right)$ represents the empirical risk loss function on client $i$. 
Given a dataset $\mathcal{S}=\left\{\left(\boldsymbol{x}_{i,j}, \boldsymbol{y}_{i,j}\right)\right\}_{i=1,j=1}^{i=m,j=n}$, $\mathcal{S}=\mathcal{S}_1 \cup \cdots \cup\mathcal{S}_i\cup \cdots \cup \mathcal{S}_m$, where the dataset $\mathcal{S}_i=\left\{\left(\boldsymbol{x}_{i,j}, \boldsymbol{y}_{i,j}\right)\right\}_{j=1}^n$, and $\left(\boldsymbol{x}_{i,j}, \boldsymbol{y}_{i,j}\right)$ is drawn from an unknown distribution $\mathcal{D}_i$. Assume $\left\{\mathcal{D}_1, \cdots, \mathcal{D}_m\right\}$ are independently sampled from $\mathcal{D}$ according to $P$ \cite{hu2022generalization}.
$m$ is the number of clints, $n$ is the number of training samples per client.
In this paper, we assume that $F_{\mathcal{S}}(\boldsymbol{\theta})$ has a minimum value $F_{\mathcal{S}}^{\star}$, when $\boldsymbol{\theta}=\boldsymbol{\theta}^{\star}$.

\definecolor{LightRed}{RGB}{255,182,193} 
\definecolor{LightBlue}{RGB}{173, 216, 230}
 

\subsection{Our FedSWA  Algorithm}
\vspace{-2mm}
To enhance the generalization of FL, our \texttt{FedSWA} algorithm uses SWA for local training and weight aggregation. In the client-side update, this paper proposes a local learning rate decay strategy instead of a constant learning rate. In the $t$-th round of local training,
\vspace{-3mm}
\begin{equation}
	\eta_0^t=\eta_l, \eta_K^t=\rho \eta_l , \eta_k^t=\eta_l\left(1-\frac{k}{K}\right)+\frac{k}{K} \rho \eta_l,
	\label{eq3}
\end{equation}
\vspace{-2mm}
\begin{equation}
	\boldsymbol{\theta}_{i, k+1}^{(t)} = \boldsymbol{\theta}_{i, k}^{(t)}-\eta_k^tg_i\left(\boldsymbol{\theta}_{i, k}^{(t)}\right),
	\label{eq 3}
\end{equation}
where $
g_i\left(\boldsymbol{\theta}_{i, k}^{t}\right)
\!= \!\frac{1}{|\mathcal{B}|} \sum_{(\boldsymbol{x}, \boldsymbol{y}) \in \mathcal{B}} \nabla \ell\left(\boldsymbol{x}_{i,j}, \boldsymbol{y}_{i,j} ; \boldsymbol{\theta}_{i,k}^{t}\right)$, $0 \leq \rho \leq 1$. Here $\mathcal{B}$ is the sampled minibatch at the $(t, k)$-th iteration and $\eta_l$ is the initial local learning rate. $k$ is the $k$-th local iteration and $K$ is the total number of local iterations. $\rho$ is the coefficient of local learning rate decay, the smaller $\rho$, the faster the local learning rate decays.
Inspired by the LookAhead \cite{zhang2019lookahead}, the server aggregation employs an exponential moving average (EMA) to aggregate the historical weights instead of average aggregation.  The specific steps of the algorithm are described in Algorithm \ref{algorithm 1}. In Eq.(\ref{eq3}), the local learning rate decreases from $\eta_l$ to $\rho\eta_l$. After model aggregation,  the learning rate is adjusted to the initial learning rate $\eta_l$ to restart local training. \citet{smith2017cyclical,gotmare2018closer} observed that restoring the initial large learning rate (learning rate restart)  helps us to jump out of the worse local minimum and find a better local minimum. We  experimentally demonstrate that restarting the learning rate is  effective for federated learning (see Table \ref{tab:4}).
\begin{algorithm}[tb]
	
	\caption{\fcolorbox{LightBlue}{LightBlue}{\texttt{FedSWA}},\fcolorbox{LightRed}{LightRed}{ \texttt{FedMoSWA} }algorithm.}
	\begin{algorithmic}[1] 
		\STATE \textbf{Input:} $\lambda$, $\rho$, initial server model $\boldsymbol{\theta}_{0}$, number of clients $N$, number of communication rounds $T$, number of local iterations $K$, local learning rate $\eta_l$.
		\FOR{$t=0,...,T$}
		\STATE {\fcolorbox{LightBlue}{LightBlue}{Communicate $(\boldsymbol{\theta}_{t-1})$ to selected clients $i \in [s]$.}}		
		\STATE {\fcolorbox{LightRed}{LightRed}{Communicate
				$(\boldsymbol{\theta}_{t-1},\boldsymbol{m})$ to selected clients $i \in [s]$.}}
		\FOR{$i=1, \ldots, s$ clients in parallel}	
		\FOR{$k=0, \ldots, K$ local update}	
		\STATE  Compute mini-batch gradient $g_i\left(\boldsymbol{\theta}_{i,k}^{t}\right)$.
		\STATE$\eta_k^t=\eta_l\left(1-\frac{k}{K}\right)+\frac{k}{K} \rho \eta_l $.
		\STATE{\fcolorbox{LightBlue}{LightBlue}{$\boldsymbol{\theta}_{i,k+1}^{t}\leftarrow \boldsymbol{\theta}_{i,k}^{(t)}-\eta_{k}^{t}\left(g_i\left(\boldsymbol{\theta}_{i,k}^{t}\right)\right)$.}}
		\STATE
		{\fcolorbox{LightRed}{LightRed}{$\boldsymbol{\theta}_{i,k+1}^{t}\leftarrow \boldsymbol{\theta}_{i,k}^{(t)}-\eta_{k}^{t}\left(g_i\left(\boldsymbol{\theta}_{i,k}^{t}\right)-\boldsymbol{c_i}+\boldsymbol{m}\right)$.}}
		\ENDFOR
		\STATE  
		{\fcolorbox{LightBlue}{LightBlue}{$\text {Communicate }$$\left(\boldsymbol{\theta}_{i,K}^{t}\right)$	to server.}}
		\STATE{\fcolorbox{LightRed}{LightRed}{$\boldsymbol{c}_i^{+} \leftarrow$ (i) $ g_i(\boldsymbol{x})$ or (ii) $\boldsymbol{c}_i-\boldsymbol{m}+\frac{1}{\sum_{k} \eta_k^{t}}\left(\boldsymbol{\theta}_{t-1}-\boldsymbol{\theta}_{i,k}^{t}\right)$.}}
		\STATE  
		{\fcolorbox{LightRed}{LightRed}{$\text {Communicate }$$\left(\boldsymbol{\theta}_{i,K}^{t}, \boldsymbol{c}_i^{+}-\boldsymbol{m}\right)$	to server, $\boldsymbol{c_i} \leftarrow \boldsymbol{c_i^{+}}$.}}		
		\ENDFOR
		\STATE
		{\fcolorbox{LightRed}{LightRed}{$\boldsymbol{m}\leftarrow \boldsymbol{m}+\gamma \frac{1}{s} \sum_{i \in [s]} \Delta\boldsymbol{c}_i$.}}
		\STATE $\boldsymbol{v}_t=\frac{1}{s} \sum_{i=1}^s \boldsymbol{\theta}_{i, K}^{t}$, $\boldsymbol{\theta}_t=\boldsymbol{\theta}_{t-1}+\alpha\left(\boldsymbol{v}_t-\boldsymbol{\theta}_{t-1}\right)$.
		\ENDFOR
	\end{algorithmic}
	\label{algorithm 1}
\end{algorithm}

\textbf{Compared with  FedAvg \cite{mcmahan2017communication} and FedSAM \cite{qu2022generalized}.}  FedAvg and FedSAM adopt constant local learning rate and model averaging to aggregate, while our \texttt{FedSWA} uses cyclical learning rate and exponential moving average aggregation which help \texttt{FedSWA} jump out of worse flat minima and look for better flat minima as discussed in \cite{smith2017cyclical,gotmare2018closer}. The test accuracy of \texttt{FedSWA} is also better than those of FedAvg and FedSAM (see Tables \ref{table 2} and \ref{table 3}).\\
\begin{figure}[bt]
	\begin{minipage}[b]{0.23\textwidth}
		\centering
		\subcaptionbox{\texttt{FedSWA} }{\includegraphics[width=\textwidth]{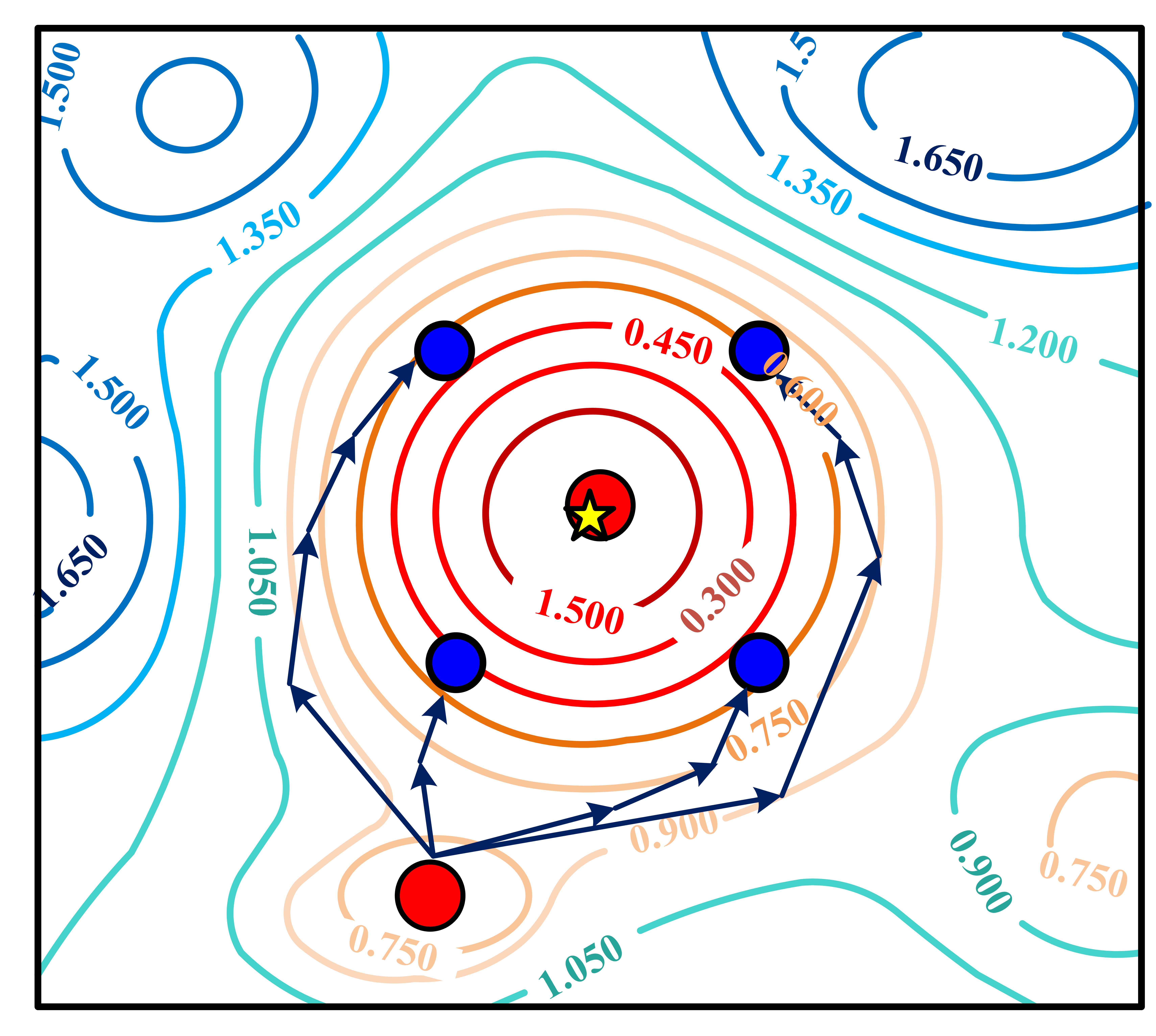}}
	\end{minipage}
	\begin{minipage}[b]{0.23\textwidth}
		\centering
		\subcaptionbox{\texttt{FedMoSWA} }{\includegraphics[width=\textwidth]{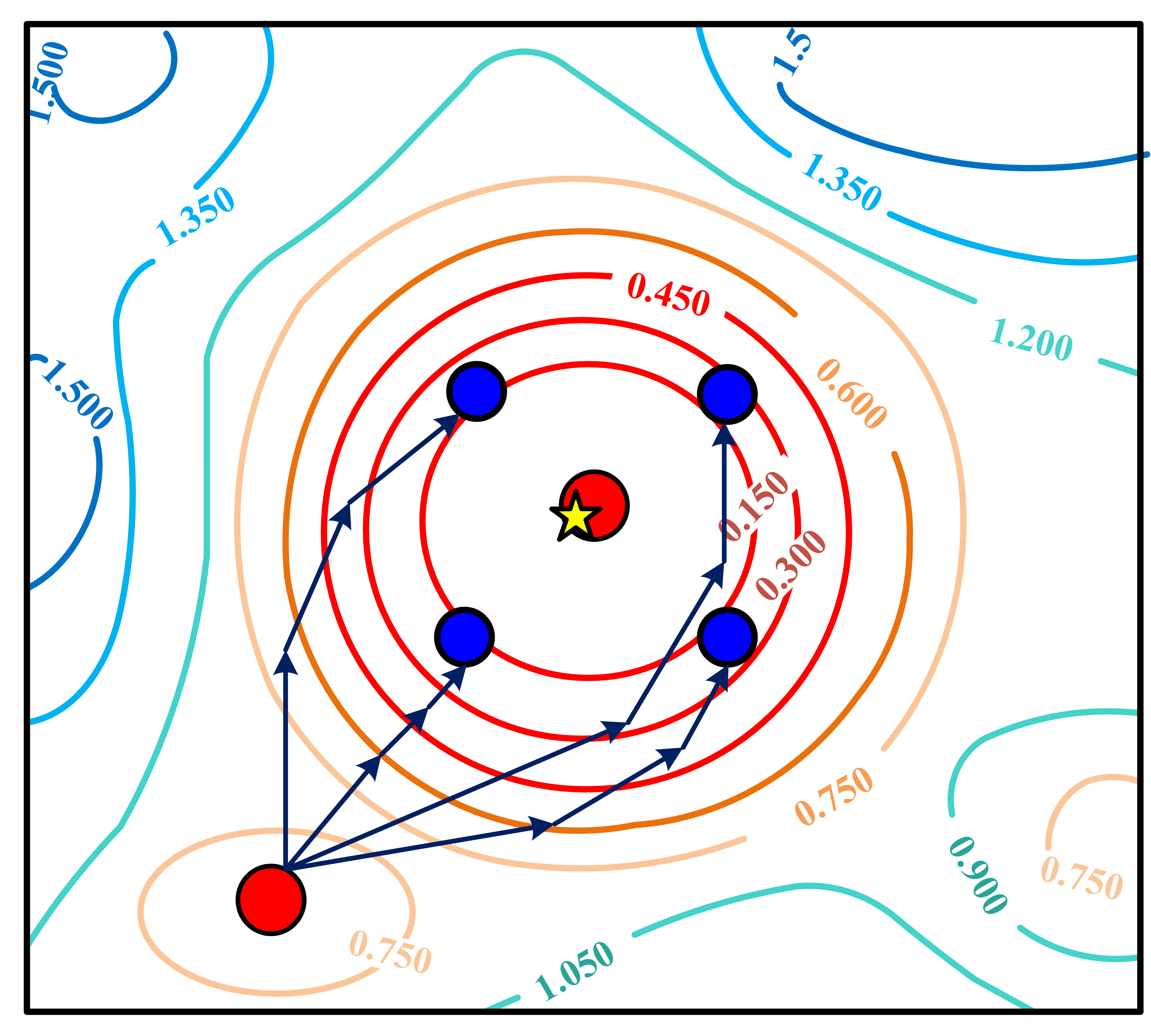}}
	\end{minipage}
	\caption{(a) and (b) indicate that due to data heterogeneity, the local and global models of \texttt{FedSWA} differ too much to find a less flatter minima, while our \texttt{FedMoSWA}  aligns the local and global models and finds a flatter minima.}
	\label{figure 2}
\end{figure}
\subsection{Our FedMoSWA  Algorithm}

Inspired by momentum-based variance reduction  \cite{2020Momentum}, which achieves variance reduction through the use of a variant of the momentum term, we propose a new \texttt{FedMoSWA} algorithm. Different from our \texttt{FedSWA}, the local update of \texttt{FedMoSWA} becomes 
\begin{align}
	\boldsymbol{\theta}_{i, k+1}^{(t)} = \boldsymbol{\theta}_{i, k}^{(t)}-\eta_k^t\left(g_i\left(\boldsymbol{\theta}_{i, k}^{(t)}\right)-\boldsymbol{c}_{\boldsymbol{i}}+\boldsymbol{m}\right),
	\label{eq 3}
\end{align}
where  $\boldsymbol{c}_i$ is the client control variable, and $\boldsymbol{m}$ is the server control variable. 
We provide two options to update $	\boldsymbol{c}_i^{+}$:
\vspace{-2mm}
\begin{align}
	\boldsymbol{c}_i^{+} \leftarrow \begin{cases}\text {I.}&g_i(\boldsymbol{\theta}_t), \text { or } \\ \text {II.} & \boldsymbol{c}_i-\boldsymbol{m}+\frac{1}{\sum_{k} \eta_k^{t}}\big(\boldsymbol{\theta}_{t-1}-\boldsymbol{\theta}_{i, K}^{(t)}\big)\end{cases}.
\end{align}
Depending on the situation, Option I may be more stable than  II, but  II is less computationally expensive and usually sufficient (we use Option II for our experiments). In the local update in Eq.\ (\ref{eq 3}), our proposed  \texttt{FedMoSWA} algorithm has global control variable $\boldsymbol{m}$, which is update by   
\vspace{-2mm}
\begin{align}
	& \boldsymbol{m} \leftarrow \boldsymbol{m}+\gamma \frac{1}{s} \sum_{i \in \mathcal{S}} \left(\boldsymbol{c}_i^{+}-\boldsymbol{m}\right).
\end{align}

\textbf{Compared with SCAFFOLD \cite{karimireddy2020scaffold}.} The SCAFFOLD algorithm has a global variable, which is updated as: $\boldsymbol{c}=\boldsymbol{c}\!+\!\frac{1}{m} \sum_{i \in \mathcal{S}}\left(\boldsymbol{c}_i^{+}-\boldsymbol{c}_i\right)$, but the global control variable $\boldsymbol{c}$ has an update delay problem. This is due to the fact  that the update of $\boldsymbol{c}$ aggregates the old $\boldsymbol{c}_i$ and the new $\boldsymbol{c}_i$ by using the same weights. When the client participation rate is low, the old $\boldsymbol{c}_i$ is too different from $\boldsymbol{c}$, which instead has a bad effect on the update of $\boldsymbol{c}$.
Our proposed \texttt{FedMoSWA} algorithm has global variables updated with momentum
as $\boldsymbol{m}\!=\!\boldsymbol{m}+\gamma\frac{1}{s} \sum_{i \in \mathcal{S}}\left(\boldsymbol{c}_i^{+}\!-\!\boldsymbol{m}\right)$, which gives higher weight to the latest uploaded $\boldsymbol{c}_i$ and lower weight to the old $\boldsymbol{c}_i$, instead of giving the same weight to all $\boldsymbol{c}_i$ as that SCAFFOLD does. 

\textbf{Compared with FedSWA.} In Figure \ref{figure 2}, we can observe that the local model of our \texttt{FedMoSWA} tends to be more close to the global optimal  than \texttt{FedSWA}.  Influenced by the control variables, \texttt{FedMoSWA}  has fewer  differences between clients, and minima are flatter compared to \texttt{FedSWA}.

\textbf{Compared with MoFedSAM \cite{qu2022generalized}.} MoFedSAM is based on FedSAM and presents a momentum-based mechanism to accelerate convergence and improve performance in data heterogeneity. \texttt{FedMoSWA} introduces a momentum-based variance reduction mechanism in  \texttt{FedSWA} to accelerate convergence in the setting of high data heterogeneity. Theoretically, our  \texttt{FedMoSWA} also outperforms MoFedSAM.
\section{FL Generalization via uniform stability}

\textbf{Excess Risk Error Decomposition.} One often minimizes the empirical risk $F_{\mathcal{S}}(\boldsymbol{\theta})=\frac{1}{m} \frac{1}{n} \sum_{i=1}^m \sum_{i=1}^n \ell\left(\boldsymbol{x}_{i,j}, \boldsymbol{y}_{i,j} ; \boldsymbol{\theta}\right)$ in Eq.(\ref{eq 2}). In client $i$ via a randomized algorithm $\mathcal{A}_i$, e.g., SGD, to find an estimated optimum $\boldsymbol{\theta}_{\mathcal{A}_i, \mathcal{S}_i} \approx \operatorname{argmin}_{\boldsymbol{\theta}} F_{\mathcal{S}_i}(\boldsymbol{\theta})$. In server, we use FL optimizer $\mathcal{A}$ , e.g., FedAvg, to find an estimated optimum $\boldsymbol{\theta}_{\mathcal{A}, \mathcal{S}} \approx \operatorname{argmin}_{\boldsymbol{\theta}} F_{\mathcal{S}}(\boldsymbol{\theta})$.  However, this empirical solution $\boldsymbol{\theta}_{\mathcal{A}, \mathcal{S}}$ differs from the desired optimum $\boldsymbol{\theta}_{\mathcal{D}}^{\star}$ of the population risk $F_{\mathcal{D}}(\boldsymbol{\theta})=\mathbb{E}_{(\boldsymbol{x}, \boldsymbol{y}) \sim \mathcal{D}}[\ell\left(\boldsymbol{x}, \boldsymbol{y} ; \boldsymbol{\theta}\right)]$,
\begin{align}
	\boldsymbol{\theta}_{\mathcal{D}}^{\star} \in \operatorname{argmin}_{\boldsymbol{\theta}} F_{\mathcal{D}}(\boldsymbol{\theta})=\mathbb{E}_{(\boldsymbol{x}, \boldsymbol{y}) \sim \mathcal{D}}[\ell\left(\boldsymbol{x}, \boldsymbol{y} ; \boldsymbol{\theta}\right)] .
\end{align}

To understand the generalization ability of FL algorithms, we want to know: what performance of the estimated optimum $\boldsymbol{\theta}_{\mathcal{A}, \mathcal{S}}$ can achieve on the test data $(\boldsymbol{x}, \boldsymbol{y}) \sim \mathcal{D}$ in FL. To answer this question, we analyze the test error $\mathbb{E}_{\mathcal{A}, \mathcal{S}}\left[F_{\mathcal{D}}\left(\boldsymbol{\theta}_{\mathcal{A}, \mathcal{S}}\right)\right]$ of $\boldsymbol{\theta}_{\mathcal{A}, \mathcal{S}}$ via investigating the well-known excess risk error $\varepsilon_{\text {exc }}$ defined as
$$
\begin{aligned}
	\begin{split}
		& \varepsilon_{\mathrm{exc}}=\mathbb{E}_{\mathcal{A}, \mathcal{S}}\left[F_{\mathcal{D}}\left(\boldsymbol{\theta}_{\mathcal{A}, \mathcal{S}}\right)\right]-\mathbb{E}_{\mathcal{A}, \mathcal{S}}\left[F_{\mathcal{S}}\left(\boldsymbol{\theta}_{\mathcal{S}}^{\star}\right)\right] \\
		& =\underbrace{\mathbb{E}\left[F_{\mathcal{D}}\left(\boldsymbol{\theta}_{\mathcal{A}, \mathcal{S}}\right)-F_{\mathcal{S}}\left(\boldsymbol{\theta}_{\mathcal{A}, \mathcal{S}}\right)\right]}_{\varepsilon_{\mathrm{gen}}}+\underbrace{\mathbb{E}\left[F_{\mathcal{S}}\left(\boldsymbol{\theta}_{\mathcal{A}, \mathcal{S}}\right)-F_{\mathcal{S}}\left(\boldsymbol{\theta}_{\mathcal{S}}^{\star}\right)\right]}_{\varepsilon_{\mathrm{opt}}},
	\end{split}
\end{aligned}
$$
where $\boldsymbol{\theta}_{\mathcal{S}}^{\star}\!\!\in\!\!\operatorname{argmin}_{\boldsymbol{\theta}} F_{\mathcal{S}}(\boldsymbol{\theta})$ is the optimum of  $F_{\mathcal{S}}$. 
The optimization error $\varepsilon_{\text {opt }}\!\!=\!\! \mathbb{E}_{\mathcal{A}, \mathcal{S}}\left[F_{\mathcal{S}}\left(\boldsymbol{\theta}_{\mathcal{A}, \mathcal{S}}\right)-F_{\mathcal{S}}\left(\boldsymbol{\theta}_{\mathcal{S}}^{\star}\right)\right]$ denotes the difference between the exact optimum $\boldsymbol{\theta}_{\mathcal{S}}^{\star}$ and the estimated solution $\boldsymbol{\theta}_{\mathcal{A}, \mathcal{S}}$, and generalization error $\varepsilon_{\text {gen }} = $ $\mathbb{E}_{\mathcal{A}, \mathcal{S}}\left[F_{\mathcal{D}}\left(\boldsymbol{\theta}_{\mathcal{A}, \mathcal{S}}\right)-F_{\mathcal{S}}\left(\boldsymbol{\theta}_{\mathcal{A}, \mathcal{S}}\right)\right]$ measures the effects of minimizing empirical risk instead of population risk. 

\textbf{Uniform Stability and Generalization.} One popular approach to analyze generalization error $\varepsilon_{\text {gen }}$ of  algorithm $\mathcal{A}$ is uniform stability \cite{hardt2016train,zhang2022stability}. 
\begin{lemma}(Uniform Stability and Generalization Error) 
	\cite{hardt2016train} We say a randomized algorithm $\mathcal{A}$ is $\epsilon$-uniformly stable if for all datasets $\mathcal{S} \sim \mathcal{D}$ and $\mathcal{S}^{\prime} \sim \mathcal{D}$ where $\mathcal{S}$ and $\mathcal{S}^{\prime}$ differ in at most one sample,
	\vspace{-2mm}
	\begin{align}
		\sup _{(\boldsymbol{x}, \boldsymbol{y}) \sim \mathcal{D}} \mathbb{E}_{\mathcal{A}}\left[\ell\left(\boldsymbol{x},\boldsymbol{y} ; \boldsymbol{\theta}_{\mathcal{A}, \mathcal{S}} \right)-\ell\left(\boldsymbol{x}, \boldsymbol{y}; \boldsymbol{\theta}_{\mathcal{A}, \mathcal{S}^{\prime}} \right)\right] \leq \epsilon .
	\end{align}
	Moreover, if $\mathcal{A}$ is $\epsilon$-uniformly stable, then its generalization error $\varepsilon_{\text {gen }}$, which is defined as $\varepsilon_{\text {gen }}=$ $\left|\mathbb{E}_{\mathcal{A}, \mathcal{S}}\left[F_{\mathcal{D}}\left(\boldsymbol{\theta}_{\mathcal{A}, \mathcal{S}}\right)-F_{\mathcal{S}}\left(\boldsymbol{\theta}_{\mathcal{A}, \mathcal{S}}\right)\right]\right|$ satisfying  $\varepsilon_{\text {gen }} \leq \epsilon$.
\end{lemma}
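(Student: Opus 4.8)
The statement to prove is the classical uniform‑stability lemma of Hardt, Recht and Singer: if a randomized algorithm $\mathcal{A}$ is $\epsilon$‑uniformly stable, then $\varepsilon_{\text{gen}}\le\epsilon$. The plan is to follow the standard symmetrization (``leave‑one‑out'' / ``replace‑one'') argument. First I would fix two datasets $\mathcal{S}=\{z_1,\dots,z_{mn}\}$ and $\mathcal{S}^{(i)}$ obtained from $\mathcal{S}$ by replacing the $i$‑th sample $z_i$ with an independent fresh copy $z_i'$ drawn from the same distribution. The key observation is that, by linearity of expectation over the choice of $\mathcal{S}$ and the fresh sample, $\mathbb{E}_{\mathcal{A},\mathcal{S}}[F_{\mathcal{D}}(\boldsymbol{\theta}_{\mathcal{A},\mathcal{S}})]$ can be rewritten as an average over $i$ of $\mathbb{E}[\ell(z_i';\boldsymbol{\theta}_{\mathcal{A},\mathcal{S}})]$, since $z_i'$ is an independent test point and renaming variables shows each term equals $\mathbb{E}[F_{\mathcal{D}}(\boldsymbol{\theta}_{\mathcal{A},\mathcal{S}})]$.

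Next I would rewrite the empirical‑risk term in the same language: $\mathbb{E}_{\mathcal{A},\mathcal{S}}[F_{\mathcal{S}}(\boldsymbol{\theta}_{\mathcal{A},\mathcal{S}})]=\frac{1}{mn}\sum_i \mathbb{E}[\ell(z_i;\boldsymbol{\theta}_{\mathcal{A},\mathcal{S}})]$, and then — this is the crucial swap — relabel $z_i\leftrightarrow z_i'$ inside the $i$‑th summand. Because $z_i$ and $z_i'$ are i.i.d.\ and the expectation is over both $\mathcal{S}$ and $z_i'$, we get $\mathbb{E}[\ell(z_i;\boldsymbol{\theta}_{\mathcal{A},\mathcal{S}})]=\mathbb{E}[\ell(z_i';\boldsymbol{\theta}_{\mathcal{A},\mathcal{S}^{(i)}})]$, where now $\mathcal{S}^{(i)}$ contains $z_i$ in place of the ``new'' point. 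Subtracting the two representations term by term, the difference between $\varepsilon_{\text{gen}}$'s two pieces becomes $\frac{1}{mn}\sum_i \mathbb{E}\big[\ell(z_i';\boldsymbol{\theta}_{\mathcal{A},\mathcal{S}})-\ell(z_i';\boldsymbol{\theta}_{\mathcal{A},\mathcal{S}^{(i)}})\big]$, where inside each term $\mathcal{S}$ and $\mathcal{S}^{(i)}$ differ in exactly one sample and $z_i'$ is a fixed point not equal to the differing coordinate.

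At that point the uniform‑stability hypothesis applies directly: each summand is bounded in absolute value by $\sup_{(\boldsymbol{x},\boldsymbol{y})\sim\mathcal{D}}\mathbb{E}_{\mathcal{A}}[\ell(\boldsymbol{x},\boldsymbol{y};\boldsymbol{\theta}_{\mathcal{A},\mathcal{S}})-\ell(\boldsymbol{x},\boldsymbol{y};\boldsymbol{\theta}_{\mathcal{A},\mathcal{S}'})]\le\epsilon$, and averaging $mn$ such terms leaves the bound $\epsilon$. Taking absolute values and using $|\mathbb{E}[\cdot]|\le\mathbb{E}|\cdot|$ gives $\varepsilon_{\text{gen}}\le\epsilon$. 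The main obstacle — and the only place one must be careful — is the bookkeeping in the symmetrization step: one has to keep precise track of which expectations are over $\mathcal{A}$, over $\mathcal{S}$, and over the fresh samples $z_i'$, and argue that the renaming of i.i.d.\ coordinates is legitimate (it is, because the loss and the algorithm are measurable and the coordinates are exchangeable). Everything else is linearity of expectation and a direct invocation of the stated stability bound; there is no analytic estimate to grind through.
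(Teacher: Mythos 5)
Your argument is correct: it is the standard replace-one symmetrization proof of the stability-implies-generalization lemma from \cite{hardt2016train}, which is exactly the result the paper imports by citation without reproving it. The only bookkeeping point specific to this federated setting is that the fresh sample $z_{i,j}'$ replacing the $(i,j)$-th coordinate must be drawn from the same client distribution $\mathcal{D}_i$, so that the symmetrized test term matches the paper's (two-level) population risk; with that understood, the term-by-term invocation of the uniform-stability bound and the final averaging over the $mn$ coordinates go through exactly as you describe.
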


\textbf{FL generalization error via uniform stability.} Here we aim to use uniform stability to upper bound the FL generalization error. 
Suppose given $n$ samples $\mathcal{S}_i\!=\!\left\{\boldsymbol{z}_{i,1}, \boldsymbol{z}_{i,2}, \cdots, \boldsymbol{z}_{i,n}\right\}$, where $\boldsymbol{z}_{i,j}\!=\!\left(\boldsymbol{x}_{i,j}, \boldsymbol{y}_{i,j}\right)$ is sampled from an unknown distribution $\mathcal{D}_i$, and suppose the generated sample set $\mathcal{S}_i^{(j)}\!=\!\left\{\boldsymbol{z}_{i,1}^{\prime}, \boldsymbol{z}_{i,2}^{\prime}, \cdots, \boldsymbol{z}_{i,n}^{\prime}\right\}\!=\!$ $\left\{\boldsymbol{z}_{i,1}, \boldsymbol{z}_{i,2}, \cdots, \boldsymbol{z}_{i,j-1}, \boldsymbol{z}_{i,j}^{\prime}, \boldsymbol{z}_{i,j+1} \cdots, \boldsymbol{z}_{i,n}\right\}$ which only differs from the set $\mathcal{S}_i$ with the $j$-th sample. $\mathcal{S}\!=\!\left\{\mathcal{S}_i\right\}_{i=1}^m, \mathcal{S}^{\left(j_i\right)}\!=\!\left\{\mathcal{S}_1, \ldots, \mathcal{S}_{i-1}, \mathcal{S}_i^{(j)}, \mathcal{S}_{i+1}, \ldots, \mathcal{S}_m\right\}$. One usually analyzes the stability of an algorithm by replacing one sample in $\mathcal{S}$ by another sample from $\mathcal{D}$, and we get $\mathcal{S}^{\left(j_i\right)}$. Then based on these two sets, one can train the algorithm on $\mathcal{S}$ to obtain different solutions $\boldsymbol{\theta}$ of the function $F_{\mathcal{S}}(\boldsymbol{\theta})$. When using $\mathcal{S}^{\left(j_i\right)}$, we adopt $\boldsymbol{\theta}^{\prime}_t$ and $\boldsymbol{\theta}_{i,k}^{t,\prime}$ to denote their corresponding versions $\boldsymbol{\theta}_t$ and $\boldsymbol{\theta}_{i,k}^{t}$ ($\boldsymbol{\theta}_t$ and $\boldsymbol{\theta}_{i,k}^{t}$ are trained on $\mathcal{S}$ in Algorithm \ref{algorithm 1}).

\section{Theoretical Analysis}


\subsection{Generalization Error Analysis}
We analyze generalization based on  following assumptions: \\
\textbf{Assumption 1.} \textit{The loss function $l(\cdot, z)$ is $\mu$-strongly convex for any sample $z$, $l(\boldsymbol{\theta} ; z) \geq l\left(\boldsymbol{\theta}^{\prime} ; z\right)+\left\langle\nabla l\left(\boldsymbol{\theta}^{\prime} ; z\right), \boldsymbol{\theta}-\boldsymbol{\theta}^{\prime}\right\rangle+\frac{\mu}{2}\left\|\boldsymbol{\theta}-\boldsymbol{\theta}^{\prime}\right\|^2$, for any $z, \boldsymbol{\theta}, \boldsymbol{\theta}^{\prime}$.}\\
\textbf{Assumption 2.} \textit{The loss function $l(\cdot, z)$ is $L$ Lipschitz continous, $\mid l(\boldsymbol{\theta} ; z)-$ $l\left(\boldsymbol{\theta}^{\prime} ; z\right) \mid \leq L\left\|\boldsymbol{\theta}-\boldsymbol{\theta}^{\prime}\right\|$, for any $z,\boldsymbol{\theta}, \boldsymbol{\theta}^{\prime}$.}\\
\textbf{Assumption 3.}
\textit{There exists a constant $\sigma>0$ such that for any $\boldsymbol{\theta}, i \in[m]$, and $z_i \sim D_i$, $\mathbb{E}\left[\left\|\nabla l\left(\boldsymbol{\theta} ; z_i\right)-\nabla F_i(\boldsymbol{\theta})\right\|^2\right] \leq \sigma^2$.}\\
\textbf{Assumption 4.} \textit{The loss function $l(\cdot, z)$ is $\beta$-smooth, $\left\|\nabla l(\boldsymbol{\theta} ; z)-\nabla l\left(\boldsymbol{\theta}^{\prime} ; z\right)\right\| \leq \beta\left\|\boldsymbol{\theta}-\boldsymbol{\theta}^{\prime}\right\|$, for any $z, \boldsymbol{\theta}, \boldsymbol{\theta}^{\prime}$.}\\
\textbf{Assumption 5 (data heterogeneity).} 
\textit{Given $i \in[m]$, there exists a constant $\sigma_{g}>0$, for any $\boldsymbol{\theta}$ we have
	$
	\frac{1}{m} \sum_{i=1}^m\left\|\nabla F_i(\boldsymbol{\theta})-\nabla F(\boldsymbol{\theta})\right\| \leq\sigma_{g}.
	$}\\
Assumption 2, as in  \cite{zhang2022stability}, links model perturbation to stability. Assumptions 3 and 5 in our study address data heterogeneity and algorithm convergence impacts.
	


\begin{theorem}[Generalization Error] Assuming all clients participate in each round with Option I:\\
	\label{T 5.1}\textbf{Strongly convex}: Under Assumptions 1-5, suppose  loss $\ell(\boldsymbol{x} ,\boldsymbol{y}; \boldsymbol{\theta})$ is $\mu$-strongly convex. By setting  $\eta_k^t \leq \frac{1}{\beta KT}$, $\tilde{b}=1+\left(\frac{\mu}{(\beta+\mu) K}\right)^{K-1}\frac{1}{T}$, 
	the generalization error satisfies:
	\vspace{-2mm}
	$$
	\begin{aligned}
		& \texttt{ FedSWA: } \varepsilon_{g e n}\leq \frac{2 L}{m n\beta}  e^{1-\frac{\mu}{(\beta+\mu)T}}\left(\tilde{b} L+\tilde{b} \sigma_g+\tilde{b} \sigma\right). \\
		& \texttt{ FedMoSWA: } \varepsilon_{\text {gen }}\leq \frac{2 L}{m n\beta}  e^{1-\frac{\mu}{(\beta+\mu)T}}\left(\tilde{b} L+\sigma_g+\tilde{b} \sigma\right).
	\end{aligned}
	$$\\
	\textbf{Non-convex}:Under Assumptions 2-5,
	assume $\ell(\boldsymbol{x} ,\boldsymbol{y}; \boldsymbol{\theta})$ is  $\beta$-smooth.  Together with  $\eta_k^t \leq \frac{1}{\beta KT}$,  $\tilde{c}=1+\left(2+\frac{1}{KT}\right)^{K-1}\frac{1}{T}$,  the generalization error  satisfies:
	\vspace{-2mm}
	$$
	\begin{aligned}
		\texttt{ FedSWA: } \varepsilon_{\text {gen }}& \leq \frac{2L}{m n\beta}  e^{\frac{1}{T}+1}\left(\tilde{c} L+\tilde{c}\sigma_g+\tilde{c} \sigma\right).\\
	   \texttt{ FedMoSWA: } \varepsilon_{\text {gen }}&\leq \frac{2L}{m n\beta} e^{\frac{1}{T}+1}\left(\tilde{c} L+\sigma_g+\tilde{c} \sigma\right).
	\end{aligned}
	$$
\end{theorem}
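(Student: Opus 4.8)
The plan is to bound the uniform stability parameter $\epsilon$ of each algorithm and then invoke the cited stability-generalization lemma, so that $\varepsilon_{\mathrm{gen}} \le \epsilon$. Fix two datasets $\mathcal{S}$ and $\mathcal{S}^{(j_i)}$ differing in a single sample on client $i_0$. For a fixed test point $(\boldsymbol{x},\boldsymbol{y})$, Lipschitz continuity (Assumption 2) gives $\mathbb{E}[\ell(\boldsymbol{x},\boldsymbol{y};\boldsymbol{\theta}_{T}) - \ell(\boldsymbol{x},\boldsymbol{y};\boldsymbol{\theta}'_{T})] \le L\,\mathbb{E}\|\boldsymbol{\theta}_T - \boldsymbol{\theta}'_T\|$, so the whole task reduces to tracking the parameter divergence $\delta_t := \mathbb{E}\|\boldsymbol{\theta}_t - \boldsymbol{\theta}'_t\|$ (and the per-iteration local divergences $\delta_{i,k}^t$) through the recursions in Algorithm~\ref{algorithm 1}. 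First I would set up a one-step recursion on the local iterates: for a client $i \ne i_0$ the update map $\boldsymbol{\theta} \mapsto \boldsymbol{\theta} - \eta_k^t(g_i(\boldsymbol{\theta}) - \boldsymbol{c}_i + \boldsymbol{m})$ applied to both trajectories contracts (strongly convex case) or is $(1+\eta_k^t\beta)$-expansive (non-convex case) by the standard SGD-stability argument of \citet{hardt2016train}, using $\beta$-smoothness (Assumption 4); for the single client $i_0$ touched by the swapped sample, with probability $1/n$ the minibatch contains it, contributing an extra $2L\eta_k^t$ term, and otherwise the bound is as for the other clients. The control-variate terms $-\boldsymbol{c}_i+\boldsymbol{m}$ and the server momentum $\boldsymbol{m}$ must also be expanded in terms of past divergences; here the heterogeneity bound $\sigma_g$ (Assumption 5) and gradient-noise bound $\sigma$ (Assumption 3) enter, because $\boldsymbol{c}_i - \boldsymbol{m}$ measures precisely the client-drift that data heterogeneity induces, and for \texttt{FedMoSWA} the momentum-based variance reduction kills the amplification of the $\sigma_g$ term — this is the mechanism that turns $\tilde{c}\sigma_g$ into $\sigma_g$.

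Next I would aggregate over the $K$ local steps within round $t$. Composing the per-step expansion factors gives a round-level factor of the form $\prod_{k=0}^{K-1}(1+\eta_k^t\beta)$, which under the stepsize choice $\eta_k^t \le \frac{1}{\beta K T}$ is at most $(1 + \frac{1}{KT})^{K} \approx e^{1/T}$; combined with the learning-rate-decay profile \eqref{eq3} and the EMA aggregation weight $\alpha$ in line~19, this yields a recursion $\delta_{t} \le (1 + \frac{c'}{T})\delta_{t-1} + \frac{2L}{mn}\cdot\frac{K}{KT}\cdot(\text{drift terms})$, where the factor $\frac{1}{mn}$ comes from the $\frac{1}{n}$ probability of hitting the swapped sample times the $\frac{1}{m}$ (or $\frac{s}{m}$) aggregation weight of client $i_0$. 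Unrolling this geometric recursion over $t=1,\dots,T$ produces the prefactor $\frac{2L}{mn\beta}e^{1/T+1}$ (the extra $e^1$ coming from $\prod_t (1+c'/T) \le e^{c'}$), and the bracket $(\tilde c L + \tilde c\sigma_g + \tilde c\sigma)$ where $\tilde c = 1 + (2 + \frac{1}{KT})^{K-1}\frac{1}{T}$ is exactly the accumulated amplification of the within-round expansion applied to the drift terms — and for the strongly-convex case the analogous computation with contraction factor $\frac{\mu}{(\beta+\mu)K}$ in place of the expansion gives $\tilde b$ and the $e^{1-\mu/((\beta+\mu)T)}$ factor. For \texttt{FedMoSWA} I would show separately that the control variable $\boldsymbol{c}_i$, being itself a running average of past gradients, has divergence of the same order as $\delta_t$ rather than an independent $\sigma_g$-sized contribution, so that the heterogeneity term appears only once (un-amplified) — yielding $(\tilde c L + \sigma_g + \tilde c\sigma)$.

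The main obstacle I anticipate is handling the coupled recursion for \texttt{FedMoSWA}: the local iterate divergence $\delta_{i,k}^t$, the client-control divergence $\mathbb{E}\|\boldsymbol{c}_i - \boldsymbol{c}_i'\|$, and the server-control divergence $\mathbb{E}\|\boldsymbol{m} - \boldsymbol{m}'\|$ all feed into one another across rounds, so one cannot unroll a single scalar recursion. The fix is to set up a vector-valued recursion (or a carefully chosen Lyapunov combination of the three quantities) and show its evolution matrix has spectral radius $1 + O(1/T)$ under the stepsize and momentum-parameter constraints; the momentum parameter $\gamma$ and the EMA parameter $\alpha$ must be tuned so that the off-diagonal coupling does not blow up the $\sigma_g$ coefficient. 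A secondary technical point is correctly propagating the $\frac{1}{n}$ "hit probability" through the averaging and the control-variate updates (Option I uses $g_i(\boldsymbol{\theta}_t)$, which depends on the swapped sample with probability $1/n$); one must be careful that this does not accumulate across rounds into something larger than $\frac{1}{mn}$, which is ensured by the fact that only client $i_0$ ever carries the perturbation and its aggregation weight is bounded. Once these recursions are established, the final bounds follow by the elementary geometric-sum estimates sketched above together with the $1/(mn)$ normalization and Lemma~2.1.
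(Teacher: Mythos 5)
Your overall skeleton --- reducing $\varepsilon_{\mathrm{gen}}$ to $L\,\mathbb{E}\|\boldsymbol{\theta}_T-\boldsymbol{\theta}_T'\|$ via uniform stability, running the Hardt-style two-case expansion/contraction recursion on the local steps with hit probability $1/n$, composing the per-step factors over $K$ local iterations, averaging over clients to get the $1/(mn)$ factor, and unrolling over $T$ rounds --- is the same as the paper's. But the step where you dispose of the perturbed-sample case with ``an extra $2L\eta_k^t$ term'' skips the ingredient that actually produces the stated bound. In the paper that case is bounded by $\frac{2\eta_k^t}{n}\mathbb{E}\|g_i(\boldsymbol{\theta}_{i,k}^t)\|$, and a separate client-drift lemma ($\mathbb{E}\|\boldsymbol{\theta}_{i,k}^t-\boldsymbol{\theta}_t\|\le(1+c)^{K-1}\tilde\eta_t(\mathbb{E}\|\nabla F(\boldsymbol{\theta}_t)\|+\sigma_{g,i}+\sigma)$ in the non-convex case, with the contraction factor replacing $(1+c)$ in the strongly convex case) is then used to show $\mathbb{E}\|g_i(\boldsymbol{\theta}_{i,k}^t)\|\le\tilde c\,(\mathbb{E}\|\nabla F(\boldsymbol{\theta}_t)\|+\sigma_{g,i}+\sigma)$ for FedSWA. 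This is the sole source of the $\tilde c\,\sigma_g+\tilde c\,\sigma$ (resp.\ $\tilde b$) terms; your attribution of the heterogeneity terms to the control-variate/momentum analysis cannot explain FedSWA, which has no control variates, and with the $2L\eta_k^t$ shortcut you would obtain a bound without the claimed $\sigma_g,\sigma$ structure at all.

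For FedMoSWA your plan --- a coupled vector recursion among $\delta_{i,k}^t$, $\mathbb{E}\|\boldsymbol{c}_i-\boldsymbol{c}_i'\|$ and $\mathbb{E}\|\boldsymbol{m}-\boldsymbol{m}'\|$ controlled by a spectral-radius / Lyapunov argument --- is only asserted, and the two claims that matter (spectral radius $1+O(1/T)$ and no blow-up of the $\sigma_g$ coefficient) are exactly what would need to be proved, so that half of the argument is incomplete. The paper's route is different and simpler: it treats $\boldsymbol{c}_i^t$ and $\boldsymbol{m}^t$ as common to both coupled trajectories so they cancel in the update difference, and the un-amplified $\sigma_g$ arises because the FedMoSWA client drift is governed by $\mathbb{E}\|\boldsymbol{m}^t\|\le L+\sigma$ (the momentum aggregates over all clients, so no heterogeneity enters the drift), giving $\mathbb{E}\|g_i(\boldsymbol{\theta}_{i,k}^t)\|\le\tilde c\,(L+\sigma)+\sigma_{g,i}$, i.e.\ heterogeneity appears exactly once and unamplified. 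If you insist on your coupled-recursion route you must actually bound the cross-trajectory divergence of $\boldsymbol{c}_i$ and $\boldsymbol{m}$ (with Option I, $\boldsymbol{c}_i=g_i(\boldsymbol{\theta}_t)$ genuinely differs across the two trajectories), which your sketch does not do; as written, the proposal neither carries out that harder analysis nor recovers the paper's mechanism for the improved $\sigma_g$ dependence.
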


Theorem \ref{T 5.1} shows that for strongly convex and non-convex problems,   $\varepsilon_{\text {opt }}$ has three terms: the first bias term has the  Lipschitz continous constant $L$, the second bias term characterizes the effect of data heterogeneity $\sigma_g$, and the third term reveals the impact of the stochastic gradient noise $\sigma$.

For strongly convex problems, Theorem \ref{T 5.1} shows that  $\varepsilon_{\text {gen }}$ of \texttt{FedMoSWA} can be upper bounded by $\mathcal{O}(\frac{2L}{m n\beta}  e^{1-\frac{\mu}{(\beta+\mu)T}}(\tilde{b} L+\sigma_g+\tilde{b} \sigma))$,  which is better than that of \texttt{FedSWA},  $\mathcal{O}(\frac{2L}{m n\beta}  e^{1-\frac{\mu}{(\beta+\mu)T}}(\tilde{b} L+\tilde{b}\sigma_g+\tilde{b} \sigma))$ ($\tilde{b} \sigma_g \gg \sigma_g$). Compared to \texttt{FedSWA},  \texttt{FedMoSWA} reduces the effect of data heterogeneity on the generalization error. 
In addition, we can observe that the generalization ability of the algorithm is affected by the number of clients $m$ and the amount of data $mn$ on the clients. 
More client participation ($m$) can effectively enhance FL generalization. Similarly, to achieve smaller generalization error, clients should use large amount of data ($mn$). Reducing the number of local iterations $K$ improves generalization capabilities. And greater data heterogeneity impairs federated learning generalization.

For the non-convex generalization error,  Theorem \ref{T 5.1} shows that  $\varepsilon_{\text {gen }}$ of \texttt{FedMoSWA} is better than \texttt{FedSWA}'s $\mathcal{O}(\frac{2L}{m n\beta}  e^{\frac{1}{T}+1}(\tilde{c} L+\tilde{c}\sigma_g+\tilde{c} \sigma))$, and $\tilde{c} \sigma_g \gg \sigma_g$ in Table \ref{table 1}. \texttt{FedMoSWA} and  \texttt{FedSWA}  generalization error is better than FedSAM in Table \ref{table 1}.
\subsection{Optimization Error Analysis }
\begin{theorem}[Optimization Error of  \texttt{FedMoSWA}]
	\label{T 5.2}
	For $\beta$-smooth functions $\left\{F_i\right\}$, which satisfy Assumptions 6-9, and are the same as in the SCAFFOLD \cite{karimireddy2020scaffold} algorithm (see the Appendix for details), the output of \texttt{FedMoSWA} has expected error smaller than $\epsilon$.\\	
	\textbf{Strongly convex}: $ \eta_k^{t} \!\leq\! \min \big(\frac{1}{\beta K \alpha}, \frac{s}{ \mu m K \alpha}\big), T \!\geq \!\max \big(\frac{\beta}{\mu}, \frac{m}{s}\big)$ then
	$$
	\begin{aligned}
		&\mathcal{O}\left(\frac{\sigma^2}{\mu T K s}\left(1+\frac{s}{\alpha^2}\right)+\frac{m \mu}{s} D^2 \exp \left(-\left\{\frac{s}{m}+ \frac{\mu}{\beta}\right\}T\right)\right)
	\end{aligned}.
	$$
	\textbf{Non-convex}: $\eta_k^{t} \leq \frac{1}{ K \alpha \beta}\left(\frac{s}{m}\right)^{\frac{2}{3}}$, and $T \geq 1$, then
	$$
	\mathcal{O}\big(\frac{\sigma \sqrt{F}}{\sqrt{T K s}}\big(\sqrt{1+\frac{m}{\alpha^2}}\big)+\frac{\beta F}{T}\left(\frac{m}{s}\right)^{\frac{2}{3}}\big).
	$$
	Here $D^2:=\left\|\boldsymbol{\theta}^0-\boldsymbol{\theta}^{\star}\right\|^2+\frac{1}{2 m \beta^2} \sum_{i=1}^m\left\|\boldsymbol{c}_i^0-\nabla F_i\left(\boldsymbol{\theta}^{\star}\right)\right\|^2$, $F:=F\left(\boldsymbol{\theta}^0\right)-F\left(\boldsymbol{\theta}^{\star}\right)$.
\end{theorem}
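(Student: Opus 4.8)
The plan is to adapt the SCAFFOLD-style Lyapunov analysis of \cite{karimireddy2020scaffold} to account for the three structural differences of \texttt{FedMoSWA}: the momentum update of the server control variable $\boldsymbol{m}$, the cyclical (decaying-then-restarting) local learning rate $\eta_k^t$, and the EMA server aggregation $\boldsymbol{\theta}_t = \boldsymbol{\theta}_{t-1} + \alpha(\boldsymbol{v}_t - \boldsymbol{\theta}_{t-1})$. First I would introduce the effective per-round step size $\tilde\eta^t := \sum_{k=0}^{K-1}\eta_k^t = \tfrac{K\eta_l(1+\rho)}{2}$ and rewrite the round update as $\boldsymbol{\theta}_t = \boldsymbol{\theta}_{t-1} - \alpha\tilde\eta^t\,\bar g^t$, where $\bar g^t$ is an $\eta_k^t$-weighted average of the corrected stochastic gradients $g_i(\boldsymbol{\theta}_{i,k}^t)-\boldsymbol{c}_i+\boldsymbol{m}$ over the sampled clients. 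This isolates $\alpha\tilde\eta^t$ as the quantity that plays the role of a single global step and makes transparent where $\alpha$ enters the variance.

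Then I would prove three lemmas in sequence. (i) A \emph{client-drift lemma}: using $\beta$-smoothness and the stated step bounds, show $\mathbb{E}\|\boldsymbol{\theta}_{i,k}^t - \boldsymbol{\theta}_{t-1}\|^2 = \mathcal{O}(\eta_l^2 K^2)$ times a corrected-gradient-norm term, and that the correction $-\boldsymbol{c}_i+\boldsymbol{m}$ cancels the heterogeneity contribution up to the control-variable error $\mathcal{E}^t := \tfrac1m\sum_i \mathbb{E}\|\boldsymbol{c}_i^t - \nabla F_i(\boldsymbol{\theta}^\star)\|^2$. (ii) A \emph{control-variable contraction lemma}: for Option I, $\boldsymbol{c}_i^{+}=g_i(\boldsymbol{\theta}_t)$, so only the $s$ sampled clients refresh $\boldsymbol{c}_i$; combined with smoothness and the drift bound this yields $\mathbb{E}[\mathcal{E}^{t+1}] \le (1-\tfrac{s}{m})\mathcal{E}^t + \tfrac{s}{m}(\text{small})\,\mathbb{E}\|\boldsymbol{\theta}_t-\boldsymbol{\theta}^\star\|^2 + \mathcal{O}(\tfrac{\sigma^2 s}{m})$. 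Here one must \emph{also} show $\mathbb{E}\|\boldsymbol{m} - \tfrac1m\sum_i\boldsymbol{c}_i\|^2 \le \mathcal{O}(\mathcal{E}^t) + \mathcal{O}(\sigma^2)$, i.e.\ that the $\gamma$-momentum iterate tracks the client-average control variable with a controlled lag; this is the step replacing SCAFFOLD's exact identity $\boldsymbol{c}=\tfrac1m\sum_i\boldsymbol{c}_i$ and is the place I expect the most work, since the momentum lag must be absorbed without degrading the $s/m$ contraction. (iii) A \emph{one-round progress lemma}: expanding $\|\boldsymbol{\theta}_t-\boldsymbol{\theta}^\star\|^2 = \|\boldsymbol{\theta}_{t-1}-\boldsymbol{\theta}^\star\|^2 - 2\alpha\tilde\eta^t\langle\bar g^t,\boldsymbol{\theta}_{t-1}-\boldsymbol{\theta}^\star\rangle + \alpha^2(\tilde\eta^t)^2\|\bar g^t\|^2$, bounding the inner product by strong convexity (by convexity plus smoothness in the non-convex case) and the variance of the last term by $\mathcal{O}(\tfrac{\sigma^2}{Ks})$ per step — the $1/(Ks)$ from averaging $K$ minibatches over $s$ clients — plus the drift and $\mathcal{E}^t$ terms.

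Finally I would form the Lyapunov function $\Phi^t := \mathbb{E}\|\boldsymbol{\theta}^t-\boldsymbol{\theta}^\star\|^2 + \kappa\,\mathcal{E}^t$ with $\kappa$ chosen so the cross terms cancel, obtaining $\Phi^{t+1} \le \bigl(1 - \tfrac{\mu}{\beta} - \tfrac{s}{m}\bigr)\Phi^t + \mathcal{O}\!\bigl(\tfrac{\sigma^2}{\mu K s}(1+\tfrac{s}{\alpha^2})\bigr)$ in the strongly convex case; the factor $1+\tfrac{s}{\alpha^2}$ arises because shrinking $\alpha$ forces $\tilde\eta$ large to keep the effective step $\alpha\tilde\eta$ fixed, amplifying the per-round gradient/variance term by $1/\alpha^2$ while the $K,s$ factors dampen it. Unrolling this linear recursion over $T$ rounds produces the geometric term $\tfrac{m\mu}{s}D^2\exp\!\bigl(-\{\tfrac{s}{m}+\tfrac{\mu}{\beta}\}T\bigr)$ — the prefactor $m\mu/s$ coming from $\kappa$ and the strong-convexity normalization — together with the stated noise floor. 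For the non-convex case I would instead telescope $\mathbb{E}\|\nabla F(\boldsymbol{\theta}^t)\|^2$ against $F := F(\boldsymbol{\theta}^0)-F(\boldsymbol{\theta}^\star)$ via $\beta$-smoothness, tune $\eta_l$ to balance $\tfrac{\beta F}{T}(\tfrac{m}{s})^{2/3}$ against the $\mathcal{O}\!\bigl(\sigma\sqrt{F/(TKs)}\bigr)$ variance term, and carry the $\sqrt{1+m/\alpha^2}$ factor through the same EMA bookkeeping. The main obstacle, as noted, is controlling the momentum-induced discrepancy $\boldsymbol{m}-\tfrac1m\sum_i\boldsymbol{c}_i$ under partial participation so the $s/m$ contraction survives; a secondary nuisance is that $\eta_k^t$ varies within a round, so every "average gradient" in the argument is an $\eta_k^t$-weighted average and the normalization $1/\sum_k\eta_k^t$ in Option~II of the $\boldsymbol{c}_i$ update must be tracked consistently.
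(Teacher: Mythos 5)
Your overall route is the same as the paper's: a SCAFFOLD-style Lyapunov analysis with effective step size $\tilde\eta=K\eta_l\alpha$, a client-drift lemma, a $(1-\tfrac{s}{m})$ contraction for the control-variate lag $\mathcal{C}_t=\tfrac1m\sum_i\mathbb{E}\|\mathbb{E}[\boldsymbol{c}_i^t]-\nabla F_i(\boldsymbol{\theta}^{\star})\|^2$, a one-round progress lemma, and the standard linear/sub-linear unrolling lemmas (the paper treats the within-round learning rate as constant in this part, so your $\eta_k^t$-weighted bookkeeping is extra but harmless). The one substantive divergence is exactly the step you flag as hardest: the momentum server variable. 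The paper never compares $\boldsymbol{m}$ with $\tfrac1m\sum_i\boldsymbol{c}_i$; instead it introduces a third Lyapunov term $\mathcal{D}_t:=\mathbb{E}\|\boldsymbol{m}^t-\nabla F(\boldsymbol{\theta}^{\star})\|^2=\mathbb{E}\|\boldsymbol{m}^t\|^2$, proves a contraction $\mathcal{D}_t\le(1-\gamma)\mathcal{D}_{t-1}+\gamma\bigl(4\beta(\mathbb{E}[F(\boldsymbol{\theta}^{t-1})]-F(\boldsymbol{\theta}^{\star}))+2\beta^2\mathcal{E}_t\bigr)$ exactly parallel to the control-lag lemma, and weights it by $9\tilde\eta^2/\gamma$ in the potential (with the analogous quantities $\Xi_t,\mathcal{G}_t$ in the non-convex case).

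Your replacement for that step, namely the claimed bound $\mathbb{E}\|\boldsymbol{m}-\tfrac1m\sum_i\boldsymbol{c}_i\|^2\le\mathcal{O}(\mathcal{E}^t)+\mathcal{O}(\sigma^2)$, is where I see a genuine gap. Under partial participation $\boldsymbol{m}$ is a geometric average of \emph{sample means} $\tfrac1s\sum_{i\in\mathcal{S}^r}\boldsymbol{c}_i^{r,+}$ over past rounds, so the discrepancy contains a client-sampling term controlled by the cross-client dispersion of the $\boldsymbol{c}_i$ plus a staleness term. To bound the dispersion you must route $\boldsymbol{c}_i-\bar{\boldsymbol{c}}$ through $\nabla F_i(\boldsymbol{\theta}^{\star})$, and $\tfrac1m\sum_i\|\nabla F_i(\boldsymbol{\theta}^{\star})-\nabla F(\boldsymbol{\theta}^{\star})\|^2$ is only bounded by $\sigma_g^2$; done this way a heterogeneity term leaks back into $\varepsilon_{\mathrm{opt}}$, defeating the very point of the theorem (a $\sigma_g$-free optimization error, which is the advertised advantage over FedSWA/FedSAM/SCAFFOLD). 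Measuring $\boldsymbol{m}$ against $\nabla F(\boldsymbol{\theta}^{\star})$, as the paper does, avoids ever comparing the heterogeneous quantities $\nabla F_i(\boldsymbol{\theta}^{\star})$ across clients. The remainder of your plan — the one-round expansion, the Lyapunov recursion, the $(1+\tfrac{s}{\alpha^2})$ factor from the $1/\alpha^2$-scaled local-drift variance, and the step-size tuning/unrolling for both regimes — matches the paper and would go through once the momentum error is tracked this way; note also that $\gamma$ must then appear in the Lyapunov weights and step-size conditions (the paper uses the weight $9\tilde\eta^2/\gamma$ and conditions of the form $\tfrac{9\mu\tilde\eta}{2\gamma}\le\tfrac13$), which your single-$\kappa$ potential currently hides.
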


Theorem \ref{T 5.2} shows that for strongly convex and non-convex problems, optimization error $\varepsilon_{\text {opt }}$ has two terms: the first term reveals the impact of the stochastic gradient noise, and the second bias term characterizes the effect of initialization $\boldsymbol{\theta}_0$. The optimization error $\varepsilon_{\text {opt }}$ is not affected by the data heterogeneous parameter $\sigma_g$ and \texttt{FedMoSWA} converges faster as $\alpha$ increases. Convergence is also accelerated when the number of client $s$ increases. Its convergence  is accelerated when the number of local iteration $K$ increases.

\begin{table*}[h]
	\centering
	\setlength{\tabcolsep}{1pt}
	\caption{Comparison of the accuracies at the target rounds ($1000R$) and the communication round to reach target test
		accuracy (Acc.\%) of each algorithm  on Dirichlet-0.6 of the CIFAR10 and CIFAR100 datasets.}
	\begin{tabular}{p{2cm}ccccccccccccccc}
		\midrule[1.5pt]
		
		& \multicolumn{6}{c}{CIFAR10}  & \multicolumn{6}{c}{CIFAR100} \\
		\cmidrule(lr){2-7} \cmidrule(lr){8-13} 
		Method  & \multicolumn{2}{c}{LeNet-5} & \multicolumn{2}{c}{VGG-11} & \multicolumn{2}{c}{ResNet-18}& \multicolumn{2}{c}{LeNet-5} & \multicolumn{2}{c}{VGG-11} & \multicolumn{2}{c}{ResNet-18} \\
		\cmidrule(lr){2-3} \cmidrule(lr){4-5} \cmidrule(lr){6-7}	\cmidrule(lr){8-9} \cmidrule(lr){10-11} \cmidrule(lr){12-13}
		\setlength{\tabcolsep}{0pt}
		& Acc.(\%) & Rounds & Acc.(\%)& Rounds& Acc.(\%) & Rounds  & Acc.(\%)  & Rounds& Acc.(\%)  & Rounds& Acc.(\%)  & Rounds \\
		& 1000R & 78\%  &1000R & 80\% &1000R & 85\%& 1000R & 52\%  &1000R & 53\% &1000R & 55\% \\
		\midrule
		FedAvg  & $79.6_{\pm0.2}$& 574  &$84.0_{\pm0.3}$&313  & $  86.0_{\pm0.2}$&542 &$41.2_{\pm0.2 }$&1000+   &$48.9_{\pm0.4} $&1000+ & $54.2_{\pm0.2} $ &1000+  \\
		
		FedDyn & $80.8_{\pm0.3} $ &390 & $83.1_{\pm0.2}  $&557 & $ 84.6_{\pm0.2}$&1000+  & $40.8_{\pm0.5} $&1000+  & $ 45.2_{\pm0.6}$&1000+ & $ 46.5_{\pm0.6}  $&1000+ \\
		SCAFFOLD   & $80.7_{\pm0.3} $&799  & $86.9_{\pm0.2}$&272  &  $85.9_{\pm0.6}$& 543 & $50.3_{\pm0.2}$&1000+   &$47.9_{\pm0.2}$ &1000+  &  $54.1_{\pm0.2}$ &1000+\\
		
		FedSAM       &$79.9_{\pm0.2}$  &821  &$85.5_{\pm0.2}$ &227   & $83.6_{\pm0.2}$ &1000+   & $47.9_{\pm0.1}$&1000+  &$51.7_{\pm0.3}$  &1000+ &   $47.8_{\pm0.1}$&1000+ \\
		
		MoFedSAM & $82.5_{\pm0.1}  $&483  & $87.4_{\pm0.4 } $&345&$87.0_{\pm0.3} $&569 & $ 49.3_{\pm0.2}$&1000+  & $ 1.0_{\pm0.0}$  &1000+&$60.1_{\pm0.2}$&603\\

		FedLESAM  & $79.7_{\pm0.2} $&496  & $83.5_{\pm0.4}$&169   &  $89.0_{\pm0.1}$&480 & $41.3_{\pm0.3} $  &1000+& $49.2_{\pm0.3}  $&1000+& $52.1_{\pm0.1} $&1000+\\
		
		FedASAM  & $ 81.0_{\pm0.2} $&371& $86.3_{\pm0.3}$&169   &  $88.3_{\pm0.2}$& 542 & $42.1_{\pm0.3}$  &1000+& $53.4_{\pm0.2}   $&908& $49.8_{\pm0.3}$&1000+\\
		
		FedACG & $ 82.9_{ \pm0.1} $& 321  & $84.5_{\pm0.4}$ &249  &$90.3_{\pm0.4}$ &\textbf{237 }& $52.5_{\pm0.2}   $&729  & $ 50.9_{\pm0.2} $&1000+&$61.7_{\pm0.4} $ &518\\
		\rowcolor{LightBlue}
		\texttt{FedSWA}  & $ 81.5_{\pm0.2} $&591  & $86.5_{\pm0.1}$&169   &  $89.5_{\pm0.1}$&538 & $48.5_{\pm0.1 } $  &1000+& $52.5_{\pm0.4}  $&1000+& $59.8_{\pm0.3}$&574\\
		\rowcolor{LightRed}
		\texttt{FedMoSWA}   & $\mathbf{83.8_{\pm0.1} }$& \textbf{301}  &$\mathbf{88.1_{\pm0.2} }$&\textbf{137}  & $\mathbf{91.2_{\pm0.1}}$ &265 & $\mathbf{53.8_{\pm0.1} }$&\textbf{622 } &$\mathbf{61.7_{\pm0.4} }$ &\textbf{389} &$\mathbf{67.9_{\pm0.4} }$&\textbf{330 }\\
		\midrule[1.5pt]
	\end{tabular}
	\label{table 2}
\end{table*}

For strongly convex problems in Theorem \ref{T 5.2} shows that $\varepsilon_{\text {opt }}$ of FedMoSWA  is $\mathcal{O}\left(\frac{\sigma^2}{\mu T K s}\left(1+\frac{s}{\alpha^2}\right)+\frac{m \mu}{s} D^2 \exp \left(-\left\{\frac{s}{m}+ \frac{\mu}{\beta}\right\}T\right)\right)$. 
For $\alpha$, it can be observed that with the increase of $\alpha$, $\varepsilon_{\text {opt }}$ becomes smaller.
The   $\varepsilon_{\text {opt }}$ of \texttt{FedMoSWA} for nonconvex problems is $\mathcal{O}\left(\frac{\sigma \sqrt{F}}{\sqrt{TK s}}\left(\sqrt{1+\frac{m}{\alpha^2}}\right)+\frac{\beta F}{T}\left(\frac{m}{s}\right)^{\frac{2}{3}}\right)$, which  is not affected by the data heterogeneous parameter $\sigma_g$ and is better than \texttt{FedSWA}, FedSAM and MoFedSAM, as shown in Table \ref{table 1}. We can  prove that the $\varepsilon_{\text {opt }}$ of \texttt{FedMoSWA} is faster than SCAFFOLD's $\mathcal{O}\left(\frac{\sigma \sqrt{F}}{\sqrt{T K s}}\left(\sqrt{1+m}\right)+\frac{\beta F}{T}\left(\frac{m}{s}\right)^{\frac{2}{3}} \right)$ \cite{karimireddy2020scaffold}.

\begin{table*}[h]
	\centering
	\setlength{\tabcolsep}{1pt}
	\caption{Comparison of each algorithm on the CIFAR100 and Tiny ImageNet datasets with different data heterogeneity.}
	\begin{tabular}{p{2cm}ccccccccccccccc}
		\midrule[1.5pt]
		& \multicolumn{6}{c}{CIFAR100 (ResNet-18)} & \multicolumn{6}{c}{Tiny ImageNet (ViT-Base)} \\
		\cmidrule(lr){2-7} \cmidrule(lr){8-13} 
		Method & \multicolumn{2}{c}{Dirichlet-0.1} & \multicolumn{2}{c}{Dirichlet-0.3} & \multicolumn{2}{c}{Dirichlet-0.6}& \multicolumn{2}{c}{Dirichlet-0.1} & \multicolumn{2}{c}{Dirichlet-0.3} & \multicolumn{2}{c}{Dirichlet-0.6} \\
		\cmidrule(lr){2-3} \cmidrule(lr){4-5} \cmidrule(lr){6-7}	\cmidrule(lr){8-9} \cmidrule(lr){10-11} \cmidrule(lr){12-13}
		& Acc.(\%) & Rounds & Acc.(\%)& Rounds& Acc.(\%) & Rounds  & Acc.(\%)  & Rounds& Acc.(\%)  & Rounds& Acc.(\%)  & Rounds \\
		& 1000R & 55\%  &1000R & 55\% &1000R & 55\%& 400R & 70\%  &400R & 70\% &400R & 70\% \\
		
		\midrule
		FedAvg   & $45.8_{\pm0.3}$&1000+  &$52.5_{\pm0.3}  $ &1000+  &$54.2_{\pm0.2}$&1000+ & $70.9_{\pm0.1} $&258 &$71.8_{\pm0.1}$&223 &$72.8_{\pm0.1}$&208\\

		FedDyn  &  $45.8_{\pm0.2} $&1000+  & $ 45.9_{\pm0.3} $&1000+ & $46.5_{\pm0.2}$&1000+ & $67.5_{\pm0.3} $&400+   &$ 68.2_{\pm0.3}$&400+   &$ 69.3_{\pm0.3}$&400+ \\
		
		SCAFFOLD  &  $44.3_{\pm0.3}$&1000+  & $50.3_{\pm0.3}  $&1000+  &$52.3_{\pm0.2}$&1000+  & $71.6_{\pm0.1} $&202  &$72.5_{\pm0.1} $&192  &$73.1_{\pm0.2}$& 169\\
		
		FedSAM  & $40.1_{\pm0.4} $&1000+  & $49.0_{\pm0.3}  $&1000+  & $ 51.9_{\pm0.5}$&1000+  &$ 71.4_{\pm0.2} $& 212 & $ 72.2_{\pm0.2} $&194  & $72.9_{\pm0.4}$& 180\\
		
		MoFedSAM  & $51.5_{\pm0.2} $&1000+  & $57.5_{\pm0.2} $&770  & $60.1_{\pm0.1}$& 603 & $71.6_{\pm0.4}$&229  &$72.4_{\pm0.3} $&214  & $ 72.5_{\pm0.4}$&209  \\
		
		FedLESAM  &  $ 48.7_{\pm0.2}  $&1000+ & $53.3_{\pm0.4}   $&1000+ & $52.1_{\pm0.1} $ & 1000+   & $71.9_{\pm0.3} $&210  &$72.1_{\pm0.2}$&188  &$72.5_{\pm0.3}$& 182\\
		
		FedASAM  &  $  47.7_{\pm0.3}  $&1000+& $46.6_{\pm0.2}   $&1000+ &$49.8_{\pm0.1}$  & 1000+  & $69.2_{\pm0.3} $&400+   &$71.3_{\pm0.2}$&234 &$72.1_{\pm0.3}$& 196\\
		
		FedACG& $ 52.2_{\pm0.4}$&1000+  &$57.7_{\pm0.2} $&717  & $61.7_{\pm0.4}$&518 & $66.2_{\pm0.2}$&400+  & $68.5_{\pm0.1}$&400+  &$70.2_{\pm0.3}$&386 \\
		\rowcolor{LightBlue}
		\texttt{FedSWA}& $50.3_{\pm0.3} $ &1000+  &$55.5_{\pm0.4} $&889 &$59.8_{\pm0.3}$ &574 &$71.9_{\pm0.3} $ &199  &$72.6_{\pm0.2} $&179 &$73.2_{\pm0.2}$&168\\
		\rowcolor{LightRed}
		\texttt{FedMoSWA}& $\mathbf{61.9_{\pm0.5}} $ &$\mathbf{577}$ &$\mathbf{66.2_{\pm0.4}} $&$\mathbf{468}$  &$\mathbf{67.9 _{\pm0.4}}$&$\mathbf{330}$ & $\mathbf{73.8_{\pm0.3}} $ &$\mathbf{161}$ &$\mathbf{74.4_{\pm0.3}} $&$\mathbf{152}$ &$\mathbf{74.7_{\pm0.1}}$&$\mathbf{144}$\\
		\midrule[1.5pt]
	\end{tabular}
	
	\label{table 3}
\end{table*}

\section{Experiments}
\subsection{Experimental Settings}
\begin{figure}[bt]
	\centering
	\begin{minipage}[b]{0.235\textwidth}
		\centering
		\subcaptionbox{Dirichlet-0.1}{\includegraphics[width=\textwidth]{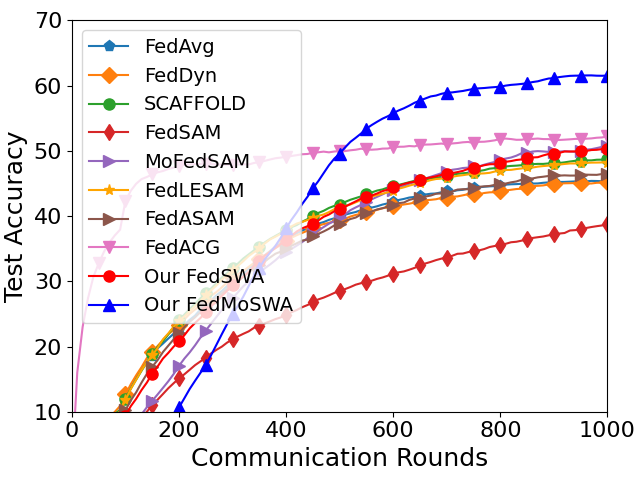}}
	\end{minipage}
	\hfill
	\begin{minipage}[b]{0.235\textwidth}
		\centering
		\subcaptionbox{Dirichlet-0.6}{\includegraphics[width=\textwidth]{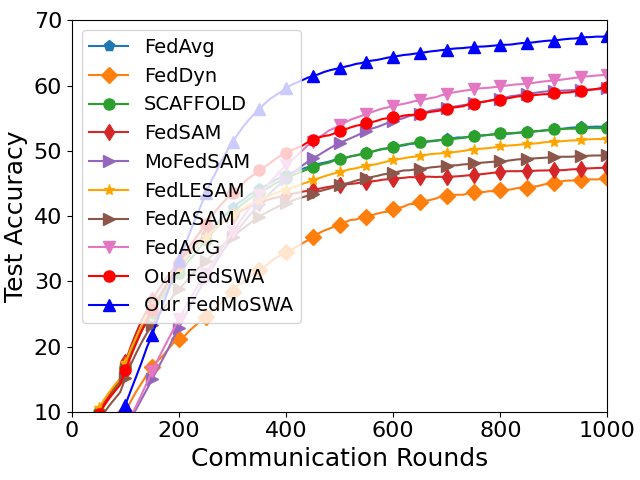}}
	\end{minipage}
	\caption{Convergence plots on CIFAR100 with ResNet-18.}
	\label{figure 3}
\end{figure}
\textbf{{Datasets:}} We evaluate our algorithms on  CIFAR10, CIFAR100 \cite{krizhevsky2009learning}, Tiny ImageNet \cite{le2015tiny}. For Non-IID data setup, we simulate the data heterogeneity by sampling the label ratios from Dirichlet distribution \cite{hsu2019measuring}. Dirichlet-0.1 means very high data heterogeneity and Dirichlet-0.6 implies average data heterogeneity.\\
\textbf{{Models:}} To test the robustness of our algorithms, we use various  classifiers including LeNet-5 \cite{lecun2015lenet}, VGG-11 \cite{simonyan2015very},  ResNet-18 \cite{he2016deep},  vision Transformer (ViT-Base) \cite{dosovitskiy2020image}.
\\
\textbf{{Methods:}} We compare \texttt{FedSWA} and \texttt{FedMoSWA} with many 
SOTA FL methods, including FedAvg \cite{mcmahan2017communication}, FedDyn \cite{durmus2021federated}, SCAFFOLD \cite{karimireddy2020scaffold}, FedAvgM \cite{hsu2019measuring},  FedSAM \cite{qu2022generalized}, MoFedSAM \cite{qu2022generalized}, FedLESAM \cite{fan2024locally}, FedASAM \cite{caldarola2022improving}, and  FedACG \cite{kim2024communication}.\\
\textbf{Hyper-parameter Settings:}
The number of clients is 100, batch size $B\!=\!50$, local epoch $E \!=\! 5$, $K \!=\! 50$  and client selection rate $C \!=\! 0.1$. We set the grid search range of client learning rate by $\eta_l \!\in\!\{10^{-3}, 3\times 10^{-3},...,10^{-1}, 3\times 10^{-1}\}$. The learning rate decay per round of communication is $0.998$, and the total is $T=1,000$. Specifically, we set $\rho\!=\!0.1$,  $\alpha=1.5$, $\gamma=0.2$ for \texttt{FedMoSWA} and \texttt{FedSWA}.

\subsection{Results on Nonconvex Problems}
\begin{figure*}[tb]
	\centering
	
	\begin{minipage}[b]{0.16\textwidth}
		\centering
		\subcaptionbox{SCAFFOLD}{\includegraphics[width=\textwidth]{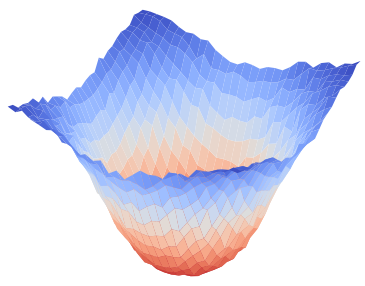}}
	\end{minipage}
	\hfill
	\begin{minipage}[b]{0.16\textwidth}
		\centering
		\subcaptionbox{FedLESAM}{\includegraphics[width=\textwidth]{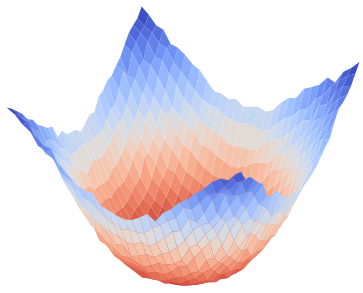}}
	\end{minipage}
	\hfill
	\begin{minipage}[b]{0.16\textwidth}
		\centering
		\subcaptionbox{FedSAM}{\includegraphics[width=\textwidth]{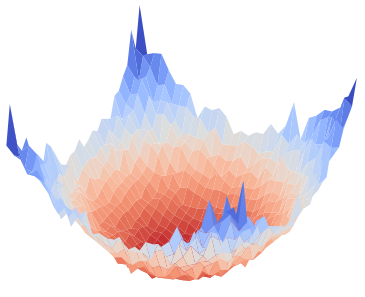}}
	\end{minipage}
	\hfill
	\begin{minipage}[b]{0.16\textwidth}
		\centering
		\subcaptionbox{MoFedSAM}{\includegraphics[width=\textwidth]{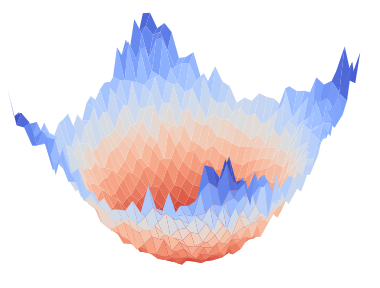}}
	\end{minipage}
	\hfill
	\begin{minipage}[b]{0.16\textwidth}
		\centering
		\subcaptionbox{\texttt{FedSWA}}{\includegraphics[width=\textwidth]{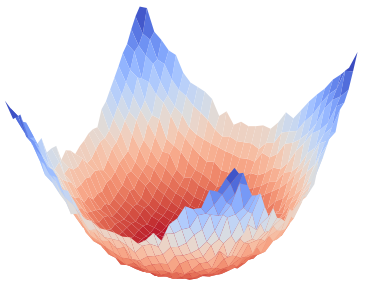}}
	\end{minipage}
	\hfill
	\begin{minipage}[b]{0.16\textwidth}
		\centering
		\subcaptionbox{\texttt{FedMoSWA}}{\includegraphics[width=\textwidth]{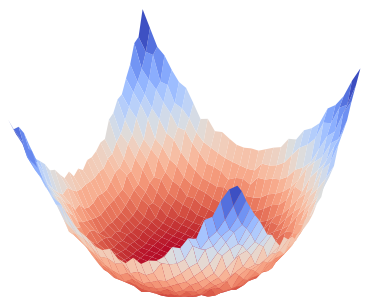}}
	\end{minipage}
	\caption{Global training loss surfaces for \texttt{FedSWA}, \texttt{FedMoSWA} and other baselines on Dirichlet-0.1 of CIFAR100 dataset with ResNet-18.  \texttt{FedMoSWA} could approach a more general and flat loss landscape which high generalization in FL.}
	\label{figure 4}
\end{figure*}
\textbf{Results on Convolutional Neural Network:}
From Table \ref{table 2}  and Figure \ref{figure 3}, we have the following observations:
(i) \texttt{FedSWA} can improve generalization ability by SWA, and it is better than FedSAM. Our \texttt{FedMoSWA} is better than FedSAM and its variant, MoFedSAM, in terms of generalization. 
(ii) \texttt{FedMoSWA} further improves the generalization ability by momentum-based stochastic control. Compared to the stochastic control algorithm, SCAFFOLD, our \texttt{FedMoSWA} demonstrates better generalization ability than SCAFFOLD. Compared to the momentum acceleration algorithm  FedACG, \texttt{FedMoSWA} shows significantly better generalization  combining momentum and variance reduction techniques. Compared to  FedSAM, MoFedSAM, SCAFFOLD and FedACG can not achieve desirable flatness. \texttt{FedMoSWA} further minimizes the global sharpness and
achieves much flatter loss landscape in Figure \ref{figure 4}.

\textbf{Impact of heterogeneity:} From Table \ref{table 3},  
we have the following observations: For different data heterogeneities, our \texttt{FedMoSWA} generalizes well. Especially when the degree of data heterogeneity is high (Dirichle-0.1), the final test results of our FedMoSWA are much better than all other algorithms.  
For example, as Non-IID levels increasing, \texttt{FedMoSWA} achieves  higher test accuracy
10.4\%, 8.5\% and 6.2\% and saving communication round
than MoFedSAM on the CIFAR-100 dataset.

\textbf{Impact of $\alpha$, $\gamma$:}  Figure \ref{figure 5} (a) compares the effects of $\alpha$ on FL aggregation.  As the $\alpha$ increases, the acceleration effect of our \texttt{FedMoSWA}  becomes more obvious. However, when $\alpha>1.5$, the performance of the algorithm decreases with increasing $\alpha$. We can find that the final performance of the algorithm is best when $\alpha=1.5$. Figure \ref{figure 5} (b) compares the effects of $\gamma$ in the same settings. When $\gamma=0.05$, \texttt{FedMoSWA} converges very slowly, and the final algorithm has low accuracy. As the $\gamma$ increases, the acceleration effect of our \texttt{FedMoSWA}  becomes more apparent. The final performance of the algorithm is best when $\gamma=0.2$.

\textbf{Impact of $\rho$: } Table \ref{tab:4}  and Figure \ref{figure 3} compare the effects of $\rho$ in the local training of \texttt{FedSWA} and \texttt{MoFedSWA}.  As $\rho$ decreases, the performance of the algorithms increases. We can find that the final performance of the algorithms is best when $\rho=0.1$. When $\rho=1$ and $\alpha=1$, our  \texttt{FedSWA} is equal to FedAvg, which means  cyclical learning rate  works in FL. When $\rho=1$ and $\alpha=1$, our  \texttt{FedMoSWA} is equal to FedAvg with momentum variance reduction without SWA (65.9\%) which means  momentum variance reduction  works in FL and it is better than SCAFFOLD (52.3\%).

\begin{figure}[tb]
	\centering
	\begin{minipage}[b]{0.235\textwidth}
		\centering
		\subcaptionbox{ Impact of  $\alpha$}{\includegraphics[width=\textwidth]{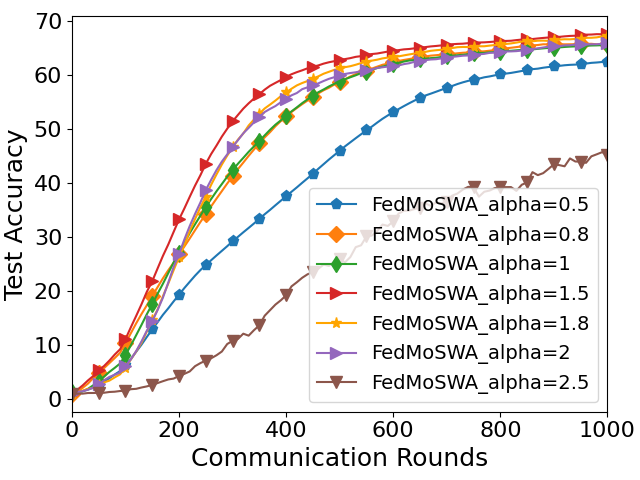}}
	\end{minipage}
	\begin{minipage}[b]{0.235\textwidth}
		\centering
		\subcaptionbox{Impact of $\gamma$}{\includegraphics[width=\textwidth]{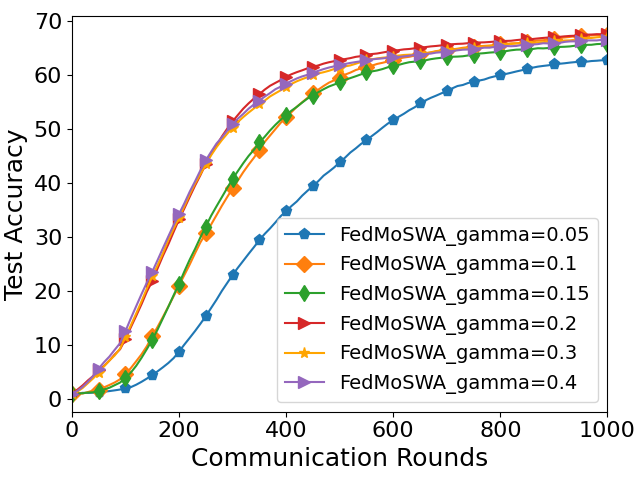}}
	\end{minipage}
	\caption{Convergence plots for \texttt{FedMoSWA} with different $\alpha$ and  $\gamma$ on CIFAR100 dataset using ResNet-18.}
	\label{figure 5}
\end{figure}

\begin{table}[tb]
	\centering
	\caption{Testing Acc (\%)  of \texttt{FedSWA} and \texttt{MoFedSWA} with different $\rho$ on CIFAR100 (ResNet-18, Dirichlet-0.6).}
	\vspace{-2mm}
	\label{tab:4}
	\setlength{\tabcolsep}{3pt}
	\begin{tabular}{lcccccccc}
		\midrule[1.5pt]
		$\rho$ &0.1 & 0.2 & 0.3 & 0.4& 0.7 &0.8& 1.0\\
		\midrule
		\texttt{FedSWA}  &\textbf{59.8}   & 58.5 & 57.9 &  56.6& 55.3 & 54.6 &54.2 \\
		\texttt{FedMoSWA}   & \textbf{67.9} & 67.3 &   67.0 & 66.8& 66.5&66.3&65.9\\
		\midrule[1.5pt]
	\end{tabular}
\end{table} 

\textbf{Results on Vision Transformer:}
In Table \ref{table 3}, we use the  ViT-Base model  on the Tiny ImageNet dataset. We used the pretrained model  from the official website, and conducted many experiments. All the experimental results verify that our \texttt{FedMoSWA}  can achieve excellent generalization on both the vision Transformer model and big datasets.

\subsection{Ablation Study}

\begin{table}[tb]
	\centering
	\caption{Full client participation on CIFAR-100 and ResNet-18 with 10 clients and 300 rounds.}
	\vspace{-2mm}
	\label{tab:5}
	\begin{tabular}{lccc}
		\midrule[1.5pt]
		Algorithm &Accuracy (\%)&	Improvement (\%)\\
		\midrule
        FedAvg  &58.2  & -\\
        SCAFFOLD  &59.9   & 	+1.7\\
        FedSAM  &48.3  & 	-9.9\\
        MoFedSAM  &37.9   & -20.3\\
        FedLESAM &59.2   & +1.0\\
		\texttt{FedSWA} (i)  &	60.2   & +2.0\\
        \texttt{FedMo} (ii)  &62.5  & +4.3 \\
		\texttt{FedMoSWA} (i+ii)  & 63.2 & 	+5.0 \\
		\midrule[1.5pt]
	\end{tabular}
\end{table} 

We conducted full client participation experiments in \ref{tab:5}. Here, FedMo denotes FedMoSWA without SWA (i) but only with Momentum Stochastic Control (ii). FedMo (62.5\%) outperforms SCAFFOLD (59.9\%) by 2.6\%. Even if the client is full participation, our method is still better than SCAFFOLD.

\begin{table}[tb]
	\centering
	\caption{10\% client participation on CIFAR-100 and ResNet-18 with 100 clients and 1000 rounds.}
	\vspace{-2mm}
	\label{tab:6}
	\begin{tabular}{lccc}
		\midrule[1.5pt]
		Algorithm &Accuracy (\%)&	Improvement (\%)\\
		\midrule
        FedAvg  &54.2  & -\\
        SCAFFOLD  &54.1  & 	-0.1\\
        FedSAM  &	47.8  & 	-6.4\\
		\texttt{FedSWA} (i)  &	59.8   & +5.6\\
        \texttt{FedMo} (ii)  &65.9  & +11.7\\
		\texttt{FedMoSWA} (i+ii)  & 67.9 & 	+13.7 \\
		\midrule[1.5pt]
	\end{tabular}
\end{table} 
When Dirichlet-0.6, 10\% participation clients, FedMo using only momentum stochastic control achieves 65.9\%, higher than SCAFFOLD (54.1\%), demonstrating that momentum variance reduction mitigates SCAFFOLD’s variance reduction delay. By combining SWA and momentum variance reduction, FedMoSWA achieves 67.9\% (+13.7\%), showing that both SWA (+5.6\%) and momentum control (+11.7\%) are effective, and momentum control has greater impact. Moreover, FedSWA is a simple algorithm like FedSAM and can be combined with other techniques. 

\section{Conclusion}
This study investigated  generalization concerns in FL and proposed two new algorithms, \texttt{FedSWA} and \texttt{FedMoSWA}, for the setting of highly heterogeneous data. In particular,  \texttt{FedMoSWA} addresses the essential issue of generalization  with highly heterogeneous data. By developing an analytical framework based on algorithmic stability, we theoretically established the superiority of \texttt{FedSWA} and \texttt{FedMoSWA} in terms of  generalization error.  Our \texttt{FedMoSWA} mitigates the effect of data heterogeneity on the generalization error, though it does not completely eliminate it. Our future works may consider  an approach that significantly eliminates the effect of high heterogeneity on the generalization error.

\section*{Acknowledgments}
This work was supported in the National Natural Science Foundation of China (Nos. 62276182), Peng
Cheng Lab Program (No. PCL2023A08), Tianjin Natural Science Foundation (Nos. 24JCYBJC01230, 24JCYBJC01460), and Tianjin Municipal Education Commission
Research Plan (No. 2024ZX008).

\section*{Impact Statement}
The proposed Stochastic Weight Averaging (SWA) and Momentum-Based Stochastic Controlled Weight Averaging (FedMoSWA) algorithms for federated learning aim to address data heterogeneity and improve model generalization. This paper presents one work whose goal is to advance the field of 
Machine Learning. There are many potential societal consequences 
of our work, none which we feel must be specifically highlighted here

\newpage

\bibliography{example_paper}
\bibliographystyle{icml2025}

\newpage
\appendix
\onecolumn


\section{Background}
In this section, we provide a concise overview of Sharpness-Aware Minimization (SAM) and Stochastic Weight Averaging (SWA).

\subsection{SAM: Overview}
$\boldsymbol{S A M}$ seeks to identify a solution $\theta$ that lies within a region characterized by uniformly low training loss $\mathcal{L}_{\mathcal{D}}(\theta)$, specifically within a flat minimum. The sharpness of the training loss function is quantified as:

$$
\max _{\|\epsilon\|_p \leq \rho} \mathcal{L}_{\mathcal{D}}(\theta+\epsilon)-\mathcal{L}_{\mathcal{D}}(\theta)
$$

Here, $\rho$ is a hyper-parameter that determines the size of the neighborhood, and $p \in[1, \infty)$. SAM focuses on reducing the sharpness of the loss by addressing the following minmax objective:

$$
\min _{\theta \in \mathbb{R}^d} \max _{\|\epsilon\|_p \leq \rho} \mathcal{L}_{\mathcal{D}}(\theta+\epsilon)+\lambda\|\theta\|_2^2
$$

where $\lambda$ is a hyper-parameter that balances the regularization term. According to \cite{foret2020sharpness}, the optimal choice for $p$ is typically 2. Therefore, we use the $\ell_2$-norm for the maximization over $\epsilon$ and simplify the expression by omitting the regularization term. To solve the inner maximization problem $\epsilon^* \triangleq \arg \max _{\|\epsilon\|_2 \leq \rho} \mathcal{L}(\theta+\epsilon)$, the authors suggest using a first-order approximation of $\mathcal{L}(\theta+\epsilon)$ around 0:

$$
\epsilon^* \approx \arg \max _{\|\epsilon\|_2 \leq \rho} \mathcal{L}_{\mathcal{D}}(\theta)+\epsilon^T \nabla_\theta \mathcal{L}_{\mathcal{D}}(\theta)=\rho \frac{\nabla_\theta \mathcal{L}_{\mathcal{D}}(\theta)}{\left\|\nabla_\theta \mathcal{L}_{\mathcal{D}}(\theta)\right\|_2}=: \hat{\epsilon}(\theta)
$$

This approximation, which is computationally efficient, results in $\hat{\epsilon}(\theta)$ being essentially a scaled gradient of the current parameters $\theta$. The sharpness-aware gradient is then defined as $\left.\nabla_\theta \mathcal{L}_{\mathcal{D}}(\theta)\right|_{\theta+\hat{\epsilon}(\theta)}$ and is used to update the model as follows:
$$
\theta_{t+1} \leftarrow \theta_t-\left.\gamma \nabla_{\theta_t} \mathcal{L}_{\mathcal{D}}\left(\theta_t\right)\right|_{\theta_t+\hat{\epsilon}_t},
$$

where $\gamma$ is the learning rate and $\hat{\epsilon}_t=\hat{\epsilon}\left(\theta_t\right)$. This two-step process is repeated iteratively to solve the minmax objective. Essentially, SAM first performs a gradient ascent step to estimate the point $\left(\theta_t+\hat{\epsilon}_t\right)$ where the loss is approximately maximized, followed by a gradient descent step at $\theta_t$ using the computed gradient.

\subsection{ Stochastic Weight Averaging: Overview}

SWA (Stochastic Weight Averaging) operates by averaging the weights proposed by Stochastic Gradient Descent (SGD) while employing a learning rate schedule to explore regions of the weight space that correspond to high-performing networks. At each step $i$ within a cycle of length $c$, the learning rate is gradually reduced from an initial value $\gamma_1$ to a final value $\gamma_2$. This learning rate adjustment is defined as:

$$
\gamma(i) = (1 - t(i)) \gamma_1 + t(i) \gamma_2, \quad t(i) = \frac{1}{c} \left(\bmod(i - 1, c) + 1\right)
$$

Here, $t(i)$ represents a linear interpolation factor that depends on the current step $i$ and the cycle length $c$. If $c = 1$, the learning rate remains constant at $\gamma_1$ throughout the training process. However, for $c > 1$, the learning rate follows a cyclical schedule, oscillating between $\gamma_1$ and $\gamma_2$ over the course of the cycle.

Starting from a pre-trained model $f_{\hat{\theta}}$, SWA captures the model weights $\theta$ at the end of each cycle and computes their running average. This averaging process is expressed as:

$$
\theta_{\mathrm{SWA}} \leftarrow \frac{\theta_{\mathrm{SWA}} \cdot n_{\mathrm{models}} + \theta}{n_{\mathrm{models}} + 1}
$$

In this equation, $n_{\mathrm{models}}$ tracks the number of completed cycles, and $\theta_{\mathrm{SWA}}$ represents the averaged weights. The final model, denoted as $f_{\theta_{\mathrm{SWA}}}$, is obtained by aggregating the weights from multiple cycles, effectively smoothing the optimization trajectory and enhancing generalization performance.

By iteratively averaging the weights, SWA encourages the model to converge to a flatter region of the loss landscape, which is often associated with better generalization. This approach leverages the exploration of diverse weight configurations during training, ultimately leading to a more robust and stable model.

\section{Implementation of the Experiments}
\subsection{Datasets}

\begin{table}[H]
	\centering
	\caption{A summary of CIFAR10/100, Tiny ImageNet datasets, including number of total images, number of classes, and the size of the images in the datasets.}
	\vspace{-2mm}
	\label{tab:4}
	\setlength{\tabcolsep}{3pt}
	\begin{tabular}{lcccccccc}
		\midrule[1.5pt]
		Datasets &Total Images &  Class & Image Size \\
		\midrule
		CIFAR10  &60,000   & 10 &3 × 32 × 32 \\
		CIFAR100   & 60,000  &100 & 3 × 32 × 32\\
		Tiny ImageNet   & 100,000 &200&3 × 224 × 224\\
		\midrule[1.5pt]
	\end{tabular}
\end{table}
CIFAR-10/100, Tiny ImageNet are 
popular benchmark datasets in  federated learning. Data samples in CIFAR10 and CIFAR100 are colorful images
of different categories with the resolution of 32 × 32. There are 10 classes and each class has 6,000 images in CIFAR10.
For CIFAR100, there are 100 classes and each class has 600 images. For Tiny ImageNet, there are 200 classes and each class has 500 images. As shown in Table 6, we summarize CIFAR10, CIFAR100, and Tiny ImageNet from the views of number of total images, number of classes, and the size of the images
in the datasets.

\subsection{More Loss Surface Visualization}
Here we show more global loss surface visualizations. As shown in Figure 7-11, we conduct experiments using LeNet-5 and ResNet-18 for CIFAR10 and CIFAR100, and visualize FedAvg, FedSAM, and FedSWA. Among these, our FedSWA achieves a flatter loss landscape.
\begin{figure}[H]
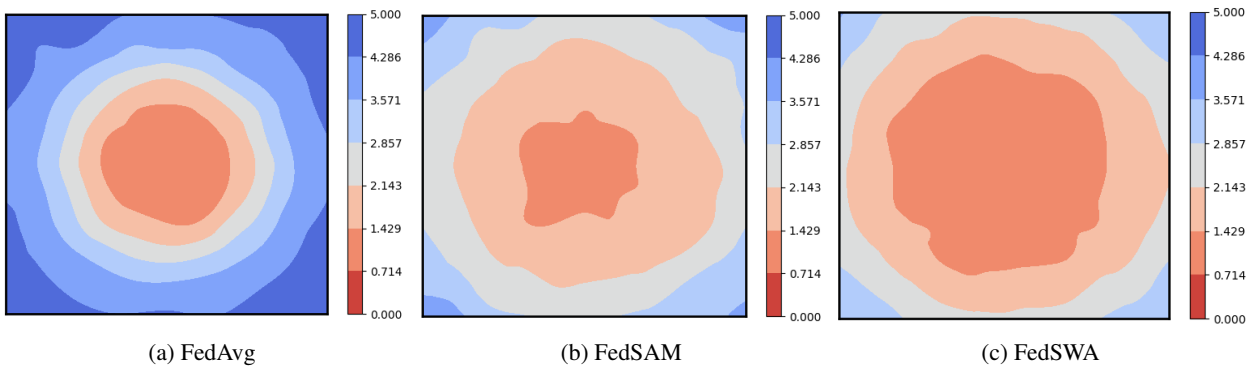

	\begin{minipage}[b]{0.32\textwidth}
		\centering
		\subcaptionbox{ FedAvg }{\includegraphics[width=\textwidth]{image/FedAvg_0.1_1.png}}
	\end{minipage}
	\begin{minipage}[b]{0.32\textwidth}
		\centering
		\subcaptionbox{FedSAM }{\includegraphics[width=\textwidth]{image/FedSAM_0.1_1.png}}
	\end{minipage}
	\begin{minipage}[b]{0.32\textwidth}
		\centering
		\subcaptionbox{FedSWA}{\includegraphics[width=\textwidth]{image/FedSWA_0.1.png}}
	\end{minipage}
	\caption{The training loss surfaces of FedAvg, FedSAM, and our FedSWA on CIFAR-100  with  ResNet-18 (Dirichlet-0.1). }
	\label{figure 7}
\end{figure}

\begin{figure}[H]
	\begin{minipage}[b]{0.32\textwidth}
		\centering
		\subcaptionbox{ FedAvg}{\includegraphics[width=\textwidth]{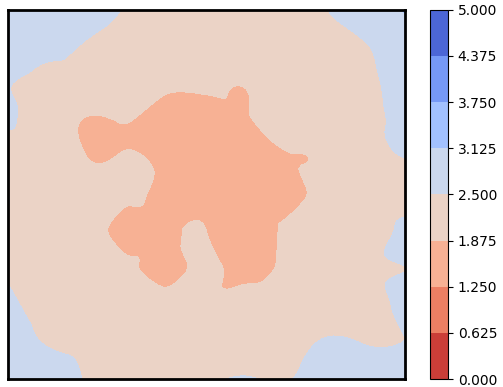}}
	\end{minipage}
	\begin{minipage}[b]{0.32\textwidth}
		\centering
		\subcaptionbox{FedSAM}{\includegraphics[width=\textwidth]{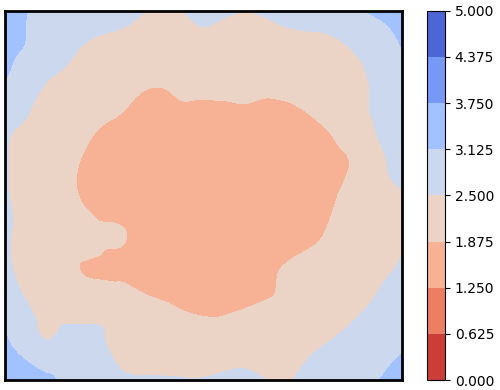}}
	\end{minipage}
	\begin{minipage}[b]{0.32\textwidth}
		\centering
		\subcaptionbox{FedSWA}{\includegraphics[width=\textwidth]{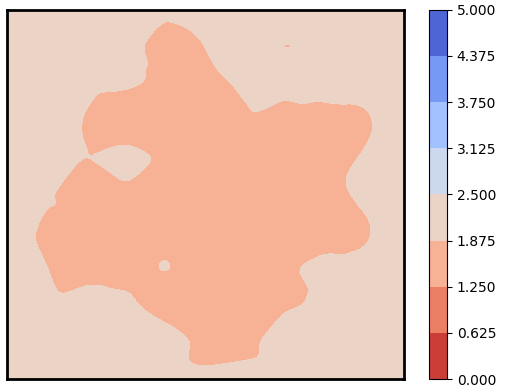}}
	\end{minipage}
	\caption{The training loss surfaces of FedAvg, FedSAM, and our FedSWA on CIFAR-100 with  LeNet-5 (Dirichlet-0.1). }
	\label{figure 8}
\end{figure}

\begin{figure}[H]
	\begin{minipage}[b]{0.32\textwidth}
		\centering
		\subcaptionbox{FedAvg}{\includegraphics[width=\textwidth]{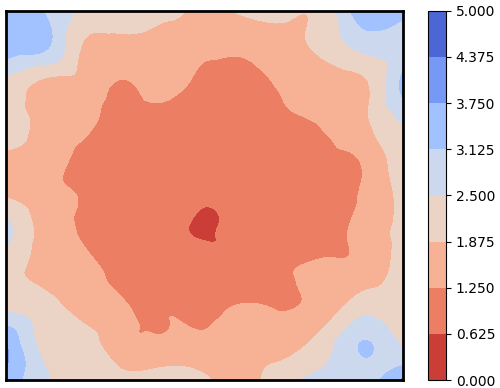}}
	\end{minipage}
	\begin{minipage}[b]{0.32\textwidth}
		\centering
		\subcaptionbox{FedSAM}{\includegraphics[width=\textwidth]{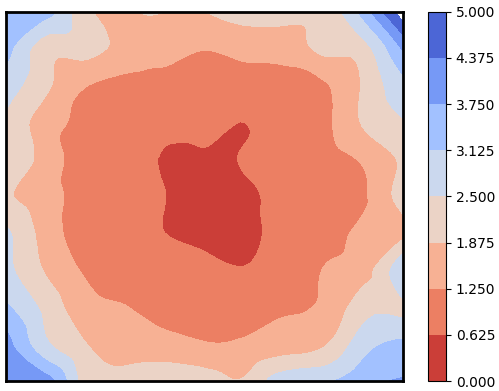}}
	\end{minipage}
	\begin{minipage}[b]{0.32\textwidth}
		\centering
		\subcaptionbox{FedSWA }{\includegraphics[width=\textwidth]{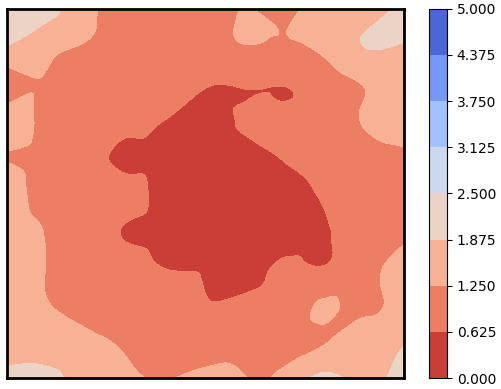}}
	\end{minipage}
	\caption{The training loss surfaces of FedAvg, FedSAM, and our FedSWA on CIFAR-10  with  LeNet-5 (Dirichlet-0.1). }
	\label{figure 9}
\end{figure}

\begin{figure}[H]
	\begin{minipage}[b]{0.32\textwidth}
		\centering
		\subcaptionbox{ FedSAM (0.1) }{\includegraphics[width=\textwidth]{image/FedSAM_0.1_1.png}}
	\end{minipage}
	\begin{minipage}[b]{0.32\textwidth}
		\centering
		\subcaptionbox{FedSAM (0.3) }{\includegraphics[width=\textwidth]{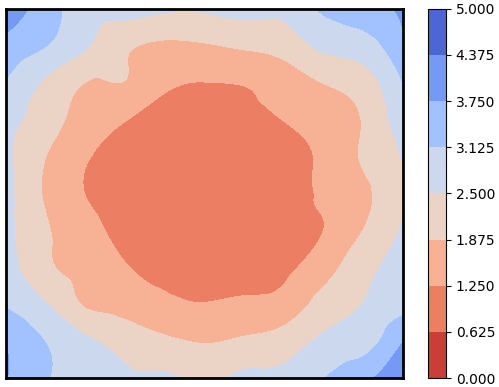}}
	\end{minipage}
	\begin{minipage}[b]{0.32\textwidth}
		\centering
		\subcaptionbox{FedSAM (0.6)}{\includegraphics[width=\textwidth]{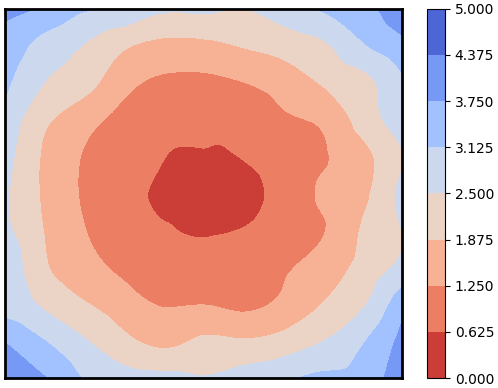}}
	\end{minipage}
	\caption{The training loss surfaces of FedSAM on CIFAR-100  with ResNet-18 (Dirichlet-0.1, Dirichlet-0.3, Dirichlet-0.6).  }
	\label{figure 10}
\end{figure}

\begin{figure}[H]
	\begin{minipage}[b]{0.32\textwidth}
		\centering
		\subcaptionbox{ FedSWA (0.1) }{\includegraphics[width=\textwidth]{image/FedSWA_0.1.png}}
	\end{minipage}
	\begin{minipage}[b]{0.32\textwidth}
		\centering
		\subcaptionbox{FedSWA (0.3) }{\includegraphics[width=\textwidth]{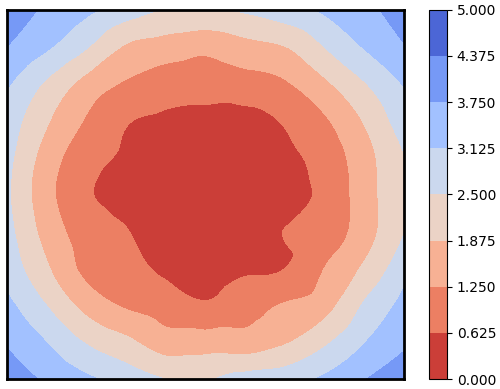}}
	\end{minipage}
	\begin{minipage}[b]{0.32\textwidth}
		\centering
		\subcaptionbox{FedSWA (0.6)}{\includegraphics[width=\textwidth]{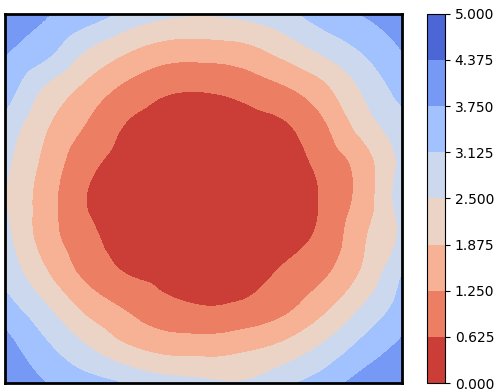}}
	\end{minipage}
	\caption{The training loss surfaces of FedSWA on CIFAR-100  with ResNet-18 (Dirichlet-0.1, Dirichlet-0.3, Dirichlet-0.6). }
	\label{figure 11}
\end{figure}

\subsection{Convergence Behavior Curves}

\begin{figure}[H]
	\centering
	\begin{minipage}[b]{0.32\textwidth}
		\centering
		\subcaptionbox{Lenet-5, CIFAR10}{\includegraphics[width=\textwidth]{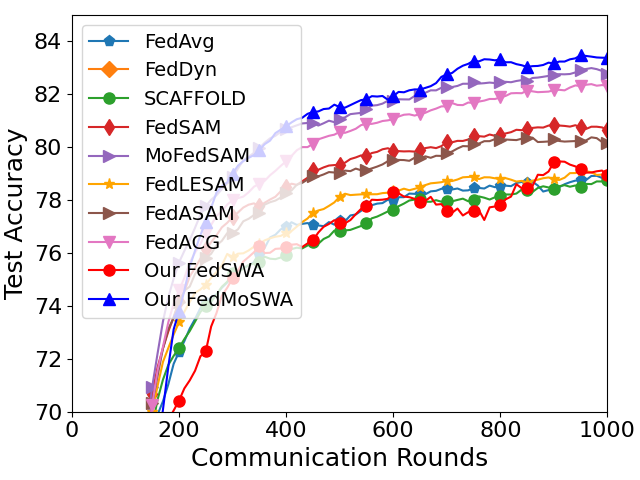}}
	\end{minipage}
	\hfill
	\begin{minipage}[b]{0.32\textwidth}
		\centering
		\subcaptionbox{Lenet-5, CIFAR100}{\includegraphics[width=\textwidth]{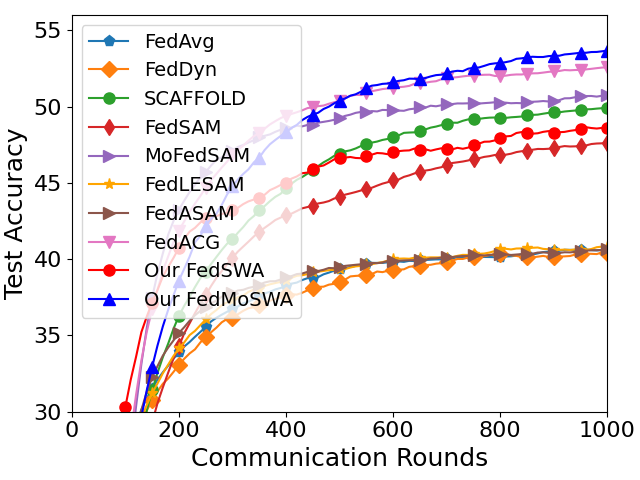}}
	\end{minipage}
	\hfill
	\begin{minipage}[b]{0.32\textwidth}
		\centering
		\subcaptionbox{VGG-11, CIFAR10}{\includegraphics[width=\textwidth]{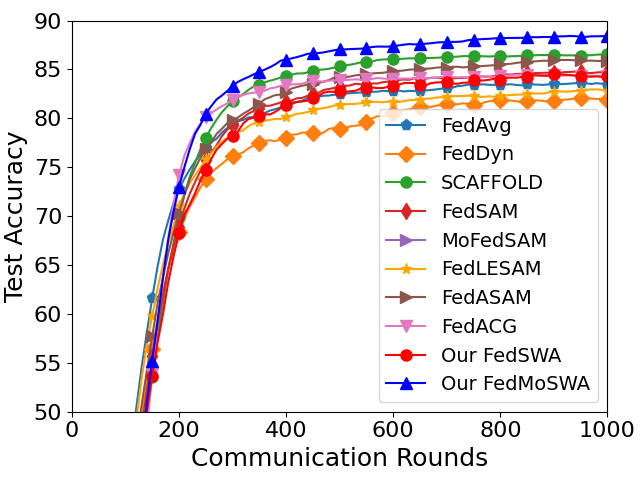}}
	\end{minipage}
	\hfill
	\begin{minipage}[b]{0.32\textwidth}
		\centering
		\subcaptionbox{VGG-11, CIFAR100}{\includegraphics[width=\textwidth]{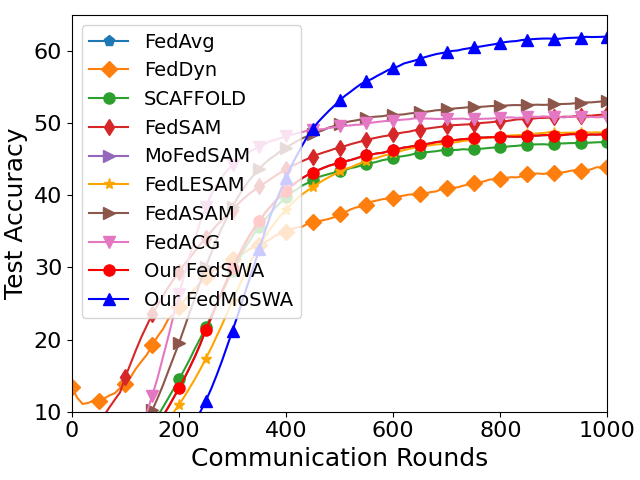}}
	\end{minipage}
	\hfill
	\begin{minipage}[b]{0.32\textwidth}
		\centering
		\subcaptionbox{Resnet-18, CIFAR10}{\includegraphics[width=\textwidth]{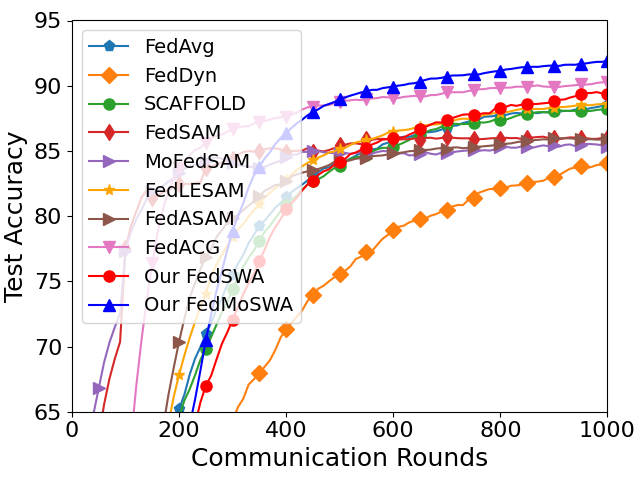}}
	\end{minipage}
	\hfill
	\begin{minipage}[b]{0.32\textwidth}
		\centering
		\subcaptionbox{ResNet-18, CIFAR100}{\includegraphics[width=\textwidth]{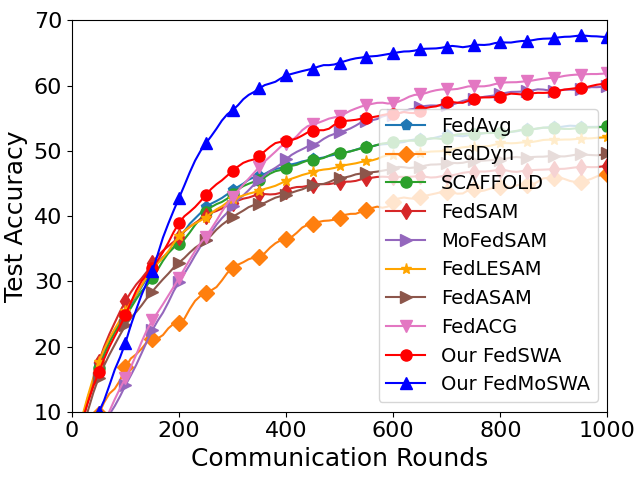}}
	\end{minipage}
	\vspace{-2pt}
	\caption{Convergence plots for FedSWA, FedMoSWA and other baselines in same settings with Dirichlet-0.6 on CIFAR10 and CIFAR100 using different neural network architectures.}
\end{figure}

\begin{figure}[H]
	\centering
	\begin{minipage}[b]{0.32\textwidth}
		\centering
		\subcaptionbox{ResNet-18, Dirichlet-0.6}{\includegraphics[width=\textwidth]{image/resnet18_cifar100_0.6.png}}
	\end{minipage}
	\hfill
	\begin{minipage}[b]{0.32\textwidth}
		\centering
		\subcaptionbox{ResNet-18, Dirichlet-0.3}{\includegraphics[width=\textwidth]{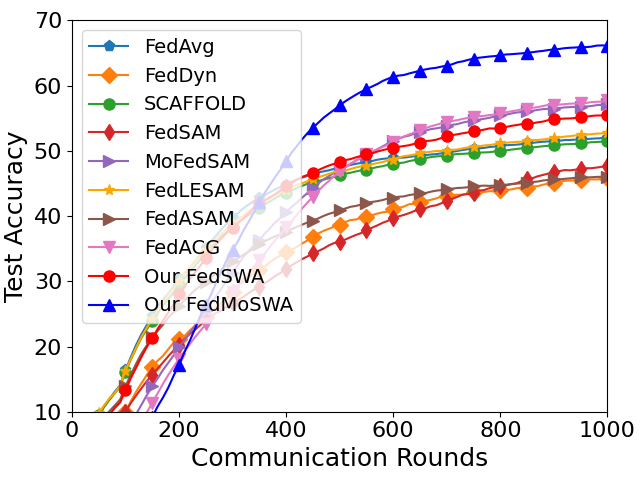}}
	\end{minipage}
	\hfill
	\begin{minipage}[b]{0.32\textwidth}
		\centering
		\subcaptionbox{ResNet-18, Dirichlet-0.1}{\includegraphics[width=\textwidth]{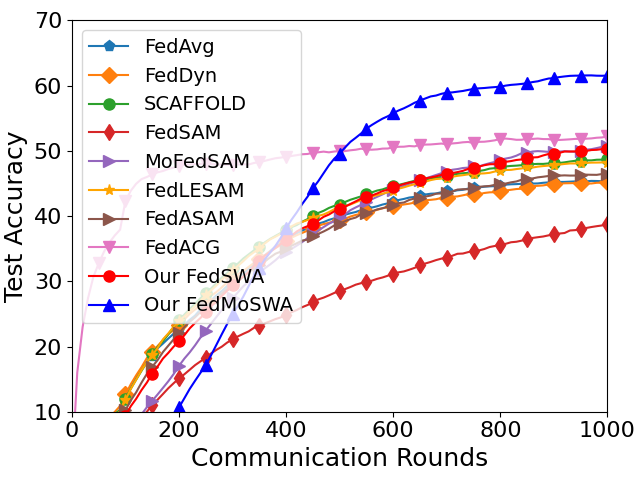}}
	\end{minipage}
	\caption{Convergence plots for FedSWA, FedMoSWA and  baselines with  Dirichlet-0.6, 0.3, 0.1 on CIFAR100  using ResNet-18.}
	\label{figure 13}
\end{figure}

\begin{table}[tb]
	\centering
	\caption{CIFAR-100 with ResNet-18 under different data heterogeneity levels (0.1, 0.3, 0.6).}
	\vspace{-2mm}
	\label{tab:8}
	\begin{tabular}{lcccc}
		\midrule[1.5pt]
		heterogeneity &0.1&	0.3& 0.6\\
		\midrule
        FedSMOO  &46.5	 & 47.8&	49.2\\
		FedGAMMA &48.4	& 51.8	 &52.6\\
		FedSWA (ours) &	50.3	  & 55.5	&59.8\\
		FedMoSWA (ours)	& 61.9	
 & 	66.2	 &67.9\\
		\midrule[1.5pt]
	\end{tabular}
\end{table}

\section{Basic Assumptions }

We analyze generalization based on  following assumptions: \\
\begin{assumption}
     \textit{The loss function $l(\cdot, z)$ is $\mu$-strongly convex for any sample $z$, $l(\boldsymbol{\theta} ; z) \geq l\left(\boldsymbol{\theta}^{\prime} ; z\right)+\left\langle\nabla l\left(\boldsymbol{\theta}^{\prime} ; z\right), \boldsymbol{\theta}-\boldsymbol{\theta}^{\prime}\right\rangle+\frac{\mu}{2}\left\|\boldsymbol{\theta}-\boldsymbol{\theta}^{\prime}\right\|^2$, for any $z, \boldsymbol{\theta}, \boldsymbol{\theta}^{\prime}$.}\label{A1}
\end{assumption}

\begin{assumption}
\textit{The loss function $l(\cdot, z)$ is $L$ Lipschitz continous, $\mid l(\boldsymbol{\theta} ; z)-$ $l\left(\boldsymbol{\theta}^{\prime} ; z\right) \mid \leq L\left\|\boldsymbol{\theta}-\boldsymbol{\theta}^{\prime}\right\|$, for any $z,\boldsymbol{\theta}, \boldsymbol{\theta}^{\prime}$. We note that under Assumption 2,}
$
\mathbb{E}_{\mathcal{A}, \mathcal{S}, z}\left|l\left(\theta_T ;z\right)-l\left(\theta_T^{\prime} ; z\right)\right| \leq L \mathbb{E}\left\|\theta_T-\theta_T^{\prime}\right\|
,\mathbb{E}\left\|\nabla F\left(\theta_t\right)\right\| \leq L$.\label{A2}
\end{assumption}

\begin{assumption}
\textit{There exists a constant $\sigma>0$ such that for any $\boldsymbol{\theta}, i \in[m]$, and $z_i \sim D_i$, $\mathbb{E}\left[\left\|\nabla l\left(\boldsymbol{\theta} ; z_i\right)-\nabla F_i(\boldsymbol{\theta})\right\|^2\right] \leq \sigma^2$.}\label{A3}
\end{assumption}

\begin{assumption}
\textit{The loss function $l(\cdot, z)$ is $\beta$-smooth, $\left\|\nabla l(\boldsymbol{\theta} ; z)-\nabla l\left(\boldsymbol{\theta}^{\prime} ; z\right)\right\| \leq \beta\left\|\boldsymbol{\theta}-\boldsymbol{\theta}^{\prime}\right\|$, for any $z, \boldsymbol{\theta}, \boldsymbol{\theta}^{\prime}$.}\label{A4}
\end{assumption}

\begin{assumption}
 (data heterogeneity).
\textit{Given $i \in[m]$, there exists a constant $\sigma_{g}>0$, for any $\boldsymbol{\theta}$ we have
}
$\left\|\nabla F_i(\theta)-\nabla F(\theta)\right\| \leq \sigma_{g, i},
\frac{1}{m} \sum_{i=1}^m\left\|\nabla F_i(\theta)-\nabla F(\theta)\right\| \leq \frac{1}{m} \sum_{i=1}^m \sigma_{g, i}=\sigma_g$\label{A5}
\end{assumption}

\section{Main Lemmas}
\begin{lemma} 
	Assume that $f$ is $\beta$-smooth. Then, the following properties hold.\\
	1. $G_{f, \eta}$ is $(1+\eta \beta)$-expansive, $G(v)=G_{f, \eta}(v)=v-\eta\nabla f(v)$\\
	2. Assume in addition that $f$ is $\mu$-strongly convex. Then, for $\eta \leq \frac{2}{\beta+\mu}, G_{f, \eta}$ is $\left(1-\frac{\eta \beta \mu}{\beta+\mu}\right)$ expansive.\\
	Henceforth we will no longer mention which random selection rule we use as the proofs are almost identical for both rules.\label{L1}
\end{lemma}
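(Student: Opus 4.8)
The plan is to establish both expansiveness bounds by directly estimating $\|G_{f,\eta}(v)-G_{f,\eta}(w)\|$ for arbitrary points $v,w$, invoking only $\beta$-smoothness of $f$ for the first claim and additionally $\mu$-strong convexity for the second.

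For part~1, I would write $G_{f,\eta}(v)-G_{f,\eta}(w)=(v-w)-\eta\bigl(\nabla f(v)-\nabla f(w)\bigr)$, apply the triangle inequality, and bound $\|\nabla f(v)-\nabla f(w)\|\le\beta\|v-w\|$ by smoothness; this gives $\|G_{f,\eta}(v)-G_{f,\eta}(w)\|\le(1+\eta\beta)\|v-w\|$, which is routine. For part~2, I would instead expand the squared norm: $\|G_{f,\eta}(v)-G_{f,\eta}(w)\|^2=\|v-w\|^2-2\eta\langle\nabla f(v)-\nabla f(w),\,v-w\rangle+\eta^2\|\nabla f(v)-\nabla f(w)\|^2$. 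The crucial ingredient is the refined co-coercivity estimate valid when $f$ is simultaneously $\mu$-strongly convex and $\beta$-smooth, namely $\langle\nabla f(v)-\nabla f(w),\,v-w\rangle\ge\frac{\mu\beta}{\mu+\beta}\|v-w\|^2+\frac{1}{\mu+\beta}\|\nabla f(v)-\nabla f(w)\|^2$; I would derive it by applying the standard convex co-coercivity inequality to the convex, $(\beta-\mu)$-smooth function $g(x):=f(x)-\frac{\mu}{2}\|x\|^2$ and rearranging. Substituting this bound and using $\eta\le\frac{2}{\beta+\mu}$ to discard the now-nonpositive term carrying $\|\nabla f(v)-\nabla f(w)\|^2$ yields $\|G_{f,\eta}(v)-G_{f,\eta}(w)\|^2\le\bigl(1-\tfrac{2\eta\mu\beta}{\mu+\beta}\bigr)\|v-w\|^2$. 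Finally, since $1-\tfrac{2\eta\mu\beta}{\mu+\beta}\le\bigl(1-\tfrac{\eta\mu\beta}{\mu+\beta}\bigr)^2$ and $1-\tfrac{\eta\mu\beta}{\mu+\beta}\ge0$ (the latter from $\eta\le\frac{2}{\beta+\mu}\le\frac{\beta+\mu}{\beta\mu}$, which is just $(\beta-\mu)^2\ge0$), taking square roots gives the claimed $\bigl(1-\tfrac{\eta\beta\mu}{\beta+\mu}\bigr)$-expansiveness.

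The only nontrivial piece is the refined co-coercivity inequality; everything else is bookkeeping with constants. I would either cite it as standard (it underlies the analogous lemma in \cite{hardt2016train}) or include the short derivation via the auxiliary function $g$, and I do not anticipate any obstacle beyond tracking the algebra and the admissible range of $\eta$.
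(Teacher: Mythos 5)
Your proposal is correct and follows essentially the same route as the paper's proof: part 1 via the triangle inequality and $\beta$-smoothness, and part 2 by expanding the squared norm, invoking the co-coercivity inequality $\langle\nabla f(v)-\nabla f(w),v-w\rangle\geq\frac{\beta\mu}{\beta+\mu}\|v-w\|^2+\frac{1}{\beta+\mu}\|\nabla f(v)-\nabla f(w)\|^2$ obtained from the convex, $(\beta-\mu)$-smooth auxiliary function $f(x)-\frac{\mu}{2}\|x\|^2$, discarding the gradient-difference term using $\eta\leq\frac{2}{\beta+\mu}$, and taking square roots (your step $1-2x\leq(1-x)^2$ is the same as the paper's $\sqrt{1-x}\leq 1-x/2$). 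No gaps.
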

\begin{proof}
1. Let $G=G_{f, \eta}$. By triangle inequality and our smoothness assumption,
\begin{align}
	\begin{split}
		\|G(v)-G(w)\| & \leq\|v-w\|+\eta|\nabla f(w)-\nabla f(v)\| \\
		& \leq\|v-w\|+\eta \beta\|w-v\| \\
		& =(1+\eta\beta)\|v-w\| .
	\end{split}
\end{align}

2. First, note that if $f$ is $\mu$ strongly convex, then $\varphi(w)=f(w)-\frac{\mu}{2}\|w\|^2$ is convex with $(\beta-\mu)$-smooth. Hence, 

\begin{align}
	\begin{split}
		\langle\nabla f(v)-\nabla f(w), v-w\rangle \geq \frac{\beta \mu}{\beta+\mu}\|v-w\|^2+\frac{1}{\beta+\mu}\|\nabla f(v)-\nabla f(w)\|^2
	\end{split}
\end{align}
Using this inequality gives
\begin{align}
	\begin{split}
		\left\|G_{f,\eta}(v)-G_{f, \eta}(w)\right\|^2 & =\|v-w\|^2-2 \eta\langle\nabla f(v)-\nabla f(w), v-w\rangle+\eta^2\|\nabla f(v)-\nabla f(w)\|^2 \\
		& \leq\left(1-2 \frac{\eta \beta \mu}{\beta+\mu}\right)\|v-w\|^2-\eta\left(\frac{2}{\beta+\mu}-\eta\right)\|\nabla f(v)-\nabla f(w)\|^2 .
	\end{split}
\end{align}
With our assumption that $\eta\leq \frac{2}{\beta+\mu}$, this implies
\begin{align}
	\begin{split}
		\left\|G_{f, \eta}(v)-G_{f, \eta}(w)\right\| \leq\left(1-2 \frac{\eta \beta \mu}{\beta+\mu}\right)^{1 / 2}\|v-w\|\leq\left(1-\frac{\eta \beta \mu}{\beta+\mu}\right)\|v-w\| .
	\end{split}
\end{align}
The lemma follows by applying the inequality $\sqrt{1-x} \leq 1-x / 2$ which holds for $x \in[0,1]$.
\end{proof}

\section{ Theoretical Results of FedSWA}
\subsection{ Generalization Analysis for FedSWA under strongly convex setting }	
\begin{lemma}Suppose Assumptions \ref{A1}-\ref{A5} hold. Then for FedSWA with $\eta_k^{t} \leq 1 / \beta KT$, $\alpha = 1$,
	$$
	\mathbb{E}\left\|\theta_{i, k}^t-\theta_t\right\| \leq (1-b)^{K-1}\tilde{\eta}_{ t}\left(\mathbb{E}\left\|\nabla F\left(\theta_t\right)\right\|+ \sigma_{g, i}+\sigma\right), \forall k=1, \ldots, K,
	$$
	where $\tilde{\eta}_{ t}=\sum_{k=0}^{K-1} \eta_k^{t}$.\\
\end{lemma}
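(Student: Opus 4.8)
Using the paper's round indexing, local training in round $t$ starts from the current global model, so $\boldsymbol{\theta}_{i,0}^t=\boldsymbol{\theta}_t$, and since $\alpha=1$ the server simply outputs the average $\boldsymbol{\theta}_{t+1}=\tfrac1s\sum_i\boldsymbol{\theta}_{i,K}^t$. Hence it suffices to control the per-round client drift $u_k:=\mathbb{E}\|\boldsymbol{\theta}_{i,k}^t-\boldsymbol{\theta}_t\|$ by a one-step recursion and then unroll it. The plan is: (i) rewrite one local step as a gradient map of the \emph{global} objective $F$ applied to $\boldsymbol{\theta}_{i,k}^t$ versus $\boldsymbol{\theta}_t$, plus residual terms; (ii) invoke Lemma~\ref{L1} to contract/expand that map; (iii) bound the residuals by $\sigma_{g,i}$, $\mathbb{E}\|\nabla F(\boldsymbol{\theta}_t)\|$, and $\sigma$; (iv) solve the resulting linear recursion.

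For step (i), writing $G_{f,\eta}(v)=v-\eta\nabla f(v)$ and $\zeta_k^i:=g_i(\boldsymbol{\theta}_{i,k}^t)-\nabla F_i(\boldsymbol{\theta}_{i,k}^t)$, I would use the identity
$$\boldsymbol{\theta}_{i,k+1}^t-\boldsymbol{\theta}_t=\big(G_{F,\eta_k^t}(\boldsymbol{\theta}_{i,k}^t)-G_{F,\eta_k^t}(\boldsymbol{\theta}_t)\big)-\eta_k^t\nabla F(\boldsymbol{\theta}_t)-\eta_k^t\big(\nabla F_i(\boldsymbol{\theta}_{i,k}^t)-\nabla F(\boldsymbol{\theta}_{i,k}^t)\big)-\eta_k^t\zeta_k^i .$$
Since each $F_i$, hence $F$, is $\mu$-strongly convex and $\beta$-smooth and $\eta_k^t\le 1/(\beta KT)\le 2/(\beta+\mu)$, Lemma~\ref{L1} bounds the first term by $(1-b)\|\boldsymbol{\theta}_{i,k}^t-\boldsymbol{\theta}_t\|$, where $1-b$ is the per-step Lipschitz constant supplied there. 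For step (iii), taking norms and then expectations, Assumption~\ref{A5} gives $\mathbb{E}\|\nabla F_i(\boldsymbol{\theta}_{i,k}^t)-\nabla F(\boldsymbol{\theta}_{i,k}^t)\|\le\sigma_{g,i}$; and, conditioning on the $\sigma$-field $\mathcal{F}_k$ generated by all randomness through inner step $k$ (so that $\mathbb{E}[g_i(\boldsymbol{\theta}_{i,k}^t)\mid\mathcal{F}_k]$ is non-stochastic), Assumption~\ref{A3} together with Jensen's inequality gives $\mathbb{E}\|\zeta_k^i\|\le\sigma$. This produces
$$u_{k+1}\le(1-b)\,u_k+\eta_k^t\big(\mathbb{E}\|\nabla F(\boldsymbol{\theta}_t)\|+\sigma_{g,i}+\sigma\big),\qquad u_0=0 .$$

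For step (iv), unrolling gives $u_k\le\big(\mathbb{E}\|\nabla F(\boldsymbol{\theta}_t)\|+\sigma_{g,i}+\sigma\big)\sum_{j=0}^{k-1}\big(\prod_{l=j+1}^{k-1}(1-b)\big)\eta_j^t$; bounding the geometric product by $(1-b)^{K-1}$ and $\sum_{j}\eta_j^t\le\tilde\eta_t$ yields the claimed estimate for every $k\le K$. The geometric-sum arithmetic is routine. The main obstacle is the decomposition in step (i): because $\nabla F_i(\boldsymbol{\theta}_t)\neq 0$, local SGD is \emph{not} a contraction toward $\boldsymbol{\theta}_t$, so Lemma~\ref{L1} cannot be applied naively — the device is to compare against the gradient map of the \emph{global} $F$, so that the uncontracted residuals are exactly $\nabla F(\boldsymbol{\theta}_t)$ and $\nabla F_i-\nabla F$ (the latter absorbed into $\sigma_{g,i}$). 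A secondary subtlety is propagating the stochastic-noise bound across the $K$ dependent inner iterations via a tower-of-expectations argument rather than a crude union bound over steps.
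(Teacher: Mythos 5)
Your proposal is correct at the same level of rigor as the paper, but it takes a genuinely different route to the one-step recursion. The paper adds and subtracts $g_i(\theta_t)$ and applies the expansiveness lemma (Lemma~\ref{L1}) to the \emph{local stochastic} gradient map $v\mapsto v-\eta_k^t g_i(v)$ (legitimate because each sample loss is $\mu$-strongly convex and $\beta$-smooth, hence so is the minibatch loss), leaving the single residual $\eta_k^t\mathbb{E}\|g_i(\theta_t)\|$ anchored at $\theta_t$; this residual is then split via Assumption~\ref{A3} into $\mathbb{E}\|\nabla F_i(\theta_t)\|+\sigma$ and via Assumption~\ref{A5} into $\mathbb{E}\|\nabla F(\theta_t)\|+\sigma_{g,i}+\sigma$. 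You instead contract the \emph{deterministic global} map $G_{F,\eta_k^t}$ and carry three residuals — $\nabla F(\theta_t)$, the heterogeneity gap $\nabla F_i-\nabla F$ evaluated at the local iterate $\theta_{i,k}^t$, and the gradient noise $\zeta_k^i$ handled by conditioning and Jensen. Your identity is correct, $F$ inherits $\mu$-strong convexity and $\beta$-smoothness so Lemma~\ref{L1} applies, and the appendix form of Assumption~\ref{A5} holds pointwise so it may be invoked at the random iterate; both decompositions yield the identical recursion $u_{k+1}\le(1-b)u_k+\eta_k^t(\mathbb{E}\|\nabla F(\theta_t)\|+\sigma_{g,i}+\sigma)$ with $u_0=0$. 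What the paper's route buys is that all heterogeneity and noise bookkeeping happens at the fixed anchor $\theta_t$ and no property of the aggregate $F$ is needed; what yours buys is a cleaner separation of the three error sources and a noise term that is a genuine martingale difference at the iterates. One caveat: your final unrolling step, bounding each factor $\prod_{l=j+1}^{k-1}(1-b_l)$ by $(1-b)^{K-1}$, is exactly the paper's own step and shares its looseness — since $1-b<1$, the honest uniform bound on those products (in particular the empty product for $j=k-1$) is $1$, not $(1-b)^{K-1}$ — so you reproduce, rather than repair, that weakness of the original argument.
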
\label{E1}
\begin{proof}
Considering local update  of FedSWA

\begin{align}
	\begin{split}
		\mathbb{E}\left\|\theta_{i, k+1}-\theta_t\right\| & =\mathbb{E}\left\|\theta_{i, k}^t-\eta_k^{t} g_i\left(\theta_{i, k}^t\right)-\theta_t\right\| \\
		& \leq \mathbb{E}\left\|\theta_{i, k}^t-\theta_t-\eta_k^{t}\left(g_i\left(\theta_{i, k}^t\right)-g_i\left(\theta_t\right)\right)\right\|+\eta_k^{t} \mathbb{E}\left\|g_i\left(\theta_t\right)\right\| \\
		& \stackrel{(a)}{\leq}\left(1-\frac{\eta_k^{t} \beta \mu}{\beta+\mu}\right) \mathbb{E}\left\|\theta_{i, k}^t-\theta_t\right\|+\eta_k^{t} \mathbb{E}\left\|g_i\left(\theta_t\right)\right\| \\
		& \leq\left(1-\frac{\eta_k^{t} \beta \mu}{\beta+\mu}\right) \mathbb{E}\left\|\theta_{i, k}^t-\theta_t\right\|+\eta_k^{t}\left(\mathbb{E}\left\|g_i\left(\theta_t\right)-\nabla F_i\left(\theta_t\right)\right\|+\mathbb{E}\left\|\nabla F_i\left(\theta_t\right)\right\|\right) \\
		& \stackrel{(b)}{\leq} \left(1-\frac{\eta_k^{t} \beta \mu}{\beta+\mu}\right)\mathbb{E}\left\|\theta_{i, k}^t-\theta_t\right\|+\eta_k^{t}\left(\mathbb{E}\left\|\nabla F_i\left(\theta_t\right)\right\|+\sigma\right),
	\end{split}
\end{align}
where (a) follows Lemma \ref{L1}; (b) follows Assumption \ref{A3}. Unrolling the above and noting $\theta_{i, 0}=\theta_t$, with $b=\left(1-\frac{\eta_K^{t} \beta \mu}{\beta+\mu}\right)$ for some $b>0$ yields
\begin{align}
	\begin{split}
		\mathbb{E}\left\|\theta_{i, k}^t-\theta_t\right\| & \leq \sum_{l=0}^{k-1} \eta_l^{t}\left(\mathbb{E}\left\|\nabla F_i\left(\theta_t\right)\right\|+\sigma\right)(1-b)^{k-1-l} \\
		& \leq \sum_{l=0}^{K-1}\eta_l^{t}\left(\mathbb{E}\left\|\nabla F_i\left(\theta_t\right)\right\|+\sigma\right)(1-b)^{K-1} \\
		& \leq(1-b)^{K-1} \tilde{\eta}_{t}\left(\mathbb{E}\left\|\nabla F\left(\theta_t\right)\right\|+ \sigma_{g, i}+\sigma\right),
	\end{split}
\end{align}
where the last inequality follows Assumption  \ref{A5}, $\tilde{\eta}_{ t}=\sum_{k=0}^{K-1} \eta_k^{t}$.
\end{proof}

\begin{lemma}\label{E2}
	Given Assumptions \ref{A1}-\ref{A4} and considering of FedSWA, for $\eta_{k}^t \leq 1 / \beta KT$ we have
	$$
	\mathbb{E}\left\|g_i\left(\theta_{i, k}^t\right)\right\| \leq\left(1+(1-b)^{K-1} \beta \tilde{\eta}_t\right)\left(\mathbb{E}\left\|\nabla F\left(\theta_t\right)\right\|+ \sigma_{g, i}+\sigma\right),
	$$
	where $g_i(\cdot)$ is the sampled gradient of client $i, \tilde{\eta}_{t}=\sum_{k=0}^{K-1} \eta_{ k}^t$.\\

\end{lemma}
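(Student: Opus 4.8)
The plan is to read this bound directly off Lemma~\ref{E1} together with the smoothness and boundedness hypotheses. First I would center the stochastic gradient at the server iterate via the triangle inequality,
$$
\mathbb{E}\left\|g_i\left(\theta_{i,k}^t\right)\right\|\le\mathbb{E}\left\|g_i\left(\theta_{i,k}^t\right)-g_i\left(\theta_t\right)\right\|+\mathbb{E}\left\|g_i\left(\theta_t\right)\right\| .
$$
Since each per-sample loss $\ell(\cdot;z)$ is $\beta$-smooth (Assumption~\ref{A4}), the minibatch gradient $g_i(\cdot)=\frac{1}{|\mathcal{B}|}\sum_{z\in\mathcal{B}}\nabla\ell(\cdot;z)$ is $\beta$-smooth for every fixed minibatch, so $\|g_i(\theta_{i,k}^t)-g_i(\theta_t)\|\le\beta\|\theta_{i,k}^t-\theta_t\|$ holds pathwise; taking expectations and invoking Lemma~\ref{E1} bounds the first term by $(1-b)^{K-1}\beta\tilde\eta_t\big(\mathbb{E}\|\nabla F(\theta_t)\|+\sigma_{g,i}+\sigma\big)$.

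For the second term I would split off the sampling noise and the heterogeneity bias, $\mathbb{E}\|g_i(\theta_t)\|\le\mathbb{E}\|g_i(\theta_t)-\nabla F_i(\theta_t)\|+\mathbb{E}\|\nabla F_i(\theta_t)-\nabla F(\theta_t)\|+\mathbb{E}\|\nabla F(\theta_t)\|$. Jensen's inequality applied to Assumption~\ref{A3} gives $\mathbb{E}\|g_i(\theta_t)-\nabla F_i(\theta_t)\|\le\big(\mathbb{E}\|g_i(\theta_t)-\nabla F_i(\theta_t)\|^2\big)^{1/2}\le\sigma$, and the data-heterogeneity condition (Assumption~\ref{A5}) controls the middle term by $\sigma_{g,i}$, so $\mathbb{E}\|g_i(\theta_t)\|\le\mathbb{E}\|\nabla F(\theta_t)\|+\sigma_{g,i}+\sigma$. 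Adding the two estimates and factoring out the common $\big(\mathbb{E}\|\nabla F(\theta_t)\|+\sigma_{g,i}+\sigma\big)$ gives exactly $\big(1+(1-b)^{K-1}\beta\tilde\eta_t\big)\big(\mathbb{E}\|\nabla F(\theta_t)\|+\sigma_{g,i}+\sigma\big)$, which is the claim.

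The step-size restriction $\eta_k^t\le 1/(\beta KT)$ enters only through the call to Lemma~\ref{E1}; it sits far below the threshold $2/(\beta+\mu)$ required by Lemma~\ref{L1} for the per-step contraction factor $b$ to be valid. I do not expect a genuine obstacle here: Lemma~\ref{E2} is essentially a one-line corollary of Lemma~\ref{E1}, and the only steps worth a sentence of justification are that $\beta$-smoothness of the individual losses is inherited by the minibatch gradient (so the smoothness inequality is legitimate before taking the expectation) and the Jensen step that converts the second-moment variance bound of Assumption~\ref{A3} into the first-moment bound $\sigma$.
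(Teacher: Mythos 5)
Your proposal is correct and follows essentially the same route as the paper's proof: both reduce the claim to the drift bound of Lemma~\ref{E1} plus the variance, smoothness, and heterogeneity assumptions, and arrive at the identical constant $\bigl(1+(1-b)^{K-1}\beta\tilde\eta_t\bigr)$. The only (harmless) difference is the order of the triangle-inequality steps — the paper applies the variance bound at the local iterate $\theta_{i,k}^t$ and smoothness to the population gradient $\nabla F_i$, while you apply smoothness pathwise to the minibatch gradient $g_i$ and the variance bound at $\theta_t$ — which changes nothing in the result.
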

\begin{proof}
    Using Assumptions  \ref{A3}, \ref{A4}, \ref{A5}, we obtain
	\begin{align}
		\begin{split}
			\mathbb{E}\left\|g_i\left(\theta_{i, k}^t\right)\right\| & \leq \mathbb{E}\left\|g_i\left(\theta_{i, k}^t\right)-\nabla F_i\left(\theta_{i, k}^t\right)\right\|+\mathbb{E}\left\|\nabla F_i\left(\theta_{i, k}^t\right)\right\| \\
			& \leq^a \mathbb{E}\left\|\nabla F_i\left(\theta_{i, k}^t\right)\right\|+\sigma \\
			& \leq \mathbb{E}\left\|\nabla F_i\left(\theta_t\right)\right\|+\mathbb{E}\left\|\nabla F_i\left(\theta_{i, k}^t\right)-\nabla F_i\left(\theta_t\right)\right\|+\sigma \\
			& \leq^b \mathbb{E}\left\|\nabla F\left(\theta_t\right)\right\|+\mathbb{E}\left\|\nabla F_i\left(\theta_t\right)-\nabla F\left(\theta_t\right)\right\|+\beta \mathbb{E}\left\|\theta_{i, k}^t-\theta_t\right\|+\sigma \\
			& \leq^c\left(1+(1-b)^{K-1} \beta \tilde{\eta}_t\right)\left(\mathbb{E}\left\|\nabla F\left(\theta_t\right)\right\|+ \sigma_{g, i}+\sigma\right),
		\end{split}
	\end{align}
$a$ is following Assumption \ref{A3}, $b$ is is following Assumption \ref{A4},
$c$ is is following Assumption \ref{A5} and \ref{L1}.
\end{proof}

\begin{theorem}
	(FedSWA). Suppose Assumptions 1-5 hold and consider FedSWA. Let $\left\{\theta_t\right\}_{t=0}^T$ and $\left\{\theta_t^{\prime}\right\}_{t=0}^T$ be two  datasets $\mathcal{S}$ and $\mathcal{S}^{(i)}$, respectively. Suppose $\theta_0=\theta_0^{\prime}$. Then,
	$$
	\varepsilon_{\text {gen }}\leq \frac{2L \tilde{b}}{m n} \frac{1}{\beta} e^{1-\frac{ \mu}{(\beta+\mu) T}}\left(\mathbb{E}\left\|\nabla F\left(\theta_t\right)\right\|+\sigma_g+\sigma\right),
	$$
	where $\tilde{\eta}_{k}^t=\sum_{k=0}^{K-1} \eta_{k}^t$, $\tilde{b}\leq1+(1-b)^{K-1} \beta \tilde{\eta}_{ t}$\\
\end{theorem}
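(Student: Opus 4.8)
The plan is to use the uniform-stability framework from Lemma 1 (Hardt et al.): it suffices to bound $\mathbb{E}\|\theta_T-\theta_T'\|$ where $\theta_T,\theta_T'$ are the server iterates after $T$ rounds of \texttt{FedSWA} run on the neighbouring datasets $\mathcal{S}$ and $\mathcal{S}^{(j_i)}$, and then multiply by the Lipschitz constant $L$ and average out the single differing sample, which contributes the $\tfrac{1}{mn}$ factor (one of $m$ clients, one of $n$ samples). So the real work is a one-round recursion for the stability gap $\delta_t := \mathbb{E}\|\theta_t-\theta_t'\|$.

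First I would analyze a single round of local training on a client. Coupling the two runs, for the client that does \emph{not} hold the perturbed sample every local step applies the same stochastic gradient map $G_{f,\eta_k^t}$ to both iterates; by Lemma~\ref{L1}, part~2, in the $\mu$-strongly convex case each such step is $\bigl(1-\tfrac{\eta_k^t\beta\mu}{\beta+\mu}\bigr)$-expansive, so after $K$ steps that client's local deviation is contracted by $\prod_{k=0}^{K-1}\bigl(1-\tfrac{\eta_k^t\beta\mu}{\beta+\mu}\bigr)\le (1-b)^K$ relative to $\delta_{t-1}$, with $b=\tfrac{\eta_K^t\beta\mu}{\beta+\mu}$ as in Lemma~\ref{E1}. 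For the single client holding the differing sample, with probability $\tfrac1n$ the minibatch contains the differing point; in that event the two gradients differ and, using $L$-Lipschitzness of $\ell$ plus the learning-rate schedule $\eta_k^t\le \tfrac{1}{\beta K T}$, the extra divergence injected in that round is $\mathcal{O}(\tfrac{L}{\beta KT}\cdot K)=\mathcal{O}(\tfrac{L}{\beta T})$ (up to the $(1-b)^{K-1}$ factor and the $\tilde\eta_t$ bookkeeping), with probability $1-\tfrac1n$ it is again contractive. Averaging the $m$ clients (server step $\theta_t=\theta_{t-1}+\alpha(v_t-\theta_{t-1})$ with $\alpha=1$, i.e.\ plain averaging of the $\theta_{i,K}^t$), the per-round recursion takes the shape
\begin{align}
	\delta_t \;\le\; \Bigl(1+(1-b)^{K-1}\beta\tilde\eta_t\Bigr)\,\delta_{t-1} \;+\; \frac{1}{mn}\cdot\frac{2L}{\beta T}(1-b)^{K-1}\tilde\eta_t\bigl(\mathbb{E}\|\nabla F(\theta_t)\|+\sigma_g+\sigma\bigr),
\end{align}
where the multiplicative factor $\tilde b:=1+(1-b)^{K-1}\beta\tilde\eta_t$ comes from the growth bound on $\mathbb{E}\|g_i(\theta_{i,k}^t)\|$ in Lemma~\ref{E2}, and the inhomogeneous term is exactly the divergence injected by the perturbed sample. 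Here I would substitute $\eta_k^t\le\tfrac1{\beta KT}$ so that $\tilde\eta_t=\sum_{k}\eta_k^t\le\tfrac1{\beta T}$ and $\beta\tilde\eta_t\le\tfrac1T$, giving $\tilde b\le 1+(1-b)^{K-1}/T$, matching the $\tilde b$ in the statement (with $1-b=\tfrac{\mu}{(\beta+\mu)K}\cdot(\cdots)$ reorganized into the displayed form).

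Next I would unroll the recursion over $t=1,\dots,T$. Starting from $\delta_0=0$, iterating the affine recursion $\delta_t\le \tilde b\,\delta_{t-1}+C_t$ yields $\delta_T\le \sum_{t}C_t\prod_{s>t}\tilde b\le T\cdot\tilde b^{\,T}\cdot \max_t C_t$. Since $\tilde b=1+(1-b)^{K-1}/T$ we have $\tilde b^{\,T}\le \exp\bigl((1-b)^{K-1}\bigr)\le e$, and more precisely one keeps the sharper $e^{1-\mu/((\beta+\mu)T)}$ by tracking the contractive factor $(1-b)$ inside the product rather than bounding it by $1$ prematurely (this is where the clean exponential in the theorem comes from). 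The $T$ out front cancels against the $\tfrac{1}{\beta T}$ inside $C_t$, leaving $\delta_T\le \tfrac{2L\tilde b}{mn\beta}\,e^{1-\mu/((\beta+\mu)T)}\bigl(\mathbb{E}\|\nabla F(\theta_t)\|+\sigma_g+\sigma\bigr)$. Multiplying by $L$ via Lemma~1 gives the claimed $\varepsilon_{\text{gen}}$ bound (the statement already carries one factor $L$ and hides the second Lipschitz application inside the $\mathbb{E}\|\nabla F\|\le L$ substitution of Assumption~\ref{A2}).

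The main obstacle I expect is bookkeeping the interaction of three things at once inside the per-round step: the local-learning-rate \emph{decay} schedule $\eta_k^t$ (so the contraction factor is step-dependent and one needs $\prod_k(1-\tfrac{\eta_k^t\beta\mu}{\beta+\mu})$ rather than a clean geometric series), the averaging over clients that mixes the one ``perturbed'' client with $m-1$ ``clean'' ones, and the probability-$\tfrac1n$ event that the minibatch actually sees the differing sample. Getting the constant $\tilde b=1+\bigl(\tfrac{\mu}{(\beta+\mu)K}\bigr)^{K-1}\tfrac1T$ exactly, as opposed to a looser $\mathcal{O}(\cdot)$ version, requires being careful that $(1-b)^{K-1}$ with $b=\tfrac{\eta_K^t\beta\mu}{\beta+\mu}$ and $\eta_K^t=\rho\eta_l$ collapses to the stated form — this is really an algebraic consequence of plugging $\eta_k^t\le\tfrac1{\beta KT}$ into Lemmas~\ref{E1} and~\ref{E2} and is the step I would write out most carefully. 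The non-convex case is handled identically but with Lemma~\ref{L1} part~1 ($(1+\eta\beta)$-expansiveness) replacing part~2, which turns the contraction factor $(1-b)$ into an expansion factor $(1+\eta_k^t\beta)\le(1+\tfrac1{KT})$, explaining why $\tilde c=1+(2+\tfrac1{KT})^{K-1}\tfrac1T$ appears instead of $\tilde b$ and why the exponential becomes $e^{1/T+1}$.
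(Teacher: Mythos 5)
Your route is the paper's own: uniform stability reduces the problem to bounding $\mathbb{E}\|\theta_T-\theta_T'\|$; a per-round coupling with the two-case analysis (identical sample with probability $1-1/n$, perturbed sample with probability $1/n$); the strongly convex expansiveness of the local update map (Lemma \ref{L1}) to control the clean steps; the gradient-norm growth bound (Lemma \ref{E2}) to control the injected divergence; averaging over clients to obtain the $\tfrac{1}{mn}$ factor; unrolling over rounds from $\theta_0=\theta_0'$; and a final multiplication by $L$. The skeleton is correct and matches the paper, including the observation that the non-convex case is the same argument with the $(1+\eta\beta)$-expansive branch of Lemma \ref{L1}.

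However, the one-round recursion you display misplaces the coefficients, and executed literally it would not land on the stated bound. First, in the strongly convex case the multiplicative factor on $\delta_{t-1}$ is the contraction $\prod_{k=0}^{K-1}\left(1-\tfrac{\eta_k^t\beta\mu}{\beta+\mu}\right)\le e^{-\tilde\eta_t\beta\mu/(\beta+\mu)}\le 1$ coming from Lemma \ref{L1}; the quantity $\tilde b=1+(1-b)^{K-1}\beta\tilde\eta_t$ from Lemma \ref{E2} never multiplies $\delta_{t-1}$ — it is an upper bound on $\mathbb{E}\|g_i(\theta_{i,k}^t)\|$ relative to $\mathbb{E}\|\nabla F(\theta_t)\|+\sigma_{g,i}+\sigma$ and therefore enters only the additive term. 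Treating $\tilde b$ as a per-round expansion and bounding $\tilde b^{\,T}\le e$ is how the non-convex proof goes; in the strongly convex proof the $e^{1-\mu/((\beta+\mu)T)}$ factor is what survives when the contraction is kept. Second, your additive term $\tfrac{1}{mn}\cdot\tfrac{2L}{\beta T}(1-b)^{K-1}\tilde\eta_t\left(\mathbb{E}\|\nabla F(\theta_t)\|+\sigma_g+\sigma\right)$ double-counts: the divergence injected in one round is (probability $\tfrac1n$) times (step size) times (gradient norm), i.e.\ at most $\tfrac{2}{n}\,\tilde\eta_t\,\tilde b\left(\mathbb{E}\|\nabla F(\theta_t)\|+\sigma_{g,i}+\sigma\right)$ after summing the local steps — one step-size factor and one gradient-norm factor. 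You have inserted both $\tfrac{L}{\beta T}$ (already step size times a Lipschitz gradient bound) and another $\tilde\eta_t\left(\mathbb{E}\|\nabla F\|+\cdots\right)$, so after the cancellation of $T$ against $\tfrac{1}{\beta T}$ you are left with a spurious factor of order $L\tilde\eta_t$; relatedly, your remark that the statement "hides a second Lipschitz application" is unnecessary — the outer $L$ is the only use of Assumption 2 for Lipschitzness of the loss, and the inner $\mathbb{E}\|\nabla F(\theta_t)\|$ is exactly the Lemma \ref{E2} term. With these two coefficient corrections your plan coincides with the paper's proof.
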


\begin{proof}
For client $i$, there are two cases to consider. In the first case, SGD selects the index of an sample at local step $k$ on which is identical in $\mathcal{S}$ and $\mathcal{S}^{(i)}$. In this sense, we have
\begin{align}
	\begin{split}
		\left\|\theta_{i, k+1}^t-\theta_{i, k+1}^{t,\prime}\right\| \leq\left(1-\frac{\eta^t_{ k} \beta \mu}{\beta+\mu}\right)\left\|\theta_{i, k}^t-\theta_{i, k}^{t,\prime}\right\|.
	\end{split}
\end{align}
And this case happens with probability $1-1 / n$ (since only one sample is perturbed for client $i$ ).

In the second case, SGD encounters the perturbed sample at local time step $k$, which happens with probability $1 / n$. We denote the gradient of this perturbed sample as $g_i^{\prime}(\cdot)$. Then,
\begin{align}
	\begin{split}
		\left\|\theta_{i, k+1}^t-\theta_{i, k+1}^{t,\prime}\right\| & =\left\|\theta_{i, k}^t-\theta_{i, k}^{t,\prime}-\eta_{k}^t\left(g_i\left(\theta_{i, k}^t\right)-g_i^{\prime}\left(\theta_{i, k}^{t,\prime}\right)\right)\right\| \\
		& \leq\left\|\theta_{i, k}^t-\theta_{i, k}^{t,\prime}-\eta_{k}^t\left(g_i\left(\theta_{i, k}^t\right)-g_i\left(\theta_{i, k}^{t,\prime}\right)\right)\right\|+\eta_{k}^t\left\|g_i\left(\theta_{i, k}^{t}\right)-g_i^{\prime}\left(\theta_{i, k}^{t,\prime}\right)\right\| \\
		& \leq\left(1-\frac{\eta^t_{ k} \beta \mu}{\beta+\mu}\right)\left\|\theta_{i, k}^t-\theta_{i, k}^{t,\prime}\right\|+\eta_{k}^t\left\|g_i\left(\theta_{i, k}^{t}\right)-g_i^{\prime}\left(\theta_{i, k}^{t,\prime}\right)\right\| .
	\end{split}
\end{align}
Combining these two cases we have for client $i$,
\begin{align}
	\begin{split}
		\mathbb{E}\left\|\theta_{i, k+1}^t-\theta_{i, k+1}^{t,\prime}\right\| & \leq \left(1-\frac{\eta^t_{ k} \beta \mu}{\beta+\mu}\right)\mathbb{E}\left\|\theta_{i, k}^t-\theta_{i, k}^{t,\prime}\right\|+\frac{\eta_{k}^t}{n} \mathbb{E}\left\|g_i\left(\theta_{i, k}^{t}\right)-g_i^{\prime}\left(\theta_{i, k}^{t,\prime}\right)\right\| \\
		& \leq \left(1-\frac{\eta^t_{ k} \beta \mu}{\beta+\mu}\right)\mathbb{E}\left\|\theta_{i, k}^t-\theta_{i, k}^{t,\prime}\right\|+\frac{2 \eta_{k}^t}{n} \mathbb{E}\left\|g_i\left(\theta_{i, k}^t\right)\right\|,
	\end{split} \label{eq 19}
\end{align}
where the last inequality follows that $g_i(\cdot)$ and $g_i^{\prime}(\cdot)$ are sampled from the same distribution.

Iterating the above over $t$ and noting $\theta_0=\theta_0^{\prime}$, we conclude the proof.
We let $\tilde{b}$ be an upper bound of $1+(1-b)^{K-1} \beta \tilde{\eta}_{ t}$ since $\tilde{\eta}_{ t}$ is bounded above. Then unrolling \ref{eq 19}, and follow \ref{E2},

\begin{align}
	\begin{split}
		& \mathbb{E}\left\|\theta_{i, K}^t-\theta_{i, K}^{t,\prime}\right\| \\
        &\leq \prod_{k=0}^{K-1}\left(1-\frac{\eta^t_{ k} \beta \mu}{\beta+\mu}\right) \mathbb{E}\left\|\theta_t-\theta_t^{\prime}\right\|+\left(\frac{2}{n} \sum_{k=0}^{K-1} \eta_{k}^t \tilde{b} \prod_{l=k+1}^{K-1}\left(1-\frac{\eta^t_{ k} \beta \mu}{\beta+\mu}\right)\left(\mathbb{E}\left\|\nabla F\left(\theta_t\right)\right\|+ \sigma_{g, i}+\sigma\right)\right) \\
		& \leq^a e^{\frac{-\tilde{\eta}_{ t} \beta \mu}{\beta+\mu} } \mathbb{E}\left\|\theta_t-\theta_t^{\prime}\right\|+\frac{2}{n} \tilde{b} \tilde{\eta}_{ t} e^{\frac{-\tilde{\eta}_{ t} \beta \mu}{\beta+\mu} }\left(\mathbb{E}\left\|\nabla F\left(\theta_t\right)\right\|+ \sigma_{g, i}+\sigma\right) .
	\end{split}
\end{align}
$a$ is following $1-x\leq e^{-x}$.
With $\theta_{t+1}= \frac{1}{m}\sum_{i=1}^m \theta_{i, K}^t$ and $\theta_{t+1}^{\prime}= \frac{1}{m}\sum_{i=1}^m \theta_{i, K}^{t,\prime}$, we have
\begin{align}
	\begin{split}
		& \mathbb{E}\left\|\theta_{t+1}-\theta_{t+1}^{\prime}\right\| \leq \sum_{i=1}^m \frac{1}{m} \mathbb{E}\left\|\theta_{i, K}^t-\theta_{i, K}^{t,\prime}\right\| \\
		& \leq e^{\frac{-\tilde{\eta}_{ t} \beta \mu}{\beta+\mu} } \mathbb{E}\left\|\theta_t-\theta_t^{\prime}\right\|+\frac{2}{mn} \tilde{b} \tilde{\eta}_{ t} e^{\frac{-\tilde{\eta}_{ t} \beta \mu}{\beta+\mu} }\left(\mathbb{E}\left\|\nabla F\left(\theta_t\right)\right\|+ \sigma_{g}+\sigma\right) \\
	\end{split}
\end{align}
Further, unrolling the above over $t$ and noting $\theta_0=\theta_0^{\prime}$, we obtain
\begin{align}
	\begin{split}
		&\mathbb{E}\left\|\theta_T-\theta_T^{\prime}\right\| \leq \frac{2 \tilde{b}}{mn} \sum_{t=0}^{T-1} \exp \left(\beta \sum_{l=t+1}^{T-1} \tilde{\eta}_{ t}\right) \tilde{\eta}_{ t} e^{-\frac{\tilde{\eta}_{ t} \beta \mu}{\beta+\mu} }\left(\mathbb{E}\left\|\nabla F\left(\theta_t\right)\right\|+ \sigma_{g}+\sigma\right) \\
		&\leq \frac{2 \tilde{b}}{m n} \frac{1}{\beta} e^{1-\frac{ \mu}{(\beta+\mu) T}}\left(\mathbb{E}\left\|\nabla F\left(\theta_t\right)\right\|+\sigma_g+\sigma\right),
	\end{split}
\end{align}
With Assumption \ref{A2},
$$
\varepsilon_{\text {gen }}\leq \frac{2L \tilde{b}}{m n} \frac{1}{\beta} e^{1-\frac{ \mu}{(\beta+\mu) T}}\left(\mathbb{E}\left\|\nabla F\left(\theta_t\right)\right\|+\sigma_g+\sigma\right),
$$
$$
\tilde{b}=1+\left(\frac{\mu}{(\beta+\mu) K}\right)^{K-1} \frac{1}{T} \gg 1
$$
When the diminishing stepsizes $\eta_{k}^t \leq 1 / \beta KT$ are chosen in the statement of the theorem, we conclude the proof.
\end{proof}

\subsection{ Generalization Analysis for FedSWA under non-convex setting }	

\begin{lemma}\label{E4}
	Suppose Assumptions \ref{A2}-\ref{A5} hold. Then for FedSWA with $\eta_{k}^t \leq  / \beta KT$,
	$$
	\mathbb{E}\left\|\theta_{i, k}^t-\theta_t\right\| \leq(1+c)^{K -1} \tilde{\eta}_{ t}\left(\mathbb{E}\left\|\nabla F\left(\theta_t\right)\right\|+ \sigma_{g, i}+\sigma\right), \quad \forall k=1, \ldots, K,
	$$
	where $\tilde{\eta}_{ t}=\sum_{k=0}^{K-1} \eta_{k}^t$.
\end{lemma}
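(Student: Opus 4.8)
This is the non-convex analogue of Lemma~\ref{E1}, and the plan is to reuse that proof line-for-line, changing exactly one ingredient: wherever the strongly-convex argument used part~2 of Lemma~\ref{L1} (the $\bigl(1-\tfrac{\eta\beta\mu}{\beta+\mu}\bigr)$-contractivity of the gradient step, which needs strong convexity), I instead use part~1 of Lemma~\ref{L1}, namely that under $\beta$-smoothness alone the map $G_{f,\eta}(v)=v-\eta\nabla f(v)$ is $(1+\eta\beta)$-expansive. Since a minibatch average of $\beta$-smooth per-sample losses is again $\beta$-smooth on every sample path, this expansiveness applies to $g_i$ inside the expectation. The recursion that was contracting now becomes growing, and the geometric series turns into a product of growth factors, producing $(1+c)^{K-1}$ in place of $(1-b)^{K-1}$.

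\textbf{Step 1 (one-step bound).} Starting from the FedSWA local update $\theta_{i,k+1}^t=\theta_{i,k}^t-\eta_k^t g_i(\theta_{i,k}^t)$ with $\theta_{i,0}^t=\theta_t$, add and subtract $\eta_k^t g_i(\theta_t)$ and apply the triangle inequality:
\[
\mathbb{E}\bigl\|\theta_{i,k+1}^t-\theta_t\bigr\|\le \mathbb{E}\bigl\|\theta_{i,k}^t-\theta_t-\eta_k^t\bigl(g_i(\theta_{i,k}^t)-g_i(\theta_t)\bigr)\bigr\|+\eta_k^t\,\mathbb{E}\bigl\|g_i(\theta_t)\bigr\|.
\]
Bound the first term by $(1+\eta_k^t\beta)\,\mathbb{E}\|\theta_{i,k}^t-\theta_t\|$ via Lemma~\ref{L1}(1), and the second by $\eta_k^t\bigl(\mathbb{E}\|\nabla F_i(\theta_t)\|+\sigma\bigr)$ using Assumption~\ref{A3} together with Jensen's inequality to pass from the $\sigma^2$ variance bound to an $\ell_2$ bound. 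This yields the growing recursion
\[
\mathbb{E}\bigl\|\theta_{i,k+1}^t-\theta_t\bigr\|\le (1+\eta_k^t\beta)\,\mathbb{E}\bigl\|\theta_{i,k}^t-\theta_t\bigr\|+\eta_k^t\bigl(\mathbb{E}\|\nabla F_i(\theta_t)\|+\sigma\bigr).
\]

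\textbf{Step 2 (unrolling).} Iterating from $k=0$ and using $\theta_{i,0}^t=\theta_t$ (so the initial discrepancy vanishes) gives
\[
\mathbb{E}\bigl\|\theta_{i,k}^t-\theta_t\bigr\|\le \sum_{l=0}^{k-1}\eta_l^t\Bigl(\textstyle\prod_{j=l+1}^{k-1}(1+\eta_j^t\beta)\Bigr)\bigl(\mathbb{E}\|\nabla F_i(\theta_t)\|+\sigma\bigr).
\]
Under the stepsize condition $\eta_k^t\le 1/(\beta KT)$, each factor obeys $1+\eta_j^t\beta\le 1+c$ for a suitable uniform constant $c$ (taken as the largest per-step expansion factor over the cyclical schedule in Eq.~\eqref{eq3}), so the product is at most $(1+c)^{K-1}$ and $\sum_l\eta_l^t\le\tilde\eta_t$. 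Finally apply the per-client version of Assumption~\ref{A5}, $\mathbb{E}\|\nabla F_i(\theta_t)\|\le\mathbb{E}\|\nabla F(\theta_t)\|+\sigma_{g,i}$, to obtain the claimed
\[
\mathbb{E}\bigl\|\theta_{i,k}^t-\theta_t\bigr\|\le (1+c)^{K-1}\tilde\eta_t\bigl(\mathbb{E}\|\nabla F(\theta_t)\|+\sigma_{g,i}+\sigma\bigr),\qquad \forall k=1,\dots,K.
\]

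\textbf{Main obstacle.} Nothing deep is required; the work is in the bookkeeping of the constant $c$. One must (i) verify that $c$ genuinely upper bounds every per-step factor $1+\eta_k^t\beta$ along the decaying schedule, and (ii) ensure the chosen $c$ propagates cleanly into the companion results — the non-convex analogue of Lemma~\ref{E2} (where $\mathbb{E}\|g_i(\theta_{i,k}^t)\|$ is bounded by $\bigl(1+(1+c)^{K-1}\beta\tilde\eta_t\bigr)(\cdots)$) and the generalization theorem — so that the composite constant matches the $\tilde c=1+(2+1/KT)^{K-1}/T$ reported in Table~\ref{table 1}; in particular one must track the additive contributions that enter the base of the exponent when the expansiveness factor is composed with the gradient-norm bound. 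A second, minor point of care is that every manipulation lives under $\mathbb{E}$ and that Lemma~\ref{L1}(1), stated for a deterministic $\nabla f$, is legitimately applied to the minibatch gradient $g_i$ because $g_i$ is an average of $\beta$-smooth per-sample gradients and hence $\beta$-smooth pathwise.
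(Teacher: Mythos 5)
Your proposal is correct and follows essentially the same route as the paper's own proof: the identical one-step decomposition via adding and subtracting $\eta_k^t g_i(\theta_t)$, the $(1+\eta_k^t\beta)$-expansiveness from Lemma~\ref{L1}(1), the bound $\mathbb{E}\|g_i(\theta_t)\|\le \mathbb{E}\|\nabla F_i(\theta_t)\|+\sigma$ from Assumption~\ref{A3}, unrolling with a uniform bound $1+\beta\eta_k^t\le 1+c$ to get $(1+c)^{K-1}\tilde\eta_t$, and finishing with the per-client heterogeneity bound of Assumption~\ref{A5}. Your added remarks (Jensen to pass from the variance bound to a first-moment bound, and pathwise $\beta$-smoothness of the minibatch gradient) only make explicit details the paper leaves implicit.
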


\begin{proof}

Considering local update  of FedSWA
\begin{align}
	\begin{split}
		\mathbb{E}\left\|\theta_{i, k+1}-\theta_t\right\| & =\mathbb{E}\left\|\theta_{i, k}^t-\eta_{k}^t g_i\left(\theta_{i, k}^t\right)-\theta_t\right\| \\
		& \leq \mathbb{E}\left\|\theta_{i, k}^t-\theta_t-\eta_{k}^t\left(g_i\left(\theta_{i, k}^t\right)-g_i\left(\theta_t\right)\right)\right\|+\eta_{k}^t \mathbb{E}\left\|g_i\left(\theta_t\right)\right\| \\
		& \leq^a\left(1+\beta \eta_{k}^t\right) \mathbb{E}\left\|\theta_{i, k}^t-\theta_t\right\|+\eta_{k}^t \mathbb{E}\left\|g_i\left(\theta_t\right)\right\| \\
		& \leq\left(1+\beta \eta_{k}^t\right) \mathbb{E}\left\|\theta_{i, k}^t-\theta_t\right\|+\eta_{k}^t\left(\mathbb{E}\left\|g_i\left(\theta_t\right)-\nabla F_i\left(\theta_t\right)\right\|+\mathbb{E}\left\|\nabla F_i\left(\theta_t\right)\right\|\right) \\
		& \leq^b\left(1+\beta \eta_{k}^t\right) \mathbb{E}\left\|\theta_{i, k}^t-\theta_t\right\|+\eta_{k}^t\left(\mathbb{E}\left\|\nabla F_i\left(\theta_t\right)\right\|+\sigma\right),
	\end{split}
\end{align}
where $a$ is following \ref{L1}, and we use Assumptions  \ref{A3} in $b$ . With $\left(1+\beta \eta_{k}^t\right) \leq c$, Unrolling the above and noting $\theta_{i, 0}=\theta_t$ yields
\begin{align}
	\begin{split}
		\mathbb{E}\left\|\theta_{i, k}^t-\theta_t\right\| & \leq \sum_{l=0}^{k-1} \eta_{l}^t\left(\mathbb{E}\left\|\nabla F_i\left(\theta_t\right)\right\|+\sigma\right)(1+c)^{k-1-l} \\
		& \leq \sum_{l=0}^{K-1} \eta_{l}^t\left(\mathbb{E}\left\|\nabla F_i\left(\theta_t\right)\right\|+\sigma\right)(1+c)^{K-1} \\
		& \leq(1+c)^{K-1} \tilde{\eta}_{ t}\left(\mathbb{E}\left\|\nabla F\left(\theta_t\right)\right\|+ \sigma_{g, i}+\sigma\right),
	\end{split}
\end{align}
where the last inequality follows Assumption \ref{A5}, $\tilde{\eta}_{ t}=\sum_{k=0}^{K-1} \eta_{k}^t$.
\end{proof}

\begin{lemma}\label{E5}
	Given Assumptions \ref{A2}-\ref{A5}, for $\eta_{k}^t \leq  / \beta KT$, we have
	$$
	\mathbb{E}\left\|g_i\left(\theta_{i, k}^t\right)\right\| \leq\left(1+(1+c)^{K-1} \beta \tilde{\eta}_{ t}\right)\left(\mathbb{E}\left\|\nabla F\left(\theta_t\right)\right\|+ \sigma_{g, i}+\sigma\right)
	$$
	where $g_i(\cdot)$ is the sampled gradient of client $i, \tilde{\eta}_{ t}=\sum_{k=0}^{K-1} \eta_{k}^t$.
\end{lemma}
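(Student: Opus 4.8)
The plan is to reuse the argument of Lemma~\ref{E2} almost verbatim, with the single change of replacing the strongly convex drift bound by its non-convex counterpart, Lemma~\ref{E4}. First I would split the stochastic gradient into a noise part and a population-gradient part by the triangle inequality, $\mathbb{E}\|g_i(\theta_{i,k}^t)\| \le \mathbb{E}\|g_i(\theta_{i,k}^t)-\nabla F_i(\theta_{i,k}^t)\| + \mathbb{E}\|\nabla F_i(\theta_{i,k}^t)\|$, and bound the first summand by $\sigma$ using Assumption~\ref{A3}.

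Next I would telescope the bias term by inserting $\pm\nabla F_i(\theta_t)$ and $\pm\nabla F(\theta_t)$, obtaining
\begin{align*}
\mathbb{E}\|\nabla F_i(\theta_{i,k}^t)\| \le \mathbb{E}\|\nabla F(\theta_t)\| + \mathbb{E}\|\nabla F_i(\theta_t)-\nabla F(\theta_t)\| + \mathbb{E}\|\nabla F_i(\theta_{i,k}^t)-\nabla F_i(\theta_t)\|.
\end{align*}
The middle term is at most $\sigma_{g,i}$ by Assumption~\ref{A5}, and the last term is at most $\beta\,\mathbb{E}\|\theta_{i,k}^t-\theta_t\|$ by $\beta$-smoothness (Assumption~\ref{A4}). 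Then I would substitute the local-drift estimate of Lemma~\ref{E4}, $\mathbb{E}\|\theta_{i,k}^t-\theta_t\| \le (1+c)^{K-1}\tilde\eta_t\big(\mathbb{E}\|\nabla F(\theta_t)\| + \sigma_{g,i} + \sigma\big)$, so that the smoothness term contributes $(1+c)^{K-1}\beta\tilde\eta_t\big(\mathbb{E}\|\nabla F(\theta_t)\| + \sigma_{g,i} + \sigma\big)$. Grouping the contributions of $\mathbb{E}\|\nabla F(\theta_t)\|$, $\sigma_{g,i}$ and $\sigma$ gives each the common factor $1 + (1+c)^{K-1}\beta\tilde\eta_t$, which is exactly the claimed bound.

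There is no genuine obstacle here; the lemma is a direct corollary of Lemma~\ref{E4} combined with the three structural assumptions. The only points that require a bit of care are: (i) using the non-convex expansivity factor $1+\beta\eta_k^t \le 1+c$ coming from Lemma~\ref{L1}, part~1, instead of the contractive factor $1-\eta_k^t\beta\mu/(\beta+\mu)$ from the strongly convex case, which is why the geometric sum over local steps turns into $(1+c)^{K-1}$; (ii) checking that with $\eta_k^t \le 1/(\beta KT)$ the quantity $(1+c)^{K-1}\beta\tilde\eta_t$ remains $O(1)$, so the bound is not vacuous (this is the same step-size bookkeeping already used inside Lemma~\ref{E4}); and (iii) keeping the per-client heterogeneity constant $\sigma_{g,i}$ throughout rather than averaging it to $\sigma_g$, since the averaging over clients is postponed to the generalization theorem that invokes this lemma.
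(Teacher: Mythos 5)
Your proposal is correct and follows essentially the same route as the paper's own proof: triangle inequality plus Assumption~\ref{A3} for the noise term, insertion of $\pm\nabla F_i(\theta_t)$ and $\pm\nabla F(\theta_t)$ with Assumptions~\ref{A5} and~\ref{A4}, and then substitution of the drift bound from Lemma~\ref{E4} to collect the common factor $1+(1+c)^{K-1}\beta\tilde\eta_t$. Your remarks on keeping $\sigma_{g,i}$ per-client and on the non-convex expansivity factor match what the paper does, so no changes are needed.
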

\begin{proof}

we obtain
\begin{align}
	\begin{split}
		\mathbb{E}\left\|g_i\left(\theta_{i, k}^t\right)\right\| & \leq \mathbb{E}\left\|g_i\left(\theta_{i, k}^t\right)-\nabla F_i\left(\theta_{i, k}^t\right)\right\|+\mathbb{E}\left\|\nabla F_i\left(\theta_{i, k}^t\right)\right\| \\
		& \leq^a \mathbb{E}\left\|\nabla F_i\left(\theta_{i, k}^t\right)\right\|+\sigma \\
		& \leq \mathbb{E}\left\|\nabla F_i\left(\theta_t\right)\right\|+\mathbb{E}\left\|\nabla F_i\left(\theta_{i, k}^t\right)-\nabla F_i\left(\theta_t\right)\right\|+\sigma \\
		& \leq^b \mathbb{E}\left\|\nabla F\left(\theta_t\right)\right\|+\mathbb{E}\left\|\nabla F_i\left(\theta_t\right)-\nabla F\left(\theta_t\right)\right\|+\beta \mathbb{E}\left\|\theta_{i, k}^t-\theta_t\right\|+\sigma \\
		& \leq^c\left(1+(1+c)^{K-1} \beta \tilde{\alpha}_i\right)\left(\mathbb{E}\left\|\nabla F\left(\theta_t\right)\right\|+ \sigma_{g, i}+\sigma\right) .
	\end{split}
\end{align}
$a$ is following Assumption \ref{A3}, $b$ is is following Assumption \ref{A4},
$c$ is is following Assumption \ref{A5} and \ref{L1} and \ref{E4}.
\end{proof}

\begin{theorem} (FedSWA). Suppose Assumptions \ref{A2}-\ref{A5} hold and consider FedSWA. Let $k=K, \forall i \in[m]$ and $\eta_{k}^t \leq \frac{1}{\beta KT}$. Then,
	$$
	\varepsilon_{\text {gen }}\leq \frac{2L \tilde{c}}{m n} \frac{1}{\beta} e^{1+\frac{1}{ T}}\left(\mathbb{E}\left\|\nabla F\left(\theta_t\right)\right\|+\sigma_g+\sigma\right),
	$$
	$\tilde{c}\leq 1+(1+c)^{K-1} \beta \tilde{\eta}_{ t}$.
\end{theorem}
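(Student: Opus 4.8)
The plan is to mirror the strongly convex \texttt{FedSWA} stability argument already carried out above, replacing the per-step contraction factor $1-\eta_k^t\beta\mu/(\beta+\mu)$ everywhere by the expansion factor $1+\eta_k^t\beta$ supplied by part~1 of Lemma~\ref{L1}, and to keep the accumulated growth bounded by exploiting the diminishing step size $\eta_k^t\le 1/(\beta KT)$. First I would fix a client $i$ and a local step $k$ and split on whether the minibatch at iteration $(t,k)$ touches the single perturbed sample of $\mathcal{S}^{(j_i)}$. On the event of probability $1-1/n$ the two update maps coincide, so $\mathbb{E}\|\theta_{i,k+1}^t-\theta_{i,k+1}^{t,\prime}\|\le(1+\eta_k^t\beta)\,\mathbb{E}\|\theta_{i,k}^t-\theta_{i,k}^{t,\prime}\|$ by expansiveness; on the complementary event of probability $1/n$ one picks up an extra term $\eta_k^t\|g_i(\theta_{i,k}^t)-g_i^{\prime}(\theta_{i,k}^{t,\prime})\|$, which, since $g_i$ and $g_i^{\prime}$ are identically distributed, is controlled by $2\eta_k^t\,\mathbb{E}\|g_i(\theta_{i,k}^t)\|$. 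Combining the two events gives the one-step recursion
\[
\mathbb{E}\|\theta_{i,k+1}^t-\theta_{i,k+1}^{t,\prime}\|\le(1+\eta_k^t\beta)\,\mathbb{E}\|\theta_{i,k}^t-\theta_{i,k}^{t,\prime}\|+\frac{2\eta_k^t}{n}\,\mathbb{E}\|g_i(\theta_{i,k}^t)\|,
\]
while for the remaining clients $i'\neq i$ the extra term is absent.

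Next I would apply Lemma~\ref{E5} to bound $\mathbb{E}\|g_i(\theta_{i,k}^t)\|\le\tilde c\,(\mathbb{E}\|\nabla F(\theta_t)\|+\sigma_{g,i}+\sigma)$ with $\tilde c\le 1+(1+c)^{K-1}\beta\tilde\eta_t$ and $c\ge 1+\beta\eta_k^t$, and then unroll the recursion over $k=0,\dots,K-1$ starting from $\theta_{i,0}^t=\theta_t$, $\theta_{i,0}^{t,\prime}=\theta_t^{\prime}$. Since $\prod_{k=0}^{K-1}(1+\eta_k^t\beta)\le\exp(\beta\tilde\eta_t)$ by $1+x\le e^x$, one communication round yields
\[
\mathbb{E}\|\theta_{i,K}^t-\theta_{i,K}^{t,\prime}\|\le e^{\beta\tilde\eta_t}\,\mathbb{E}\|\theta_t-\theta_t^{\prime}\|+\frac{2\tilde c\,\tilde\eta_t}{n}\,e^{\beta\tilde\eta_t}\bigl(\mathbb{E}\|\nabla F(\theta_t)\|+\sigma_{g,i}+\sigma\bigr).
\]
Averaging over the $m$ clients through $\theta_{t+1}=\frac1m\sum_{i=1}^m\theta_{i,K}^t$ (the extra term enters only through client $i$, contributing a factor $\frac1m$, and Assumption~\ref{A5} lets one replace the heterogeneity constant by $\sigma_g$) gives the same recursion at the server level with $n$ replaced by $mn$.

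Finally I would unroll over $t=0,\dots,T-1$ with $\theta_0=\theta_0^{\prime}$, obtaining
\[
\mathbb{E}\|\theta_T-\theta_T^{\prime}\|\le\frac{2\tilde c}{mn}\sum_{t=0}^{T-1}\exp\!\Bigl(\beta\sum_{l=t+1}^{T-1}\tilde\eta_l\Bigr)\,\tilde\eta_t\,e^{\beta\tilde\eta_t}\bigl(\mathbb{E}\|\nabla F(\theta_t)\|+\sigma_g+\sigma\bigr).
\]
Substituting $\eta_k^t\le 1/(\beta KT)$ gives $\tilde\eta_t\le 1/(\beta T)$, so each exponential factor is bounded (I would keep the slightly looser $e^{1+1/T}$ that appears in the statement), and the $T$ summands, each of size $\le 1/(\beta T)$, sum to at most $1/\beta$; this produces $\mathbb{E}\|\theta_T-\theta_T^{\prime}\|\le\frac{2\tilde c}{mn\beta}e^{1+1/T}\bigl(\mathbb{E}\|\nabla F(\theta_t)\|+\sigma_g+\sigma\bigr)$. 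The conclusion $\varepsilon_{\mathrm{gen}}\le L\,\mathbb{E}\|\theta_T-\theta_T^{\prime}\|$ then follows from the $L$-Lipschitz inequality in Assumption~\ref{A2} together with the uniform-stability lemma (Lemma~2), and plugging $c=1+1/(KT)$ and $\beta\tilde\eta_t=1/T$ into $\tilde c$ recovers $\tilde c=1+(2+1/KT)^{K-1}/T$.

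The main obstacle is precisely this last bookkeeping step: because every local step amplifies the distance rather than contracting it, the nested exponential $\exp(\beta\sum_{l=t+1}^{T-1}\tilde\eta_l)$ must be prevented from growing with $T$, which is exactly what the $\Theta(1/(\beta KT))$ step size buys, since it forces $\beta\sum_{t,k}\eta_k^t\le 1$. At the same time one must verify that $\tilde c$ stays $\mathcal{O}(1)$ in $T$ even though $(1+c)^{K-1}$ is exponential in $K$, which again relies on $\beta\tilde\eta_t=1/T$ so that this factor is absorbed into the $1/T$ prefactor. Everything else is a sign-flipped transcription of the strongly convex \texttt{FedSWA} analysis, with the contraction $e^{-\tilde\eta_t\beta\mu/(\beta+\mu)}$ replaced throughout by $e^{\beta\tilde\eta_t}$.
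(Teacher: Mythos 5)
Your proposal is correct and follows essentially the same route as the paper's own proof: the probability-$(1-1/n)$ vs. $1/n$ case split with the $(1+\beta\eta_k^t)$ expansiveness from Lemma \ref{L1}, the gradient bound of Lemma \ref{E5}, unrolling over $k$ and then $t$ with $1+x\le e^x$, averaging over clients to turn $n$ into $mn$, and concluding via the Lipschitz assumption and uniform stability. Your remark that the extra $2\eta_k^t/n$ term comes only from the perturbed client (hence the $1/m$ factor) is exactly the intended justification of the paper's $\frac{2}{mn}$ step, so nothing is missing.
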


\begin{proof}

For client $i$, there are two cases to consider. In the first case, SGD selects non-perturbed samples in $\mathcal{S}$ and $\mathcal{S}^{(i)}$, which happens with probability $1-1 / n$. Then, with \ref{L1}, we have
\begin{align}
	\begin{split}
		\left\|\theta_{i, k+1}^t-\theta_{i, k+1}^{t,\prime}\right\| \leq\left(1+\beta \eta_{k}^t\right)\left\|\theta_{i, k}^t-\theta_{i, k}^{t,\prime}\right\| .
	\end{split}
\end{align}
In the second case, SGD encounters the perturbed sample at time step $k$, which happens with probability $1 / n$. Then, we have
\begin{align}
	\begin{split}
		\left\|\theta_{i, k+1}^t-\theta_{i, k+1}^{t,\prime}\right\| & =\left\|\theta_{i, k}^t-\theta_{i, k}^{t,\prime}-\eta_{k}^t\left(g_i\left(\theta_{i, k}^t\right)-g_i^{\prime}\left(\theta_{i, k}^{t,\prime}\right)\right)\right\| \\
		& \leq\left\|\theta_{i, k}^t-\theta_{i, k}^{t,\prime}-\eta_{k}^t\left(g_i\left(\theta_{i, k}^t\right)-g_i\left(\theta_{i, k}^{t,\prime}\right)\right)\right\|+\eta_{k}^t\left\|g_i\left(\theta_{i, k}^{t,\prime}\right)-g_i^{\prime}\left(\theta_{i, k}^{t,\prime}\right)\right\| \\
		& \leq\left(1+\beta \eta_{k}^t\right)\left\|\theta_{i, k}^t-\theta_{i, k}^{t,\prime}\right\|+\eta_{k}^t\left\|g_i\left(\theta_{i, k}^{t,\prime}\right)-g_i^{\prime}\left(\theta_{i, k}^{t,\prime}\right)\right\| .
	\end{split}
\end{align}

Combining these two cases for client $i$ we have
\begin{align}
	\begin{split}
		\mathbb{E}\left\|\theta_{i, k+1}^t-\theta_{i, k+1}^{t,\prime}\right\| \leq & \left(1+\beta \eta_{k}^t\right) \mathbb{E}\left\|\theta_{i, k}^t-\theta_{i, k}^{t,\prime}\right\|+\frac{\eta_{k}^t}{n} \mathbb{E}\left\|g_i\left(\theta_{i, k}^{t,\prime}\right)-g_i^{\prime}\left(\theta_{i, k}^{t,\prime}\right)\right\| \\
		\leq & \left(1+\beta \eta_{k}^t\right) \mathbb{E}\left\|\theta_{i, k}^t-\theta_{i, k}^{t,\prime}\right\|+\frac{\eta_{k}^t}{n} \mathbb{E}\left\|g_i\left(\theta_{i, k}^t\right)\right\| \\
		\leq & \left(1+\beta \eta_{k}^t\right) \mathbb{E}\left\|\theta_{i, k}^t-\theta_{i, k}^{t,\prime}\right\|+\frac{2 \eta_{k}^t}{n}\left(1+(1+c)^{K-1} \beta \tilde{\eta}_{ t}\right)(\sigma +\mathbb{E}\left\|\nabla F\left(\theta_t\right)\right\|+ \sigma_{g, i}) \\
		\leq & \left(1+\beta \eta_{k}^t\right) \mathbb{E}\left\|\theta_{i, k}^t-\theta_{i, k}^{t,\prime}\right\|+\frac{2 \eta_{k}^t \tilde{c}}{n}\left(\mathbb{E}\left\|\nabla F\left(\theta_t\right)\right\|+ \sigma_{g, i}+\sigma\right)
	\end{split}\label{eq28}
\end{align}
where the last inequality follows Assumption \ref{L1}.
We let $ 1+(1+c)^{K-1} \beta \tilde{\eta}_{ t}\leq\tilde{c}$ and $\tilde{\eta}_{ t}=\sum_{k=0}^{K-1} \eta_{k}^t$. Then unrolling (\ref{eq28}) with \ref{E5} gives
\begin{align}
	\begin{split}
		& \mathbb{E}\left\|\theta_{i, K}^t-\theta_{i, K}^{t,\prime}\right\| \leq \prod_{k=0}^{K-1}\left(1+\beta \eta_{k}^t\right) \mathbb{E}\left\|\theta_t-\theta_t^{\prime}\right\|+\left(\frac{2}{n} \sum_{k=0}^{K-1} \eta_{k}^t \tilde{c} \prod_{l=k+1}^{K-1}\left(1+\beta \eta_{l}^t\right)\left(\mathbb{E}\left\|\nabla F\left(\theta_t\right)\right\|+ \sigma_{g, i}+\sigma\right)\right) \\
		& \leq e^{\beta \tilde{\eta}_{ t}} \mathbb{E}\left\|\theta_t-\theta_t^{\prime}\right\|+\frac{2}{n} \tilde{c} \tilde{\eta}_{ t} e^{\beta \tilde{\eta}_{ t}}\left(\mathbb{E}\left\|\nabla F\left(\theta_t\right)\right\|+ \sigma_{g, i}+\sigma\right) . \\
	\end{split}
\end{align}
With $\theta_{t+1}= \frac{1}{m}\sum_{i=1}^m \theta_{i, K}^t$ and $\theta_{t+1}^{\prime}= \frac{1}{m}\sum_{i=1}^m \theta_{i, K}^{t,\prime}$, we have
\begin{align}
	\begin{split}
		& \mathbb{E}\left\|\theta_{t+1}-\theta_{t+1}^{\prime}\right\| \leq \sum_{i=1}^m \frac{1}{m} \mathbb{E}\left\|\theta_{i, K}^t-\theta_{i, K}^{t,\prime}\right\| \\
		& \leq e^{\beta \tilde{\eta}_{ t}} \mathbb{E}\left\|\theta_t-\theta_t^{\prime}\right\|+\frac{2}{mn} \tilde{c} \tilde{\eta}_{ t} e^{\beta \tilde{\eta}_{ t}}\left(\mathbb{E}\left\|\nabla F\left(\theta_t\right)\right\|+ \sigma_{g, i}+\sigma\right) \\
	\end{split}
\end{align}
Further, unrolling the above over $t$ and noting $\theta_0=\theta_0^{\prime}$, we obtain
\begin{align}
	\begin{split}
		\mathbb{E}\left\|\theta_T-\theta_T^{\prime}\right\| \leq \frac{2 \tilde{c}}{mn} \sum_{t=0}^{T-1} \exp \left(\beta \sum_{l=t+1}^{T-1} \tilde{\eta}_{ t}\right) \tilde{\eta}_{ t} e^{\beta \tilde{\eta}_{ t}}\left(\mathbb{E}\left\|\nabla F\left(\theta_t\right)\right\|+ \sigma_{g}+\sigma\right) ,
	\end{split}
\end{align}
We simplify this inequality
\begin{align}
	\begin{split}
		&\mathbb{E}\left\|\theta_T-\theta_T^{\prime}\right\| \\
        &\leq \frac{2 \tilde{c}}{m n} \frac{1}{\beta} e^{\frac{1}{T}+1}\left(\mathbb{E}\left\|\nabla F\left(\theta_t\right)\right\|+\sigma_g+\sigma\right)\\
        &\leq \frac{2 \tilde{c}}{m n} \frac{1}{\beta} e^{\frac{1}{T}+1}\left(L+\sigma_g+\sigma\right),
	\end{split}
\end{align}
With the Assumption \ref{A2},
$$
\varepsilon_{\text {gen }}\leq \frac{2L \tilde{c}}{m n} \frac{1}{\beta} e^{1+\frac{1}{ T}}\left(\mathbb{E}\left\|\nabla F\left(\theta_t\right)\right\|+\sigma_g+\sigma\right),
$$
With
$$
\tilde{c}=1+\left(2+\frac{1}{T K}\right)^{K-1} \frac{1}{T} \gg 1
$$
When the diminishing stepsizes are chosen in the statement of the theorem, we conclude the proof.
    
\end{proof}

\section{Theoretical Results of FedMoSWA}
\subsection{ Generalization Analysis for FedMoSWA under strongly convex setting }	

where $\boldsymbol{g}_{i,k}^{(t)}$ denotes the stochastic gradient at the point $\boldsymbol{\theta}_{i,k}^{(t)}$.
\begin{lemma}\label{F1}
Suppose Assumptions \ref{A1}-\ref{A5} hold. Then for FedMoSWA with $\eta_k^{t} \leq 1 / \beta  KT$, $\alpha=1$,
	$$
	\mathbb{E}\left\|\theta_{i, K}^t-\theta_t\right\|  \leq \tilde{\eta}_t(1-b)^{K-1} \left( L+\sigma\right)
	$$
	where $\tilde{\eta}_{ t}=\sum_{k=0}^{K-1} \eta_k^{t}$, $b=\frac{\eta^{t} \beta \mu}{\beta+\mu}$.

\end{lemma}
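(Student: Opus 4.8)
The plan is to mirror the FedSWA local-drift argument (the lemma that bounds $\mathbb{E}\|\theta_{i,k}^t-\theta_t\|$ for FedSWA) but to exploit the control-variate correction $-\boldsymbol{c}_i+\boldsymbol{m}$ so that the client-heterogeneity contribution cancels. Under Option I we have $\boldsymbol{c}_i=g_i(\theta_t)$, and (with full participation and $\alpha=1$) the server variable $\boldsymbol{m}$ is a momentum average over clients of such stochastic gradients, so the local update can be rewritten as
\[
\theta_{i,k+1}^t-\theta_t=\theta_{i,k}^t-\theta_t-\eta_k^t\big(g_i(\theta_{i,k}^t)-g_i(\theta_t)\big)-\eta_k^t\boldsymbol{m},
\]
which separates a contraction piece $\theta_{i,k}^t-\theta_t-\eta_k^t(g_i(\theta_{i,k}^t)-g_i(\theta_t))$ from the residual $\eta_k^t\boldsymbol{m}$.

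First I would apply Lemma~\ref{L1}(2) to the minibatch gradient map $v\mapsto v-\eta_k^t g_i(v)$ (the minibatch loss inherits $\mu$-strong convexity and $\beta$-smoothness), which for $\eta_k^t\le 2/(\beta+\mu)$ yields $\|\theta_{i,k}^t-\theta_t-\eta_k^t(g_i(\theta_{i,k}^t)-g_i(\theta_t))\|\le(1-\tfrac{\eta_k^t\beta\mu}{\beta+\mu})\|\theta_{i,k}^t-\theta_t\|$. Taking expectations and using the triangle inequality gives
\[
\mathbb{E}\|\theta_{i,k+1}^t-\theta_t\|\le\Big(1-\tfrac{\eta_k^t\beta\mu}{\beta+\mu}\Big)\mathbb{E}\|\theta_{i,k}^t-\theta_t\|+\eta_k^t\,\mathbb{E}\|\boldsymbol{m}\|.
\]
Next I would bound $\mathbb{E}\|\boldsymbol{m}\|\le L+\sigma$: by Assumption~\ref{A3}, $\mathbb{E}\|g_j(\theta_t)-\nabla F_j(\theta_t)\|\le\sigma$, and $\|\nabla F_j(\theta_t)\|\le L$ from Assumption~\ref{A2}, so each $\mathbb{E}\|g_j(\theta_t)\|\le L+\sigma$, and a (momentum) average of such vectors is again bounded in expectation by $L+\sigma$. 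Finally, unrolling the recursion over $k=0,\dots,K-1$ with $\theta_{i,0}^t=\theta_t$ and collecting the accumulated contraction factors into $(1-b)^{K-1}$ with $b=\tfrac{\eta^t\beta\mu}{\beta+\mu}$ gives $\mathbb{E}\|\theta_{i,K}^t-\theta_t\|\le\tilde\eta_t(1-b)^{K-1}(L+\sigma)$ with $\tilde\eta_t=\sum_{k=0}^{K-1}\eta_k^t$, exactly as stated, and crucially without the $\sigma_{g,i}$ term present in the FedSWA analog.

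The main obstacle is the handling of $\boldsymbol{m}$: unlike SCAFFOLD's plain averaging, $\boldsymbol{m}$ here is momentum-averaged and, under Option I, is not literally $\nabla F(\theta_t)$ but a stochastic, lagged estimate of it; one must argue carefully that its norm stays uniformly controlled by $L+\sigma$ \emph{and} that the cancellation $-\boldsymbol{c}_i+\boldsymbol{m}=-g_i(\theta_t)+\boldsymbol{m}$ genuinely removes the heterogeneity bias $\sigma_{g,i}$ rather than merely relocating it into the residual. A secondary subtlety is that the expansiveness lemma is invoked for the stochastic minibatch operator with the \emph{same} sampled batch on both arguments, which is legitimate precisely because the minibatch loss is itself $\mu$-strongly convex and $\beta$-smooth.
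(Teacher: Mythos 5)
Your proposal matches the paper's own proof essentially step for step: you use the same decomposition of the FedMoSWA update (with $\boldsymbol{c}_i=g_i(\theta_t)$ under Option I) into the contractive piece $\theta_{i,k}^t-\theta_t-\eta_k^t\bigl(g_i(\theta_{i,k}^t)-g_i(\theta_t)\bigr)$ handled by Lemma~\ref{L1} and the residual $\eta_k^t\boldsymbol{m}$, the same uniform bound $\mathbb{E}\|\boldsymbol{m}^t\|\leq L+\sigma$ obtained by unrolling the momentum recursion with geometric weights (which is exactly how the paper makes the heterogeneity term $\sigma_{g,i}$ disappear), and the same unrolling over local steps to collect $(1-b)^{K-1}\tilde\eta_t$. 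The argument is correct and no genuinely different route is taken.
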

\begin{proof}

 Considering local update  of FedMoSWA, $\boldsymbol{c}_i^t=g_i\left(\theta_t\right)$, $\boldsymbol{m}^t=(1-\gamma)\boldsymbol{m}^{t-1}+\gamma \frac{1}{m} \sum g_i\left(\theta_t\right)$,
	\begin{align}
		\begin{split}
			& \mathbb{E}\left\|\theta_{i, k+1}^t-\theta_t\right\| \\
			& =\mathbb{E}\left\|\theta_{i, k+1}^t-\eta_k^t\left(g_i\left(\theta_{i, k+1}^t\right)-\boldsymbol{c}_i^t+\boldsymbol{m}^t\right)-\theta_t\right\| \\
			& =\mathbb{E}\left\|\theta_{i, k+1}^t-\theta_t-\eta_k^t g_i\left(\theta_{i, k+1}^t\right)+\eta_k^t\left(g_i\left(\theta_t\right)+\boldsymbol{m}^t\right)\right\| \\
			& =\mathbb{E}\left\|\theta_{i, k+1}^t-\theta_t-\eta_k^t g_i\left(\theta_{i, k+1}^t\right)+\eta_k^t g_i\left(\theta_t\right)\right\|+\eta_k^t\mathbb{E}\left\|\boldsymbol{m}^t\right\| \\
			& \leq^a \mathbb{E}\left\|\theta_{i, k+1}^t-\theta_t-\eta_k^t\left(g_i\left(\theta_{i k+1}^t\right)-g_i\left(\theta_t\right)\right)\right\|+\eta_k^t \sum_{l=0}^t\left(1-\gamma\right)^l\left(\gamma \mathbb{E}\left\|\nabla F\left(\theta_l\right)\right\|+\sigma\right) \\
			& \leq^b\left(1-\frac{\eta_k^{t} \beta \mu}{\beta+\mu}\right) \mathbb{E}\left\|\theta_{i, k+1}^t-\theta_t\right\|+\eta_k^t\mathbb{E}\left\|\boldsymbol{m}^t\right\|\\
			& \leq\left(1-\frac{\eta_k^{t} \beta \mu}{\beta+\mu}\right) \mathbb{E}\left\|\theta_{i, k+1}^t-\theta_t\right\|+\eta_k^t \sum_{l=0}^t\left(1-\gamma\right)^l\left(\gamma \mathbb{E}\left\|\nabla F\left(\theta_l\right)\right\|+\sigma\right)\\
			& \leq^c\left(1-\frac{\eta_k^{t} \beta \mu}{\beta+\mu}\right) \mathbb{E}\left\|\theta_{i, k+1}^t-\theta_t\right\|+\eta_k^t \sum_{l=0}^t\left(1-\gamma\right)^l\left(\gamma L+\sigma\right)
		\end{split}
	\end{align}\label{eq33}
    where, $a$ is following Assumption \ref{A3}, $b$ is following Lemma \ref{L1}, c is following Assumption  \ref{A2}.
	Below we find an upper bound for $\mathbb{E}\left\|\boldsymbol{m}^t\right\|$,
	\begin{align}
		\begin{split}
			& \mathbb{E}\left\|\boldsymbol{m}^t\right\|=\mathbb{E}\left\|\left(1-\gamma\right) \boldsymbol{m}^{t-1}+\gamma \frac{1}{m} \sum_{i=1}^m g_i\left(\theta_t\right)\right\| \\
			& \leq\left(1-\gamma\right) \mathbb{E}\left\|\boldsymbol{m}^{t-1}\right\|+\gamma \mathbb{E}\left\|\frac{1}{m} \sum_{i=1}^m g_i\left(\theta_t\right)\right\| \\
			& \leq\left(1-\gamma\right) \mathbb{E}\left\|\boldsymbol{m}^{t-1}\right\|+\gamma \mathbb{E}\left\|\nabla F\left(\theta_t\right)\right\|+\gamma\sigma \\
			& \leq^a \sum_{l=0}^t\left(1-\gamma\right)^l\left(\gamma \mathbb{E}\left\|\nabla F\left(\theta_l\right)\right\|+\gamma\sigma\right)\\
			& \leq^b L+\sigma
		\end{split}
	\end{align}
	where $a$ follows Assumption \ref{A3}; $b$ follows Lemma \ref{L1}. 
	Unrolling the (\ref{eq33}) and noting $\theta_{i, 0}=\theta_t$, with $b=\frac{\eta^{t} \beta \mu}{\beta+\mu}$ for some $b>0$, $\tilde{\eta}_{ t}=\sum_{k=0}^{K-1} \eta_k^{t}$ yields
	
	\begin{align}
		\begin{split}
			& \mathbb{E}\left\|\theta_{i, K}^t-\theta_t\right\| \\
			&\leq(1-b)^{K-1} \tilde{\eta}_t\mathbb{E}\left\|\boldsymbol{m}^t\right\|\\
			& \leq \sum_{l=0}^{K-1} \eta_l^{t} \sum_{p=0}^t\left(1-\gamma\right)^p\left(\gamma \mathbb{E}\left\|\nabla F\left(\theta_p\right)\right\|+\gamma\sigma\right)(1-b)^{K-1-l}\\
			& \leq \sum_{l=0}^{K-1} \eta_l^{t} \left( L+\sigma\right)(1-b)^{K-1-l}\\
			& \leq\tilde{\eta}_t(1-b)^{K-1} \left( L+\sigma\right)
		\end{split}
	\end{align}
\end{proof}

\begin{lemma}\label{F2}
	Given Assumptions \ref{A1}-\ref{A5} , for $\eta_{k}^t \leq 1 /  \beta KT$ we have
	$$
	\mathbb{E}\left\|g_i\left(\theta_{i, k}^t\right)\right\| \leq\left(1+\beta \tilde{\eta}_{t}(1+b)^{K-1}\right)\left(L+\sigma\right)+\sigma_{g,i}
	$$

	where $g_i(\cdot)$ is the sampled gradient of client $i, \tilde{\eta}_{t}=\sum_{k=0}^{K-1} \eta_{ k}^t$.
\end{lemma}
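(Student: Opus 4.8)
The plan is to follow the proof of the FedSWA counterpart, Lemma~\ref{E2}, essentially verbatim, substituting the FedMoSWA displacement bound of Lemma~\ref{F1} in place of the FedSWA one. First I would peel off the stochastic noise via the triangle inequality,
$$
\mathbb{E}\left\|g_i(\theta_{i,k}^t)\right\| \le \mathbb{E}\left\|g_i(\theta_{i,k}^t)-\nabla F_i(\theta_{i,k}^t)\right\| + \mathbb{E}\left\|\nabla F_i(\theta_{i,k}^t)\right\|,
$$
and bound the first summand by $\sigma$ using Assumption~\ref{A3} (together with Jensen, $\mathbb{E}\|X\|\le(\mathbb{E}\|X\|^2)^{1/2}$), exactly as in step $a$ of the proof of Lemma~\ref{E2}.

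For the second summand I would insert $\nabla F_i(\theta_t)$ and then $\nabla F(\theta_t)$, invoke $\beta$-smoothness (Assumption~\ref{A4}) on the perturbation term, the data-heterogeneity bound (Assumption~\ref{A5}) on the client offset, and $\mathbb{E}\|\nabla F(\theta_t)\|\le L$ from Assumption~\ref{A2}:
$$
\mathbb{E}\left\|\nabla F_i(\theta_{i,k}^t)\right\| \le \mathbb{E}\left\|\nabla F(\theta_t)\right\| + \sigma_{g,i} + \beta\,\mathbb{E}\left\|\theta_{i,k}^t-\theta_t\right\| \le L + \sigma_{g,i} + \beta\,\mathbb{E}\left\|\theta_{i,k}^t-\theta_t\right\|.
$$

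The only genuinely FedMoSWA-specific ingredient is controlling $\mathbb{E}\|\theta_{i,k}^t-\theta_t\|$. Lemma~\ref{F1} records this for $k=K$, but its proof proceeds by a recursion in $k$ that decomposes the local step $g_i(\theta_{i,k}^t)-\boldsymbol{c}_i^t+\boldsymbol{m}^t$ into the contractive piece $g_i(\theta_{i,k}^t)-g_i(\theta_t)$ (using $\boldsymbol{c}_i^t=g_i(\theta_t)$) plus the momentum term $\boldsymbol{m}^t$, which is itself bounded by the telescoping geometric series $\sum_{l\ge0}(1-\gamma)^l(\gamma L+\gamma\sigma)=L+\sigma$. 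Stopping that recursion at an arbitrary $k\le K$ yields $\mathbb{E}\|\theta_{i,k}^t-\theta_t\|\le\tilde{\eta}_t(1-b)^{k-1}(L+\sigma)\le\tilde{\eta}_t(1+b)^{K-1}(L+\sigma)$, the last step being the deliberately loose $(1-b)^{k-1}\le1\le(1+b)^{K-1}$. Plugging this in and collecting terms,
$$
\mathbb{E}\left\|g_i(\theta_{i,k}^t)\right\| \le \sigma + L + \sigma_{g,i} + \beta\tilde{\eta}_t(1+b)^{K-1}(L+\sigma) = \left(1+\beta\tilde{\eta}_t(1+b)^{K-1}\right)(L+\sigma) + \sigma_{g,i},
$$
which is the claim.

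The main obstacle is purely organizational: one must ensure the momentum estimate $\mathbb{E}\|\boldsymbol{m}^t\|\le L+\sigma$ holds uniformly in $t$, and that the constant $b=\eta^t\beta\mu/(\beta+\mu)$ used here matches the one in Lemma~\ref{F1}, so that the displacement bound can be imported without modification. No new analytic idea is required beyond what already appears in Lemmas~\ref{E2} and~\ref{F1}.
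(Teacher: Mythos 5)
Your proposal matches the paper's own proof essentially step for step: the same triangle-inequality split with Assumption 3 for the noise, the same insertion of $\nabla F_i(\theta_t)$ and $\nabla F(\theta_t)$ with smoothness and heterogeneity, the same import of the FedMoSWA displacement recursion from Lemma \ref{F1} (including the momentum bound $\mathbb{E}\|\boldsymbol{m}^t\|\le L+\sigma$ and the loosening of $(1-b)^{k-1}$ to $(1+b)^{K-1}$), and the same final collection of terms using $\mathbb{E}\|\nabla F(\theta_t)\|\le L$. Your observation that Lemma \ref{F1} is stated only at $k=K$ but its recursion yields the bound at every intermediate $k$ is exactly what the paper's step $c$ relies on, so no gap remains.
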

\begin{proof} 
    we obtain
	\begin{align}
		\begin{split}
			\mathbb{E}\left\|g_i\left(\theta_{i, k}^t\right)\right\| & \leq \mathbb{E}\left\|g_i\left(\theta_{i, k}^t\right)-\nabla F_i\left(\theta_{i, k}^t\right)\right\|+\mathbb{E}\left\|\nabla F_i\left(\theta_{i, k}^t\right)\right\| \\
			& \leq^a \mathbb{E}\left\|\nabla F_i\left(\theta_{i, k}^t\right)\right\|+\sigma \\
			& \leq \mathbb{E}\left\|\nabla F_i\left(\theta_t\right)\right\|+\mathbb{E}\left\|\nabla F_i\left(\theta_{i, k}^t\right)-\nabla F_i\left(\theta_t\right)\right\|+\sigma \\
			& \leq^b \mathbb{E}\left\|\nabla F\left(\theta_t\right)\right\|+\mathbb{E}\left\|\nabla F_i\left(\theta_t\right)-\nabla F\left(\theta_t\right)\right\|+\beta \mathbb{E}\left\|\theta_{i, k}^t-\theta_t\right\|+\sigma \\
			& \leq^c\left(\mathbb{E}\left\|\nabla F\left(\theta_t\right)\right\|+ \sigma_{g, i}+\sigma\right)+\sum_{l=0}^{K-1} \eta_l^{t} \sum_{p=0}^l\left(1-\gamma\right)^p\left(\gamma \mathbb{E}\left\|\nabla F\left(\theta_p\right)\right\|+\sigma\right)(1-b)^{K-1-l}\\
			&\leq^d \left(1+\beta \tilde{\eta}_{t}(1+b)^{K-1}\right)\left(L+\sigma\right)+\sigma_{g,i}
		\end{split}
	\end{align}
$a$ is following Assumption \ref{A3}, $b$ is is following Assumption \ref{A4},
$c$ is is following Assumption \ref{A5} and Lemma \ref{L1} and Lemma \ref{F1}, $d$ is following $\left\|\nabla F\left(\theta_t\right)\right\|\leq L$ .
Similarly, using same techniques we have
	\begin{align}
		\begin{split}
			\mathbb{E}\left\|g_i\left(\theta_t\right)\right\| & \leq \mathbb{E}\left\|\nabla F_i\left(\theta_t\right)\right\|+\sigma \\
			& \leq \mathbb{E}\left\|\nabla F_i\left(\theta_t\right)-\nabla F\left(\theta_t\right)\right\|+\mathbb{E}\left\|\nabla F\left(\theta_t\right)\right\|+\sigma \\
			& \leq \sigma_{g,i}+\mathbb{E}\left\|\nabla F\left(\theta_t\right)\right\|+\sigma .
		\end{split}
	\end{align}
\end{proof}

\begin{theorem}
	Suppose Assumptions \ref{A1}-\ref{A5} hold and consider FedMoSWA (Algorithm 1). Let $\left\{\theta_t\right\}_{t=0}^T$ and $\left\{\theta_t^{\prime}\right\}_{t=0}^T$ be two trajectories of the server induced by neighboring datasets $\mathcal{S}$ and $\mathcal{S}^{(i)}$, respectively. Suppose $\theta_0=\theta_0^{\prime}$. Then,
	$$
	\mathbb{E}\left\|\theta_T-\theta_T^{\prime}\right\| \leq \frac{2}{m n} \frac{1}{\beta} e^{1-\frac{\mu}{(\beta+\mu)}}(\tilde{b} L+ \sigma_g+\tilde{b} \sigma),
	$$
	$$
	\varepsilon_{\text {gen }} \leq \frac{2 L }{m n} \frac{1}{\beta} e^{1-\frac{\mu}{(\beta+\mu) T}}\left(\tilde{b}L+\sigma_g+\tilde{b}\sigma\right)
	$$
	where $\tilde{\eta}_{t}=\sum_{k=0}^{K-1} \eta^t_{ k}$. $\tilde{b}$ be an upper bound of $1+(1-b)^{K-1} \beta\tilde{\eta}_{t}$.
\end{theorem}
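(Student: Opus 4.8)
The plan is to run the same uniform-stability coupling argument used for \texttt{FedSWA} in the strongly convex case, but through the \texttt{FedMoSWA} local step $\theta_{i,k+1}^{t}=\theta_{i,k}^{t}-\eta_k^{t}\bigl(g_i(\theta_{i,k}^{t})-\boldsymbol{c}_i^{t}+\boldsymbol{m}^{t}\bigr)$, feeding in the two auxiliary bounds already proved in Lemmas~\ref{F1} and~\ref{F2}. Fix neighboring datasets $\mathcal{S}$ and $\mathcal{S}^{(i)}$ that differ in a single sample of client $i$, couple the two executions so they draw identical minibatch indices throughout, and track $\delta_t:=\mathbb{E}\|\theta_t-\theta_t'\|$. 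At each local iteration there are two cases, exactly as in the \texttt{FedSWA} proof: with probability $1-1/n$ the drawn index misses the perturbed sample, and the map $\theta\mapsto\theta-\eta_k^{t}\bigl(g_i(\theta)-g_i(\theta_t)\bigr)$ — obtained after noting that the control correction $-\boldsymbol{c}_i^{t}+\boldsymbol{m}^{t}$ acts as the same affine offset on both trajectories up to lower-order terms (see the obstacle below) — is $\bigl(1-\eta_k^{t}\beta\mu/(\beta+\mu)\bigr)$-expansive by Lemma~\ref{L1}; with probability $1/n$ the perturbed sample is hit and a triangle inequality introduces an extra $\eta_k^{t}\|g_i(\theta_{i,k}^{t})-g_i'(\theta_{i,k}^{t,\prime})\|$, bounded in expectation by $2\eta_k^{t}\mathbb{E}\|g_i(\theta_{i,k}^{t})\|$ since $g_i$ and $g_i'$ share a distribution. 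Combining the cases gives the one-step recursion $\mathbb{E}\|\theta_{i,k+1}^{t}-\theta_{i,k+1}^{t,\prime}\|\le\bigl(1-\tfrac{\eta_k^{t}\beta\mu}{\beta+\mu}\bigr)\mathbb{E}\|\theta_{i,k}^{t}-\theta_{i,k}^{t,\prime}\|+\tfrac{2\eta_k^{t}}{n}\mathbb{E}\|g_i(\theta_{i,k}^{t})\|$.

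I would then unroll this over the $K$ local steps, using $\prod_k(1-x_k)\le e^{-\sum_k x_k}$ and $1-x\le e^{-x}$, and substitute the gradient bound from Lemma~\ref{F2}, $\mathbb{E}\|g_i(\theta_{i,k}^{t})\|\le\bigl(1+\beta\tilde\eta_t(1+b)^{K-1}\bigr)(L+\sigma)+\sigma_{g,i}=:\tilde b\,(L+\sigma)+\sigma_{g,i}$. This is the decisive point: the heterogeneity contribution $\sigma_{g,i}$ enters \emph{unamplified} by $\tilde b$, because the $-\boldsymbol{c}_i^{t}+\boldsymbol{m}^{t}$ correction cancels the persistent client-drift term that in \texttt{FedSWA} compounds geometrically over local steps (where it appeared as $\tilde b\,\sigma_g$). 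Averaging over clients via $\theta_{t+1}=\frac1m\sum_i\theta_{i,K}^{t}$, writing $\sigma_g=\frac1m\sum_i\sigma_{g,i}$ by Assumption~\ref{A5}, and unrolling the resulting server-level recursion over the $T$ rounds with $\theta_0=\theta_0'$ produces $\mathbb{E}\|\theta_T-\theta_T'\|\le\frac{2}{mn\beta}\,e^{1-\mu/((\beta+\mu)T)}\bigl(\tilde b L+\sigma_g+\tilde b\sigma\bigr)$. Finally, Assumption~\ref{A2} gives $\mathbb{E}|\ell(\theta_T;z)-\ell(\theta_T';z)|\le L\,\mathbb{E}\|\theta_T-\theta_T'\|$, and the uniform-stability lemma of \citet{hardt2016train} converts this into the claimed bound on $\varepsilon_{\mathrm{gen}}$; the closed form $\tilde b=1+\bigl(\mu/((\beta+\mu)K)\bigr)^{K-1}/T$ drops out of the step-size choice $\eta_k^{t}\le 1/(\beta KT)$.

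The main obstacle is the rigorous handling of the control variables under perturbation. With Option~I, $\boldsymbol{c}_i^{t}=g_i(\theta_t)$ and $\boldsymbol{m}^{t}$ are themselves data-dependent, so in general $\boldsymbol{c}_i^{t}\ne\boldsymbol{c}_i^{t,\prime}$ and $\boldsymbol{m}^{t}\ne\boldsymbol{m}^{t,\prime}$, and these differences re-enter the recursion. They split into a smoothness-controlled part $\|g_i(\theta_t)-g_i(\theta_t')\|\le\beta\|\theta_t-\theta_t'\|$, which is already subsumed by the geometric contraction and the server-level recursion on $\delta_t$, and an $O(L/n)$ part coming from the lone perturbed sample inside the full-batch gradient defining $\boldsymbol{c}_i$ (and, diluted, inside $\boldsymbol{m}$), which merges with the $\tfrac{2\eta_k^{t}}{n}\mathbb{E}\|g_i\|$ bookkeeping. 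Making this precise — in particular tracking $\mathbb{E}\|\boldsymbol{m}^t-\boldsymbol{m}^{t,\prime}\|$ alongside $\delta_t$ and checking it contributes at most the already-present $\sigma$- and $L$-terms — is the technical crux, and it is exactly here that one verifies the momentum-based variance reduction does not smuggle an extra factor onto $\sigma_g$.
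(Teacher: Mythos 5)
Your proposal is essentially the paper's own proof: the same per-step coupling with the two cases (contraction via Lemma \ref{L1} with probability $1-1/n$, an extra $\tfrac{2\eta_k^t}{n}\mathbb{E}\|g_i(\theta_{i,k}^t)\|$ term with probability $1/n$), the same substitution of the Lemma \ref{F2} gradient bound in which $\sigma_{g,i}$ enters unmultiplied by $\tilde b$, the same unrolling over $k$ and then $t$, and the same conversion to $\varepsilon_{\mathrm{gen}}$ via Assumption \ref{A2} and uniform stability. The ``obstacle'' you flag about $\boldsymbol{c}_i^t,\boldsymbol{m}^t$ differing under the perturbed dataset is not resolved more carefully in the paper either: its proof simply treats the control variates as identical on both coupled trajectories so that they cancel in the difference, which is exactly the simplification your sketch makes before your cautionary remark.
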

\begin{proof}
For client $i$, there are two cases to consider. In the first case, SGD selects the index of an sample at local step $k$ on which is identical in $\mathcal{S}$ and $\mathcal{S}^{(i)}$. In this sense, we have,
\begin{align}
	\begin{split}
		& \left\|\theta_{j, k+1}^t-\theta_{j, k+1}^{t, \prime}\right\| \\
		& \leq\left\|\theta_{j, k+1}^t-\theta_{j, k+1}^{t, \prime}-\eta_k^t\left(g_j\left(\theta_{j, k+1}^t\right)-g_j\left(\theta_{j, k}^{t, \prime}\right)-\boldsymbol{c}_i^t+\boldsymbol{m}^t-\boldsymbol{c}_i^{t}+\boldsymbol{m}^{t}\right)\right\| \\
		& \leq^a\left(1-\frac{\eta^t_{ k} \beta \mu}{\beta+\mu}\right)\left\|\theta_{i, k}^t-\theta_{i, k}^{t,\prime}\right\|
	\end{split}
\end{align}
where $a$ is following Lemma \ref{L1}.
And this case happens with probability $1-1 / n$ (since only one sample is perturbed for client $i$ ).

In the second case, SGD encounters the perturbed sample at local time step $k$, which happens with probability $1 / n$. We denote the gradient of this perturbed sample as $g_i^{\prime}(\cdot)$. Then,
\begin{align}
	\begin{split}
		\left\|\theta_{i, k+1}^t-\theta_{i, k+1}^{t,\prime}\right\|
		&\leq\left\|\theta_{j, k+1}^t-\theta_{j, k+1}^{t, \prime}-\eta_k^t\left(g_j\left(\theta_{j, k+1}^t\right)-g_j^{\prime}\left(\theta_{j, k}^{t, \prime}\right)-\boldsymbol{c}_i^t+\boldsymbol{m}^t-\boldsymbol{c}_i^{t}+\boldsymbol{m}^{t}\right)\right\| \\
		& =\left\|\theta_{i, k}^t-\theta_{i, k}^{t,\prime}-\eta^t_{ k}\left(g_i\left(\theta_{i, k}^t\right)-g_i^{\prime}\left(\theta_{i, k}^{t,\prime}\right)\right)\right\| \\
		& \leq\left\|\theta_{i, k}^t-\theta_{i, k}^{t,\prime}-\eta^t_{ k}\left(g_i\left(\theta_{i, k}^t\right)-g_i\left(\theta_{i, k}^{t,\prime}\right)\right)\right\|+\eta^t_{ k}\left\|g_i\left(\theta_{i, k}^{t,\prime}\right)-g_i^{\prime}\left(\theta_{i, k}^{t,\prime}\right)\right\| \\
		& \leq\left(1-\frac{\eta^t_{ k} \beta \mu}{\beta+\mu}\right)\left\|\theta_{i, k}^t-\theta_{i, k}^{t,\prime}\right\|+\eta^t_{ k}\left\|g_i\left(\theta_{i, k}^{t,\prime}\right)-g_i^{\prime}\left(\theta_{i, k}^{t,\prime}\right)\right\| .
	\end{split}
\end{align}
Combining these two cases we have for client $i$,
\begin{align}
	\begin{split}
		& \mathbb{E}\left\|\theta_{i, k+1}^t-\theta_{i, k+1}^{t, \prime}\right\| \leq\left(1-\frac{\eta^t_{ k} \beta \mu}{\beta+\mu}\right) \mathbb{E}\left\|\theta_{i, k}^t-\theta_{i, k}^{t, \prime}\right\|+\frac{\eta_k^t}{n} \mathbb{E}\left\|g_i\left(\theta_{i, k}^{t, \prime}\right)-g_i^{\prime}\left(\theta_{i, k}^{t, \prime}\right)\right\| \\
		& \leq^a\left(1-\frac{\eta^t_{ k} \beta \mu}{\beta+\mu}\right) \mathbb{E}\left\|\theta_{i, k}^t-\theta_{i, k}^{t, \prime}\right\|+\frac{2\eta_k^t}{n} \mathbb{E}\left\|g_i\left(\theta_{i, k}^t\right)\right\| \\
		& \leq^b\left(1-\frac{\eta^t_{ k} \beta \mu}{\beta+\mu}\right) \mathbb{E}\left\|\theta_{i, k}^t-\theta_{i, k}^{t, \prime}\right\|+\frac{2\eta_k^t}{n}\left(1+\beta \tilde{\eta}_{t}(1-b)^{K-1}\right)\left(L+\sigma\right)+\frac{2\eta_k^t}{n} \sigma_{g},\\
	\end{split}
\end{align}
where the $a$ inequality follows that $g_i(\cdot)$ and $g_i^{\prime}(\cdot)$ are sampled from the same distribution, $b$ is following Lemma \ref{F2}. We let $1+(1-b)^{K-1} \beta\tilde{\eta}_{t}\leq \tilde{b}$, $\tilde{\eta}_{t}=\sum_{k=0}^{K-1} \eta^t_{ k}$. Then unrolling it we have,
\begin{align}
	\begin{split}
		& \mathbb{E}\left\|\theta_{i, K}^t-\theta_{i, K}^{t, \prime}\right\| \leq \prod_{k=0}^{K-1}\left(1-\frac{\eta^t_{ k} \beta \mu}{\beta+\mu}\right) \mathbb{E}\left\|\theta_t-\theta_t^{\prime}\right\|+\left(\frac{2}{n} \sum_{k=0}^{K-1} \eta_k^t \prod_{l=k+1}^{K-1}\left(1-\frac{\eta^t_{ k} \beta \mu}{\beta+\mu}\right)\right. \\
		& \left.\cdot\left(\left(\mathbb{E}\left\|\nabla F\left(\theta_t\right)\right\|+ \sigma_{g, i}+\sigma\right)+(1-b)^{K-1} \tilde{\eta}_t\mathbb{E}\left\|\boldsymbol{m}^t\right\|\right)\right) \\
		& \leq e^{-\frac{\tilde{\eta}_t \beta \mu}{\beta+\mu} } \mathbb{E}\left\|\theta_t-\theta_t^{\prime}\right\|+\frac{2}{n}  \tilde{\eta}_t e^{-\frac{\tilde{\eta}_t \beta \mu}{\beta+\mu} }\left(\left(\mathbb{E}\left\|\nabla F\left(\theta_t\right)\right\|+ \sigma_{g, i}+\sigma\right)+(1-b)^{K-1} \tilde{\eta}_t\mathbb{E}\left\|\boldsymbol{m}^t\right\|\right) .
	\end{split}
\end{align}
Then, with $\theta_{t+1}= \frac{1}{m}\sum_{i=1}^m \theta_{i, K}^t$ and $\theta_{t+1}^{\prime}= \frac{1}{m}\sum_{i=1}^m \theta_{i, K}^{t,\prime}$, we have
\begin{align}
	\begin{split}
		& \mathbb{E}\left\|\theta_{t+1}-\theta_{t+1}^{\prime}\right\| \leq \sum_{i=1}^m \frac{1}{m} \mathbb{E}\left\|\theta_{i, K}^t-\theta_{i, K}^{t, \prime}\right\| \\
		& \leq e^{-\frac{\tilde{\eta}_t \beta \mu}{\beta+\mu} } \mathbb{E}\left\|\theta_t-\theta_t^{\prime}\right\|+\frac{2}{m n} \tilde{\eta}_t e^{-\frac{\tilde{\eta}_t \beta \mu}{\beta+\mu} }\left(\left(\mathbb{E}\left\|\nabla F\left(\theta_t\right)\right\|+\sigma_{g, i}+\sigma\right)+(1-b)^{K-1} \tilde{\eta}_t\mathbb{E}\left\|\boldsymbol{m}^t\right\|\right)
	\end{split}
\end{align}
where we use Lemma \ref{F2} in the last step. 
Further, unrolling the above over $t$ and noting $\theta_0=\theta_0^{\prime}$, $b=\frac{ \eta^t \beta \mu}{\beta+\mu}=\frac{ \eta_k^t \beta \mu}{\beta+\mu}$  we obtain,
\begin{align}
	\begin{split}
		& \mathbb{E}\left\|\theta_T-\theta_T^{\prime}\right\| \\
		& \leq \frac{2}{m n} \sum_{t=0}^{T-1} \exp \left(-\frac{ \beta \mu}{\beta+\mu} \sum_{l=t+1}^{T-1} \tilde{\eta}_t\right) \tilde{\eta}_t e^{-\frac{ \tilde{\eta}_t \beta \mu}{\beta+\mu}}\left(\left(\mathbb{E}\left\|\nabla F\left(\theta_t\right)\right\|+\sigma_g+\sigma\right)+(1-b)^{K-1}  \tilde{\eta}_t \mathbb{E}\left\|\boldsymbol{m}^t\right\|\right) \\
		& \leq \frac{2}{m n} \sum_{t=0}^{T-1} \exp \left(-\frac{ \beta \mu}{\beta+\mu} \sum_{l=t+1}^{T-1} \tilde{\eta}_t\right)  \tilde{\eta}_t e^{-\frac{ \tilde{\eta}_t\beta \mu}{\beta+\mu}}\left(\left(1+\beta \tilde{\eta}_t(1-b)^{K-1}\right)(L+\sigma)+\sigma_g\right) \\
		&
	\end{split}
\end{align}
When the diminishing stepsizes are chosen in the statement of the theorem, we conclude the proof.
\end{proof}
\subsection{ Generalization Analysis for FedMoSWA under non-convex setting }	

\begin{lemma}\label{F4}
	Suppose Assumptions \ref{A1}-\ref{A5} hold. Then for FedMoSWA with $\eta_{k}^t \leq 1/ \beta KT$,
	$$
	\mathbb{E}\left\|\theta_{i, k}^t-\theta_t\right\|\leq \tilde{\eta}_t(1+c)^{k-1}(L+\sigma)
	$$
	where $\tilde{\eta}_{ t}=\sum_{k=0}^{K-1} \eta_{k}^t$.
\end{lemma}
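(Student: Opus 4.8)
The plan is to mirror the strongly convex bound of Lemma~\ref{F1}, replacing the contractive factor $1-\frac{\eta_k^t\beta\mu}{\beta+\mu}$ by the expansive factor $1+\eta_k^t\beta$ supplied by part~1 of Lemma~\ref{L1}, exactly as the non-convex \texttt{FedSWA} drift lemma (Lemma~\ref{E4}) does relative to its strongly convex counterpart. Concretely, I start from the \texttt{FedMoSWA} local update $\theta_{i,k+1}^t=\theta_{i,k}^t-\eta_k^t\bigl(g_i(\theta_{i,k}^t)-\boldsymbol{c}_i^t+\boldsymbol{m}^t\bigr)$ with Option~I, so that $\boldsymbol{c}_i^t=g_i(\theta_t)$ and hence $g_i(\theta_{i,k}^t)-\boldsymbol{c}_i^t=g_i(\theta_{i,k}^t)-g_i(\theta_t)$. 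Subtracting $\theta_t$ and applying the triangle inequality gives
\[
\mathbb{E}\|\theta_{i,k+1}^t-\theta_t\|\le \mathbb{E}\bigl\|\theta_{i,k}^t-\theta_t-\eta_k^t(g_i(\theta_{i,k}^t)-g_i(\theta_t))\bigr\|+\eta_k^t\,\mathbb{E}\|\boldsymbol{m}^t\|,
\]
and then Lemma~\ref{L1}(1), applied to the $\beta$-smooth stochastic gradient map $G_{g_i,\eta_k^t}$, bounds the first term by $(1+\eta_k^t\beta)\,\mathbb{E}\|\theta_{i,k}^t-\theta_t\|$.

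Next I control the momentum norm. As in Lemma~\ref{F1}, unrolling $\boldsymbol{m}^t=(1-\gamma)\boldsymbol{m}^{t-1}+\gamma\frac{1}{m}\sum_i g_i(\theta_t)$ and using Assumption~\ref{A3} yields $\mathbb{E}\|\boldsymbol{m}^t\|\le\sum_{l=0}^t(1-\gamma)^l\bigl(\gamma\,\mathbb{E}\|\nabla F(\theta_l)\|+\gamma\sigma\bigr)$; then $\mathbb{E}\|\nabla F(\theta_l)\|\le L$ from the extended form of Assumption~\ref{A2}, together with $\sum_{l\ge0}\gamma(1-\gamma)^l=1$, gives $\mathbb{E}\|\boldsymbol{m}^t\|\le L+\sigma$. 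Substituting this back, the per-step recursion reads
\[
\mathbb{E}\|\theta_{i,k+1}^t-\theta_t\|\le(1+\eta_k^t\beta)\,\mathbb{E}\|\theta_{i,k}^t-\theta_t\|+\eta_k^t(L+\sigma).
\]

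Finally I unroll this linear recursion with the initial condition $\theta_{i,0}^t=\theta_t$, exactly as in Lemma~\ref{E4}: writing $1+\eta_k^t\beta\le 1+c$ with $c$ chosen as in the non-convex \texttt{FedSWA} analysis, one obtains $\mathbb{E}\|\theta_{i,k}^t-\theta_t\|\le\sum_{l=0}^{k-1}\eta_l^t(L+\sigma)(1+c)^{k-1-l}\le(1+c)^{k-1}\tilde\eta_t(L+\sigma)$ since $\sum_{l=0}^{k-1}\eta_l^t\le\tilde\eta_t$, which is the claimed bound. Note that the data-heterogeneity term $\sigma_{g,i}$ appearing in the analogous \texttt{FedSWA} lemma is absent here, precisely because the control variate $\boldsymbol{c}_i^t$ cancels the local gradient discrepancy and $\|\boldsymbol{m}^t\|$ is bounded by $L+\sigma$ independently of $\sigma_g$.

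The step I expect to require the most care is the momentum estimate: unlike in \texttt{FedSWA}, $\boldsymbol{m}^t$ couples the current round to all previous rounds, so one must argue that the geometric weights $(1-\gamma)^l$ summed against the uniform-in-$l$ bound $\mathbb{E}\|\nabla F(\theta_l)\|\le L$ collapse to a clean $L+\sigma$; this is exactly where the extended form of Assumption~\ref{A2} is indispensable. The expansiveness step is routine given Lemma~\ref{L1}, and the final unrolling is the same bookkeeping as in Lemma~\ref{E4}.
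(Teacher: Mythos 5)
Your proof is correct and follows essentially the same route as the paper: the Option I cancellation $g_i(\theta_{i,k}^t)-\boldsymbol{c}_i^t=g_i(\theta_{i,k}^t)-g_i(\theta_t)$, the $(1+\eta_k^t\beta)$-expansiveness from Lemma~\ref{L1}, the geometric momentum bound $\mathbb{E}\|\boldsymbol{m}^t\|\le L+\sigma$ via Assumptions~\ref{A2}--\ref{A3}, and the same unrolling to $(1+c)^{k-1}\tilde\eta_t(L+\sigma)$. Your write-up is in fact cleaner about the indexing and the collapse of the geometric weights than the paper's own derivation, but it is not a different argument.
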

\begin{proof}
Considering local update  of FedMoSWA, $\boldsymbol{c}_i^t=g_i\left(\theta_t\right)$, $\boldsymbol{m}^t=(1-\gamma)\boldsymbol{m}^{t-1}+\gamma \frac{1}{m} \sum g_i\left(\theta_t\right)$,
\begin{align}
	\begin{split}
		& \mathbb{E}\left\|\theta_{i, k}^t-\theta_t\right\| \\
		& =\mathbb{E}\left\|\theta_{i, k}^t-\eta_k^t\left(g_i\left(\theta_{i, k}^t\right)-\boldsymbol{c}_i^t+\boldsymbol{m}^t\right)-\theta_t\right\| \\
		& =\mathbb{E}\left\|\theta_{i, k}^t-\theta_t-\eta_k^t g_i\left(\theta_{i, k}^t\right)+\eta_k^t\left(g_i\left(\theta_t\right)+\boldsymbol{m}^t\right)\right\| \\
		& =\mathbb{E}\left\|\theta_{i, k}^t-\theta_t-\eta_k^t g_i\left(\theta_{i, k}^t\right)+\eta_k^t g_i\left(\theta_t\right)\right\|+\eta_k^t\mathbb{E}\left\|\boldsymbol{m}^t\right\| \\
		& \leq^a \mathbb{E}\left\|\theta_{i, k}^t-\theta_t-\eta_k^t\left(g_i\left(\theta_{i, k}^t\right)-g_i\left(\theta_t\right)\right)\right\|+\eta_k^t \sum_{l=0}^t\left(1-\gamma\right)^l\left(\gamma \mathbb{E}\left\|\nabla F\left(\theta_l\right)\right\|+\gamma\sigma\right) \\
		& \leq^b\left(1+\beta \eta_k^t\right) \mathbb{E}\left\|\theta_{i, k+1}^t-\theta_t\right\|+\eta_k^t \sum_{l=0}^t\left(1-\gamma\right)^l\left(\gamma \mathbb{E}\left\|\nabla F\left(\theta_l\right)\right\|+\gamma\sigma\right)
	\end{split}
\end{align}
    where, $a$ is following Assumption \ref{A3}, $b$ is following Lemma \ref{L1}.
then we can get,
\begin{align}
	\begin{split}
		& \mathbb{E}\left\|\theta_{i, k+1}^t-\theta_t\right\| \leq\left(1+ \beta \eta_k^t\right) \mathbb{E}\left\|\theta_{i, k}^t-\theta_t\right\|+ \eta_k^t \sum_{l=0}^t\left(1-\gamma\right)^{l}\left(\gamma \mathbb{E}\left\|\nabla F\left(\theta_l\right)\right\|+\gamma \sigma\right) \\
		& \leq\left(1+\beta \eta_k^t\right) \mathbb{E}\left\|\theta_{i, k}^t-\theta_t\right\|+ \eta_k^t(L+\sigma) \\
		& \leq^c \sum_{k=0}^K \prod_{l=k}^K\left(1+ \beta \eta_l^t\right)  \eta_l^t(L+\sigma) \\
		& \leq  \tilde{\eta}_t(1+c)^{k-1}(L+\sigma)
	\end{split}
\end{align}
, $c$ is following Assumption  \ref{A2}.
Below we find an upper bound for $\mathbb{E}\left\|\boldsymbol{m}^t\right\|$,
\begin{align}
	\begin{split}
		& \mathbb{E}\left\|\boldsymbol{m}^t\right\|=\mathbb{E}\left\|\left(1-\gamma\right) \boldsymbol{m}^{t-1}+\gamma \frac{1}{m} \sum_{i=1}^m g_i\left(\theta_t\right)\right\| \\
		& \leq\left(1-\gamma\right) \mathbb{E}\left\|\boldsymbol{m}^{t-1}\right\|+\gamma \mathbb{E}\left\|\frac{1}{m} \sum_{i=1}^m g_i\left(\theta_t\right)\right\| \\
		& \leq\left(1-\gamma\right) \mathbb{E}\left\|\boldsymbol{m}^{t-1}\right\|+\gamma \mathbb{E}\left\|\nabla F\left(\theta_t\right)\right\|+\gamma\sigma \\
		& \leq \sum_{l=0}^t\left(1-\gamma\right)^l\left(\gamma \mathbb{E}\left\|\nabla F\left(\theta_l\right)\right\|+\gamma\sigma\right)\\
		& \leq L+\sigma
	\end{split}
\end{align}
where we use Assumptions \ref{A5}.With $\left(1+\beta \eta_{k}^t\right) \leq c$, unrolling the above and noting $\theta_{i, 0}=\theta_t$ yields
\begin{align}
	\begin{split}
		& \mathbb{E}\left\|\theta_{i, K}^t-\theta_t\right\| \\
		&\leq(1+c)^{K-1} \tilde{\eta}_t\mathbb{E}\left\|\boldsymbol{m}^t\right\|\\
		& \leq \tilde{\eta}_t(1+c)^{K-1}\left(L+\sigma\right)
	\end{split}
\end{align}
\end{proof}
\begin{lemma}\label{F5}
	Given Assumptions \ref{A2}-\ref{A5} and considering of FedMoSWA, for $\eta_{k}^t \leq 1 / \beta KT$, we have
	$$
	\mathbb{E}\left\|g_i\left(\theta_{i, k}^t\right)\right\| \leq \left(1+\beta \tilde{\eta_t}(1+c)^{K-1}\right)\left(L+\sigma\right)+ \sigma_{g, i}
	$$
	where $g_i(\cdot)$ is the sampled gradient of client $i, \tilde{\eta}_{ t}=\sum_{k=0}^{K-1} \eta_{k}^t$.
\end{lemma}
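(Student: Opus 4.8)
\textbf{Proof proposal for Lemma~\ref{F5}.}
The plan is to reuse the argument of Lemma~\ref{F2} almost verbatim, with the only change being that the strongly-convex expansivity factor $(1-b)$ is replaced by the non-convex factor $(1+c)$ coming from Part~1 of Lemma~\ref{L1}, and the drift bound is supplied by Lemma~\ref{F4} instead of Lemma~\ref{F1}. Concretely, I would first split the stochastic gradient via the triangle inequality and Assumption~\ref{A3} (applying Jensen to pass from the second-moment bound to the first-moment bound):
\[
\mathbb{E}\left\|g_i\left(\theta_{i, k}^t\right)\right\| \le \mathbb{E}\left\|g_i\left(\theta_{i, k}^t\right)-\nabla F_i\left(\theta_{i, k}^t\right)\right\| + \mathbb{E}\left\|\nabla F_i\left(\theta_{i, k}^t\right)\right\| \le \mathbb{E}\left\|\nabla F_i\left(\theta_{i, k}^t\right)\right\| + \sigma .
\]

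Next I would insert $\nabla F_i(\theta_t)$ and then $\nabla F(\theta_t)$ by the triangle inequality, use $\beta$-smoothness (Assumption~\ref{A4}) on the term $\|\nabla F_i(\theta_{i,k}^t)-\nabla F_i(\theta_t)\|$, and invoke the data-heterogeneity bound (Assumption~\ref{A5}) to control $\|\nabla F_i(\theta_t)-\nabla F(\theta_t)\| \le \sigma_{g,i}$, yielding
\[
\mathbb{E}\left\|\nabla F_i\left(\theta_{i, k}^t\right)\right\| \le \mathbb{E}\left\|\nabla F\left(\theta_t\right)\right\| + \sigma_{g,i} + \beta \, \mathbb{E}\left\|\theta_{i, k}^t-\theta_t\right\| .
\]
Then I would apply Lemma~\ref{F4} to the drift term, using $\mathbb{E}\|\theta_{i,k}^t-\theta_t\| \le \tilde\eta_t (1+c)^{k-1}(L+\sigma) \le \tilde\eta_t (1+c)^{K-1}(L+\sigma)$ since $k \le K$, and finally bound $\mathbb{E}\|\nabla F(\theta_t)\| \le L$ via Assumption~\ref{A2}. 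Collecting terms gives
\[
\mathbb{E}\left\|g_i\left(\theta_{i, k}^t\right)\right\| \le L + \sigma_{g,i} + \beta \tilde\eta_t (1+c)^{K-1}(L+\sigma) + \sigma = \left(1 + \beta \tilde\eta_t (1+c)^{K-1}\right)(L+\sigma) + \sigma_{g,i},
\]
which is the claimed bound.

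I do not anticipate a genuine obstacle here: this is a routine chaining of the triangle inequality with Assumptions~\ref{A2}--\ref{A5} and the previously established drift estimate of Lemma~\ref{F4}. The only point requiring a little care is the index bookkeeping---ensuring the monotonicity $(1+c)^{k-1} \le (1+c)^{K-1}$ is used with the correct exponent and that the step-size sum $\tilde\eta_t = \sum_{k=0}^{K-1}\eta_k^t$ is the same quantity appearing in Lemma~\ref{F4}---so that the constant $c$ with $(1+\beta\eta_k^t) \le c$ is exactly the one fixed earlier for the non-convex analysis.
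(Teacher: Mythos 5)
Your proposal is correct and follows essentially the same route as the paper's proof: triangle inequality with Assumption~\ref{A3}, insertion of $\nabla F_i(\theta_t)$ and $\nabla F(\theta_t)$ with smoothness (Assumption~\ref{A4}) and heterogeneity (Assumption~\ref{A5}), the drift bound from Lemma~\ref{F4}, and finally $\mathbb{E}\|\nabla F(\theta_t)\|\le L$ to collect the terms into $\left(1+\beta\tilde\eta_t(1+c)^{K-1}\right)(L+\sigma)+\sigma_{g,i}$. The only cosmetic difference is that the paper writes out the intermediate momentum-weighted sum before simplifying, whereas you substitute the simplified drift bound directly; the conclusion is identical.
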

\begin{proof}
\begin{align}
	\begin{split}
		\mathbb{E}\left\|g_i\left(\theta_{i, k}^t\right)\right\| & \leq \mathbb{E}\left\|g_i\left(\theta_{i, k}^t\right)-\nabla F_i\left(\theta_{i, k}^t\right)\right\|+\mathbb{E}\left\|\nabla F_i\left(\theta_{i, k}^t\right)\right\| \\
		& \leq^a \mathbb{E}\left\|\nabla F_i\left(\theta_{i, k}^t\right)\right\|+\sigma \\
		& \leq \mathbb{E}\left\|\nabla F_i\left(\theta_t\right)\right\|+\mathbb{E}\left\|\nabla F_i\left(\theta_{i, k}^t\right)-\nabla F_i\left(\theta_t\right)\right\|+\sigma \\
		& \leq^b \mathbb{E}\left\|\nabla F\left(\theta_t\right)\right\|+\mathbb{E}\left\|\nabla F_i\left(\theta_t\right)-\nabla F\left(\theta_t\right)\right\|+\beta \mathbb{E}\left\|\theta_{i, k}^t-\theta_t\right\|+\sigma \\
		& \leq^c\left(\mathbb{E}\left\|\nabla F\left(\theta_t\right)\right\|+ \sigma_{g, i}+\sigma\right)+\sum_{l=0}^{K-1} \eta_l^{t} \sum_{p=0}^l\left(1-\gamma\right)^p\left(\gamma \mathbb{E}\left\|\nabla F\left(\theta_p\right)\right\|+\sigma\right)(1+c)^{K-1-l}\\
		&\leq^d \left(1+\beta \tilde{\eta_t}(1+c)^{K-1}\right)\left(L+\sigma\right)+ \sigma_{g, i}
	\end{split}
\end{align}
$a$ is following Assumption \ref{A3}, $b$ is is following Assumption \ref{A4},
$c$ is is following Assumption \ref{A5} and Lemma \ref{L1} and Lemma \ref{F4}, $d$ is following $\left\|\nabla F\left(\theta_t\right)\right\|\leq L$ .
\end{proof}
\begin{theorem} (FedMoSWA). Suppose Assumptions \ref{A2}-\ref{A5} hold and consider FedMoSWA (Algorithm 1). Let $k=K, \forall i \in[m]$ and $\eta_{k}^t \leq \frac{1}{\beta KT}$, $\tilde{c}=\left(1+\beta \tilde{\eta}_t(1+c)^{K-1}\right)(L+\sigma)$. Then,
	$$
	\varepsilon_{\text {gen }} \leq \frac{2 L }{m n} \frac{1}{\beta} e^{1+\frac{1}{T}}\left(\tilde{c}L+\sigma_g+\tilde{c}\sigma\right)
	$$
	where , $\tilde{c}\leq 1+(1+c)^{K-1} \beta \tilde{\eta}_{ t}$, $\tilde{\eta}_{ t}=\sum_{k=0}^{K-1} \eta_{k}^t$.
\end{theorem}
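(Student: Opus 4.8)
The plan is to follow the uniform-stability template already used for the FedSWA non-convex theorem, but to replace the bare stochastic-gradient-norm bound by the control-variate-corrected bound of Lemma~\ref{F5}. First I would fix a client $i$ whose local dataset is perturbed in a single sample and split each local SGD step into the two standard events: with probability $1-1/n$ the sampled index is common to $\mathcal{S}$ and $\mathcal{S}^{(i)}$, and with probability $1/n$ it is the perturbed index. In the first event the control-variate increments $\boldsymbol{c}_i^t$ and $\boldsymbol{m}^t$ are identical along both trajectories (they are built from server-side averages through round $t$), so they cancel in the difference and I can invoke the $(1+\beta\eta_k^t)$-expansiveness of $G_{f,\eta}$ from Lemma~\ref{L1}; in the second event I add and subtract $g_i(\theta_{i,k}^{t,\prime})$ and bound the residual by $2\eta_k^t\,\mathbb{E}\|g_i(\theta_{i,k}^t)\|$. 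Averaging over the two events gives the one-step recursion
$$
\mathbb{E}\big\|\theta_{i,k+1}^t-\theta_{i,k+1}^{t,\prime}\big\|\le(1+\beta\eta_k^t)\,\mathbb{E}\big\|\theta_{i,k}^t-\theta_{i,k}^{t,\prime}\big\|+\frac{2\eta_k^t}{n}\,\mathbb{E}\big\|g_i(\theta_{i,k}^t)\big\|.
$$

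Next I would substitute Lemma~\ref{F5}, which bounds $\mathbb{E}\|g_i(\theta_{i,k}^t)\|$ by $\big(1+\beta\tilde{\eta}_t(1+c)^{K-1}\big)(L+\sigma)+\sigma_{g,i}=\tilde{c}(L+\sigma)+\sigma_{g,i}$. The crucial point is that, because the momentum-based control variate absorbs the client drift (this is exactly what Lemma~\ref{F4} uses to keep $\mathbb{E}\|\theta_{i,k}^t-\theta_t\|$ proportional to $L+\sigma$ with no $\sigma_{g,i}$ amplification past the last step), the heterogeneity enters here as an \emph{isolated} additive term $\sigma_{g,i}$ rather than being multiplied by the factor $\tilde{c}$, in contrast with Lemma~\ref{E5} for FedSWA. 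Unrolling the recursion over $k=0,\dots,K-1$ with $\prod_k(1+\beta\eta_k^t)\le e^{\beta\tilde{\eta}_t}$ and $\theta_{i,0}^t=\theta_t$, then averaging over clients via $\theta_{t+1}=\frac1m\sum_i\theta_{i,K}^t$ and $\frac1m\sum_i\sigma_{g,i}=\sigma_g$ (Assumption~\ref{A5}), yields the per-round inequality $\mathbb{E}\|\theta_{t+1}-\theta_{t+1}'\|\le e^{\beta\tilde{\eta}_t}\mathbb{E}\|\theta_t-\theta_t'\|+\frac{2}{mn}\tilde{\eta}_t e^{\beta\tilde{\eta}_t}\big(\tilde{c}(L+\sigma)+\sigma_g\big)$.

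Finally I would unroll this over $t=0,\dots,T-1$ with $\theta_0=\theta_0'$, bound $\exp\big(\beta\sum_{l=t+1}^{T-1}\tilde{\eta}_l\big)\le e^{1}$ and the residual factor by $e^{1/T}$ using the step-size budget $\eta_k^t\le\frac{1}{\beta KT}$ (so $\tilde{\eta}_t\le\frac{1}{\beta T}$), and collapse the geometric sum to obtain $\mathbb{E}\|\theta_T-\theta_T'\|\le\frac{2}{mn\beta}e^{1+1/T}\big(\tilde{c}L+\sigma_g+\tilde{c}\sigma\big)$ after replacing $\mathbb{E}\|\nabla F(\theta_t)\|\le L$; the Lipschitz bound $\varepsilon_{\mathrm{gen}}\le L\,\mathbb{E}\|\theta_T-\theta_T'\|$ from Assumption~\ref{A2} then gives the stated estimate, with $\tilde{c}\le 1+(1+c)^{K-1}\beta\tilde{\eta}_t$. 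The main obstacle I anticipate is the very first step: rigorously arguing that the control variates $\boldsymbol{c}_i^t$ and $\boldsymbol{m}^t$ either cancel exactly or contribute only lower-order terms in $\mathbb{E}\|\theta_{i,k}^t-\theta_{i,k}^{t,\prime}\|$ — in particular that through round $t$ the server momentum $\boldsymbol{m}^t$ formed on the perturbed dataset does not introduce an extra $\mathcal{O}(1/n)$ gap that would need to be propagated — since this cancellation is precisely what allows $\sigma_g$ to appear without the $\tilde{c}$ prefactor; everything downstream (products of $(1+\beta\eta_k^t)$, geometric summation, simplification of constants) is the same bookkeeping as in the FedSWA non-convex proof.
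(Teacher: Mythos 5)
Your proposal follows essentially the same route as the paper's own proof: the same two-case perturbation split with the $(1+\beta\eta_k^t)$-expansiveness of Lemma~\ref{L1}, the same one-step recursion with the $\frac{2\eta_k^t}{n}\mathbb{E}\|g_i(\theta_{i,k}^t)\|$ term, the same substitution of Lemma~\ref{F5} so that $\sigma_{g,i}$ enters additively without the $\tilde{c}$ prefactor, and the same unrolling over $k$ and $t$ followed by Assumption~\ref{A2}. The obstacle you flag — that $\boldsymbol{c}_i^t$ and $\boldsymbol{m}^t$ computed on the perturbed dataset might not cancel exactly in the trajectory difference — is handled no more rigorously in the paper, which simply writes the control terms as identical on both trajectories and lets them cancel, so your treatment matches the published argument.
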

\begin{proof}
For client $i$, there are two cases to consider. In the first case, SGD selects non-perturbed samples in $\mathcal{S}$ and $\mathcal{S}^{(i)}$, which happens with probability $1-1 / n$. Then, we have,
\begin{align}
	\begin{split}
		\left\|\theta_{i, k+1}^t-\theta_{i, k+1}^{t,\prime}\right\| \leq\left(1+\beta \eta_{k}^t\right)\left\|\theta_{i, k}^t-\theta_{i, k}^{t,\prime}\right\| .
	\end{split}
\end{align}
In the second case, SGD encounters the perturbed sample at time step $k$, which happens with probability $1 / n$. Then, we have
\begin{align}
	\begin{split}
		\left\|\theta_{i, k+1}^t-\theta_{i, k+1}^{t,\prime}\right\| & =\left\|\theta_{i, k}^t-\theta_{i, k}^{t,\prime}-\eta_{k}^t\left(g_i\left(\theta_{i, k}^t\right)-g_i^{\prime}\left(\theta_{i, k}^{t,\prime}\right)-\boldsymbol{c}_i^t+\boldsymbol{m}^t-\boldsymbol{c}_i^{t}+\boldsymbol{m}^{t}\right)\right\| \\
		& \leq\left\|\theta_{i, k}^t-\theta_{i, k}^{t,\prime}-\eta_{k}^t\left(g_i\left(\theta_{i, k}^t\right)-g_i\left(\theta_{i, k}^{t,\prime}\right)\right)\right\|+\eta_{k}^t\left\|g_i\left(\theta_{i, k}^{t,\prime}\right)-g_i^{\prime}\left(\theta_{i, k}^{t,\prime}\right)\right\| \\
		& \leq^a\left(1+\beta \eta_{k}^t\right)\left\|\theta_{i, k}^t-\theta_{i, k}^{t,\prime}\right\|+\eta_{k}^t\left\|g_i\left(\theta_{i, k}^{t,\prime}\right)-g_i^{\prime}\left(\theta_{i, k}^{t,\prime}\right)\right\| .
	\end{split}
\end{align}
where $a$ is following Lemma \ref{L1}.
Combining these two cases we have for client $i$,
\begin{align}
	\begin{split}
		& \mathbb{E}\left\|\theta_{i, k+1}^t-\theta_{i, k+1}^{t, \prime}\right\| \leq\left(1+\beta \eta_{k}^t\right) \mathbb{E}\left\|\theta_{i, k}^t-\theta_{i, k}^{t, \prime}\right\|+\frac{\eta_k^t}{n} \mathbb{E}\left\|g_i\left(\theta_{i, k}^{t, \prime}\right)-g_i^{\prime}\left(\theta_{i, k}^{t, \prime}\right)\right\| \\
		& \leq^a\left(1+\beta \eta_{k}^t\right) \mathbb{E}\left\|\theta_{i, k}^t-\theta_{i, k}^{t, \prime}\right\|+\frac{2\eta_k^t}{n} \mathbb{E}\left\|g_i\left(\theta_{i, k}^t\right)\right\| \\
		& \leq^c\left(1+\beta \eta_{k}^t\right) \mathbb{E}\left\|\theta_{i, k}^t-\theta_{i, k}^{t, \prime}\right\|+\frac{2 \eta_k^t}{n}\left(\left(\mathbb{E}\left\|\nabla F\left(\theta_t\right)\right\|+ \sigma_{g, i}+\sigma\right)+(1+c)^{K-1} \tilde{\eta}_t\mathbb{E}\left\|\boldsymbol{m}^t\right\|\right)\\
		&\leq^b \left(1+\beta \eta_{k}^t\right) \mathbb{E}\left\|\theta_{i, k}^t-\theta_{i, k}^{t, \prime}\right\| +\left(1+\beta \tilde{\eta_t}(1+c)^{K-1}\right)\left(L+\sigma\right)+ \sigma_{g, i},
	\end{split}
\end{align}
where the $a$ inequality follows that $g_i(\cdot)$ and $g_i^{\prime}(\cdot)$ are sampled from the same distribution, $b$ is following Lemma \ref{F4}, $c$ is following Lemma \ref{F5}. We let $1+(1-c)^{K-1} \beta\tilde{\eta}_{t}\leq \tilde{c}$, $\tilde{\eta}_{t}=\sum_{k=0}^{K-1} \eta^t_{ k}$. Then unrolling it we have,
\begin{align}
	\begin{split}
		& \mathbb{E}\left\|\theta_{i, K}^t-\theta_{i, K}^{t, \prime}\right\| \leq \prod_{k=0}^{K-1}\left(1+\beta \eta_{k}^t\right) \mathbb{E}\left\|\theta_t-\theta_t^{\prime}\right\| \\
		&+\left(\frac{2}{n} \sum_{k=0}^{K-1} \eta_k^t \prod_{l=k+1}^{K-1}\left(1+\beta \eta_{k}^t\right)\right.
		\left.\cdot\left(\left(\mathbb{E}\left\|\nabla F\left(\theta_t\right)\right\|+ \sigma_{g, i}+\sigma\right)+(1+c)^{K-1} \tilde{\eta}_t\mathbb{E}\left\|\boldsymbol{m}^t\right\|\right)\right) \\
		& \leq e^{\beta \tilde{\eta}_{t} } \mathbb{E}\left\|\theta_t-\theta_t^{\prime}\right\|+\frac{2}{n}  \tilde{\eta}_t e^{\beta \tilde{\eta}_{t} }\left(\left(\mathbb{E}\left\|\nabla F\left(\theta_t\right)\right\|+ \sigma_{g, i}+\sigma\right)+(1+c)^{K-1} \tilde{\eta}_t\mathbb{E}\left\|\boldsymbol{m}^t\right\|\right) .
	\end{split}
\end{align}
Then, with $\theta_{t+1}= \frac{1}{m}\sum_{i=1}^m \theta_{i, K}^t$ and $\theta_{t+1}^{\prime}= \frac{1}{m}\sum_{i=1}^m \theta_{i, K}^{t,\prime}$, we have
\begin{align}
	\begin{split}
		&\mathbb{E}\left\|\theta_{t+1}-\theta_{t+1}^{\prime}\right\| \leq \sum_{i=1}^m \frac{1}{m} \mathbb{E}\left\|\theta_{i, K}^t-\theta_{i, K}^{t, \prime}\right\| \\
		& \leq e^{\beta \tilde{\eta}_{t}} \mathbb{E}\left\|\theta_t-\theta_t^{\prime}\right\|+\frac{2}{m n} \tilde{\eta}_t e^{\beta \tilde{\eta}_{t}}\left(\left(\mathbb{E}\left\|\nabla F\left(\theta_t\right)\right\|+ \sigma_g+\sigma\right)+(1+c)^{K-1} \tilde{\eta}_t\mathbb{E}\left\|\boldsymbol{m}^t\right\|\right)
	\end{split}
\end{align}
where we use Lemma \ref{F4} in the last step. Further, unrolling the above over $t$ and noting $\theta_0=\theta_0^{\prime}$, we obtain
\begin{align}
	\begin{split}
		&\mathbb{E}\left\|\theta_T-\theta_T^{\prime}\right\| \leq \frac{2}{m n} \sum_{t=0}^{T-1} \exp \left(\beta \sum_{l=t+1}^{T-1} \tilde{\eta}_t\right) \tilde{\eta}_t e^{\beta \tilde{\eta}_{t}}\left(\left(\mathbb{E}\left\|\nabla F\left(\theta_t\right)\right\|+ \sigma_g+\sigma\right)+(1+c)^{K-1} \tilde{\eta}_t\mathbb{E}\left\|\boldsymbol{m}^t\right\|\right)\\
		& \leq \frac{2}{m n} \sum_{t=0}^{T-1} \exp \left(\beta \sum_{l=t+1}^{T-1} \tilde{\eta}_t\right) \tilde{\eta}_t e^{\beta \tilde{\eta}_{t}}\\
		&\cdot\left(\left(\mathbb{E}\left\|\nabla F\left(\theta_t\right)\right\|+ \sigma_g+\sigma\right)+(1+c)^{K-1} \tilde{\eta}_t\sum_{l=0}^t\left(1-\gamma\right)^l\left(\gamma \mathbb{E}\left\|\nabla F\left(\theta_l\right)\right\|+\gamma \sigma\right)\right)\\
		& \leq \frac{2}{m n} \sum_{t=0}^{T-1} \exp \left(\beta \sum_{l=t+1}^{T-1} \tilde{\eta}_t\right) \tilde{\eta}_t e^{\beta \tilde{\eta}_{t}} \\
		& \cdot\left(\mathbb{E}\left\|\nabla F\left(\theta_t\right)\right\|+ \sigma_{g, i}+\tilde{b} \sigma+(1+c)^{K-1} \tilde{\eta}_t \sum_{l=0}^t\left(1-\gamma\right)^{l}\left(\gamma \mathbb{E}\left\|\nabla F\left(\theta_l\right)\right\|\right)\right) \\
		& \leq \frac{2}{m n} \sum_{t=0}^{T-1} \exp \left(\beta \sum_{l=t+1}^{T-1} \tilde{\eta}_t\right) \tilde{\eta}_t e^{\beta \tilde{\eta}_{t}}(\left(1+\beta \tilde{\eta_t}(1+c)^{K-1}\right)\left(L+\sigma\right)+ \sigma_{g})\\
		& \leq \frac{2}{m n} \sum_{t=0}^{T-1} \exp \left(\beta \sum_{l=t+1}^{T-1} \tilde{\eta}_t\right) \tilde{\eta}_t e^{\beta \tilde{\eta}_{t}}(\tilde{c} L+ \sigma_g+\tilde{c} \sigma)\\
	\end{split}
\end{align}
$$
\varepsilon_{\text {gen }} \leq \frac{2 L }{m n} \frac{1}{\beta} e^{1+\frac{1}{T}}\left(\tilde{c}L+\sigma_g+\tilde{c}\sigma\right)
$$
When the diminishing stepsizes are chosen in the statement of the theorem,  we conclude the proof.
\end{proof}

\section{ Convergence Analysis for FedMoSWA under strongly convex setting }	
For the optimization problem, we consider:
$$
\min _{\boldsymbol{\theta} \in \mathbb{R}^d}\left\{F(\boldsymbol{\theta}):=\frac{1}{m} \sum_{i=1}^m\left(F_i(\boldsymbol{\theta}):=\mathbb{E}_{\zeta_i}\left[F_i\left(\boldsymbol{\theta} ; \zeta_i\right)\right]\right)\right\} .$$
Before giving our theoretical results, we first present the common assumptions.\\
\begin{assumption} \label{asm:strong-convexity}
    \textit{$F_i$ is $\mu$-strongly-convex for all $i \in[m]$, i.e.,
	\begin{equation}
		F_i(\boldsymbol{\theta_1}) \geq F_i(\boldsymbol{\theta_2})+\left\langle\nabla F_i(\boldsymbol{\theta_2}), \boldsymbol{\theta_1}-\boldsymbol{\theta_2}\right\rangle+\frac{\mu}{2}\|\boldsymbol{\theta_1}-\boldsymbol{\theta_2}\|^2
	\end{equation}
	for all $\boldsymbol{\theta}_2, \boldsymbol{\theta}_1$ in its domain and $i \in[m]$. We allow $\mu=0$, which corresponds to general convex functions.}
\end{assumption}

\begin{assumption}\label{asm:smoothness}
(Smoothness)\textit{	The gradient of the loss function is Lipschitz continuous with constant $\beta$, for all $\boldsymbol{\theta}_1, \boldsymbol{\theta}_2 \in \mathbb{R}^d$}
\begin{equation}
	\left\|\nabla F\left(\boldsymbol{\theta}_1\right)-\nabla F\left(\boldsymbol{\theta}_2\right)\right\| \leq \beta\left\|\boldsymbol{\theta}_1-\boldsymbol{\theta}_2\right\|
\end{equation}
\end{assumption}

\begin{assumption}\label{asm:variance} 
\textit{Let $\zeta$ be a mini-batch drawn uniformly at random from all samples. We assume that the data is distributed so that, for all $\boldsymbol{\theta} \in \mathbb{R}^d$}
\begin{equation}
	\mathbb{E}_{\zeta \mid \boldsymbol{\theta}}\left[\nabla F_i(\boldsymbol{\theta} ; \zeta)\right]=\nabla F_i(\boldsymbol{\theta}) .
\end{equation}
\textit{We also can get:}

$$
\mathbb{E}_{\zeta \mid \boldsymbol{\theta}}\left[\left\|\nabla F_i\left(\boldsymbol{\theta} ; \zeta_i\right)-\nabla F_i(\boldsymbol{\theta})\right\|^2\right] \leq \sigma^2.
$$
\end{assumption}

\begin{assumption}\label{asm:heterogeneity} 
(Bounded heterogeneity)
\textit{The dissimilarity of $F_i(\boldsymbol{\theta})$ and $f(\boldsymbol{\theta})$ is bounded as follows:
	\begin{equation}
		\frac{1}{m} \sum_{i=1}^m\left\|\nabla F_i(\boldsymbol{\theta})-\nabla F(\boldsymbol{\theta})\right\|^2 \leq \sigma_g^2.
\end{equation}}
\end{assumption}

Assumption \ref{asm:variance} bounds the variance of stochastic gradients, which is common in stochastic optimization analysis \cite{bubeck2015convex}. Assumption \ref{asm:heterogeneity} bounds the gradient difference between global and local loss functions, which is a widely-used approach to characterize client heterogeneity in federated optimization literature \cite{li2020federated,reddi2020adaptive}.

\subsection{Some technical lemmas}

Now we cover some technical lemmas which are useful for
computations later on.
The two lemmas below are useful to unroll recursions and derive convergence
rates.

\begin{lemma}[linear convergence rate]
\label{lem:constant}
For every non-negative sequence $\{d_{r-1}\}_{r \geq 1}$ and any
parameters $\mu > 0$, $\eta_{\max} \in (0, 1/\mu]$, $c \geq 0$,
$R \geq \frac{1}{2\eta_{\max} \mu}$, there exists a constant step-size
$\eta \leq \eta_{\max}$ and weights $w_r:=(1-\mu\eta)^{1-r}$ such that
for $W_R := \sum_{r=1}^{R+1} w_r$,
\begin{equation}
\Psi_R:=\frac{1}{W_R} \sum_{r=1}^{R+1}\left(\frac{w_r}{\eta}(1-\mu \eta) d_{r-1}-\frac{w_r}{\eta} d_r+c \eta w_r\right)=\tilde{\mathcal{O}}\left(\mu d_0 \exp \left(-\mu \eta_{\max } R\right)+\frac{c}{\mu R}\right)
\end{equation}

\end{lemma}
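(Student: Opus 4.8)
The plan is to run the classical ``tuning lemma'' argument (as used e.g.\ in the SCAFFOLD analysis \cite{karimireddy2020scaffold}): collapse the weighted sum by telescoping, lower bound the normalizer $W_R$, and then choose $\eta$ by balancing the two resulting terms. First, observe that the weights are defined precisely so that $w_r(1-\mu\eta)=(1-\mu\eta)^{2-r}=w_{r-1}$. Hence the first two pieces of the sum telescope:
\[
\sum_{r=1}^{R+1}\Bigl(\tfrac{w_r}{\eta}(1-\mu\eta)d_{r-1}-\tfrac{w_r}{\eta}d_r\Bigr)=\tfrac1\eta\sum_{r=1}^{R+1}\bigl(w_{r-1}d_{r-1}-w_r d_r\bigr)=\tfrac1\eta\bigl(w_0 d_0-w_{R+1}d_{R+1}\bigr)\le \tfrac{d_0}{\eta},
\]
using $w_0=1-\mu\eta\le1$ and $d_{R+1}\ge0$; the last piece sums to $c\eta W_R$. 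Dividing by $W_R$ gives the working bound $\Psi_R\le \tfrac{d_0}{\eta W_R}+c\eta$.

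Next I would lower bound $W_R=\sum_{j=0}^{R}(1-\mu\eta)^{-j}$ as a geometric series with ratio $q=(1-\mu\eta)^{-1}>1$, so $W_R=\tfrac{q^{R+1}-1}{q-1}$ with $q-1=\tfrac{\mu\eta}{1-\mu\eta}$. Using $q^{R+1}\ge e^{\mu\eta(R+1)}$ together with $\mu\eta R\ge \tfrac12$ (guaranteed below via the hypothesis $R\ge\tfrac{1}{2\eta_{\max}\mu}$) one gets $q^{R+1}\ge e^{1/2}$, hence $q^{R+1}-1\ge c_0\,q^{R+1}$ for an absolute constant $c_0>0$, and therefore $W_R\ge c_0\,q^{R+1}\tfrac{1-\mu\eta}{\mu\eta}=\tfrac{c_0}{\mu\eta}q^{R}\ge \tfrac{c_0}{\mu\eta}e^{\mu\eta R}$. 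The key point is the extra $1/(\mu\eta)$ factor: it turns the $d_0/\eta$ of the previous step into $\mu d_0$, yielding $\Psi_R\le \tfrac{\mu d_0}{c_0}e^{-\mu\eta R}+c\eta$.

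Finally I would tune the step-size by setting $\eta:=\min\{\eta_{\max},\,\tfrac{1}{\mu R}\ln(\max\{2,\,\mu^2 R^2 d_0/c\})\}$ (with the convention $\eta:=\eta_{\max}$ if $c=0$, and $\max\{2,\cdot\}$ absorbing $d_0=0$). When the minimum is the logarithmic term we have $\mu\eta R\ge\ln2>\tfrac12$, so the previous step applies, and $e^{-\mu\eta R}=1/\max\{2,\mu^2R^2d_0/c\}$ makes the first term $\mathcal{O}(c/(\mu R))$ while the second is $c\eta=\tilde{\mathcal{O}}(c/(\mu R))$; when the minimum is $\eta_{\max}$, the hypothesis gives $\mu\eta_{\max}R\ge\tfrac12$ so the step again applies, the first term is exactly $\tfrac{\mu d_0}{c_0}e^{-\mu\eta_{\max}R}$, and since in this branch $\eta_{\max}$ is no larger than the logarithmic term, $c\eta_{\max}=\tilde{\mathcal{O}}(c/(\mu R))$. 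In both branches $\Psi_R=\tilde{\mathcal{O}}\bigl(\mu d_0\exp(-\mu\eta_{\max}R)+c/(\mu R)\bigr)$, which is the claim.

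The delicate step is the second one: one really needs the sharper estimate $W_R\gtrsim \tfrac{1}{\mu\eta}e^{\mu\eta R}$ rather than the naive $W_R\ge e^{\mu\eta R}$, since the crude bound leaves behind a factor $1/(\mu\eta_{\max})$ that the hypothesis only controls by $\mathcal{O}(R)$ and which is \emph{not} absorbed by $\tilde{\mathcal{O}}(\cdot)$; securing this factor uniformly over the admissible range of $\eta$ is exactly where the condition $R\ge\tfrac{1}{2\eta_{\max}\mu}$ is used. The rest is bookkeeping across the two branches of the step-size choice and the degenerate cases $c=0$, $d_0=0$, which the $\max\{2,\cdot\}$ convention handles cleanly.
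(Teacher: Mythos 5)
Your proposal is correct and follows essentially the same route as the paper's proof: telescope the weighted sum using $w_r(1-\mu\eta)=w_{r-1}$, lower bound $\eta W_R \gtrsim \tfrac{1}{\mu}e^{\mu\eta R}$ via the geometric sum under $\mu\eta R\ge\tfrac12$, and then tune $\eta$ by a two-case comparison of $\eta_{\max}$ with a $\tfrac{1}{\mu R}\log(\cdot)$ threshold. Your only deviations are cosmetic — bounding the geometric sum through the ratio $q=(1-\mu\eta)^{-1}$ rather than via $1-(1-\mu\eta)^R\ge\tfrac13$, and using $\max\{2,\cdot\}$ inside the logarithm, which in fact handles the degenerate zero-log case slightly more carefully than the paper's $\max(1,\cdot)$.
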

\begin{proof}
  By substituting the value of $w_r$, we observe that we end up with a
  telescoping sum and estimate
    \begin{align*}
     \Psi_R = \frac{1}{\eta W_R} \sum_{r=1}^{R+1} \left(w_{r-1}d_{r-1} - w_{r} d_{r} \right) + \frac{c\eta}{W_R}\sum_{r=1}^{R+1} w_r \leq \frac{d_0}{\eta W_R} +  c \eta \,.
    \end{align*}
    When $R \geq \frac{1}{2\mu\eta}$, $(1 - \mu \eta)^R \leq \exp(-\mu\eta R) \leq \frac{2}{3}$. For such an $R$, we can lower bound $\eta W_R$ using
    \[
\eta W_R = \eta (1 - \mu \eta)^{-R} \sum_{r=0}^{R} (1 - \mu \eta)^r = \eta (1 - \mu \eta)^{-R} \frac{1 - (1 - \mu \eta)^R}{\mu \eta} \geq (1 - \mu \eta)^{-R} \frac{1}{3\mu}\,.
    \]
    This proves that for all $R \geq \frac{1}{2\mu \eta}$,
    \[
 \Psi_R \leq 3\mu d_0 (1 - \mu \eta)^{R}  +  c \eta \leq 3 \mu d_o \exp(-\mu \eta R) + c\eta\,.
    \]
    The lemma now follows by carefully tuning $\eta$. Consider the following two cases depending on the magnitude of $R$ and $\eta_{\max}$:
    
 Suppose $\frac{1}{2 \mu R} \leq \eta_{\max } \leq \frac{\log \left(\max \left(1, \mu^2 R d_0 / c\right)\right)}{\mu R}$. Then we can choose $\eta=\eta_{\max }$,

$$
\Psi_R \leq 3 \mu d_0 \exp \left[-\mu \eta_{\max } R\right]+c \eta_{\max } \leq 3 \mu d_0 \exp \left[-\mu \eta_{\max } R\right]+\tilde{\mathcal{O}}\left(\frac{c}{\mu R}\right)
$$

 Instead if $\eta_{\max }>\frac{\log \left(\max \left(1, \mu^2 R d_0 / c\right)\right)}{\mu R}$, we pick $\eta=\frac{\log \left(\max \left(1, \mu^2 R d_0 / c\right)\right)}{\mu R}$ to claim that

$$
\Psi_R \leq 3 \mu d_0 \exp \left[-\log \left(\max \left(1, \mu^2 R d_0 / c\right)\right)\right]+\tilde{\mathcal{O}}\left(\frac{c}{\mu R}\right) \leq \tilde{\mathcal{O}}\left(\frac{c}{\mu R}\right)
$$

    \end{proof}

\begin{lemma}[sub-linear convergence rate]\label{lemma:general}
For every non-negative sequence $\{d_{r-1}\}_{r \geq 1}$ and any
parameters $\eta_{\max}  \geq 0$, $c \geq 0$,
$R \geq 0$, there exists a constant step-size
$\eta \leq \eta_{\max}$ and weights $w_r  = 1$ such that,
\begin{equation}
\Psi_R:=\frac{1}{R+1} \sum_{r=1}^{R+1}\left(\frac{d_{r-1}}{\eta}-\frac{d_r}{\eta}+c_1 \eta+c_2 \eta^2\right) \leq \frac{d_0}{\eta_{\max }(R+1)}+\frac{2 \sqrt{c_1 d_0}}{\sqrt{R+1}}+2\left(\frac{d_0}{R+1}\right)^{\frac{2}{3}} c_2^{\frac{1}{3}} .
\end{equation}
\end{lemma}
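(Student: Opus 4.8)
The plan is to collapse the telescoping sum and then tune the step-size by a three-way balancing argument. Since the weights are fixed at $w_r = 1$, the sum $\sum_{r=1}^{R+1}(\frac{d_{r-1}}{\eta} - \frac{d_r}{\eta})$ telescopes to $\frac{d_0 - d_{R+1}}{\eta}$, and using $d_{R+1} \geq 0$ this is at most $\frac{d_0}{\eta}$. Dividing by $R+1$ yields, for every $\eta \in (0, \eta_{\max}]$,
\[
\Psi_R \leq \frac{d_0}{\eta(R+1)} + c_1\eta + c_2\eta^2 .
\]
All that remains is to pick $\eta$ well; the rest of the argument is deterministic, so there is no probabilistic content to worry about.

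Next I would take
\[
\eta = \min\left\{\eta_{\max},\ \sqrt{\tfrac{d_0}{c_1(R+1)}},\ \left(\tfrac{d_0}{c_2(R+1)}\right)^{1/3}\right\},
\]
which clearly satisfies $\eta \leq \eta_{\max}$, with the convention that a candidate is dropped (treated as $+\infty$) when the corresponding $c_j = 0$, and noting that if $d_0 = 0$ the bound is trivial. For the first term I would use the elementary inequality $\frac{1}{\min\{x,y,z\}} = \max\{x^{-1},y^{-1},z^{-1}\} \leq x^{-1} + y^{-1} + z^{-1}$, which gives
\[
\frac{d_0}{\eta(R+1)} \leq \frac{d_0}{\eta_{\max}(R+1)} + \sqrt{\frac{c_1 d_0}{R+1}} + \left(\frac{d_0}{R+1}\right)^{2/3} c_2^{1/3} .
\]
For the remaining two terms I would use $\eta \leq \sqrt{d_0/(c_1(R+1))}$ to get $c_1\eta \leq \sqrt{c_1 d_0/(R+1)}$, and $\eta \leq (d_0/(c_2(R+1)))^{1/3}$ to get $c_2\eta^2 \leq (d_0/(R+1))^{2/3} c_2^{1/3}$. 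Adding the three contributions reproduces exactly the claimed bound $\frac{d_0}{\eta_{\max}(R+1)} + \frac{2\sqrt{c_1 d_0}}{\sqrt{R+1}} + 2(\frac{d_0}{R+1})^{2/3} c_2^{1/3}$.

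There is no genuine obstacle here: the statement is a calculus fact, and the only care needed is in the bookkeeping of the min/max inequality and in the degenerate cases ($c_1 = 0$, $c_2 = 0$, or $d_0 = 0$), where the corresponding candidate step-sizes or the whole estimate degenerate harmlessly. An alternative that avoids the slight looseness of $\frac{1}{\min} \leq \sum \frac{1}{\cdot}$ would be an explicit case split according to which of the three quantities attains the minimum, bounding the two inactive terms by the active candidate in each case; this yields the same rate up to constants, so I would favour the shorter min/max route presented above.
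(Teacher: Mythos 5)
Your proof is correct, and it reaches the paper's bound by a slightly different route. The paper, after the same telescoping step $\Psi_R \leq \frac{d_0}{\eta(R+1)} + c_1\eta + c_2\eta^2$, performs an explicit case analysis: if $R+1$ is small relative to $d_0/(c_1\eta_{\max}^2)$ and $d_0/(c_2\eta_{\max}^3)$ it sets $\eta=\eta_{\max}$ and uses the case hypotheses to absorb $c_1\eta_{\max}$ and $c_2\eta_{\max}^2$ into the $1/\sqrt{R+1}$ and $(R+1)^{-2/3}$ terms; otherwise it sets $\eta=\min\bigl\{\sqrt{d_0/(c_1(R+1))},\,\sqrt[3]{d_0/(c_2(R+1))}\bigr\}$ and balances directly. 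You instead make a single choice $\eta=\min\bigl\{\eta_{\max},\,\sqrt{d_0/(c_1(R+1))},\,(d_0/(c_2(R+1)))^{1/3}\bigr\}$ and control the term $\frac{d_0}{\eta(R+1)}$ via $\frac{1}{\min\{x,y,z\}}\leq \frac1x+\frac1y+\frac1z$, which yields exactly $\frac{d_0}{\eta_{\max}(R+1)}+\sqrt{\frac{c_1 d_0}{R+1}}+\bigl(\frac{d_0}{R+1}\bigr)^{2/3}c_2^{1/3}$, while the remaining terms $c_1\eta$ and $c_2\eta^2$ are bounded by the corresponding candidates; summing gives the stated constants $2$. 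Your version avoids the case split (and incidentally is cleaner than the paper's, whose second case informally writes "$\eta_{\max}^2\geq\dots$ or $\eta_{\max}^3\geq\dots$" and a stray constant $c$), at the modest cost of always carrying the $\frac{d_0}{\eta_{\max}(R+1)}$ term, which is harmless since it appears in the final bound anyway; your handling of the degenerate cases $c_1=0$, $c_2=0$, $d_0=0$ is also fine.
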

\begin{proof} Unrolling the sum, we can simplify
\[
    \Psi_R \leq \frac{d_0}{\eta (R+1)} + c_1 \eta + c_2 \eta^2\,.
\]
Similar to the strongly convex case (Lemma~\ref{lem:constant}), we distinguish the following cases:
\begin{itemize}
    \item When $R+1 \leq \frac{d_0}{c_1 \eta_{\max}^2}$, and $R+1 \leq \frac{d_0}{c_2 \eta_{\max}^3}$ we pick $\eta = \eta_{\max}$ to claim
\begin{equation}
\Psi_R \leq \frac{d_0}{\eta_{\max }(R+1)}+c_1 \eta_{\max }+c_2 \eta_{\max }^2 \leq \frac{d_0}{\eta_{\max }(R+1)}+\frac{\sqrt{c_1 d_0}}{\sqrt{R+1}}+\left(\frac{d_0}{R+1}\right)^{\frac{2}{3}} c_2^{\frac{1}{3}}
\end{equation}
    \item In the other case, we have $\eta_{\max}^2 \geq \frac{d_0}{c_1(R+1)}$ or $\eta_{\max}^3 \geq \frac{d_0}{c_2(R+1)}$. We choose $\eta = =min \left\{\sqrt{\frac{d_0}{c_1(R+1)}}, \sqrt[3]{\frac{d_0}{c_2(R+1)}}\right\}$ to prove
    \[
        \Psi_R \leq \frac{d_0}{\eta (R+1)} + c \eta = \frac{2\sqrt{c_1 d_0}}{\sqrt{R +1}} + 2\sqrt[3]{\frac{d_0^2 c_2}{(R+1)^2}} \,.\vspace{-5mm}
    \]
\end{itemize}
\end{proof}

    Next, we state a relaxed triangle inequality true for the squared
    $\ell_2$ norm.
\begin{lemma}[relaxed triangle inequality]\label{lem:norm-sum}
Lemma 3 (relaxed triangle inequality). Let $\left\{\boldsymbol{v}_1, \ldots, \boldsymbol{v}_\tau\right\}$ be $\tau$ vectors in $\mathbb{R}^d$. Then the following are true:

1. $\left\|\boldsymbol{v}_i+\boldsymbol{v}_j\right\|^2 \leq(1+a)\left\|\boldsymbol{v}_i\right\|^2+\left(1+\frac{1}{a}\right)\left\|\boldsymbol{v}_j\right\|^2$ for any $a>0$, and

2. $\left\|\sum_{i=1}^\tau \boldsymbol{v}_i\right\|^2 \leq \tau \sum_{i=1}^\tau\left\|\boldsymbol{v}_i\right\|^2$.
\end{lemma}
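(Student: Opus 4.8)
The plan is to prove both inequalities by elementary expansions of the squared Euclidean norm, using Young's inequality for the first part and convexity (equivalently, Cauchy--Schwarz) for the second. No machinery beyond inner-product algebra is required.

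For the first claim, I would start from the expansion $\left\|\boldsymbol{v}_i+\boldsymbol{v}_j\right\|^2=\left\|\boldsymbol{v}_i\right\|^2+2\langle \boldsymbol{v}_i,\boldsymbol{v}_j\rangle+\left\|\boldsymbol{v}_j\right\|^2$ and then control the cross term. The key step is Young's inequality in the form $2\langle \boldsymbol{v}_i,\boldsymbol{v}_j\rangle\le a\left\|\boldsymbol{v}_i\right\|^2+\tfrac{1}{a}\left\|\boldsymbol{v}_j\right\|^2$ for any $a>0$, which itself follows by expanding the non-negative quantity $\bigl\|\sqrt{a}\,\boldsymbol{v}_i-\tfrac{1}{\sqrt{a}}\,\boldsymbol{v}_j\bigr\|^2\ge 0$. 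Substituting this bound into the expansion and collecting terms yields $\left\|\boldsymbol{v}_i+\boldsymbol{v}_j\right\|^2\le(1+a)\left\|\boldsymbol{v}_i\right\|^2+\bigl(1+\tfrac{1}{a}\bigr)\left\|\boldsymbol{v}_j\right\|^2$, which is exactly the assertion.

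For the second claim, I would write $\sum_{i=1}^{\tau}\boldsymbol{v}_i=\tau\cdot\tfrac{1}{\tau}\sum_{i=1}^{\tau}\boldsymbol{v}_i$ and invoke convexity of the map $\boldsymbol{x}\mapsto\left\|\boldsymbol{x}\right\|^2$ (Jensen's inequality): $\bigl\|\tfrac{1}{\tau}\sum_{i=1}^{\tau}\boldsymbol{v}_i\bigr\|^2\le\tfrac{1}{\tau}\sum_{i=1}^{\tau}\left\|\boldsymbol{v}_i\right\|^2$. Multiplying through by $\tau^2$ gives $\bigl\|\sum_{i=1}^{\tau}\boldsymbol{v}_i\bigr\|^2\le\tau\sum_{i=1}^{\tau}\left\|\boldsymbol{v}_i\right\|^2$. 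Alternatively one can expand $\bigl\|\sum_i\boldsymbol{v}_i\bigr\|^2=\sum_{i,j}\langle\boldsymbol{v}_i,\boldsymbol{v}_j\rangle$ and bound each cross term by $\tfrac{1}{2}(\left\|\boldsymbol{v}_i\right\|^2+\left\|\boldsymbol{v}_j\right\|^2)$ (this is part 1 with $a=1$), summing over all $\tau^2$ pairs to reach the same conclusion.

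There is no substantive obstacle here: the statement is the standard \emph{relaxed triangle inequality} used throughout stochastic-optimization analyses, and the only care required is keeping the constants $(1+a)$ and $(1+1/a)$ consistent with the sign conventions in Young's inequality. The whole argument is a few lines of inner-product bookkeeping.
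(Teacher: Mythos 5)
Your proof is correct and follows essentially the same route as the paper: Young's inequality (equivalently, expanding the nonnegative quantity $\bigl\|\sqrt{a}\,\boldsymbol{v}_i-\tfrac{1}{\sqrt{a}}\,\boldsymbol{v}_j\bigr\|^2$) for part 1, and Jensen's inequality for the convex map $\boldsymbol{x}\mapsto\|\boldsymbol{x}\|^2$ for part 2. If anything, your sign bookkeeping is slightly more careful than the paper's stated identity, which writes the subtracted square with a $+$ sign where a $-$ sign is needed.
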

\begin{proof}
The proof of the first statement for any $a>0$ follows from the identity:

$$
\left\|\boldsymbol{v}_i+\boldsymbol{v}_j\right\|^2=(1+a)\left\|\boldsymbol{v}_i\right\|^2+\left(1+\frac{1}{a}\right)\left\|\boldsymbol{v}_j\right\|^2-\left\|\sqrt{a} \boldsymbol{v}_i+\frac{1}{\sqrt{a}} \boldsymbol{v}_j\right\|^2 .
$$

For the second inequality, we use the convexity of $\boldsymbol{x} \rightarrow\|\boldsymbol{x}\|^2$ and Jensen's inequality

$$
\left\|\frac{1}{\tau} \sum_{i=1}^\tau \boldsymbol{v}_i\right\|^2 \leq \frac{1}{\tau} \sum_{i=1}^\tau\left\|\boldsymbol{v}_i\right\|^2
$$

Next we state an elementary lemma about expectations of norms of random vectors.
\end{proof}

\begin{lemma}[separating mean and variance]\label{lem:independent}
 Let $\left\{\Xi_1, \ldots, \Xi_\tau\right\}$ be $\tau$ random variables in $\mathbb{R}^d$ which are not necessarily independent. First suppose that their mean is $\mathbb{E}\left[\Xi_i\right]=\xi_i$ and variance is bounded as $\mathbb{E}\left[\left\|\Xi_i-\xi_i\right\|^2\right] \leq \sigma^2$. Then, the following holds

$$
\mathbb{E}\left[\left\|\sum_{i=1}^\tau \Xi_i\right\|^2\right] \leq\left\|\sum_{i=1}^\tau \xi_i\right\|^2+\tau^2 \sigma^2
$$

Now instead suppose that their conditional mean is $\mathbb{E}\left[\Xi_i \mid \Xi_{i-1}, \ldots \Xi_1\right]=\xi_i$ i.e. the variables $\left\{\Xi_i-\xi_i\right\}$ form a martingale difference sequence, and the variance is bounded by $\mathbb{E}\left[\left\|\Xi_i-\xi_i\right\|^2\right] \leq \sigma^2$ as before. Then we can show the tighter bound

$$
\mathbb{E}\left[\left\|\sum_{i=1}^\tau \Xi_i\right\|^2\right] \leq 2\left\|\sum_{i=1}^\tau \xi_i\right\|^2+2 \tau \sigma^2 .
$$

\end{lemma}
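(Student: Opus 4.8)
The plan is to prove the two bounds separately, in each case decomposing every $\Xi_i$ into its (conditional) mean $\xi_i$ and its fluctuation $\delta_i := \Xi_i - \xi_i$, and then controlling $\sum_i \delta_i$ by exploiting the zero-mean property together with the relaxed triangle inequality already established in Lemma~\ref{lem:norm-sum}. For the first bound I would write $\sum_{i=1}^\tau \Xi_i = \sum_{i=1}^\tau \xi_i + \sum_{i=1}^\tau \delta_i$, expand $\|\sum_i\Xi_i\|^2 = \|\sum_i\xi_i\|^2 + 2\langle \sum_i\xi_i,\ \sum_i\delta_i\rangle + \|\sum_i\delta_i\|^2$, and take expectations. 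Since here $\xi_i = \mathbb{E}[\Xi_i]$ is deterministic, the cross term drops out because $\mathbb{E}[\delta_i]=0$; for the remaining term, part~2 of Lemma~\ref{lem:norm-sum} gives $\mathbb{E}\|\sum_i\delta_i\|^2 \le \tau\sum_i\mathbb{E}\|\delta_i\|^2 \le \tau^2\sigma^2$. Adding the two surviving pieces yields $\mathbb{E}\|\sum_i\Xi_i\|^2 \le \|\sum_i\xi_i\|^2 + \tau^2\sigma^2$.

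For the second (tighter) bound the means $\xi_i$ are themselves random, so the cross term can no longer be cancelled pointwise; instead I would apply part~1 of Lemma~\ref{lem:norm-sum} with $a=1$ to get $\|\sum_i\Xi_i\|^2 \le 2\|\sum_i\xi_i\|^2 + 2\|\sum_i\delta_i\|^2$. The crucial step is then to show that the off-diagonal terms in $\mathbb{E}\|\sum_i\delta_i\|^2 = \sum_i\mathbb{E}\|\delta_i\|^2 + 2\sum_{i<j}\mathbb{E}\langle\delta_i,\delta_j\rangle$ all vanish: for $i<j$, condition on $\Xi_1,\dots,\Xi_{j-1}$ and use the tower rule, so that $\delta_i$ comes out of the inner expectation while $\mathbb{E}[\delta_j\mid \Xi_1,\dots,\Xi_{j-1}]=0$ by the martingale-difference hypothesis, hence $\mathbb{E}\langle\delta_i,\delta_j\rangle = 0$. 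This leaves $\mathbb{E}\|\sum_i\delta_i\|^2 = \sum_i\mathbb{E}\|\delta_i\|^2 \le \tau\sigma^2$, and substituting back gives $\mathbb{E}\|\sum_i\Xi_i\|^2 \le 2\|\sum_i\xi_i\|^2 + 2\tau\sigma^2$.

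The only genuine subtlety — and the step I would be most careful about — is the measurability bookkeeping in the martingale-difference case: one must check that $\delta_i = \Xi_i - \mathbb{E}[\Xi_i\mid \Xi_{i-1},\dots,\Xi_1]$ is $\sigma(\Xi_1,\dots,\Xi_{j-1})$-measurable whenever $i<j$, so that it may legitimately be pulled out of the inner conditional expectation before invoking $\mathbb{E}[\delta_j\mid \Xi_1,\dots,\Xi_{j-1}]=0$. Everything else is a routine quadratic expansion plus the relaxed triangle inequality; this is the standard ``separating mean and variance'' estimate from the SCAFFOLD analysis, so one could alternatively just cite that source.
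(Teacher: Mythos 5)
Your proposal is correct and follows essentially the same route as the paper: the first bound via the bias--variance decomposition (cross term vanishing since $\xi_i$ is deterministic) plus part~2 of the relaxed triangle inequality, and the second via the cruder two-term relaxed triangle inequality combined with the vanishing of the off-diagonal terms from the martingale-difference property. Your added care about the measurability of $\Xi_i-\xi_i$ when invoking the tower rule is a sound (and slightly more explicit) version of the paper's one-line justification that "the cross terms have zero mean."
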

\begin{proof}
For any random variable $X, \mathbb{E}\left[X^2\right]=(\mathbb{E}[X-\mathbb{E}[X]])^2+(\mathbb{E}[X])^2$ implying

$$
\mathbb{E}\left[\left\|\sum_{i=1}^\tau \Xi_i\right\|^2\right]=\left\|\sum_{i=1}^\tau \xi_i\right\|^2+\mathbb{E}\left[\left\|\sum_{i=1}^\tau \Xi_i-\xi_i\right\|^2\right] .
$$

Expanding the above expression using relaxed triangle inequality (Lemma 3) proves the first claim:

$$
\mathbb{E}\left[\left\|\sum_{i=1}^\tau \Xi_i-\xi_i\right\|^2\right] \leq \tau \sum_{i=1}^\tau \mathbb{E}\left[\left\|\Xi_i-\xi_i\right\|^2\right] \leq \tau^2 \sigma^2 .
$$

For the second statement, $\xi_i$ is not deterministic and depends on $\Xi_{i-1}, \ldots, \Xi_1$. Hence we have to resort to the cruder relaxed triangle inequality to claim

$$
\mathbb{E}\left[\left\|\sum_{i=1}^\tau \Xi_i\right\|^2\right] \leq 2\left\|\sum_{i=1}^\tau \xi_i\right\|^2+2 \mathbb{E}\left[\left\|\sum_{i=1}^\tau \Xi_i-\xi_i\right\|^2\right]
$$

and then use the tighter expansion of the second term:

$$
\mathbb{E}\left[\left\|\sum_{i=1}^\tau \Xi_i-\xi_i\right\|^2\right]=\sum_{i, j} \mathbb{E}\left[\left(\Xi_i-\xi_i\right)^{\top}\left(\Xi_j-\xi_j\right)\right]=\sum_i \mathbb{E}\left[\left\|\Xi_i-\xi_i\right\|^2\right] \leq \tau \sigma^2
$$

The cross terms in the above expression have zero mean since $\left\{\Xi_i-\xi_i\right\}$ form a martingale difference sequence.
\end{proof}

\subsection{Convergence of FedMoSWA for strongly convex functions}
We will first bound the variance of FedMoSWA update in Lemma~\ref{lem:server-variance-sample}, then see how sampling of clients effects our control variates in Lemma~\ref{lem:control-error-sample}, and finally bound the amount of client-drift in Lemma~\ref{lem:drift-bound-sampling}. We will then use these three lemmas to prove the progress in a single round in Lemma~\ref{lem:progress-sample}. Combining this progress with Lemmas~\ref{lem:constant} and \ref{lemma:general} gives us the desired rates.
Before proceeding with the proof of our lemmas, we need some additional definitions of the various errors we track. As before, we define the effective step-size to be
$$
\tilde{\eta}:=K \eta_k^{t} \alpha=K \eta_l \alpha .
$$
We define client-drift to be how much the clients move from their starting point:
$$
\mathcal{E}_t:=\frac{1}{K m} \sum_{k=1}^K \sum_{i=1}^m \mathbb{E}\left[\left\|\boldsymbol{\theta}_{i, k}^t-\boldsymbol{\theta}^{t-1}\right\|^2\right] .
$$
Because we are sampling the clients, not all the client control-variates get updated every round. This leads to some 'lag' which we call control-lag:
$$
\mathcal{C}_t:=\frac{1}{m} \sum_{j=1}^m \mathbb{E}\left\|\mathbb{E}\left[\boldsymbol{c}_i^t\right]-\nabla F_i\left(\boldsymbol{\theta}^{\star}\right)\right\|^2 .
$$
$$
\mathcal{D}_t:=\mathbb{E}\left\|\boldsymbol{m}^t-\nabla F\left(\theta^{\star}\right)\right\|^2=\mathbb{E}\left\|\boldsymbol{m}^t\right\|^2 .
$$

\paragraph{Variance of server update.} We study how the variance of the server update can be bounded.
\begin{lemma}\label{lem:server-variance-sample}
	For updates, we can bound the variance of the server update in any round $r$ and any $\tilde{\eta}:=K \eta_k^{t} \alpha \geq$ 0 as follows
	$$
	\mathbb{E}\left[\left\|\boldsymbol{\theta}^t-\boldsymbol{\theta}^{t-1}\right\|^2\right] \leq 8 \beta \tilde{\eta}^2\left(\mathbb{E}\left[f\left(\boldsymbol{\theta}^{t-1}\right)\right]-f\left(\boldsymbol{\theta}^{\star}\right)\right)+4 \tilde{\eta}^2 \mathcal{C}_{t-1}+4 \tilde{\eta}^2 \mathcal{D}_{t-1}+4 \tilde{\eta}^2 \beta^2 \mathcal{E}_t+\frac{12 \tilde{\eta}^2 \sigma^2}{K s}
	$$
\end{lemma}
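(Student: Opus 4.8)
The starting point is to write the one-round server displacement explicitly. Since $\boldsymbol{\theta}^t=\boldsymbol{\theta}^{t-1}+\alpha(\boldsymbol{v}_t-\boldsymbol{\theta}^{t-1})$ with $\boldsymbol{v}_t=\frac1s\sum_{i\in[s]}\boldsymbol{\theta}_{i,K}^t$ and, from the local update, $\boldsymbol{\theta}_{i,K}^t=\boldsymbol{\theta}^{t-1}-\sum_{k=0}^{K-1}\eta_k^t\bigl(g_i(\boldsymbol{\theta}_{i,k}^t)-\boldsymbol{c}_i+\boldsymbol{m}\bigr)$, using the constant step size of the strongly convex regime and the effective step size $\tilde\eta=K\eta_k^t\alpha$ one gets $\boldsymbol{\theta}^t-\boldsymbol{\theta}^{t-1}=-\tilde\eta\cdot\frac1{Ks}\sum_{i\in[s]}\sum_{k=0}^{K-1}\bigl(g_i(\boldsymbol{\theta}_{i,k}^t)-\boldsymbol{c}_i+\boldsymbol{m}\bigr)$. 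I would then take expectations in two stages: first over the mini-batch noise along each client trajectory (the centered gradients form a martingale-difference sequence, so Lemma~\ref{lem:independent} applies with the tighter bound) and then over the uniform sampling of $s$ clients out of $m$. Invoking Lemma~\ref{lem:independent} and Lemma~\ref{lem:norm-sum} at each stage peels off two variance contributions — one of order $\tilde\eta^2\sigma^2/(Ks)$ from the stochastic gradients and one from client sampling — leaving the ``mean'' term $\bigl\|\tilde\eta\,\mathbb{E}\bigl[\frac1{Km}\sum_{i,k}(\nabla F_i(\boldsymbol{\theta}_{i,k}^t)-\boldsymbol{c}_i+\boldsymbol{m})\bigr]\bigr\|^2$ to control.

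For the mean term I would insert $\pm\nabla F_i(\boldsymbol{\theta}^{t-1})$ and apply the relaxed triangle inequality (Lemma~\ref{lem:norm-sum}, part~2, with $\tau=4$) to split it into four pieces: (i) $\|\nabla F(\boldsymbol{\theta}^{t-1})\|^2$, coming from $\frac1m\sum_i\nabla F_i(\boldsymbol{\theta}^{t-1})$; (ii) a term measuring how far the averaged client control variates sit from $\nabla F(\boldsymbol{\theta}^{\star})$, which is exactly $\mathcal{C}_{t-1}$ up to Jensen's inequality; (iii) a term in $\boldsymbol{m}$, which is $\mathcal{D}_{t-1}=\mathbb{E}\|\boldsymbol{m}^t\|^2$ after using $\nabla F(\boldsymbol{\theta}^{\star})=0$; and (iv) the client-drift residual $\frac1{Km}\sum_{i,k}\|\nabla F_i(\boldsymbol{\theta}_{i,k}^t)-\nabla F_i(\boldsymbol{\theta}^{t-1})\|^2$, which by $\beta$-smoothness (Assumption~\ref{asm:smoothness}) is at most $\beta^2\mathcal{E}_t$. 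For piece (i) I would use $\beta$-smoothness together with convexity of $F$ (Assumption~\ref{asm:strong-convexity}, the case $\mu\ge 0$) to get $\|\nabla F(\boldsymbol{\theta}^{t-1})\|^2\le 2\beta\bigl(f(\boldsymbol{\theta}^{t-1})-f(\boldsymbol{\theta}^{\star})\bigr)$ and then take expectations. Collecting the four pieces with the $\tilde\eta^2$ prefactor, the relaxed triangle inequality contributes the factor $4$ in front of $\mathcal{C}_{t-1}$, $\mathcal{D}_{t-1}$, $\beta^2\mathcal{E}_t$ and the factor $4\cdot 2=8$ in front of $\beta\tilde\eta^2(\mathbb{E}[f(\boldsymbol{\theta}^{t-1})]-f(\boldsymbol{\theta}^{\star}))$, while the two-stage variance accounting yields the $\frac{12\tilde\eta^2\sigma^2}{Ks}$ term, giving exactly the claimed inequality.

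The main obstacle I anticipate is the careful two-level variance accounting: the mini-batch gradients $g_i(\boldsymbol{\theta}_{i,k}^t)$ are neither independent across local steps $k$ (the iterate $\boldsymbol{\theta}_{i,k}^t$ depends on earlier noise) nor, under Option~I, independent of $\boldsymbol{c}_i$, so one must arrange the conditioning so that the relevant increments form a martingale-difference sequence before invoking Lemma~\ref{lem:independent}; obtaining the variance-reduction denominator $Ks$ (rather than $K^2s$ or $Ks^2$) requires the martingale, tight version of that lemma for the within-client sum over $k$ and the plain independence of the uniformly sampled clients for the outer sum over $i$. A secondary subtlety is that the averaged control variate $\frac1m\sum_i\boldsymbol{c}_i$ is tied to $\boldsymbol{m}$ only across rounds, so pieces (ii) and (iii) must be phrased exactly as the quantities $\mathcal{C}_{t-1}$ and $\mathcal{D}_{t-1}$ defined in the statement (with their outer expectations and the $\boldsymbol{\theta}^{\star}$ anchor) rather than naively; once these are in place the remainder is routine application of smoothness, convexity, and the relaxed triangle inequality.
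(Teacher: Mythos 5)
Your proposal is correct and follows essentially the same route as the paper's proof: write the server update as $-\frac{\tilde\eta}{Ks}\sum_{k,i\in\mathcal{S}}\bigl(g_i(\boldsymbol{\theta}_{i,k-1}^t)-\boldsymbol{c}_i+\boldsymbol{m}\bigr)$, split it into four pieces with the relaxed triangle inequality to recover $4\tilde\eta^2\mathcal{C}_{t-1}$, $4\tilde\eta^2\mathcal{D}_{t-1}$, $4\tilde\eta^2\beta^2\mathcal{E}_t$ and an $8\beta\tilde\eta^2\bigl(\mathbb{E}[f(\boldsymbol{\theta}^{t-1})]-f(\boldsymbol{\theta}^\star)\bigr)$ term, and account for the mini-batch/sampling variance separately to obtain the $\frac{12\tilde\eta^2\sigma^2}{Ks}$ term. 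The only cosmetic differences are that you separate mean and variance before decomposing (the paper decomposes first and bounds the variance of the relevant averages afterwards) and that you extract the function-value piece via $\|\nabla F(\boldsymbol{\theta}^{t-1})\|^2\le 2\beta\bigl(f(\boldsymbol{\theta}^{t-1})-f(\boldsymbol{\theta}^\star)\bigr)$, whereas the paper uses $\frac1m\sum_i\|\nabla F_i(\boldsymbol{\theta}^{t-1})-\nabla F_i(\boldsymbol{\theta}^\star)\|^2\le 2\beta\bigl(f(\boldsymbol{\theta}^{t-1})-f(\boldsymbol{\theta}^\star)\bigr)$ through per-client smoothness and convexity — both yield the same constants.
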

\begin{proof}

The server update in round $t$ can be written as follows (dropping the superscript $t$ everywhere)
\begin{align}
	\begin{split}
		\mathbb{E}\|\Delta \boldsymbol{\theta}\|^2=\mathbb{E}\left\|-\frac{\tilde{\eta}}{K s} \sum_{k, i \in \mathcal{S}} \boldsymbol{\theta}_{i, k}\right\|^2=\mathbb{E}\left\|\frac{\tilde{\eta}}{K s} \sum_{k, i \in \mathcal{S}}\left(g_i\left(\boldsymbol{\theta}_{i, k-1}\right)+\boldsymbol{m}-\boldsymbol{c}_i\right)\right\|^2,
	\end{split}
\end{align}
which can then be expanded as
\begin{align}
	\begin{split}
		& \mathbb{E}\|\Delta \boldsymbol{\theta}\|^2 \leq \mathbb{E}\left\|\frac{\tilde{\eta}}{K s} \sum_{k, i \in \mathcal{S}}\left(g_i\left(\boldsymbol{\theta}_{i, k-1}\right)+\boldsymbol{m}-\boldsymbol{c}_i\right)\right\|^2 \\
		& \leq 4 \mathbb{E}\left\|\frac{\tilde{\eta}}{K s} \sum_{k, i \in \mathcal{S}} g_i\left(\boldsymbol{\theta}_{i, k-1}\right)-\nabla F_i(\boldsymbol{\theta})\right\|^2+4 \tilde{\eta}^2 \mathbb{E}\|\boldsymbol{m}\|^2+4 \mathbb{E}\left\|\frac{\tilde{\eta}}{K s} \sum_{k, i \in \mathcal{S}} \nabla F_i\left(\boldsymbol{\theta}^{\star}\right)-\boldsymbol{c}_i\right\|^2 \\
		& \quad+4 \mathbb{E}\left\|\frac{\tilde{\eta}}{K s} \sum_{k, i \in \mathcal{S}} \nabla F_i(\boldsymbol{\theta})-\nabla F_i\left(\boldsymbol{\theta}^{\star}\right)\right\|^2
	\end{split}
\end{align}
\begin{align}
	\begin{split}
		& \leq 4 \mathbb{E}\left\|\frac{\tilde{\eta}}{K s} \sum_{k, i \in \mathcal{S}} g_i\left(\boldsymbol{\theta}_{i, k-1}\right)-\nabla F_i(\boldsymbol{\theta})\right\|^2+4 \tilde{\eta}^2 \mathbb{E}\|\boldsymbol{m}\|^2+4 \mathbb{E}\left\|\frac{\tilde{\eta}}{s} \sum_{i \in \mathcal{S}} \nabla F_i\left(\boldsymbol{\theta}^{\star}\right)-\boldsymbol{c}_i\right\|^2 \\
		& +8 \beta \tilde{\eta}^2\left(\mathbb{E}[f(\boldsymbol{\theta})]-f\left(\boldsymbol{\theta}^{\star}\right)\right) \\
		& \leq 4 \mathbb{E}\left\|\frac{\tilde{\eta}}{K s} \sum_{k, i \in \mathcal{S}} \nabla F_i\left(\boldsymbol{\theta}_{i, k-1}\right)-\nabla F_i(\boldsymbol{\theta})\right\|^2+4 \tilde{\eta}^2\|\mathbb{E}[\boldsymbol{m}]\|^2+4\left\|\frac{\tilde{\eta}}{s} \sum_{i \in \mathcal{S}} \nabla F_i\left(\boldsymbol{\theta}^{\star}\right)-\mathbb{E}\left[\boldsymbol{c}_i\right]\right\|^2 \\
		& +8 \beta \tilde{\eta}^2\left(\mathbb{E}[f(\boldsymbol{\theta})]-f\left(\boldsymbol{\theta}^{\star}\right)\right)+\frac{12 \tilde{\eta}^2 \sigma^2}{K s} . \\
		&
	\end{split}
\end{align}
The inequality before the last used the smoothness of $\left\{F_i\right\}$. The variance of $\left(\frac{1}{K s} \sum_{k, i \in \mathcal{S}} g_i\left(\boldsymbol{\theta}_{i, k-1}\right)\right)$ is bounded by $\sigma^2 / K s$. Similarly, $\boldsymbol{c}_j$  for any $j \in[m]$ has variance smaller than $\sigma^2 / K$ and hence the variance of $\left(\frac{1}{m} \sum_{i \in \mathcal{S}} \boldsymbol{c}_i\right)$ is smaller than $\sigma^2 / K s$.
\begin{align}
	\begin{split}
		& \mathbb{E}\|\Delta \boldsymbol{\theta}\|^2 \leq \frac{4 \tilde{\eta}^2}{K m} \sum_{k, i} \mathbb{E}\left\|\nabla F_i\left(\boldsymbol{\theta}_{i, k-1}\right)-\nabla F_i(\boldsymbol{\theta})\right\|^2+4 \tilde{\eta}^2\|\mathbb{E}[\boldsymbol{m}]\|^2+\frac{4 \tilde{\eta}^2}{m} \sum_i\left\|\nabla F_i\left(\boldsymbol{\theta}^{\star}\right)-\mathbb{E}\left[\boldsymbol{c}_i\right]\right\|^2 \\
		& +8 \beta \tilde{\eta}^2\left(\mathbb{E}[f(\boldsymbol{\theta})]-f\left(\boldsymbol{\theta}^{\star}\right)\right)+\frac{12 \tilde{\eta}^2 \sigma^2}{K s} \\
		& \leq \underbrace{\frac{4 \tilde{\eta}^2}{K m} \sum_{k, i} \mathbb{E}\left\|\nabla F_i\left(\boldsymbol{\theta}_{i, k-1}\right)-\nabla F_i(\boldsymbol{\theta})\right\|^2}_{\mathcal{T}_1}+\frac{4 \tilde{\eta}^2}{m} \sum_i\left\|\nabla F_i\left(\boldsymbol{\theta}^{\star}\right)-\mathbb{E}\left[\boldsymbol{c}_i\right]\right\|^2 +4 \tilde{\eta}^2\|\mathbb{E}[\boldsymbol{m}]\|^2\\
		& +8 \beta \tilde{\eta}^2\left(\mathbb{E}[f(\boldsymbol{\theta})]-f\left(\boldsymbol{\theta}^{\star}\right)\right)+\frac{12 \tilde{\eta}^2 \sigma^2}{K s} . \\
		&
	\end{split}
\end{align}

Since the gradient of $F_i$ is $\beta$-Lipschitz, $\mathcal{T}_1 \leq \frac{\beta^2 4 \tilde{\eta}^2}{K m} \sum_{k, i} \mathbb{E}\left\|\boldsymbol{\theta}_{i, k-1}-\boldsymbol{\theta}\right\|^2=$ $4 \tilde{\eta}^2 \beta^2 \mathcal{E}$. The definition of the error in the control variate $\mathcal{C}_{t-1}:=\frac{1}{m} \sum_{j=1}^m \mathbb{E}\left\|\mathbb{E}\left[\boldsymbol{c}_i\right]-\nabla F_i\left(\boldsymbol{\theta}^{\star}\right)\right\|^2$ completes the proof.\\
\end{proof}

\paragraph{Change in control lag.}
We have previously related the variance of the server update to the control lag. We now examine how the control-lag grows each round.
\begin{lemma}\label{lem:control-error-sample}
	For updates  with the control update and Assumptions \ref{asm:strong-convexity}--\ref{asm:smoothness}, the following holds true for any $\tilde{\eta}:=\eta_k^{t} \alpha K \in[0,1 / \beta]:$
	$$
	\mathcal{C}_t \leq\left(1-\frac{s}{m}\right) \mathcal{C}_{t-1}+\frac{s}{m}\left(4 \beta\left(\mathbb{E}\left[f\left(\boldsymbol{\theta}^{t-1}\right)\right]-f\left(\boldsymbol{\theta}^{\star}\right)\right)+2 \beta^2 \mathcal{E}_t\right) .
	$$
\end{lemma}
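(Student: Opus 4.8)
The plan is to mirror the control-lag analysis of SCAFFOLD, the one new point being that the momentum variable $\boldsymbol{m}$ must be shown to drop out of the client control-variate update. First I would put Option II into a clean form: substituting the local recursion $\boldsymbol{\theta}_{t-1}-\boldsymbol{\theta}_{i,K}^{t}=\sum_{k}\eta_k^{t}\bigl(g_i(\boldsymbol{\theta}_{i,k}^{t})-\boldsymbol{c}_i+\boldsymbol{m}\bigr)$ into $\boldsymbol{c}_i^{+}=\boldsymbol{c}_i-\boldsymbol{m}+\tfrac{1}{\sum_k\eta_k^t}(\boldsymbol{\theta}_{t-1}-\boldsymbol{\theta}_{i,K}^t)$ cancels the $\boldsymbol{c}_i$ and $\boldsymbol{m}$ terms, leaving $\boldsymbol{c}_i^{+}=\bigl(\sum_k\eta_k^t\bigr)^{-1}\sum_k\eta_k^t\, g_i(\boldsymbol{\theta}_{i,k}^t)$, i.e.\ a convex combination of the stochastic gradients along the local trajectory (equal to $\tfrac1K\sum_k g_i(\boldsymbol{\theta}_{i,k}^t)$ in the constant-step regime $\eta_k^t\equiv\eta_l$ used for Theorem~\ref{T 5.2}; Option I is the even simpler $g_i(\boldsymbol{\theta}_t)$). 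Taking the conditional expectation that erases round-$t$ gradient noise, $\mathbb{E}[\boldsymbol{c}_i^{+}]$ becomes the matching convex combination of the true gradients $\nabla F_i(\boldsymbol{\theta}_{i,k}^t)$.

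Next I would isolate the client-sampling randomness. Since the $s$ active clients are drawn uniformly, client $i$ participates with marginal probability $s/m$; conditioning on the round-$(t-1)$ history, $\boldsymbol{c}_i^t=\boldsymbol{c}_i^{+}$ on the participation event and $\boldsymbol{c}_i^t=\boldsymbol{c}_i^{t-1}$ otherwise, so $\mathbb{E}[\boldsymbol{c}_i^t]=\tfrac{s}{m}\mathbb{E}[\boldsymbol{c}_i^{+}]+(1-\tfrac{s}{m})\mathbb{E}[\boldsymbol{c}_i^{t-1}]$. Subtracting $\nabla F_i(\boldsymbol{\theta}^\star)$ and applying convexity of $\|\cdot\|^2$ with the weights $\tfrac{s}{m},\,1-\tfrac{s}{m}$,
$$\bigl\|\mathbb{E}[\boldsymbol{c}_i^t]-\nabla F_i(\boldsymbol{\theta}^\star)\bigr\|^2\le \tfrac{s}{m}\bigl\|\mathbb{E}[\boldsymbol{c}_i^{+}]-\nabla F_i(\boldsymbol{\theta}^\star)\bigr\|^2+\bigl(1-\tfrac{s}{m}\bigr)\bigl\|\mathbb{E}[\boldsymbol{c}_i^{t-1}]-\nabla F_i(\boldsymbol{\theta}^\star)\bigr\|^2 .$$
Averaging over $i$ and taking total expectation, the last term is precisely $(1-\tfrac{s}{m})\mathcal{C}_{t-1}$, so what remains is to bound $\tfrac1m\sum_i\mathbb{E}\|\mathbb{E}[\boldsymbol{c}_i^{+}]-\nabla F_i(\boldsymbol{\theta}^\star)\|^2$ by $4\beta(\mathbb{E}[f(\boldsymbol{\theta}^{t-1})]-f(\boldsymbol{\theta}^\star))+2\beta^2\mathcal{E}_t$.

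For that residual term I would write $\mathbb{E}[\boldsymbol{c}_i^{+}]-\nabla F_i(\boldsymbol{\theta}^\star)$ as the convex combination over $k$ of $\nabla F_i(\boldsymbol{\theta}_{i,k}^t)-\nabla F_i(\boldsymbol{\theta}^\star)$, pull the square inside by Jensen, and split each summand with the relaxed triangle inequality (Lemma~\ref{lem:norm-sum}) around $\boldsymbol{\theta}^{t-1}$: $\|\nabla F_i(\boldsymbol{\theta}_{i,k}^t)-\nabla F_i(\boldsymbol{\theta}^\star)\|^2\le 2\|\nabla F_i(\boldsymbol{\theta}_{i,k}^t)-\nabla F_i(\boldsymbol{\theta}^{t-1})\|^2+2\|\nabla F_i(\boldsymbol{\theta}^{t-1})-\nabla F_i(\boldsymbol{\theta}^\star)\|^2$. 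The first piece is $\le 2\beta^2\|\boldsymbol{\theta}_{i,k}^t-\boldsymbol{\theta}^{t-1}\|^2$ by $\beta$-smoothness, and averaging over $i,k$ recovers $2\beta^2\mathcal{E}_t$. For the second piece I would invoke the smoothness-plus-convexity bound $\|\nabla F_i(x)-\nabla F_i(x^\star)\|^2\le 2\beta\bigl(F_i(x)-F_i(x^\star)-\langle\nabla F_i(x^\star),x-x^\star\rangle\bigr)$; averaging over $i$ the inner-product terms sum to $\langle\nabla F(\boldsymbol{\theta}^\star),\boldsymbol{\theta}^{t-1}-\boldsymbol{\theta}^\star\rangle=0$ since $\nabla F(\boldsymbol{\theta}^\star)=0$, leaving $2\beta(\mathbb{E}[f(\boldsymbol{\theta}^{t-1})]-f(\boldsymbol{\theta}^\star))$, which with the factor $2$ gives $4\beta(\mathbb{E}[f(\boldsymbol{\theta}^{t-1})]-f(\boldsymbol{\theta}^\star))$; assembling the pieces yields the stated recursion.

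The main obstacle I expect is the bookkeeping of nested expectations: one must fix the filtration so that the ``$\mathbb{E}[\cdot]$'' appearing inside $\mathcal{C}_t$ is exactly the conditional expectation that removes the current round's gradient noise while retaining the client-sampling event, and then use the tower property together with independence of the sampling from the past to reproduce $\mathcal{C}_{t-1}$ on the non-participation event. A secondary technicality is the learning-rate schedule in Eq.~(\ref{eq3}): with decaying $\eta_k^t$ the Option-II control variate is an $\eta_k^t$-weighted, not uniform, average of gradients, so identifying the resulting average of $\|\boldsymbol{\theta}_{i,k}^t-\boldsymbol{\theta}^{t-1}\|^2$ with $\mathcal{E}_t$ requires either specializing to constant steps (as done for the convergence theorem) or paying an extra constant using $\rho\le \eta_k^t/\eta_l\le 1$ to pass from $\eta_k^t$-weights to $1/K$-weights.
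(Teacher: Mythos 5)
Your proposal is correct and follows essentially the same route as the paper: conditioning on the client-sampling event, applying Jensen/convexity of $\|\cdot\|^2$ to the mixture and to the average over $k$, splitting around $\boldsymbol{\theta}^{t-1}$ with the relaxed triangle inequality, and using $\beta$-smoothness plus the convexity bound $\|\nabla F_i(\boldsymbol{\theta}^{t-1})-\nabla F_i(\boldsymbol{\theta}^\star)\|^2\le 2\beta(F_i(\boldsymbol{\theta}^{t-1})-F_i(\boldsymbol{\theta}^\star)-\langle\nabla F_i(\boldsymbol{\theta}^\star),\cdot\rangle)$ to obtain the $4\beta(\mathbb{E}[f(\boldsymbol{\theta}^{t-1})]-f(\boldsymbol{\theta}^\star))$ and $2\beta^2\mathcal{E}_t$ terms. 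Your extra preliminaries (reducing Option II to an average of local gradients and flagging the filtration and step-size-weighting caveats) are sound refinements of details the paper leaves implicit, not a different argument.
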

\begin{proof}
    
Recall that after round $t$, the control update rule  implies that $\boldsymbol{c}_i^t$ is set as per
$$
\boldsymbol{c}_i^t= \begin{cases}\boldsymbol{c}_i^{t-1} & \text { if } i \notin \mathcal{S}^t \text { i.e. with probability }\left(1-\frac{s}{m}\right) . \\ \frac{1}{K} \sum_{k=1}^K g_i\left(\boldsymbol{\theta}_{i, k-1}^t\right) & \text { with probability } \frac{s}{m} .\end{cases}
$$
Taking expectations on both sides yields
$$
\mathbb{E}\left[\boldsymbol{c}_i^t\right]=\left(1-\frac{s}{m}\right) \mathbb{E}\left[\boldsymbol{c}_i^{t-1}\right]+\frac{s}{K m} \sum_{k=1}^K \mathbb{E}\left[\nabla F_i\left(\boldsymbol{\theta}_{i, k-1}^t\right)\right], \forall i \in[m] .
$$
Plugging the above expression in the definition of $\mathcal{C}_t$ we get,
\begin{align}
	\begin{split}
		\mathcal{C}_t & =\frac{1}{m} \sum_{i=1}^m\left\|\mathbb{E}\left[\boldsymbol{c}_i^t\right]-\nabla F_i\left(\boldsymbol{\theta}^{\star}\right)\right\|^2 \\
		& =\frac{1}{m} \sum_{i=1}^m\left\|\left(1-\frac{s}{m}\right)\left(\mathbb{E}\left[\boldsymbol{c}_i^{t-1}\right]-\nabla F_i\left(\boldsymbol{\theta}^{\star}\right)\right)+\frac{s}{m}\left(\frac{1}{K} \sum_{k=1}^K \mathbb{E}\left[\nabla F_i\left(\boldsymbol{\theta}_{i, k-1}^t\right)\right]-\nabla F_i\left(\boldsymbol{\theta}^{\star}\right)\right)\right\|^2 \\
		& \leq\left(1-\frac{s}{m}\right) \mathcal{C}_{t-1}+\frac{s}{m^2 K} \sum_{k=1}^K \mathbb{E}\left\|\nabla F_i\left(\boldsymbol{\theta}_{i, k-1}^t\right)-\nabla F_i\left(\boldsymbol{\theta}^{\star}\right)\right\|^2 .
	\end{split}
\end{align}
The final step applied Jensen's inequality twice. We can then further simplify using the relaxed triangle inequality as,
\begin{align}
	\begin{split}
		\mathbb{E}_{t-1}\left[\mathcal{C}_t\right] & \leq\left(1-\frac{s}{m}\right) \mathcal{C}_{t-1}+\frac{s}{m^2 K} \sum_{i, k} \mathbb{E}\left\|\nabla F_i\left(\boldsymbol{\theta}_{i, k-1}^t\right)-\nabla F_i\left(\boldsymbol{\theta}^{\star}\right)\right\|^2 \\
		& \leq\left(1-\frac{s}{m}\right) \mathcal{C}_{t-1}+\frac{2 s}{m^2} \sum_i \mathbb{E}\left\|\nabla F_i\left(\boldsymbol{\theta}^{t-1}\right)-\nabla F_i\left(\boldsymbol{\theta}^{\star}\right)\right\|^2+\frac{2 s}{m^2 K} \sum_{i, k} \mathbb{E}\left\|\nabla F_i\left(\boldsymbol{\theta}_{i, k-1}^t\right)-\nabla F_i\left(\boldsymbol{\theta}^{t-1}\right)\right\|^2 \\
		& \leq\left(1-\frac{s}{m}\right) \mathcal{C}_{t-1}+\frac{2 s}{m^2} \sum_i \mathbb{E}\left\|\nabla F_i\left(\boldsymbol{\theta}^{t-1}\right)-\nabla F_i\left(\boldsymbol{\theta}^{\star}\right)\right\|^2+\frac{2 s}{m^2 K} \beta^2 \sum_{i, k} \mathbb{E}\left\|\boldsymbol{\theta}_{i, k-1}^t-\boldsymbol{\theta}^{t-1}\right\|^2 \\
		&\leq\left(1-\frac{s}{m}\right) \mathcal{C}_{t-1}+\frac{s}{m}\left(4 \beta\left(\mathbb{E}\left[f\left(\boldsymbol{\theta}^{t-1}\right)\right]-f\left(\boldsymbol{\theta}^{\star}\right)\right)+\beta^2 \mathcal{E}_t\right) .
	\end{split}
\end{align}
The last two inequalities follow from smoothness of $\left\{F_i\right\}$ and the definition $\mathcal{E}_t=\frac{1}{m K} \beta^2 \sum_{i, k} \mathbb{E}\left\|\boldsymbol{\theta}_{i, k-1}^t-\boldsymbol{\theta}^{t-1}\right\|^2$.
\end{proof}
\begin{lemma}
	Assumptions  \ref{asm:strong-convexity}--\ref{asm:smoothness}, the following holds true for any $\tilde{\eta}:=\eta_k^{t} \alpha K \in[0,1 / \beta]:$
	$$
	\mathcal{D}_t \leq\left(1-\gamma\right) \mathcal{D}_{t-1}+\gamma   \left(4 \beta\left(\mathbb{E}\left[f\left(\boldsymbol{\theta}^{t-1}\right)\right]-f\left(\boldsymbol{\theta}^{\star}\right)\right)+2 \beta^2 \mathcal{E}_t\right) .
	$$
\end{lemma}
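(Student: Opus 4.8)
The plan is to mirror the argument used for the control-lag recursion in Lemma~\ref{lem:control-error-sample}, since $\mathcal{D}_t$ is the momentum analogue of $\mathcal{C}_t$. First I would record that, because $\boldsymbol{\theta}^\star$ minimizes $F$, we have $\nabla F(\boldsymbol{\theta}^\star)=\tfrac1m\sum_{i=1}^m\nabla F_i(\boldsymbol{\theta}^\star)=0$, so $\mathcal{D}_t=\mathbb{E}\|\boldsymbol{m}^t-\nabla F(\boldsymbol{\theta}^\star)\|^2=\mathbb{E}\|\boldsymbol{m}^t\|^2$. Then I would unroll one step of the server control update: as in the SCAFFOLD-style treatment, the Option~II client update collapses to the running average $\boldsymbol{c}_i^t=\tfrac1K\sum_{k=1}^K g_i(\boldsymbol{\theta}_{i,k-1}^t)$ for the sampled clients $i\in\mathcal{S}^t$, and the momentum update is $\boldsymbol{m}^t=(1-\gamma)\boldsymbol{m}^{t-1}+\tfrac{\gamma}{s}\sum_{i\in\mathcal{S}^t}\boldsymbol{c}_i^t$. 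Taking the conditional expectation over the round-$t$ client sampling and stochastic gradients yields $\mathbb{E}[\boldsymbol{m}^t]=(1-\gamma)\mathbb{E}[\boldsymbol{m}^{t-1}]+\tfrac{\gamma}{Km}\sum_{i=1}^m\sum_{k=1}^K\mathbb{E}[\nabla F_i(\boldsymbol{\theta}_{i,k-1}^t)]$, the momentum counterpart of the formula for $\mathbb{E}[\boldsymbol{c}_i^t]$ in the proof of Lemma~\ref{lem:control-error-sample}.

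Second, I would substitute this into the definition of $\mathcal{D}_t$ and write $\boldsymbol{m}^t-\nabla F(\boldsymbol{\theta}^\star)$ as the convex combination, with weights $1-\gamma$ and $\gamma$, of $\boldsymbol{m}^{t-1}-\nabla F(\boldsymbol{\theta}^\star)$ and $\tfrac{1}{Km}\sum_{i,k}\bigl(\nabla F_i(\boldsymbol{\theta}_{i,k-1}^t)-\nabla F_i(\boldsymbol{\theta}^\star)\bigr)$, using $\nabla F(\boldsymbol{\theta}^\star)=\tfrac1m\sum_i\nabla F_i(\boldsymbol{\theta}^\star)$. Applying Jensen's inequality (convexity of $\|\cdot\|^2$) first to this two-term combination and then again to pull the $\tfrac{1}{Km}\sum_{i,k}$ average out of the squared norm — exactly the two Jensen steps of Lemma~\ref{lem:control-error-sample} — gives
\[
\mathcal{D}_t\le(1-\gamma)\,\mathcal{D}_{t-1}+\frac{\gamma}{Km}\sum_{i,k}\mathbb{E}\bigl\|\nabla F_i(\boldsymbol{\theta}_{i,k-1}^t)-\nabla F_i(\boldsymbol{\theta}^\star)\bigr\|^2 .
\]

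Third, I would control the residual term. The relaxed triangle inequality of Lemma~\ref{lem:norm-sum} with $a=1$ gives $\|\nabla F_i(\boldsymbol{\theta}_{i,k-1}^t)-\nabla F_i(\boldsymbol{\theta}^\star)\|^2\le 2\|\nabla F_i(\boldsymbol{\theta}^{t-1})-\nabla F_i(\boldsymbol{\theta}^\star)\|^2+2\|\nabla F_i(\boldsymbol{\theta}_{i,k-1}^t)-\nabla F_i(\boldsymbol{\theta}^{t-1})\|^2$. For the first piece, $\beta$-smoothness together with convexity (Assumptions~\ref{asm:strong-convexity}--\ref{asm:smoothness}) and $\nabla f(\boldsymbol{\theta}^\star)=0$ yield $\tfrac1m\sum_i\|\nabla F_i(\boldsymbol{\theta}^{t-1})-\nabla F_i(\boldsymbol{\theta}^\star)\|^2\le 2\beta\bigl(f(\boldsymbol{\theta}^{t-1})-f(\boldsymbol{\theta}^\star)\bigr)$, the same inequality used in Lemma~\ref{lem:control-error-sample}. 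For the second piece, $\beta$-Lipschitz gradients give $\|\nabla F_i(\boldsymbol{\theta}_{i,k-1}^t)-\nabla F_i(\boldsymbol{\theta}^{t-1})\|^2\le\beta^2\|\boldsymbol{\theta}_{i,k-1}^t-\boldsymbol{\theta}^{t-1}\|^2$, and averaging over $i,k$ recognizes the client-drift quantity $\mathcal{E}_t$. Collecting the constants ($2\cdot2\beta$ on the function gap, $2\cdot\beta^2$ on the drift) produces exactly $\mathcal{D}_t\le(1-\gamma)\mathcal{D}_{t-1}+\gamma\bigl(4\beta(\mathbb{E}[f(\boldsymbol{\theta}^{t-1})]-f(\boldsymbol{\theta}^\star))+2\beta^2\mathcal{E}_t\bigr)$; the step-size restriction $\tilde{\eta}\le1/\beta$ is not actually needed for this recursion and is kept only for uniformity with the companion lemmas.

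I expect the only non-routine point to be the bookkeeping of nested expectations: the inner expectation must absorb the round-$t$ randomness (client sampling and gradient noise) so that the convex-combination/Jensen step is legitimate, while the outer expectation runs over the trajectory up to round $t-1$, matching the inner-$\mathbb{E}$/outer-$\mathbb{E}$ convention in the definitions of $\mathcal{C}_t$ and $\mathcal{D}_t$. It is also worth double-checking that, under the analysis's modeling choice, the Option~II client update indeed reduces to the running gradient average so that $\boldsymbol{m}^t$ is a genuine convex combination of $\boldsymbol{m}^{t-1}$ and per-client gradient averages; granting that, every remaining step is a verbatim transcription of the $\mathcal{C}_t$ argument with the sampling ratio $s/m$ replaced by $\gamma$.
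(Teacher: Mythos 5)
Your proposal is correct and follows essentially the same route as the paper's own proof: write $\boldsymbol{m}^t$ as the convex combination $(1-\gamma)\boldsymbol{m}^{t-1}+\gamma\cdot(\text{averaged client gradients})$, apply Jensen to the squared norm, split the residual with the relaxed triangle inequality into the gradient gap at $\boldsymbol{\theta}^{t-1}$ versus $\boldsymbol{\theta}^\star$ (bounded by $2\beta$ times the function gap via smoothness and $\nabla F(\boldsymbol{\theta}^\star)=0$) and the drift term (bounded by $\beta^2\mathcal{E}_t$), exactly the Lemma~\ref{lem:control-error-sample} argument with $s/m$ replaced by $\gamma$. Your bookkeeping of the conditional expectation over round-$t$ sampling is, if anything, slightly more careful than the paper's, and your constants match the lemma statement.
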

\begin{proof}

Recall that after round $t$, the control update rule  implies that $\boldsymbol{c}_i^t$ is set as 
\begin{align}
	\begin{split}
		& \boldsymbol{m}^t=\boldsymbol{m}^{t-1}+\beta \frac{1}{m} \sum_{i=1}^m\left(c_i^t-\boldsymbol{m}^{t-1}\right) =(1-\beta) \boldsymbol{m}^{t-1}+\beta \frac{1}{m} \sum_{i=1}^m c_i^t
	\end{split}
\end{align}
\begin{align}
	\begin{split}
		& \mathbb{E}\left\|\boldsymbol{m}^t-\nabla F\left(\boldsymbol{\theta}^{\star}\right)\right\|^2=\mathbb{E}\left\|\left(1-\gamma\right) \boldsymbol{m}^{t-1}+\gamma \frac{1}{m} \sum_{i=1}^m \frac{1}{K} \sum_{k=1}^K \nabla F_i\left(\boldsymbol{\theta}_{i,k-1}^{t} ; \zeta\right)-\nabla F\left(x^{\star}\right)\right\|^2 \\
		& \leq\left(1-\gamma\right) \mathbb{E}\left\|\boldsymbol{m}^{t-1}-\nabla F\left(\boldsymbol{\theta}^{\star}\right)\right\|^2+\gamma \mathbb{E}\left\|\frac{1}{m} \sum_{i=1}^m \frac{1}{K} \sum_{k=1}^K \nabla F_i\left(\boldsymbol{\theta}_{i,k-1}^{t} ; \zeta\right)-\nabla F\left(\boldsymbol{\theta}^{\star}\right)\right\|^2 \\
		& \leq\left(1-\gamma\right) \mathbb{E}\left\|\boldsymbol{m}^{t-1}-\nabla F\left(\boldsymbol{\theta}^{\star}\right)\right\|^2+\frac{2}{m} \gamma \mathbb{E} \sum_{i=1}^m\left\|\nabla F\left(\boldsymbol{\theta}^{t-1}\right)-\nabla F\left(\boldsymbol{\theta}^{\star}\right)\right\|^2 \\
		& +\frac{2}{m K} \gamma \sum_{i, k} \mathbb{E}\left\|\nabla F_i\left(\boldsymbol{\theta}_i^{t-1, t}\right)-\nabla F_i\left(\boldsymbol{\theta}^{t-1}\right)\right\|^2 \\
		& \leq\left(1-\gamma\right) \mathbb{E}\left\|\boldsymbol{m}^{t-1}-\nabla F\left(\boldsymbol{\theta}^{\star}\right)\right\|^2+\gamma\left(4 \beta\left(\mathbb{E}\left[f\left(\boldsymbol{\theta}^{t-1}\right)\right]-f\left(\boldsymbol{\theta}^{\star}\right)\right)+\beta^2 \mathcal{E}_t\right)
	\end{split}
\end{align}
\end{proof}

\paragraph{Bounding client-drift.} We will now bound the final source of error which is the client-drift.
\begin{lemma}\label{lem:drift-bound-sampling}
	Suppose our step-sizes satisfy $\eta_k^{t}=\eta_l \leq \frac{1}{81 \beta K \alpha}$ and $F_i$ satisfies Assumptions \ref{asm:strong-convexity}--\ref{asm:smoothness}. Then, for any global $\alpha \geq 1$ we can bound the drift as
	$$
	3 \beta \tilde{\eta} \mathcal{E}_t \leq \frac{ \tilde{\eta}^2}{3} \mathcal{C}_{t-1}+\frac{ \tilde{\eta}^2}{3} \mathcal{D}_{t-1}+\frac{\tilde{\eta}}{25 \alpha^2}\left(\mathbb{E}\left[f\left(\boldsymbol{\theta}^{t-1}\right)\right]-f\left(\boldsymbol{\theta}^{\star}\right)\right)+\frac{\tilde{\eta}^2}{K \alpha^2} \sigma^2 .
	$$
\end{lemma}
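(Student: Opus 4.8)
The plan is to run the SCAFFOLD-style client-drift recursion, carrying the extra momentum control variate $\boldsymbol m$ through the computation. Fix a round $t$ and a client $i$, abbreviate $\boldsymbol y_k:=\boldsymbol\theta_{i,k}^t$ (so $\boldsymbol y_0=\boldsymbol\theta^{t-1}$), and note that throughout round $t$ the quantities $\boldsymbol c_i=\boldsymbol c_i^{t-1}$ and $\boldsymbol m=\boldsymbol m^{t-1}$ are frozen. From the local update $\boldsymbol y_{k+1}=\boldsymbol y_k-\eta_l\bigl(g_i(\boldsymbol y_k)-\boldsymbol c_i+\boldsymbol m\bigr)$ I would first peel off the stochastic-gradient noise: conditioning on the history up to step $k$ and using unbiasedness and the variance bound (Assumption~\ref{asm:variance}),
$$\mathbb{E}\|\boldsymbol y_{k+1}-\boldsymbol\theta^{t-1}\|^2\le\mathbb{E}\bigl\|\boldsymbol y_k-\boldsymbol\theta^{t-1}-\eta_l(\nabla F_i(\boldsymbol y_k)-\boldsymbol c_i+\boldsymbol m)\bigr\|^2+\eta_l^2\sigma^2.$$
Then I would split the deterministic part as $A-B$ with $A=\bigl(\boldsymbol y_k-\eta_l\nabla F_i(\boldsymbol y_k)\bigr)-\bigl(\boldsymbol\theta^{t-1}-\eta_l\nabla F_i(\boldsymbol\theta^{t-1})\bigr)$ and $B=\eta_l\bigl(\nabla F_i(\boldsymbol\theta^{t-1})-\boldsymbol c_i+\boldsymbol m\bigr)$. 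Since $\eta_l\le\tfrac1\beta\le\tfrac2{\beta+\mu}$, Lemma~\ref{L1} makes the gradient step nonexpansive, so $\|A\|\le\|\boldsymbol y_k-\boldsymbol\theta^{t-1}\|$; combining this with the relaxed triangle inequality (Lemma~\ref{lem:norm-sum}, item~1 with $a=\tfrac1{K-1}$) gives the one-step recursion
$$\mathbb{E}\|\boldsymbol y_{k+1}-\boldsymbol\theta^{t-1}\|^2\le\Bigl(1+\tfrac1{K-1}\Bigr)\mathbb{E}\|\boldsymbol y_k-\boldsymbol\theta^{t-1}\|^2+K\eta_l^2\,\mathbb{E}\|\nabla F_i(\boldsymbol\theta^{t-1})-\boldsymbol c_i+\boldsymbol m\|^2+\eta_l^2\sigma^2.$$

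Next I would bound the frozen correction term uniformly in $k$. Using $\nabla F(\boldsymbol\theta^\star)=0$, I split $\nabla F_i(\boldsymbol\theta^{t-1})-\boldsymbol c_i+\boldsymbol m=(\nabla F_i(\boldsymbol\theta^{t-1})-\nabla F_i(\boldsymbol\theta^\star))-(\boldsymbol c_i-\nabla F_i(\boldsymbol\theta^\star))+(\boldsymbol m-\nabla F(\boldsymbol\theta^\star))$, apply Lemma~\ref{lem:norm-sum} (item~2 with $\tau=3$), and average over $i$: the first piece is at most $2\beta(\mathbb{E}[f(\boldsymbol\theta^{t-1})]-f(\boldsymbol\theta^\star))$ by $\beta$-smoothness and convexity (the cross terms drop since $\tfrac1m\sum_i\nabla F_i(\boldsymbol\theta^\star)=\nabla F(\boldsymbol\theta^\star)=0$); the second, after the bias–variance split of $\boldsymbol c_i$ together with the fact that $\boldsymbol c_i$ is an average of $K$ stochastic gradients (variance $\le\sigma^2/K$, as used in Lemma~\ref{lem:server-variance-sample}), is at most $\mathcal C_{t-1}+\sigma^2/K$; and the third is exactly $\mathcal D_{t-1}$ by definition. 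Unrolling the one-step recursion from $\boldsymbol y_0=\boldsymbol\theta^{t-1}$, bounding each accumulated factor $(1+\tfrac1{K-1})^{k-1-j}\le e<3$, summing over $k=1,\dots,K$, and averaging over $i$ yields
$$\mathcal E_t\le 18\beta K^2\eta_l^2\bigl(\mathbb{E}[f(\boldsymbol\theta^{t-1})]-f(\boldsymbol\theta^\star)\bigr)+9K^2\eta_l^2\bigl(\mathcal C_{t-1}+\mathcal D_{t-1}\bigr)+12K\eta_l^2\sigma^2.$$

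Finally I would substitute $\eta_l=\tilde\eta/(K\alpha)$, so $K^2\eta_l^2=\tilde\eta^2/\alpha^2$ and $K\eta_l^2=\tilde\eta^2/(K\alpha^2)$, multiply through by $3\beta\tilde\eta$, and invoke $\eta_l\le\frac1{81\beta K\alpha}$ (equivalently $\tilde\eta\le\frac1{81\beta}$) with $\alpha\ge1$: this gives $27\beta\tilde\eta^3/\alpha^2\le\tilde\eta^2/3$ for the $\mathcal C_{t-1}$ and $\mathcal D_{t-1}$ terms, $54\beta^2\tilde\eta^3/\alpha^2\le\tilde\eta/(25\alpha^2)$ for the function-gap term, and $36\beta\tilde\eta^3/(K\alpha^2)\le\tilde\eta^2/(K\alpha^2)$ for the noise term, which together are precisely the claimed inequality. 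The work here is essentially bookkeeping; the only genuine subtlety is choosing $a=\tfrac1{K-1}$ in the relaxed triangle inequality so that the accumulated factor stays bounded by $e$ (any coarser split produces an exponential-in-$K$ blow-up, and this is exactly what forces the factor $K$ in front of the correction term), and then verifying that the single constant $81$ simultaneously dominates the three numerical constants $27$, $54$, $36$ appearing in the three terms — the binding case being $\alpha=1$ with the $\mathcal C_{t-1}/\mathcal D_{t-1}$ terms. The degenerate case $K=1$ is immediate since there is no drift beyond a single step, and for $K\ge2$ the argument above applies verbatim.
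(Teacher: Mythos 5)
Your argument is correct and follows essentially the same route as the paper's proof: the same one-step recursion with the relaxed triangle inequality at $a=\tfrac{1}{K-1}$, nonexpansiveness of the local gradient step, the three-way split of the frozen correction $\nabla F_i(\boldsymbol{\theta}^{t-1})-\boldsymbol{c}_i+\boldsymbol{m}$ against $\nabla F_i(\boldsymbol{\theta}^{\star})$ yielding $\mathcal{C}_{t-1}$, $\mathcal{D}_{t-1}$ and the function gap, the $\sigma^2/K$ variance of $\boldsymbol{c}_i$, and the geometric-factor bound of order $3K$ before substituting $\tilde{\eta}=K\eta_l\alpha$ and $\beta\tilde{\eta}\le\tfrac{1}{81}$. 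Your constant bookkeeping checks out and is, if anything, matched more carefully to the stated left-hand side $3\beta\tilde{\eta}\,\mathcal{E}_t$ than the paper's own concluding display.
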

\begin{proof}

First, observe that if $K=1, \mathcal{E}_t=0$ since $\boldsymbol{\theta}_{i, 0}=\boldsymbol{\theta}$ for all $i \in[m]$ and that $\mathcal{C}_{t-1}$ and the right hand side are both positive.  For $K > 1$, we build a recursive bound of the drift. Starting from the definition of the update  and then applying the relaxed triangle inequality, we can expand
\begin{align}
	\begin{split}
		& \frac{1}{m} \mathbb{E}_{t-1}\left[\sum_{i \in \mathcal{S}}\left\|\boldsymbol{\theta}_i-\eta_l g_i\left(\boldsymbol{\theta}_i\right)+\eta_l \boldsymbol{m}-\eta_l \boldsymbol{c}_i-\boldsymbol{\theta}\right\|^2\right] \\
		& \leq \frac{1}{m} \mathbb{E}_{t-1}\left[\sum_{i \in \mathcal{S}}\left\|\boldsymbol{\theta}_i-\eta_l \nabla F_i\left(\boldsymbol{\theta}_i\right)+\eta_l \boldsymbol{m}-\eta_l \boldsymbol{c}_i-\boldsymbol{\theta}\right\|^2\right]+\eta_l^2 \sigma^2 \\
		& \leq \frac{(1+a)}{s} \mathbb{E}_{t-1}[\sum_{i \in \mathcal{S}} \underbrace{\left\|\boldsymbol{\theta}_i-\eta_l \nabla F_i\left(\boldsymbol{\theta}_i\right)+\eta_l \nabla F_i(\boldsymbol{\theta})-\boldsymbol{\theta}\right\|^2}_{\mathcal{T}_2}] \\
		& +\left(1+\frac{1}{a}\right) \eta_l^2 \underbrace{\mathbb{E}_{t-1}\left[\frac{1}{m} \sum_{i \in \mathcal{S}}\left\|\boldsymbol{m}-\boldsymbol{c}_i+\nabla F_i(\boldsymbol{\theta})\right\|^2\right]}_{\mathcal{T}_3}+\eta_l^2 \sigma^2 . \\
		&
	\end{split}
\end{align}
    The final step follows from the relaxed triangle inequality (Lemma~\ref{lem:norm-sum}). Applying the contractive mapping Lemma for $\eta_l \leq 1/\beta$ shows
\begin{align}
	\begin{split}
		\mathcal{T}_2=\frac{1}{m} \sum_{i \in \mathcal{S}}\left\|\boldsymbol{\theta}_i-\eta_l \nabla F_i\left(\boldsymbol{\theta}_i\right)+\eta_l \nabla F_i(\boldsymbol{\theta})-\boldsymbol{\theta}\right\|^2 \leq\left\|\boldsymbol{\theta}_i-\boldsymbol{\theta}\right\|^2 .
	\end{split}
\end{align}
Once again using our relaxed triangle inequality to expand the other term $\mathcal{T}_3$, we get
\begin{align}
	\begin{split}
		\mathcal{T}_3 & =\mathbb{E}_{t-1}\left[\frac{1}{m} \sum_{i \in \mathcal{S}}\left\|\boldsymbol{m}-\boldsymbol{c}_i+\nabla F_i(\boldsymbol{\theta})\right\|^2\right] \\
		& =\frac{1}{m} \sum_{j=1}^m\left\|\boldsymbol{m}-\boldsymbol{c}_i+\nabla F_i(\boldsymbol{\theta})\right\|^2 \\
		& =\frac{1}{m} \sum_{j=1}^m\left\|\boldsymbol{m}-\boldsymbol{c}_i+\nabla F_i\left(\boldsymbol{\theta}^{\star}\right)+\nabla F_i(\boldsymbol{\theta})-\nabla F_i\left(\boldsymbol{\theta}^{\star}\right)\right\|^2 \\
		& \leq 3\|\boldsymbol{m}\|^2+\frac{3}{m} \sum_{j=1}^m\left\|\boldsymbol{c}_i-\nabla F_i\left(\boldsymbol{\theta}^{\star}\right)\right\|^2+\frac{3}{m} \sum_{j=1}^m\left\|\nabla F_i(\boldsymbol{\theta})-\nabla F_i\left(\boldsymbol{\theta}^{\star}\right)\right\|^2 \\
		& \leq 3\|\boldsymbol{m}\|^2+\frac{3}{m} \sum_{j=1}^m\left\|\boldsymbol{c}_i-\nabla F_i\left(\boldsymbol{\theta}^{\star}\right)\right\|^2+\frac{3}{m} \sum_{j=1}^m\left\|\nabla F_i(\boldsymbol{\theta})-\nabla F_i\left(\boldsymbol{\theta}^{\star}\right)\right\|^2 \\
		& \leq 3\|\boldsymbol{m}\|^2+\frac{3}{m} \sum_{j=1}^m\left\|\boldsymbol{c}_i-\nabla F_i\left(\boldsymbol{\theta}^{\star}\right)\right\|^2+6 \beta\left(f(\boldsymbol{\theta})-f\left(\boldsymbol{\theta}^{\star}\right)\right) .
	\end{split}
\end{align}
The last step used the smoothness of $F_i$. Combining the bounds on $\mathcal{T}_2$ and $\mathcal{T}_3$ in the original inequality and using $a=\frac{1}{K-1}$ gives,
\begin{align}
	\begin{split}
		& \frac{1}{m} \sum_i \mathbb{E}\left\|\boldsymbol{\theta}_{i, k}-\boldsymbol{\theta}\right\|^2 \leq \frac{\left(1+\frac{1}{K-1}\right)}{m} \sum_i \mathbb{E}\left\|\boldsymbol{\theta}_{i, k-1}-\boldsymbol{\theta}\right\|^2+\eta_l^2 \sigma^2 \\
		&+6 \eta_l^2 K \beta\left(f(\boldsymbol{\theta})-f\left(\boldsymbol{\theta}^{\star}\right)\right)+\frac{3 K \eta_l^2}{m} \sum_i \mathbb{E}\left\|\boldsymbol{c}_i-\nabla F_i\left(\boldsymbol{\theta}^{\star}\right)\right\|^2+3 K \eta_l^2\|\boldsymbol{m}\|^2
	\end{split}
\end{align}
Recall that with the choice of $c_i$, the variance of $c_i$ is less than $\frac{\sigma^2}{K}$. Separating its mean and variance gives
\begin{align}
	\begin{split}
		\frac{1}{m} \sum_i \mathbb{E}\left\|\boldsymbol{\theta}_{i, k}-\boldsymbol{\theta}\right\|^2 \leq\left(1+\frac{1}{K-1}\right) \frac{1}{m} \sum_i \mathbb{E}\left\|\boldsymbol{\theta}_{i, k-1}-\boldsymbol{\theta}\right\|^2+7 \eta_l^2 \sigma^2+ \\
		6 \eta_l^2 K \beta\left(f(\boldsymbol{\theta})-f\left(\boldsymbol{\theta}^{\star}\right)\right)+\frac{3 K \eta_l^2}{m} \sum_i \mathbb{E}\left\|\boldsymbol{c}_i-\nabla F_i\left(\boldsymbol{\theta}^{\star}\right)\right\|^2+3 K \eta_l^2\|\boldsymbol{m}\|^2
	\end{split}
\end{align}
Unrolling the recursion, we get the following for any $k \in\{1, \ldots, K\}$
\begin{align}
	\begin{split}
		\frac{1}{m} \sum_i \mathbb{E}\left\|\boldsymbol{\theta}_{i, k}-\boldsymbol{\theta}\right\|^2 & \leq\left(6 K \beta \eta_l^2\left(f(\boldsymbol{\theta})-f\left(\boldsymbol{\theta}^{\star}\right)\right)+3 K \eta_l^2 \mathcal{C}_{t-1}+3 K \eta_l^2 \mathcal{D}_{t-1}+7 \beta \eta_l^2 \sigma^2\right)\left(\sum_{k=0}^{k-1}\left(1+\frac{1}{K-1}\right)^k\right) \\
		& \leq\left(6 K \beta \eta_l^2\left(f(\boldsymbol{\theta})-f\left(\boldsymbol{\theta}^{\star}\right)\right)+3 K \eta_l^2 \mathcal{C}_{t-1}+3 K \eta_l^2 \mathcal{D}_{t-1}+7 \beta \eta_l^2 \sigma^2\right)(K-1)\left(\left(1+\frac{1}{K-1}\right)^K-1\right) \\
		& \leq\left(6 K \beta \eta_l^2\left(f(\boldsymbol{\theta})-f\left(\boldsymbol{\theta}^{\star}\right)\right)+3 K \eta_l^2 \mathcal{C}_{t-1}+3 K \eta_l^2 \mathcal{D}_{t-1}+7 \beta \eta_l^2 \sigma^2\right) 3 K \\
		& \leq 18 K^2 \beta \eta_l^2\left(f(\boldsymbol{\theta})-f\left(\boldsymbol{\theta}^{\star}\right)\right)+9 K^2 \eta_l^2 \mathcal{C}_{t-1}+9 K^2 \eta_l^2 \mathcal{D}_{t-1}+21 K \beta \eta_l^2 \sigma^2 .
	\end{split}
\end{align}
The inequality $(K-1)\left(\left(1+\frac{1}{K-1}\right)^K-1\right) \leq 3 K$ can be verified for $K=2,3$ manually. For $K \geq 4$,
\begin{align}
	\begin{split}
		(K-1)\left(\left(1+\frac{1}{K-1}\right)^K-1\right)<K\left(\exp \left(\frac{K}{K-1}\right)-1\right) \leq K\left(\exp \left(\frac{4}{3}\right)-1\right)<3 K .
	\end{split}
\end{align}
Again averaging over $k$ and multiplying by $3 \beta$ yields
\begin{align}
	\begin{split}
		3 \beta \mathcal{E}_t & \leq 54 K^2 \beta^2 \eta_l^2\left(f(\boldsymbol{\theta})-f\left(\boldsymbol{\theta}^{\star}\right)\right)+27 K^2 \beta \eta_l^2 \mathcal{C}_{t-1}+27 K^2 \beta \eta_l^2 \mathcal{D}_{t-1}+63 \beta K \eta_l^2 \sigma^2 \\
		& =\frac{1}{\alpha^2}\left(54 \beta^2 \tilde{\eta}^2\left(f(\boldsymbol{\theta})-f\left(\boldsymbol{\theta}^{\star}\right)\right)+27 \beta \tilde{\eta}^2 \mathcal{C}_{t-1}+27 \beta \tilde{\eta}^2 \mathcal{C}_{t-1}+63 \beta \tilde{\eta}^2 \frac{\sigma^2}{K}\right) \\
		& \leq \frac{1}{\alpha^2}\left(\frac{1}{25}\left(f(\boldsymbol{\theta})-f\left(\boldsymbol{\theta}^{\star}\right)\right)+\frac{1}{3} \tilde{\eta} \mathcal{C}_{t-1}+\frac{1}{3} \tilde{\eta} \mathcal{D}_{t-1}+\tilde{\eta} \frac{\sigma^2}{K}\right) .
	\end{split}
\end{align}
The equality follows from the definition $\tilde{\eta}=K \eta_l \alpha$, and the final inequality uses the bound that $\tilde{\eta} \leq \frac{1}{81 \beta}$.
\end{proof}
\begin{lemma}\label{lem:progress-sample}
	Suppose assumptions \ref{asm:strong-convexity}--\ref{asm:smoothness} are true. Then the following holds for any step-sizes satisfying $\alpha \geq 1, \eta_l \leq$ $\min \left(\frac{1}{81 \beta K \alpha}, \frac{s}{15 \mu m K \alpha}\right)$, and effective step-size $\tilde{\eta}:=K \alpha \eta_l$
	\begin{align}
		\mathbb{E}\left[\left\|\boldsymbol{\theta}^t-\boldsymbol{\theta}^{\star}\right\|^2+\frac{9 m\tilde{\eta}^2}{s} \mathcal{C}_t\right] &\leq\left(1-\frac{\mu \tilde{\eta}}{2}\right)\left(\mathbb{E}\left\|\boldsymbol{\theta}^{t-1}-\boldsymbol{\theta}^{\star}\right\|^2+\frac{9 m\tilde{\eta}^2}{s} \mathcal{C}_{t-1}\right)\\
		&-\tilde{\eta}\left(\mathbb{E}\left[f\left(\boldsymbol{\theta}^{t-1}\right)\right]-f\left(\boldsymbol{\theta}^{\star}\right)\right)
		+\frac{12 \tilde{\eta}^2}{K s}\left(1+\frac{s}{\alpha^2}\right) \sigma^2
	\end{align}
\end{lemma}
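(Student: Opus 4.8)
The plan is to prove a one-round contraction for the Lyapunov potential $V_t := \mathbb{E}\|\boldsymbol{\theta}^t-\boldsymbol{\theta}^{\star}\|^2 + \tfrac{9m\tilde{\eta}^2}{s}\,\mathcal{C}_t$ by feeding the three error estimates already established (Lemma~\ref{lem:server-variance-sample} for the server-update variance, Lemma~\ref{lem:control-error-sample} for the growth of the control-lag, and Lemma~\ref{lem:drift-bound-sampling} for the client-drift) into the elementary descent expansion and then tuning all constants against the step-size hypotheses. First I would write the round-$t$ server step as $\boldsymbol{\theta}^t = \boldsymbol{\theta}^{t-1} - \tfrac{\tilde{\eta}}{Ks}\sum_{k,\,i\in\mathcal{S}}\bigl(g_i(\boldsymbol{\theta}^t_{i,k-1})+\boldsymbol{m}-\boldsymbol{c}_i\bigr)$ and use the standard identity
\[
\mathbb{E}\|\boldsymbol{\theta}^t-\boldsymbol{\theta}^{\star}\|^2 = \mathbb{E}\|\boldsymbol{\theta}^{t-1}-\boldsymbol{\theta}^{\star}\|^2 - 2\tilde{\eta}\,\mathbb{E}\bigl\langle \bar{\boldsymbol{u}}_t,\ \boldsymbol{\theta}^{t-1}-\boldsymbol{\theta}^{\star}\bigr\rangle + \mathbb{E}\|\boldsymbol{\theta}^t-\boldsymbol{\theta}^{t-1}\|^2,
\]
where $\bar{\boldsymbol{u}}_t=\mathbb{E}_{t-1}[\cdot]$ equals $\tfrac{1}{Km}\sum_{k,i}\nabla F_i(\boldsymbol{\theta}^t_{i,k-1})$ up to a bias that is exactly what $\mathcal{C}_{t-1}$ and $\mathcal{D}_{t-1}$ measure, and the last term is bounded by Lemma~\ref{lem:server-variance-sample}.

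For the cross term I would split $\bar{\boldsymbol{u}}_t = \nabla F(\boldsymbol{\theta}^{t-1}) + \bigl(\bar{\boldsymbol{u}}_t-\nabla F(\boldsymbol{\theta}^{t-1})\bigr)$. On the first piece, $\mu$-strong convexity gives $\langle\nabla F(\boldsymbol{\theta}^{t-1}),\boldsymbol{\theta}^{t-1}-\boldsymbol{\theta}^{\star}\rangle \ge f(\boldsymbol{\theta}^{t-1})-f^{\star}+\tfrac{\mu}{2}\|\boldsymbol{\theta}^{t-1}-\boldsymbol{\theta}^{\star}\|^2$, while $\beta$-smoothness gives a reserve $\tfrac{1}{2\beta}\|\nabla F(\boldsymbol{\theta}^{t-1})\|^2$ to later swallow stray $\|\nabla F(\boldsymbol{\theta}^{t-1})\|^2$ contributions. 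On the drift piece, $\beta$-smoothness together with the relaxed triangle inequality (Lemma~\ref{lem:norm-sum}) bounds it by $\beta^2\mathcal{E}_t$ after a Young step that costs a $\tfrac{\mu\tilde{\eta}}{2}$ slice off the distance term. Collecting these and invoking Lemma~\ref{lem:server-variance-sample} yields, once $\tilde{\eta}\beta$ is small,
\[
\mathbb{E}\|\boldsymbol{\theta}^t-\boldsymbol{\theta}^{\star}\|^2 \le \Bigl(1-\tfrac{\mu\tilde{\eta}}{2}\Bigr)\mathbb{E}\|\boldsymbol{\theta}^{t-1}-\boldsymbol{\theta}^{\star}\|^2 - \tfrac{3\tilde{\eta}}{2}\bigl(\mathbb{E} f(\boldsymbol{\theta}^{t-1})-f^{\star}\bigr) + 4\tilde{\eta}^2(\mathcal{C}_{t-1}+\mathcal{D}_{t-1}) + c\,\beta\tilde{\eta}\,\mathcal{E}_t + \tfrac{12\tilde{\eta}^2\sigma^2}{Ks}
\]
for an absolute constant $c$.

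Next I would attach the potential term: multiply the recursion of Lemma~\ref{lem:control-error-sample} by $\tfrac{9m\tilde{\eta}^2}{s}$, which produces $\tfrac{9m\tilde{\eta}^2}{s}\mathcal{C}_{t-1}$, a surplus $-9\tilde{\eta}^2\mathcal{C}_{t-1}$, a positive $36\beta\tilde{\eta}^2(f^{t-1}-f^{\star})$, and $18\beta^2\tilde{\eta}^2\mathcal{E}_t$. The $-9\tilde{\eta}^2\mathcal{C}_{t-1}$ absorbs the $4\tilde{\eta}^2\mathcal{C}_{t-1}$ above, the $\mathcal{D}_{t-1}$ contribution — which I would dominate by $\mathcal{C}_{t-1}$ (plus $\mathcal{O}(\tilde\eta^2\sigma^2/K)$) using the auxiliary recursion for $\mathcal{D}_t$ stated just after Lemma~\ref{lem:control-error-sample}, exploiting that $\nabla F(\boldsymbol{\theta}^{\star})=\tfrac1m\sum_i\nabla F_i(\boldsymbol{\theta}^{\star})$ makes $\mathcal{D}_t$ a Jensen average of the $\mathcal{C}$-type quantities — and still leaves room for the $(1-\tfrac{\mu\tilde{\eta}}{2})$ factor on $\tfrac{9m\tilde{\eta}^2}{s}\mathcal{C}_{t-1}$ because $\tfrac{\mu m\tilde{\eta}}{s}\le\tfrac{1}{15}$. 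Finally, all $\beta\tilde{\eta}\,\mathcal{E}_t$ terms (and $\beta^2\tilde{\eta}^2\mathcal{E}_t\le\beta\tilde{\eta}\mathcal{E}_t$ since $\tilde{\eta}\beta\le1$) are eliminated by Lemma~\ref{lem:drift-bound-sampling}, which replaces $3\beta\tilde{\eta}\mathcal{E}_t$ by $\tfrac{\tilde{\eta}^2}{3}(\mathcal{C}_{t-1}+\mathcal{D}_{t-1})+\tfrac{\tilde{\eta}}{25\alpha^2}(f^{t-1}-f^{\star})+\tfrac{\tilde{\eta}^2\sigma^2}{K\alpha^2}$; the residual $\mathcal{C}$, $\mathcal{D}$ and $(f^{t-1}-f^{\star})$ pieces are re-absorbed (this is where $\eta_l\le\tfrac{1}{81\beta K\alpha}$ is used), the net $(f^{t-1}-f^{\star})$ coefficient is pushed down to $-\tilde{\eta}$, and $\tfrac{\tilde{\eta}^2\sigma^2}{K\alpha^2}$ merges with $\tfrac{12\tilde{\eta}^2\sigma^2}{Ks}$ into $\tfrac{12\tilde{\eta}^2}{Ks}(1+\tfrac{s}{\alpha^2})\sigma^2$, giving the claimed inequality.

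The main obstacle is purely quantitative bookkeeping: every "absorb" step must have its constants align, and the numbers $81$ and $15$ in the step-size hypothesis are precisely calibrated so that (i) the surplus $-9\tilde{\eta}^2\mathcal{C}_{t-1}$ dominates the $4\tilde{\eta}^2\mathcal{C}_{t-1}$ from the variance lemma plus the drift-lemma and $\mathcal{D}$ contributions while preserving the $(1-\tfrac{\mu\tilde{\eta}}{2})$ contraction on the $\mathcal{C}$-part, (ii) the three positive $(f^{t-1}-f^{\star})$ surpluses ($8\beta\tilde{\eta}^2$, $36\beta\tilde{\eta}^2$, $\tfrac{\tilde{\eta}}{25\alpha^2}$) are outweighed by the $-\tfrac{3\tilde{\eta}}{2}$ from strong convexity with margin $-\tilde{\eta}$, and (iii) the stray $\|\nabla F(\boldsymbol{\theta}^{t-1})\|^2$ generated in the cross-term split is swallowed by the $\tfrac{\tilde{\eta}}{2\beta}\|\nabla F(\boldsymbol{\theta}^{t-1})\|^2$ reserve. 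The one slightly non-mechanical point is keeping $\mathcal{D}_t$ controlled without adding it explicitly to the displayed potential; everything else is the standard SCAFFOLD-style telescoping once these inequalities are chained.
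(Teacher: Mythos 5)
Your overall architecture matches the paper's proof: expand $\mathbb{E}\|\boldsymbol{\theta}^t-\boldsymbol{\theta}^{\star}\|^2$ around the server step, lower-bound the cross term via strong convexity (the paper uses a per-client perturbed-strong-convexity bound giving $-2\tilde{\eta}\bigl(f(\boldsymbol{\theta}^{t-1})-f^{\star}+\tfrac{\mu}{4}\|\boldsymbol{\theta}^{t-1}-\boldsymbol{\theta}^{\star}\|^2\bigr)+2\beta\tilde{\eta}\mathcal{E}_t$, where your split-plus-Young variant is an acceptable substitute), invoke Lemma~\ref{lem:server-variance-sample}, scale Lemma~\ref{lem:control-error-sample} by $\tfrac{9m\tilde{\eta}^2}{s}$, cancel the drift with Lemma~\ref{lem:drift-bound-sampling}, and tune constants against $\eta_l\le\min\bigl(\tfrac{1}{81\beta K\alpha},\tfrac{s}{15\mu m K\alpha}\bigr)$.

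The genuine gap is your treatment of the momentum error $\mathcal{D}_{t-1}=\mathbb{E}\|\boldsymbol{m}^{t-1}-\nabla F(\boldsymbol{\theta}^{\star})\|^2$, which enters both through Lemma~\ref{lem:server-variance-sample} and through Lemma~\ref{lem:drift-bound-sampling}. You propose to dominate $\mathcal{D}_{t-1}$ by $\mathcal{C}_{t-1}$ plus $\mathcal{O}(\tilde{\eta}^2\sigma^2/K)$ "by Jensen," but $\boldsymbol{m}$ is updated with momentum, $\boldsymbol{m}^{t}=(1-\gamma)\boldsymbol{m}^{t-1}+\gamma\cdot\tfrac{1}{s}\sum_{i\in\mathcal{S}}\boldsymbol{c}_i^{t}$, so Jensen only gives $\mathcal{D}_{t}\le(1-\gamma)\mathcal{D}_{t-1}+\gamma\,(\text{current }\mathcal{C}\text{-type term})$: the quantity $\mathcal{D}_{t-1}$ carries an exponentially weighted history of \emph{all past} control errors, function gaps and drifts, and a same-round bound $\mathcal{D}_{t-1}\lesssim\mathcal{C}_{t-1}+\mathcal{O}(\tilde{\eta}^2\sigma^2/K)$ is false in general (take $\gamma$ small with accurate current $\boldsymbol{c}_i$'s but a stale $\boldsymbol{m}$). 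Unrolling the history instead produces function-gap terms from earlier rounds that cannot be absorbed into the current round's single $-\tilde{\eta}\bigl(\mathbb{E}[f(\boldsymbol{\theta}^{t-1})]-f^{\star}\bigr)$. The paper avoids this by treating $\mathcal{D}$ exactly as it treats $\mathcal{C}$: it adds a third Lyapunov component $9\tilde{\eta}^2\gamma^{-1}\mathcal{D}_t$, applies the separate recursion $\mathcal{D}_t\le(1-\gamma)\mathcal{D}_{t-1}+\gamma\bigl(4\beta(\mathbb{E}[f(\boldsymbol{\theta}^{t-1})]-f^{\star})+2\beta^2\mathcal{E}_t\bigr)$ stated just after Lemma~\ref{lem:control-error-sample}, sums all three inequalities, and only then drops the nonnegative $\mathcal{D}$-term from the left-hand side to obtain the stated two-term potential. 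To repair your argument you should do the same: carry $\mathcal{D}$ in the potential with a coefficient of order $\tilde{\eta}^2/\gamma$ (which also requires $\mu\tilde{\eta}/\gamma$ to be small enough so that its contraction survives), rather than trying to eliminate it pointwise via $\mathcal{C}_{t-1}$.
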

\begin{proof}
    
We can then apply Lemma~\ref{lem:server-variance-sample} to bound the second moment of the server update as
$$
\Delta \boldsymbol{\theta}=-\frac{\tilde{\eta}}{K s} \sum_{k, i \in \mathcal{S}}\left(g_i\left(\boldsymbol{\theta}_{i, k-1}\right)+\boldsymbol{m}-\boldsymbol{c}_i\right), \text { and } \mathbb{E}[\Delta \boldsymbol{\theta}]=-\frac{\tilde{\eta}}{K m} \sum_{k, i} g_i\left(\boldsymbol{\theta}_{i, k-1}\right) .
$$
We can  bound the second moment of the server update as
\begin{align}
	\begin{split}
		& \mathbb{E}_{t-1}\left\|\boldsymbol{\theta}+\Delta \boldsymbol{\theta}-\boldsymbol{\theta}^{\star}\right\|^2=\mathbb{E}_{t-1}\left\|\boldsymbol{\theta}-\boldsymbol{\theta}^{\star}\right\|^2-\frac{2 \tilde{\eta}}{K s} \mathbb{E}_{t-1} \sum_{k, i \in \mathcal{S}}\left\langle\nabla F_i\left(\boldsymbol{\theta}_{i, k-1}\right), \boldsymbol{\theta}-\boldsymbol{\theta}^{\star}\right\rangle+\mathbb{E}_{t-1}\|\Delta \boldsymbol{\theta}\|^2 \\
		& \leq\underbrace{\frac{2 \tilde{\eta}}{K s} \mathbb{E}_{t-1} \sum_{k, i \in \mathcal{S}}\left\langle\nabla F_i\left(\boldsymbol{\theta}_{i, k-1}\right), \boldsymbol{\theta}^{\star}-\boldsymbol{\theta}\right\rangle}_{\mathcal{T}_4}+\mathbb{E}_{t-1}\left\|\boldsymbol{\theta}-\boldsymbol{\theta}^{\star}\right\|^2 \\
		&+8 \beta \tilde{\eta}^2\left(\mathbb{E}\left[f\left(\boldsymbol{\theta}^{t-1}\right)\right]-f\left(\boldsymbol{\theta}^{\star}\right)\right)+4 \tilde{\eta}^2 \mathcal{C}_{t-1}+4 \tilde{\eta}^2 \mathcal{D}_{t-1}+4 \tilde{\eta}^2 \beta^2 \mathcal{E}+\frac{12 \tilde{\eta}^2 \sigma^2}{K s} .
	\end{split}
\end{align}
The term $\mathcal{T}_4$ can be bounded by using perturbed strong-convexity with $h=F_i, \boldsymbol{\theta}=\boldsymbol{\theta}_{i, k-1}, \boldsymbol{\theta}=\boldsymbol{\theta}^{\star}$, and $\boldsymbol{z}=\boldsymbol{\theta}$ to get
\begin{align}
	\begin{split}
		\mathbb{E}\left[\mathcal{T}_4\right] & =\frac{2 \tilde{\eta}}{K s} \mathbb{E} \sum_{k, i \in \mathcal{S}}\left\langle\nabla F_i\left(\boldsymbol{\theta}_{i, k-1}\right), \boldsymbol{\theta}^{\star}-\boldsymbol{\theta}\right\rangle \\
		& \leq \frac{2 \tilde{\eta}}{K s} \mathbb{E} \sum_{k, i \in \mathcal{S}}\left(F_i\left(\boldsymbol{\theta}^{\star}\right)-F_i(\boldsymbol{\theta})+\beta\left\|\boldsymbol{\theta}_{i, k-1}-\boldsymbol{\theta}\right\|^2-\frac{\mu}{4}\left\|\boldsymbol{\theta}-\boldsymbol{\theta}^{\star}\right\|^2\right) \\
		& =-2 \tilde{\eta} \mathbb{E}\left(f(\boldsymbol{\theta})-f\left(\boldsymbol{\theta}^{\star}\right)+\frac{\mu}{4}\left\|\boldsymbol{\theta}-\boldsymbol{\theta}^{\star}\right\|^2\right)+2 \beta \tilde{\eta} \mathcal{E} .
	\end{split}
\end{align}
Plugging $\mathcal{T}_4$ back, we can further simplify the expression to get
\begin{align}
	\begin{split}
		\mathbb{E}\left\|\boldsymbol{\theta}+\Delta \boldsymbol{\theta}-\boldsymbol{\theta}^{\star}\right\|^2 \leq \mathbb{E}\left\|\boldsymbol{\theta}-\boldsymbol{\theta}^{\star}\right\|^2-2 \tilde{\eta}\left(f(\boldsymbol{\theta})-f\left(\boldsymbol{\theta}^{\star}\right)+\frac{\mu}{4}\left\|\boldsymbol{\theta}-\boldsymbol{\theta}^{\star}\right\|^2\right)+2 \beta \tilde{\eta} \mathcal{E} \\
		+\frac{12 \tilde{\eta}^2 \sigma^2}{K s}+8 \beta \tilde{\eta}^2\left(\mathbb{E}\left[f\left(\boldsymbol{\theta}^{t-1}\right)\right]-f\left(\boldsymbol{\theta}^{\star}\right)\right)+4\tilde{\eta}^2 \mathcal{C}_{t-1}+4\tilde{\eta}^2 \mathcal{D}_{t-1}+4 \tilde{\eta}^2 \beta^2 \mathcal{E} \\
		=\left(1-\frac{\mu \tilde{\eta}}{2}\right)\left\|\boldsymbol{\theta}-\boldsymbol{\theta}^{\star}\right\|^2+\left(8 \beta \tilde{\eta}^2-2 \tilde{\eta}\right)\left(f(\boldsymbol{\theta})-f\left(\boldsymbol{\theta}^{\star}\right)\right) \\
		+\frac{12 \tilde{\eta}^2 \sigma^2}{K s}+\left(2 \beta \tilde{\eta}+4 \beta^2 \tilde{\eta}^2\right) \mathcal{E}+4 \tilde{\eta}^2 \mathcal{C}_{t-1}+4 \tilde{\eta}^2 \mathcal{D}_{t-1}
	\end{split}
\end{align}
We  use Lemma \ref{lem:control-error-sample}  can  (scaled by $9 \tilde{\eta}^2 \frac{m}{s}$ )  bound the
control-lag
\begin{align}
	\begin{split}
		9 \tilde{\eta}^2 \frac{1}{\gamma}\mathcal{D}_t \leq\left(1-\frac{\mu \tilde{\eta}}{2}\right) 9 \tilde{\eta}^2 \frac{1}{\gamma}\mathcal{D}_{t-1}+9\left(\frac{\mu \tilde{\eta} }{2}\frac{1}{\gamma}-1\right) \tilde{\eta}^2 \mathcal{D}_{t-1}+9 \tilde{\eta}^2\left(4 \beta\left(\mathbb{E}\left[f\left(\boldsymbol{\theta}^{t-1}\right)\right]-f\left(\boldsymbol{\theta}^{\star}\right)\right)+2 \beta^2 \mathcal{E}\right)
	\end{split}
\end{align}
    Now recall that Lemma~\ref{lem:drift-bound-sampling} bounds the client-drift:
\begin{align}
	\begin{split}
		3 \beta \tilde{\eta} \mathcal{E}_t \leq \frac{ \tilde{\eta}^2}{3} \mathcal{C}_{t-1}+\frac{ \tilde{\eta}^2}{3} \mathcal{D}_{t-1}+\frac{\tilde{\eta}}{25 \alpha^2}\left(\mathbb{E}\left[f\left(\boldsymbol{\theta}^{t-1}\right)\right]-f\left(\boldsymbol{\theta}^{\star}\right)\right)+\frac{\tilde{\eta}^2}{K \alpha^2} \sigma^2 .
	\end{split}
\end{align}
Adding all three inequalities together,
\begin{align}
	\begin{split}
		&\mathbb{E}\left\|\boldsymbol{\theta}+\Delta \boldsymbol{\theta}-\boldsymbol{\theta}^{\star}\right\|^2+\frac{9 \tilde{\eta}^2 m \mathcal{C}_t}{S}+9 \tilde{\eta}^2 \frac{1}{\gamma} \mathcal{D}_t \\
		&\leq\left(1-\frac{\mu \tilde{\eta}}{2}\right)\left(\mathbb{E}\left\|\boldsymbol{\theta}-\boldsymbol{\theta}^{\star}\right\|^2+\frac{9 \tilde{\eta}^2 m \mathcal{C}_{t-1}}{s}\right)+\left(44 \beta \tilde{\eta}^2-\frac{49}{25} \tilde{\eta}\right)\left(f(\boldsymbol{\theta})-f\left(\boldsymbol{\theta}^{\star}\right)\right) \\
		&+\frac{12 \tilde{\eta}^2 \sigma^2}{K s}\left(1+\frac{s}{\alpha^2}\right)+\left(22 \beta^2 \tilde{\eta}^2-\beta \tilde{\eta}\right) \mathcal{E}+\left(\frac{9 \mu \tilde{\eta} m}{2 S}-\frac{1}{3}\right) \tilde{\eta}^2 \mathcal{C}_{t-1}+\left(\frac{9 \mu \tilde{\eta}}{2 \gamma}-\frac{1}{3}\right) \tilde{\eta}^2 \mathcal{D}_{t-1}
	\end{split}
\end{align}
Finally, the lemma follows from noting that $\tilde{\eta} \leq \frac{1}{81 \beta}$ implies $44 \beta^2 \tilde{\eta}^2 \leq \frac{24}{25} \tilde{\beta}$ and $\tilde{\eta} \leq \frac{s}{15 \mu m}$ implies $\frac{9 \mu \tilde{\eta} m}{2 s} \leq \frac{1}{3}$. Note that if $c_i^0=g_i\left(\boldsymbol{\theta}^0\right)$, then $\frac{\tilde{\eta} m}{s} \mathcal{C}_0$ can be bounded in terms of function sub-optimality $F$. \textbf{The final rate for strongly convex} follows simply by unrolling the recursive bound in Lemma~\ref{lem:progress-sample} using Lemma~\ref{lem:constant}. Also note that if $c^0_i = g_i(x^0)$, then $\frac{\tilde{\eta} m}{s} \mathcal{C}_0$ can be bounded in terms of function sub-optimality $F$.
\end{proof}

\subsection{ Convergence Analysis for FedMoSWA under non-convex setting }	

We now analyze the most general case of FedMoSWA with option II on functions that are potentially non-convex. Just as in the non-convex proof, we will first bound the variance of the server update in Lemma~\ref{lem:non-convex-variance-sample}, the change in control lag in Lemma~\ref{lem:non-convex-control-error-sample}, and finally we bound the client-drift in Lemma~\ref{lem:nonconvex-drift-bound-sampling}. Combining these three together gives us the progress made in one round in Lemma~\ref{lem:non-convex-progress-sample}. The final rate is derived from the progress made using Lemma~\ref{lemma:general}.

Additional notation. Recall that in round $r$, we update the control variate as 
$$
\boldsymbol{c}_i^t= \begin{cases}\frac{1}{K} \sum_{k=1}^K g_i\left(\boldsymbol{\theta}_{i, k-1}^t\right) & \text { if } i \in \mathcal{S}^t, \\ \boldsymbol{c}_i^{t-1} & \text { otherwise } .\end{cases}
$$
We introduce the following notation to keep track of the 'lag' in the update of the control variate: define a sequence of parameters $\left\{\boldsymbol{\alpha}_{i, k-1}^{t-1}\right\}$ such that for any $i \in[m]$ and $k \in[K]$ we have $\boldsymbol{\alpha}_{i, k-1}^0:=\boldsymbol{\theta}^0$ and for $t \geq 1$,
$$
\boldsymbol{\alpha}_{i, k-1}^t:= \begin{cases}\boldsymbol{\theta}_{i, k-1}^t & \text { if } i \in \mathcal{S}^t, \\ \boldsymbol{\alpha}_{i, k-1}^{t-1} & \text { otherwise . }\end{cases}
$$
By the update rule for control variates  and the definition of $\left\{\boldsymbol{\alpha}_{i, k-1}^{t-1}\right\}$ above, the following property always holds:
$$
\boldsymbol{c}_i^t=\frac{1}{K} \sum_{k=1}^K g_i\left(\boldsymbol{\alpha}_{i, k-1}^t\right) .
$$
We can then define the following $\Xi_t$ to be the error in control variate for round $r$ :
$$
\Xi_t:=\frac{1}{K m} \sum_{k=1}^K \sum_{i=1}^m \mathbb{E}\left\|\boldsymbol{\alpha}_{i, k-1}^t-\boldsymbol{\theta}^t\right\|^2 .
$$
Also recall the closely related definition of client drift caused by local updates:
$$
\mathcal{E}_t:=\frac{1}{K m} \sum_{k=1}^K \sum_{i=1}^m \mathbb{E}\left[\left\|\boldsymbol{\theta}_{i, k}^t-\boldsymbol{\theta}^{t-1}\right\|^2\right] .
$$
Variance of server update. Let us analyze how the control variates effect the variance of the aggregate server update.\\ 
Define a sequence of parameters $\left\{\boldsymbol{\alpha}_{i, k-1}^{t-1}\right\}$ such that for any $i \in[m]$ and $k \in[K]$ we have $\boldsymbol{\alpha}_{i, k-1}^0:=\boldsymbol{\theta}^0$ and for $r \geq 1$,
$$
\boldsymbol{\theta}_{i, k-1}^t:= \begin{cases}\mathbf{y}_{i, k-1}^t & p=\beta \\ \boldsymbol{\theta}_{i, k-1}^{t-1} & p=1-\beta\end{cases}
$$
$$
\begin{aligned}
	& \boldsymbol{m}^t=\boldsymbol{m}^{t-1}+\beta \frac{1}{m} \sum_{i=1}^m\left(c_i^t-\boldsymbol{m}^{t-1}\right)=(1-\beta) \boldsymbol{m}^{t-1}+\beta \frac{1}{m} \sum_{i=1}^m c_i^t \\
	& \mathbb{E}[\boldsymbol{m}]=(1-\beta) \mathbb{E}\left[\boldsymbol{m}^{t-1}\right]+\beta \frac{1}{m} \mathbb{E}\left[\sum_{i=1}^m c_i^t\right]=\frac{1}{m K} \sum_{i=1}^m \sum_k g_i\left(\boldsymbol{\theta}_{i, k-1}^t\right)
\end{aligned}
$$
We can then define the following $\Xi_t$ to be the error in control variate for round $r$ :
$$
\mathcal{G}_{t-1}:=\frac{1}{K m} \sum_{k=1}^K \sum_{i=1}^m \mathbb{E}\left\|\boldsymbol{\theta}_{i, k-1}^t-\boldsymbol{\theta}^t\right\|^2 .
$$

 \paragraph{Variance of server update.} Let us analyze how the control variates effect the variance of the aggregate server update.
\begin{lemma}\label{lem:non-convex-variance-sample}
	For updates and assumptions \ref{asm:variance} and \ref{asm:smoothness}, the following holds true for any $\tilde{\eta}:=\eta_l \alpha K \in[0,1 / \beta]$ :
	$$
	\begin{gathered}
		\mathbb{E}\left\|\mathbb{E}_{t-1}\left[\boldsymbol{\theta}^t\right]-\boldsymbol{\theta}^{t-1}\right\|^2 \leq 2 \tilde{\eta}^2 \beta^2 \mathcal{E}_t+2 \tilde{\eta}^2 \mathbb{E}\left\|\nabla F\left(\boldsymbol{\theta}^{t-1}\right)\right\|^2, \text { and } \\
		\mathbb{E}\left\|\boldsymbol{\theta}^t-\boldsymbol{\theta}^{t-1}\right\|^2 \leq 4 \tilde{\eta}^2 \beta^2 \mathcal{E}_t+4 \tilde{\eta}^2 \beta^2 \Xi_{t-1}+4 \tilde{\eta}^2 \beta^2 \mathcal{G}_{t-1}+4 \tilde{\eta}^2 \mathbb{E}\left\|\nabla F\left(\boldsymbol{\theta}^{t-1}\right)\right\|^2+\frac{9 \tilde{\eta}^2 \sigma^2}{K s} .
	\end{gathered}
	$$
\end{lemma}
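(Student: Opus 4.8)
The plan is to mimic the structure of the strongly-convex variance bound (Lemma~\ref{lem:server-variance-sample}) but now without access to convexity, so that instead of peeling off a function-suboptimality term $f(\boldsymbol\theta)-f(\boldsymbol\theta^\star)$ we will be left with a squared gradient norm $\|\nabla F(\boldsymbol\theta^{t-1})\|^2$. First I would write the server update explicitly (dropping superscripts where unambiguous) as
\begin{align}
\Delta\boldsymbol\theta \;=\; -\frac{\tilde\eta}{Ks}\sum_{k,\,i\in\mathcal S}\bigl(g_i(\boldsymbol\theta_{i,k-1})+\boldsymbol m-\boldsymbol c_i\bigr),
\qquad
\mathbb E_{t-1}[\Delta\boldsymbol\theta] \;=\; -\frac{\tilde\eta}{Km}\sum_{k,\,i}\nabla F_i(\boldsymbol\theta_{i,k-1}),
\end{align}
using that $\mathbb E[\boldsymbol m]$ and $\mathbb E[\boldsymbol c_i]$ cancel in expectation once we take the full-batch mean (via Assumption~\ref{asm:variance} and the definitions of $\boldsymbol c_i^t,\boldsymbol m^t$ in terms of the $\boldsymbol\alpha$-sequence). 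For the first (mean) inequality I would add and subtract $\nabla F_i(\boldsymbol\theta^{t-1})$ inside the average, apply the relaxed triangle inequality (Lemma~\ref{lem:norm-sum}, part 2, with $\tau=2$), and bound $\frac1{Km}\sum_{k,i}\|\nabla F_i(\boldsymbol\theta_{i,k-1})-\nabla F_i(\boldsymbol\theta^{t-1})\|^2 \le \beta^2\mathcal E_t$ by $\beta$-smoothness (Assumption~\ref{asm:smoothness}) and the definition of $\mathcal E_t$; the leftover term is $\|\nabla F(\boldsymbol\theta^{t-1})\|^2$ after Jensen, yielding the factor-$2\tilde\eta^2$ bound.

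For the second inequality I would use the standard mean/variance split: $\mathbb E\|\boldsymbol\theta^t-\boldsymbol\theta^{t-1}\|^2 = \|\mathbb E_{t-1}[\Delta\boldsymbol\theta]\|^2 + \mathbb E\|\Delta\boldsymbol\theta-\mathbb E_{t-1}[\Delta\boldsymbol\theta]\|^2$, or more directly expand $\mathbb E\|\Delta\boldsymbol\theta\|^2$ into four pieces via Lemma~\ref{lem:norm-sum}: (i) the stochastic-gradient deviation $g_i(\boldsymbol\theta_{i,k-1})-\nabla F_i(\boldsymbol\theta_{i,k-1})$, whose variance over the $Ks$ sampled (client,iteration) pairs is at most $\sigma^2/(Ks)$ — this contributes the $\tfrac{9\tilde\eta^2\sigma^2}{Ks}$ term after absorbing the variance of $\boldsymbol c_i$ (bounded by $\sigma^2/K$) and of $\boldsymbol m$; (ii)–(iv) the three "mean" contributions: $\nabla F_i(\boldsymbol\theta_{i,k-1})-\nabla F_i(\boldsymbol\theta^{t-1})$ bounded by $\beta^2\mathcal E_t$; the control-variate lag $\boldsymbol c_i - \tfrac1K\sum_k\nabla F_i(\boldsymbol\theta^{t-1})$, which by the identity $\boldsymbol c_i^t=\tfrac1K\sum_k g_i(\boldsymbol\alpha_{i,k-1}^t)$ and smoothness is controlled by $\beta^2\Xi_{t-1}$; and the momentum lag $\boldsymbol m - (\text{mean gradient})$, controlled by $\beta^2\mathcal G_{t-1}$ using the analogous identity for $\boldsymbol m^t$ and $\{\boldsymbol\theta_{i,k-1}^t\}$. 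Collecting the four pieces with a uniform constant $4$ from the relaxed triangle inequality gives exactly the claimed bound; the condition $\tilde\eta\le 1/\beta$ is only used to keep the Lipschitz/expansivity constants benign and does not enter quantitatively here.

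The main obstacle I anticipate is bookkeeping the control-variate and momentum "lag" terms correctly: one must carefully use the defining recursions $\boldsymbol c_i^t=\tfrac1K\sum_k g_i(\boldsymbol\alpha_{i,k-1}^t)$ and $\boldsymbol m^t=(1-\gamma)\boldsymbol m^{t-1}+\gamma\tfrac1m\sum_i\boldsymbol c_i^t$, together with the $\boldsymbol\alpha$- and $\boldsymbol\theta_{i,k-1}^t$-sequences, to see that the difference between these aggregates and the current mean gradient is a distance in parameter space (hence $\beta^2$ times $\Xi_{t-1}$ or $\mathcal G_{t-1}$) rather than something that must be tracked separately. The other delicate point is the variance accounting across the client-sampling randomness and the mini-batch randomness simultaneously — I would handle this exactly as in Lemma~\ref{lem:server-variance-sample}, noting that $\operatorname{Var}\bigl(\tfrac1{Ks}\sum_{k,i\in\mathcal S} g_i(\boldsymbol\theta_{i,k-1})\bigr)\le\sigma^2/(Ks)$ and similarly for the $\boldsymbol c_i$ average, so that the three $\sigma^2/(Ks)$ contributions sum to at most $9\tilde\eta^2\sigma^2/(Ks)$ after the relaxed-triangle blow-up. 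Everything else is routine application of Assumptions~\ref{asm:variance}–\ref{asm:smoothness} and Lemmas~\ref{lem:norm-sum}–\ref{lem:independent}.
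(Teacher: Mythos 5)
Your proposal is correct and follows essentially the same route as the paper: write the update as $-\frac{\tilde\eta}{Ks}\sum_{k,i\in\mathcal S}(g_i(\boldsymbol\theta_{i,k-1})+\boldsymbol m-\boldsymbol c_i)$, separate mean and variance (three aggregates each with variance at most $\sigma^2/(Ks)$, giving the $9\tilde\eta^2\sigma^2/(Ks)$ term), then split the mean into the four pieces $\nabla F_i(\boldsymbol\theta_{i,k-1})-\nabla F_i(\boldsymbol\theta)$, the $\boldsymbol c_i$-lag, the $\boldsymbol m$-lag, and $\nabla F(\boldsymbol\theta)$ via the relaxed triangle inequality and $\beta$-smoothness, yielding $\beta^2\mathcal E_t$, $\beta^2\Xi_{t-1}$, $\beta^2\mathcal G_{t-1}$ and $\|\nabla F(\boldsymbol\theta^{t-1})\|^2$ with the factor $4$; the factor-$2$ version of the same decomposition gives the first (mean) bound exactly as in the paper.
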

\begin{proof}
    
 Recall that that the server update satisfies
$$
\mathbb{E}[\Delta \boldsymbol{\theta}]=-\frac{\tilde{\eta}}{K m} \sum_{k, i} \mathbb{E}\left[g_i\left(\boldsymbol{\theta}_{i, k-1}\right)\right] .
$$
From the definition of $\boldsymbol{\alpha}_{i, k-1}^{t-1}$ and dropping the superscript everywhere we have
$$
\Delta \boldsymbol{\theta}=-\frac{\tilde{\eta}}{K s} \sum_{k, i \in \mathcal{S}}\left(g_i\left(\boldsymbol{\theta}_{i, k-1}\right)+\boldsymbol{m}-\boldsymbol{c}_i\right) \text { where } \boldsymbol{c}_i=\frac{1}{K} \sum_k g_i\left(\boldsymbol{\alpha}_{i, k-1}\right) .
$$
Taking norm on both sides and separating mean and variance, we proceed as
\begin{align}
	\begin{split}
		\mathbb{E}\|\Delta \boldsymbol{\theta}\|^2 & =\mathbb{E}\left\|-\frac{\tilde{\eta}}{K s} \sum_{k, i \in \mathcal{S}}\left(g_i\left(\boldsymbol{\theta}_{i, k-1}\right)-g_i\left(\boldsymbol{\alpha}_{i, k-1}\right)+\boldsymbol{m}-\boldsymbol{c}_i\right)\right\|^2 \\
		& \leq \mathbb{E}\left\|-\frac{\tilde{\eta}}{K s} \sum_{k, i \in \mathcal{S}}\left(\nabla F_i\left(\boldsymbol{\theta}_{i, k-1}\right)+\mathbb{E}[\boldsymbol{m}]-\mathbb{E}\left[\boldsymbol{c}_i\right]\right)\right\|^2+\frac{9 \tilde{\eta}^2 \sigma^2}{K s} \\
		& \leq \mathbb{E}\left[\frac{\tilde{\eta}^2}{K s} \sum_{k, i \in \mathcal{S}}\left\|\nabla F_i\left(\boldsymbol{\theta}_{i, k-1}\right)+\mathbb{E}[\boldsymbol{m}]-\mathbb{E}\left[\boldsymbol{c}_i\right]\right\|^2\right]+\frac{9 \tilde{\eta}^2 \sigma^2}{K s} \\
		& =\frac{\tilde{\eta}^2}{K m} \sum_{k, i} \mathbb{E}\left\|\left(\nabla F_i\left(\boldsymbol{\theta}_{i, k-1}\right)-\nabla F_i(\boldsymbol{\theta})\right)+(\mathbb{E}[\boldsymbol{m}]-\nabla F(\boldsymbol{\theta}))+\nabla F(\boldsymbol{\theta})-\left(\mathbb{E}\left[\boldsymbol{c}_i\right]-\nabla F_i(\boldsymbol{\theta})\right)\right\|^2+\frac{9 \tilde{\eta}^2 \sigma^2}{K s} \\
		& \leq \frac{4 \tilde{\eta}^2}{K m} \sum_{k, i} \mathbb{E}\left\|\nabla F_i\left(\boldsymbol{\theta}_{i, k-1}\right)-\nabla F_i(\boldsymbol{\theta})\right\|^2+\frac{4 \tilde{\eta}^2}{K m} \sum_{k, i} \mathbb{E}\left\|\nabla F_i\left(\boldsymbol{\alpha}_{i, k-1}\right)-\nabla F_i(\boldsymbol{\theta})\right\|^2 \\
		& +\frac{4 \tilde{\eta}^2}{K m} \sum_{k, i} \mathbb{E}\left\|\nabla F_i\left(\boldsymbol{\theta}_{i, k-1}\right)-\nabla F_i(\boldsymbol{\theta})\right\|^2 \quad+4 \tilde{\eta}^2 \mathbb{E}\|\nabla F(\boldsymbol{\theta})\|^2+\frac{9 \tilde{\eta}^2 \sigma^2}{K s} \\
		& \leq 4 \tilde{\eta}^2 \beta^2 \mathcal{E}_t+4 \beta^2 \tilde{\eta}^2 \Xi_{t-1}+4 \beta^2 \tilde{\eta}^2 \mathcal{G}_{t-1}+4 \tilde{\eta}^2 \mathbb{E}\|\nabla F(\boldsymbol{\theta})\|^2+\frac{9 \tilde{\eta}^2 \sigma^2}{K s} .
	\end{split}
\end{align}

In the first inequality, note that the three random variables $-\frac{1}{K s} \sum_{k, i \in \mathcal{S}} g_i\left(\boldsymbol{\theta}_{i, k}\right), \frac{1}{m} \sum_{i \in \mathcal{S}} \boldsymbol{c}_i$, and $\boldsymbol{m}$-may not be independent but each have variance smaller than $\frac{\sigma^2}{K s}$ and so we can apply Lemma~\ref{lem:independent}. The rest of the inequalities follow from repeated applications of the relaxed triangle inequality, $\beta$-Lipschitzness of $F_i$, and the definition of $\Xi_{t-1}$. This proves the second statement. The first statement follows from our expression of $\mathbb{E}_{t-1}[\Delta \boldsymbol{\theta}]$ and similar computations.
\end{proof}

\paragraph{Lag in the control variates.} We now analyze the `lag' in the control variates due to us sampling only a small subset of clients each round. Because we cannot rely on convexity anymore but only on the Lipschitzness of the gradients, the control-lag increases faster in the non-convex case.
\begin{lemma}\label{lem:non-convex-control-error-sample}
	Assumptions \ref{asm:variance}, \ref{asm:smoothness}, the following holds true for any $\tilde{\eta} \leq \frac{1}{24 \beta}\left(\gamma\right)^a$ for $a \in\left[\frac{1}{2}, 1\right]$ where $\tilde{\eta}:=\eta_l \alpha K$ :
	$$
	\mathcal{G}_{r}\leq\left(1-\frac{17}{36}\gamma\right) \mathcal{G}_{t-1}+\frac{1}{48 \beta^2}\left(\gamma\right)^{2 a-1}\left\|\nabla F\left(\boldsymbol{\theta}^{t-1}\right)\right\|^2+\frac{97}{48}\left(\gamma\right)^{2 a-1} \mathcal{E}_t+\left(\frac{\gamma}{ \beta^2}\right) \frac{\sigma^2}{32 K s} .
	$$
\end{lemma}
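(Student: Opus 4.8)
The plan is to follow the template used for the control-lag recursions in SCAFFOLD-type analyses, which is mirrored here by the $\mathcal{D}_t$-lemma of the strongly convex part, specialized to the momentum variable $\boldsymbol{m}$. First I would make explicit the ``effective lagged iterate'' underlying $\mathbb{E}_{t-1}[\boldsymbol{m}^t]$: since $\boldsymbol{m}^t=(1-\gamma)\boldsymbol{m}^{t-1}+\gamma\frac{1}{m}\sum_i \boldsymbol{c}_i^t$, the parameters $\boldsymbol{\theta}_{i,k-1}^t$ appearing in $\mathcal{G}_t$ obey the simple stochastic recursion: with probability $\gamma$ they are refreshed to the current round-$t$ local iterate, and with probability $1-\gamma$ they retain the previous value. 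Taking expectation over this refresh event,
\begin{align*}
\mathcal{G}_t=\frac{1}{Km}\sum_{k,i}\Big(\gamma\,\mathbb{E}\big\|\boldsymbol{\theta}_{i,k-1}^{t,\mathrm{new}}-\boldsymbol{\theta}^t\big\|^2+(1-\gamma)\,\mathbb{E}\big\|\boldsymbol{\theta}_{i,k-1}^{t-1}-\boldsymbol{\theta}^t\big\|^2\Big),
\end{align*}
so the whole proof reduces to bounding these two pieces and recombining.

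For the $(1-\gamma)$ piece I would insert $\boldsymbol{\theta}^{t-1}$ and split off the variance of the server step: writing $\boldsymbol{\theta}_{i,k-1}^{t-1}-\boldsymbol{\theta}^t=(\boldsymbol{\theta}_{i,k-1}^{t-1}-\mathbb{E}_{t-1}[\boldsymbol{\theta}^t])+(\mathbb{E}_{t-1}[\boldsymbol{\theta}^t]-\boldsymbol{\theta}^t)$, the cross term vanishes in expectation because $\boldsymbol{\theta}_{i,k-1}^{t-1}$ and $\mathbb{E}_{t-1}[\boldsymbol{\theta}^t]$ are $\mathcal{F}_{t-1}$-measurable. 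One relaxed-triangle split (Lemma~\ref{lem:norm-sum}, part~1, with a parameter $a_1$) then contributes $(1+a_1)\mathcal{G}_{t-1}$, plus $(1+1/a_1)\,\mathbb{E}\|\mathbb{E}_{t-1}[\boldsymbol{\theta}^t]-\boldsymbol{\theta}^{t-1}\|^2$, plus the pure variance $\tfrac{9\tilde\eta^2\sigma^2}{Ks}$. The mean part is bounded by the \emph{first} inequality of Lemma~\ref{lem:non-convex-variance-sample}, namely $\mathbb{E}\|\mathbb{E}_{t-1}[\boldsymbol{\theta}^t]-\boldsymbol{\theta}^{t-1}\|^2\le 2\tilde\eta^2\beta^2\mathcal{E}_t+2\tilde\eta^2\|\nabla F(\boldsymbol{\theta}^{t-1})\|^2$ — this is exactly why no $\Xi_{t-1}$ survives on the right-hand side of the claim. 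For the $\gamma$ piece I would instead use $\boldsymbol{\theta}_{i,k-1}^{t,\mathrm{new}}-\boldsymbol{\theta}^t=(\boldsymbol{\theta}_{i,k-1}^{t,\mathrm{new}}-\boldsymbol{\theta}^{t-1})+(\boldsymbol{\theta}^{t-1}-\boldsymbol{\theta}^t)$ with a relaxed-triangle parameter $a_0$: averaging the first squared term over $i,k$ gives $\mathcal{E}_t$ by definition, and the second now needs the \emph{full} second-moment bound of Lemma~\ref{lem:non-convex-variance-sample}, which feeds back $\mathcal{E}_t$, $\|\nabla F(\boldsymbol{\theta}^{t-1})\|^2$, $\sigma^2/Ks$ and additionally a $\mathcal{G}_{t-1}$ (and possibly $\Xi_{t-1}$) term carrying the tiny prefactor $\tilde\eta^2\beta^2$ multiplied by the outer factor $\gamma$.

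The last step is bookkeeping. I would take $a_1$ of order $\gamma$ (so that $(1-\gamma)(1+a_1)$, after also accounting for the $\mathcal{G}_{t-1}$ fed back through the server step, stays below $1-\tfrac{17}{36}\gamma$) and $a_0$ a fixed constant, then invoke the step-size restriction $\tilde\eta\le\frac{1}{24\beta}\gamma^{a}$, i.e.\ $\tilde\eta^2\beta^2\le\gamma^{2a}/576$. Every term that picked up the factor $1/a_1\asymp 1/\gamma$ is simultaneously multiplied by $\tilde\eta^2$ or $\tilde\eta^2\beta^2$, which is what produces the exponent $\gamma^{2a-1}$ appearing throughout the statement; since $a\in[\tfrac12,1]$ this power is at most $1$, so the residual $\mathcal{G}_{t-1}$ (and $\Xi_{t-1}$) contributions are genuinely lower order and get swallowed into the contraction factor, while the surviving $\|\nabla F(\boldsymbol{\theta}^{t-1})\|^2$, $\mathcal{E}_t$ and $\sigma^2$ coefficients can be crudely bounded by $\tfrac{1}{48\beta^2}$, $\tfrac{97}{48}$ and $\tfrac{\gamma}{32\beta^2 Ks}$ respectively. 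The main obstacle is precisely this constant-and-exponent chase: the norm splits must be tight enough that the $(1+a_1)$ loss on $\mathcal{G}_{t-1}$ plus the $\mathcal{G}_{t-1}$ re-entering via the server-update variance still leave a $1-\tfrac{17}{36}\gamma$ contraction, while every $1/\gamma$ blow-up stays paired with a $\tilde\eta^2$; the algebra is routine but unforgiving, and the stated numerical constants only close because the step-size bound $\tilde\eta\le\gamma^a/(24\beta)$ is deliberately conservative.
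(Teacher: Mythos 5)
Your plan matches the paper's proof essentially step for step: the same probability-$\gamma$ refresh decomposition of $\mathcal{G}_t$ into a stale $(1-\gamma)$ piece and a fresh $\gamma$ piece, a Young/relaxed-triangle split with parameter of order $\gamma$ on the stale piece paired with the conditional-mean bound of the server update (the paper takes $b=\frac{\gamma}{2(1-\gamma)}$), the full second-moment bound of Lemma~\ref{lem:non-convex-variance-sample} on the fresh piece, and absorption of the fed-back $\mathcal{G}_{t-1}$/$\Xi_{t-1}$ terms via $16\beta^2\tilde{\eta}^2\le\frac{1}{36}\gamma^{2a}$. Your only deviation---separating the server step's conditional mean and variance by orthogonality instead of applying Young directly to the cross term as the paper does---is cosmetic and closes with the same constants.
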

\begin{proof}
The proof  except that we cannot rely on convexity. Recall that after round $t$, the definition of $\boldsymbol{\theta}_{i, k-1}^t$ implies that

The proof proceeds similar to that of Lemma~\ref{lem:control-error-sample} except that we cannot rely on convexity.
$$
\mathbb{E}_{\mathcal{S}^t}\left[\boldsymbol{\theta}_{i, k-1}^t\right]=\left(1-\gamma\right) \boldsymbol{\theta}_{i, k-1}^{t-1}+\gamma \boldsymbol{\theta}_{i, k-1}^t .
$$
Plugging the above expression in the definition of $\Xi_t$ we get
\begin{align}
	\begin{split}
		\Xi_t & =\frac{1}{K m} \sum_{i, k} \mathbb{E}\left\|\boldsymbol{\theta}_{i, k-1}^t-\boldsymbol{\theta}^t\right\|^2 \\
		& =\left(1-\gamma\right) \cdot \underbrace{\frac{1}{K m} \sum_i \mathbb{E}\left\|\boldsymbol{\theta}_{i, k-1}^{t-1}-\boldsymbol{\theta}^t\right\|^2}_{\mathcal{T}_5}+\gamma \cdot \underbrace{\frac{1}{K m} \sum_{k, i} \mathbb{E}\left\|\boldsymbol{\theta}_{i, k-1}^t-\boldsymbol{\theta}^t\right\|^2}_{\mathcal{T}_6} .
	\end{split}
\end{align}
We can expand the second term $\mathcal{T}_6$ with the relaxed triangle inequality to claim
$$
\mathcal{T}_6 \leq 2\left(\mathcal{E}_t+\mathbb{E}\left\|\Delta \boldsymbol{\theta}^t\right\|^2\right) .
$$
We will expand the first term $\mathcal{T}_5$ to claim for a constant $b \geq 0$ to be chosen later
\begin{align}
	\begin{split}
		\mathcal{T}_5 & =\frac{1}{K m} \sum_i \mathbb{E}\left(\left\|\boldsymbol{\theta}_{i, k-1}^{t-1}-\boldsymbol{\theta}^{t-1}\right\|^2+\left\|\Delta \boldsymbol{\theta}^t\right\|^2+\mathbb{E}_{t-1}\left\langle\Delta \boldsymbol{\theta}^t, \boldsymbol{\theta}_{i, k-1}^{t-1}-\boldsymbol{\theta}^{t-1}\right\rangle\right) \\
		& \leq \frac{1}{K m} \sum_i \mathbb{E}\left(\left\|\boldsymbol{\theta}_{i, k-1}^{t-1}-\boldsymbol{\theta}^{t-1}\right\|^2+\left\|\Delta \boldsymbol{\theta}^t\right\|^2+\frac{1}{b}\left\|\mathbb{E}_{t-1}\left[\Delta \boldsymbol{\theta}^t\right]\right\|^2+b\left\|\boldsymbol{\theta}_{i, k-1}^{t-1}-\boldsymbol{\theta}^{t-1}\right\|^2\right)
	\end{split}
\end{align}
where we used Young's inequality which holds for any $b \geq 0$. Combining the bounds for $\mathcal{T}_5$ and $\mathcal{T}_6$,
\begin{align}
	\begin{split}
		\mathcal{G}_{r} & \leq\left(1-\gamma\right)(1+b) \mathcal{G}_{t-1}+2 \gamma\mathcal{E}_t+2 \mathbb{E}\left\|\Delta \boldsymbol{\theta}^t\right\|^2+\frac{1}{b} \mathbb{E}\left\|\mathbb{E}_{t-1}\left[\Delta \boldsymbol{\theta}^t\right]\right\|^2 \\
		& \left.\leq\left(\left(1-\gamma\right)(1+b)+16 \tilde{\eta}^2 \beta^2\right) \Xi_{t-1}+\left(2\gamma{m}+8 \tilde{\eta}^2 \beta^2+2 \frac{1}{b} \tilde{\eta}^2 \beta^2\right) \mathcal{E}_t+\left(8+2 \frac{1}{b}\right) \tilde{\eta}^2 \mathbb{E}\|\nabla F(\boldsymbol{\theta})\|^2\right)+\frac{18 \tilde{\eta}^2 \sigma^2}{K s}
	\end{split}
\end{align}
Verify that with choice of $b=\frac{\gamma}{2(1-\gamma)}$, we have $\left(1-\gamma\right)(1+b) \leq\left(1-\frac{\gamma}{2 }\right)$ and $\frac{1}{b} \leq \frac{2}{\gamma}$. Plugging these values along with the bound on the step-size $16 \beta^2 \tilde{\eta}^2 \leq \frac{1}{36}\left(\gamma\right)^{2 a} \leq \frac{\gamma}{36}$ completes the lemma.
Bounding the drift. We will next bound the client drift $\mathcal{E}_t$. 
\end{proof}

\paragraph{Lag in the control variates.} We now analyze the `lag' in the control variates due to us sampling only a small subset of clients each round. Because we cannot rely on convexity anymore but only on the Lipschitzness of the gradients, the control-lag increases faster in the non-convex case.
\begin{lemma}\label{lem:non-convex-control-error-sample}
	Assumptions \ref{asm:variance}--\ref{asm:smoothness}, the following holds true for any $\tilde{\eta} \leq \frac{1}{24 \beta}\left(\frac{s}{m}\right)^a$ for $a \in\left[\frac{1}{2}, 1\right]$ where $\tilde{\eta}:=\eta_l \alpha K$ :
	$$
	\Xi_t \leq\left(1-\frac{17 s}{36 m}\right) \Xi_{t-1}+\frac{1}{48 \beta^2}\left(\frac{s}{m}\right)^{2 a-1}\left\|\nabla F\left(\boldsymbol{\theta}^{t-1}\right)\right\|^2+\frac{97}{48}\left(\frac{s}{m}\right)^{2 a-1} \mathcal{E}_t+\left(\frac{s}{m \beta^2}\right) \frac{\sigma^2}{32 K s} .
	$$
\end{lemma}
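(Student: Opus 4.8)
The plan is to mirror the recursion already derived for $\mathcal{G}_t$ in the preceding lemma, with the momentum weight $\gamma$ replaced by the client-participation probability $s/m$: the lagged parameter $\boldsymbol{\alpha}_{i,k-1}^t$ is refreshed to $\boldsymbol{\theta}_{i,k-1}^t$ exactly on the event $\{i\in\mathcal{S}^t\}$, which occurs with probability $s/m$, so $\mathbb{E}_{\mathcal{S}^t}[\boldsymbol{\alpha}_{i,k-1}^t]=(1-\tfrac{s}{m})\boldsymbol{\alpha}_{i,k-1}^{t-1}+\tfrac{s}{m}\boldsymbol{\theta}_{i,k-1}^t$. First I would substitute this into the definition $\Xi_t=\frac{1}{Km}\sum_{i,k}\mathbb{E}\|\boldsymbol{\alpha}_{i,k-1}^t-\boldsymbol{\theta}^t\|^2$ and use convexity of $\|\cdot\|^2$ (Jensen) to split $\Xi_t\le(1-\tfrac{s}{m})\mathcal{T}_5+\tfrac{s}{m}\mathcal{T}_6$, where $\mathcal{T}_5=\frac{1}{Km}\sum_{i,k}\mathbb{E}\|\boldsymbol{\alpha}_{i,k-1}^{t-1}-\boldsymbol{\theta}^t\|^2$ measures the old lag against the new server point and $\mathcal{T}_6=\frac{1}{Km}\sum_{i,k}\mathbb{E}\|\boldsymbol{\theta}_{i,k-1}^t-\boldsymbol{\theta}^t\|^2$ measures the freshly computed local iterates against it.

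Second, I would bound the two pieces. For $\mathcal{T}_6$, the relaxed triangle inequality (Lemma~\ref{lem:norm-sum}) anchored at $\boldsymbol{\theta}^{t-1}$ gives $\mathcal{T}_6\le 2\mathcal{E}_t+2\,\mathbb{E}\|\Delta\boldsymbol{\theta}^t\|^2$, via the definition of $\mathcal{E}_t$. For $\mathcal{T}_5$, I would write $\boldsymbol{\theta}^t=\boldsymbol{\theta}^{t-1}+\Delta\boldsymbol{\theta}^t$, expand the square, take the conditional expectation of the cross term so that it becomes an inner product with $\mathbb{E}_{t-1}[\Delta\boldsymbol{\theta}^t]$, and apply Young's inequality with a free parameter $b>0$ to obtain $\mathcal{T}_5\le(1+b)\Xi_{t-1}+\mathbb{E}\|\Delta\boldsymbol{\theta}^t\|^2+\tfrac{1}{b}\,\mathbb{E}\|\mathbb{E}_{t-1}[\Delta\boldsymbol{\theta}^t]\|^2$. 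Then I would insert the two estimates of Lemma~\ref{lem:non-convex-variance-sample}, namely $\mathbb{E}\|\Delta\boldsymbol{\theta}^t\|^2\le 4\tilde{\eta}^2\beta^2(\mathcal{E}_t+\Xi_{t-1}+\mathcal{G}_{t-1})+4\tilde{\eta}^2\mathbb{E}\|\nabla F(\boldsymbol{\theta}^{t-1})\|^2+\tfrac{9\tilde{\eta}^2\sigma^2}{Ks}$ and $\mathbb{E}\|\mathbb{E}_{t-1}[\Delta\boldsymbol{\theta}^t]\|^2\le 2\tilde{\eta}^2\beta^2\mathcal{E}_t+2\tilde{\eta}^2\mathbb{E}\|\nabla F(\boldsymbol{\theta}^{t-1})\|^2$, folding the $\mathcal{G}_{t-1}$ term back into $\Xi_{t-1}$ since in this client-sampling setup the two quantities coincide.

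Third, I would tune the constants: choose $b=\tfrac{s/m}{2(1-s/m)}$, so that $(1-\tfrac{s}{m})(1+b)\le 1-\tfrac{s}{2m}$ and $\tfrac1b\le\tfrac{2m}{s}$, and invoke the step-size hypothesis $\tilde{\eta}\le\tfrac{1}{24\beta}(s/m)^a$ with $a\in[\tfrac12,1]$. Because $s/m\le1$ and $2a\ge1$, this yields $16\beta^2\tilde{\eta}^2\le\tfrac{1}{36}(s/m)^{2a}\le\tfrac{s}{36m}$, the precise slack needed to absorb the $\tilde{\eta}^2\beta^2\Xi_{t-1}$ contributions into the contraction factor, degrading $1-\tfrac{s}{2m}$ only mildly to $1-\tfrac{17s}{36m}$. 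The factor $\tfrac1b\le\tfrac{2m}{s}$ multiplying the remaining $\mathcal{O}(\tilde{\eta}^2)$ terms, together with $\tilde{\eta}^2\le\tfrac{1}{576\beta^2}(s/m)^{2a}$, converts them into the advertised $(s/m)^{2a-1}$-scaled quantities — $\tfrac{1}{48\beta^2}(s/m)^{2a-1}\|\nabla F(\boldsymbol{\theta}^{t-1})\|^2$ on the gradient term, $\tfrac{97}{48}(s/m)^{2a-1}\mathcal{E}_t$ on the drift, and $\tfrac{s}{m\beta^2}\cdot\tfrac{\sigma^2}{32Ks}$ on the noise — and collecting everything gives the claim.

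The main obstacle is the bookkeeping of numerical constants: one must carefully track which terms pick up a factor $b$, $\tfrac1b$, $(1-\tfrac{s}{m})$, or $\tfrac{s}{m}$, and then exploit the exact strength of $\tilde{\eta}\le\tfrac{1}{24\beta}(s/m)^a$ so that every stray $\tilde{\eta}^2\beta^2\Xi_{t-1}$ term — several appear, from both $\mathbb{E}\|\Delta\boldsymbol{\theta}^t\|^2$ and $\mathcal{T}_6$ — is dominated with room to spare. A secondary subtlety is the separate symbol $\mathcal{G}_{t-1}$ inside Lemma~\ref{lem:non-convex-variance-sample}: since $\mathcal{G}$ and $\Xi$ both quantify the distance of the lagged local iterates from the current server model under client sampling, that term must be identified with $\Xi_{t-1}$ before the contraction argument closes, and the step-size bound must be strong enough to control both copies simultaneously.
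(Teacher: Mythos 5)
Your proposal follows essentially the same route as the paper's own proof: the probabilistic refresh identity for $\boldsymbol{\alpha}_{i,k-1}^t$, the split $\Xi_t\le(1-\tfrac{s}{m})\mathcal{T}_5+\tfrac{s}{m}\mathcal{T}_6$, Young's inequality with $b=\tfrac{s}{2(m-s)}$, insertion of Lemma~\ref{lem:non-convex-variance-sample} (with $\mathcal{G}_{t-1}$ absorbed alongside $\Xi_{t-1}$, exactly as the paper implicitly does via its $16\tilde{\eta}^2\beta^2$ coefficient), and the step-size bound $16\beta^2\tilde{\eta}^2\le\tfrac{1}{36}(s/m)^{2a}\le\tfrac{s}{36m}$. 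The constant bookkeeping you outline indeed reproduces $1-\tfrac{17s}{36m}$, $\tfrac{1}{48\beta^2}$, $\tfrac{97}{48}$, and the noise term, so the argument is correct and matches the paper.
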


\begin{proof}
The proof   except that we cannot rely on convexity. Recall that after round $t$, the definition of $\boldsymbol{\alpha}_{i, k-1}^t$ implies that
$$
\mathbb{E}_{\mathcal{S}^t}\left[\boldsymbol{\alpha}_{i, k-1}^t\right]=\left(1-\frac{s}{m}\right) \boldsymbol{\alpha}_{i, k-1}^{t-1}+\frac{s}{m} \boldsymbol{\theta}_{i, k-1}^t .
$$
Plugging the above expression in the definition of $\Xi_t$ we get
\begin{align}
	\begin{split}
		\Xi_t & =\frac{1}{K m} \sum_{i, k} \mathbb{E}\left\|\boldsymbol{\alpha}_{i, k-1}^t-\boldsymbol{\theta}^t\right\|^2 \\
		& =\left(1-\frac{s}{m}\right) \cdot \underbrace{\frac{1}{K m} \sum_i \mathbb{E}\left\|\boldsymbol{\alpha}_{i, k-1}^{t-1}-\boldsymbol{\theta}^t\right\|^2}_{\mathcal{T}_5}+\frac{s}{m} \cdot \underbrace{\frac{1}{K m} \sum_{k, i} \mathbb{E}\left\|\boldsymbol{\theta}_{i, k-1}^t-\boldsymbol{\theta}^t\right\|^2}_{\mathcal{T}_6} .
	\end{split}
\end{align}
We can expand the second term $\mathcal{T}_6$ with the relaxed triangle inequality to claim
\begin{align}
	\begin{split}
		\mathcal{T}_6 \leq 2\left(\mathcal{E}_t+\mathbb{E}\left\|\Delta \boldsymbol{\theta}^t\right\|^2\right) .
	\end{split}
\end{align}
We will expand the first term $\mathcal{T}_5$ to claim for a constant $b \geq 0$ to be chosen later
\begin{align}
	\begin{split}
		\mathcal{T}_5 & =\frac{1}{K m} \sum_i \mathbb{E}\left(\left\|\boldsymbol{\alpha}_{i, k-1}^{t-1}-\boldsymbol{\theta}^{t-1}\right\|^2+\left\|\Delta \boldsymbol{\theta}^t\right\|^2+\mathbb{E}_{t-1}\left\langle\Delta \boldsymbol{\theta}^t, \boldsymbol{\alpha}_{i, k-1}^{t-1}-\boldsymbol{\theta}^{t-1}\right\rangle\right) \\
		& \leq \frac{1}{K m} \sum_i \mathbb{E}\left(\left\|\boldsymbol{\alpha}_{i, k-1}^{t-1}-\boldsymbol{\theta}^{t-1}\right\|^2+\left\|\Delta \boldsymbol{\theta}^t\right\|^2+\frac{1}{b}\left\|\mathbb{E}_{t-1}\left[\Delta \boldsymbol{\theta}^t\right]\right\|^2+b\left\|\boldsymbol{\alpha}_{i, k-1}^{t-1}-\boldsymbol{\theta}^{t-1}\right\|^2\right)
	\end{split}
\end{align}
where we used Young's inequality which holds for any $b \geq 0$. Combining the bounds for $\mathcal{T}_5$ and $\mathcal{T}_6$,
\begin{align}
	\begin{split}
		\Xi_t & \leq\left(1-\frac{s}{m}\right)(1+b) \Xi_{t-1}+2 \frac{s}{m} \mathcal{E}_t+2 \mathbb{E}\left\|\Delta \boldsymbol{\theta}^t\right\|^2+\frac{1}{b} \mathbb{E}\left\|\mathbb{E}_{t-1}\left[\Delta \boldsymbol{\theta}^t\right]\right\|^2 \\
		& \left.\leq\left(\left(1-\frac{s}{m}\right)(1+b)+16 \tilde{\eta}^2 \beta^2\right) \Xi_{t-1}+\left(\frac{2 s}{m}+8 \tilde{\eta}^2 \beta^2+2 \frac{1}{b} \tilde{\eta}^2 \beta^2\right) \mathcal{E}_t+\left(8+2 \frac{1}{b}\right) \tilde{\eta}^2 \mathbb{E}\|\nabla F(\boldsymbol{\theta})\|^2\right)+\frac{18 \tilde{\eta}^2 \sigma^2}{K s}
	\end{split}
\end{align}
Verify that with choice of $b=\frac{s}{2(m-s)}$, we have $\left(1-\frac{s}{m}\right)(1+b) \leq\left(1-\frac{s}{2 m}\right)$ and $\frac{1}{b} \leq \frac{2m}{s}$. Plugging these values along with the bound on the step-size $16 \beta^2 \tilde{\eta}^2 \leq \frac{1}{36}\left(\frac{s}{m}\right)^{2 a} \leq \frac{s}{36m}$ completes the lemma.
Bounding the drift. We will next bound the client drift $\mathcal{E}_t$. 

\end{proof}

\paragraph{Bounding the drift.}
We will next bound the client drift $\mathcal{E}_t$. For this, convexity is not crucial and we will recover a very similar result to Lemma~\ref{lem:drift-bound-sampling} only use the Lipschitzness of the gradient.
\begin{lemma}\label{lem:nonconvex-drift-bound-sampling}
	Suppose our step-sizes satisfy $\eta_l \leq \frac{1}{24 \beta K \alpha}$ and $F_i$ satisfies assumptions  \ref{asm:variance}--\ref{asm:smoothness}. Then, for any global $\alpha \geq 1$ we can bound the drift as
	$$
	\frac{5}{3} \beta^2 \tilde{\eta} \mathcal{E}_t \leq \frac{5}{6} \beta^3 \tilde{\eta}^2 \Xi_{t-1}+\frac{5}{6} \beta^3 \tilde{\eta}^2 \mathcal{G}_{t-1}+\frac{\tilde{\eta}}{24 \alpha^2} \mathbb{E}\left\|\nabla F\left(\boldsymbol{\theta}^{t-1}\right)\right\|^2+\frac{\tilde{\eta}^2 \beta}{4 K \alpha^2} \sigma^2
	$$\label{L15}
\end{lemma}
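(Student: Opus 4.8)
The plan is to mirror the convex drift bound of Lemma~\ref{lem:drift-bound-sampling}, but since we can no longer invoke the contractive-map property of $G_{F_i,\eta_l}$ (Lemma~\ref{L1}, part~2, requires strong convexity) we must fall back on the mildly expansive recursion that uses only $\beta$-smoothness. First, if $K=1$ then $\boldsymbol{\theta}_{i,0}^t=\boldsymbol{\theta}^{t-1}$ for every $i$, so $\mathcal{E}_t=0$ while the right-hand side is non-negative and the claim is trivial. For $K>1$ I would build a one-step recursion for the per-iterate drift $\frac{1}{m}\sum_i\mathbb{E}\|\boldsymbol{\theta}_{i,k}^t-\boldsymbol{\theta}^{t-1}\|^2$, unroll it over the $K$ local steps, average over $k$, and finally multiply through by $\tfrac{5}{3}\beta^2\tilde{\eta}$ and absorb numerical constants using the step-size restriction $\eta_l\le\tfrac{1}{24\beta K\alpha}$, i.e.\ $\tilde{\eta}=K\alpha\eta_l\le\tfrac{1}{24\beta}$.

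\emph{One-step recursion.} Starting from the FedMoSWA local update $\boldsymbol{\theta}_{i,k}^t=\boldsymbol{\theta}_{i,k-1}^t-\eta_l\big(g_i(\boldsymbol{\theta}_{i,k-1}^t)-\boldsymbol{c}_i+\boldsymbol{m}\big)$, I would separate mean from variance with Lemma~\ref{lem:independent}, paying an additive term of order $\eta_l^2\sigma^2$ for the fresh minibatch noise in $g_i(\boldsymbol{\theta}_{i,k-1}^t)$ (the control variates $\boldsymbol{c}_i,\boldsymbol{m}$ are frozen during the local loop, so their residual fluctuation only contributes $\mathcal{O}(\sigma^2/K)$). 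Then apply the relaxed triangle inequality (Lemma~\ref{lem:norm-sum}, part~1) with $a=\tfrac{1}{K-1}$, so that the ``self'' term picks up a factor $1+\tfrac{1}{K-1}$ and the drift-direction term a factor $K$. The deterministic drift direction $\nabla F_i(\boldsymbol{\theta}_{i,k-1}^t)-\mathbb{E}[\boldsymbol{c}_i]+\mathbb{E}[\boldsymbol{m}]$ I would split, exactly as in the proof of Lemma~\ref{lem:non-convex-variance-sample}, into four pieces around $\nabla F_i(\boldsymbol{\theta}^{t-1})$ and $\nabla F(\boldsymbol{\theta}^{t-1})$: one bounded by $\beta$-smoothness as $\beta^2\|\boldsymbol{\theta}_{i,k-1}^t-\boldsymbol{\theta}^{t-1}\|^2$, one by the control-lag $\beta^2\Xi_{t-1}$, one by the momentum-lag $\beta^2\mathcal{G}_{t-1}$, and the last one equal to $\|\nabla F(\boldsymbol{\theta}^{t-1})\|^2$. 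This yields
\begin{align*}
\frac{1}{m}\sum_{i}\mathbb{E}\big\|\boldsymbol{\theta}_{i,k}^t-\boldsymbol{\theta}^{t-1}\big\|^2
&\le\Big(1+\tfrac{1}{K-1}\Big)\frac{1}{m}\sum_{i}\mathbb{E}\big\|\boldsymbol{\theta}_{i,k-1}^t-\boldsymbol{\theta}^{t-1}\big\|^2\\
&\quad+\mathcal{O}(K\eta_l^2)\Big(\beta^2\Xi_{t-1}+\beta^2\mathcal{G}_{t-1}+\mathbb{E}\|\nabla F(\boldsymbol{\theta}^{t-1})\|^2+\sigma^2\Big).
\end{align*}

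\emph{Unrolling and constant collapse.} Using $\boldsymbol{\theta}_{i,0}^t=\boldsymbol{\theta}^{t-1}$, summing the geometric factors, and invoking the elementary bound $(K-1)\big((1+\tfrac{1}{K-1})^K-1\big)\le3K$ already established in the proof of Lemma~\ref{lem:drift-bound-sampling}, each local iterate drift is bounded by $\mathcal{O}(K^2\eta_l^2)\big(\beta^2\Xi_{t-1}+\beta^2\mathcal{G}_{t-1}+\mathbb{E}\|\nabla F(\boldsymbol{\theta}^{t-1})\|^2+\sigma^2\big)$; averaging over $k\in\{1,\dots,K\}$ gives the same order bound on $\mathcal{E}_t$. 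Multiplying by $\tfrac{5}{3}\beta^2\tilde{\eta}$, substituting $K^2\eta_l^2=\tilde{\eta}^2/\alpha^2$ and $K\eta_l^2=\tilde{\eta}^2/(K\alpha^2)$, and then using $\beta\tilde{\eta}\le\tfrac{1}{24}$ to turn each residual $\beta\tilde{\eta}$ factor into a small numerical constant, collapses the coefficients to the stated $\tfrac{5}{6}\beta^3\tilde{\eta}^2$ in front of $\Xi_{t-1}$ and $\mathcal{G}_{t-1}$, $\tfrac{\tilde{\eta}}{24\alpha^2}$ in front of $\mathbb{E}\|\nabla F(\boldsymbol{\theta}^{t-1})\|^2$, and $\tfrac{\tilde{\eta}^2\beta}{4K\alpha^2}$ in front of $\sigma^2$.

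\emph{Main obstacle.} The argument is not conceptually hard, but the constant bookkeeping is delicate: the exponent $24$ in the step-size is precisely what is needed for the $(1+\tfrac{1}{K-1})^K\approx e$ amplification (the price of having lost contractivity in the non-convex regime) to still leave slack for the fractions $\tfrac{5}{6},\tfrac{1}{24},\tfrac14$, whereas the convex Lemma~\ref{lem:drift-bound-sampling} could afford the looser $\tfrac{1}{81\beta}$. The single trickiest step is the four-way decomposition of the drift direction and matching each $\beta$-smoothness bound to the correct lag quantity, i.e.\ keeping $\Xi_{t-1}$ (lag of the control-variate reference points $\boldsymbol{\alpha}$) separate from $\mathcal{G}_{t-1}$ (lag entering the momentum variable $\boldsymbol{m}$) — which is exactly the place where the analysis of FedMoSWA departs from that of plain SCAFFOLD.
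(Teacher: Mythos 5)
Your sketch follows essentially the same route as the paper's own proof: the one-step recursion via the relaxed triangle inequality with $a=\tfrac{1}{K-1}$ after separating mean and variance, the four-way decomposition of $\nabla F_i(\boldsymbol{\theta}_{i,k-1}^t)+\boldsymbol{m}-\boldsymbol{c}_i$ around $\nabla F_i(\boldsymbol{\theta}^{t-1})$ and $\nabla F(\boldsymbol{\theta}^{t-1})$ producing the $\beta^2\Xi_{t-1}$, $\beta^2\mathcal{G}_{t-1}$, and $\|\nabla F(\boldsymbol{\theta}^{t-1})\|^2$ terms, the geometric unrolling bounded by $3K$, and the final substitution $\tilde{\eta}=K\alpha\eta_l$ with $\beta\tilde{\eta}\le\tfrac{1}{24}$ before multiplying by $\tfrac{5}{3}\beta^2\tilde{\eta}$. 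The plan is sound and matches the paper's argument, including the $K=1$ triviality and the constant bookkeeping you flag as the delicate part.
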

\begin{proof}
 First, observe that if $K=1, \mathcal{E}_t=0$ since $\boldsymbol{\theta}_{i, 0}=\boldsymbol{\theta}$ for all $i \in[m]$ and that $\Xi_{t-1}$ and the right hand side are both positive. Thus the Lemma is trivially true if $K=1$ and we will henceforth assume $K \geq 2$. Starting from the update  for $i \in[m]$ and $k \in[K]$.
\begin{align}
	\begin{split}
		\mathbb{E}\left\|\boldsymbol{\theta}_{i, k}-\boldsymbol{\theta}\right\|^2= & \mathbb{E}\left\|\boldsymbol{\theta}_{i, k-1}-\eta_l\left(g_i\left(\boldsymbol{\theta}_{i, k-1}\right)+\boldsymbol{m}-\boldsymbol{c}_i\right)-\boldsymbol{\theta}\right\|^2 \\
		\leq & \mathbb{E}\left\|\boldsymbol{\theta}_{i, k-1}-\eta_l\left(\nabla F_i\left(\boldsymbol{\theta}_{i, k-1}\right)+\boldsymbol{m}-\boldsymbol{c}_i\right)-\boldsymbol{\theta}\right\|^2+\eta_l^2 \sigma^2 \\
		\leq & \left(1+\frac{1}{K-1}\right) \mathbb{E}\left\|\boldsymbol{\theta}_{i, k-1}-\boldsymbol{\theta}\right\|^2+K \eta_l^2 \mathbb{E}\left\|\nabla F_i\left(\boldsymbol{\theta}_{i, k-1}\right)+\boldsymbol{m}-\boldsymbol{c}_i\right\|^2+\eta_l^2 \sigma^2 \\
		= & \left(1+\frac{1}{K-1}\right) \mathbb{E}\left\|\boldsymbol{\theta}_{i, k-1}-\boldsymbol{\theta}\right\|^2+\eta_l^2 \sigma^2 \\
		& \quad+K \eta_l^2 \mathbb{E} \| \nabla F_i\left(\boldsymbol{\theta}_{i, k-1}\right)-\nabla F_i(\boldsymbol{\theta})+(\boldsymbol{m}-\nabla F(\boldsymbol{\theta}))+\nabla F(\boldsymbol{\theta})-\left(\boldsymbol{c}_i-\nabla F_i(\boldsymbol{\theta}) \|^2\right. \\
		\leq & \left(1+\frac{1}{K-1}\right) \mathbb{E}\left\|\boldsymbol{\theta}_{i, k-1}-\boldsymbol{\theta}\right\|^2+4 K \eta_l^2 \mathbb{E}\left\|\nabla F_i\left(\boldsymbol{\theta}_{i, k-1}\right)-\nabla F_i(\boldsymbol{\theta})\right\|^2+\eta_l^2 \sigma^2 \\
		& \quad+4 K \eta_l^2 \mathbb{E}\|\boldsymbol{m}-\nabla F(\boldsymbol{\theta})\|^2+4 K \eta_l^2 \mathbb{E}\|\nabla F(\boldsymbol{\theta})\|^2+4 K \eta_l^2 \mathbb{E}\left\|\boldsymbol{c}_i-\nabla F_i(\boldsymbol{\theta})\right\|^2 \\
		\leq & \left(1+\frac{1}{K-1}+4 K \beta^2 \eta_l^2\right) \mathbb{E}\left\|\boldsymbol{\theta}_{i, k-1}-\boldsymbol{\theta}\right\|^2+\eta_l^2 \sigma^2+4 K \eta_l^2 \mathbb{E}\|\nabla F(\boldsymbol{\theta})\|^2 \\
		& \quad+4 K \eta_l^2 \mathbb{E}\|\boldsymbol{m}-\nabla F(\boldsymbol{\theta})\|^2+4 K \eta_l^2 \mathbb{E}\left\|\boldsymbol{c}_i-\nabla F_i(\boldsymbol{\theta})\right\|^2
	\end{split}
\end{align}
The inequalities above follow from repeated application of the relaxed triangle inequalities and the $\beta$-Lipschitzness of $F_i$.
Averaging the above over $i$, the definition of $\boldsymbol{m}=\frac{1}{m} \sum_i \boldsymbol{c}_i$ and $\Xi_{t-1}$  gives
\begin{align}
	\begin{split}
		\frac{1}{m} \sum_i \mathbb{E}\left\|\boldsymbol{\theta}_{i, k}-\boldsymbol{\theta}\right\|^2 \leq & \left(1+\frac{1}{K-1}+4 K \beta^2 \eta_l^2\right) \frac{1}{m} \sum_i \mathbb{E}\left\|\boldsymbol{\theta}_{i, k-1}-\boldsymbol{\theta}\right\|^2 \\
		& +\eta_l^2 \sigma^2+4 K \eta_l^2 \mathbb{E}\|\nabla F(\boldsymbol{\theta})\|^2+4 K \eta_l^2 \beta^2 \Xi_{t-1}+4 K \eta_l^2 \beta^2 \mathcal{G}_{t-1}\\
		\leq & \left(\eta_l^2 \sigma^2+4 K \eta_l^2 \mathbb{E}\|\nabla F(\boldsymbol{\theta})\|^2+4 K \eta_l^2 \beta^2 \Xi_{t-1}+4 K \eta_l^2 \beta^2 \mathcal{G}_{t-1}\right)\left(\sum_{k=0}^{k-1}\left(1+\frac{1}{K-1}+4 K \beta^2 \eta_l^2\right)^k\right) \\
		= & \left(\frac{\tilde{\eta}^2 \sigma^2}{K^2 \alpha^2}+\frac{4 \tilde{\eta}^2}{K \alpha^2} \mathbb{E}\|\nabla F(\boldsymbol{\theta})\|^2+\frac{4 \tilde{\eta}^2 \beta^2}{K \alpha^2} \Xi_{t-1}+\frac{4 \tilde{\eta}^2 \beta^2}{K \alpha^2} \mathcal{G}_{t-1}\right)\left(\sum_{k=0}^{k-1}\left(1+\frac{1}{K-1}+\frac{4 \beta^2 \tilde{\eta}^2}{K \alpha^2}\right)^k\right) \\
		\leq & \left(\frac{\tilde{\eta} \sigma^2}{24 \beta K^2 \alpha^2}+\frac{1}{144 \beta^2 K \alpha^2} \mathbb{E}\|\nabla F(\boldsymbol{\theta})\|^2+\frac{\tilde{\eta} \beta}{6 K \alpha^2} \Xi_{t-1}+\frac{\tilde{\eta} \beta}{6 K \alpha^2} \mathcal{G}_{t-1}\right) 3 K .
	\end{split}
\end{align}
The last inequality used the bound on the step-size $\beta \tilde{\eta} \leq \frac{1}{24}$. Averaging over $k$ and multiplying both sides by $\frac{5}{3} \beta^2 \tilde{\eta}$ yields the lemma statement. Progress made in each round. Given that we can bound all sources of error, we can finally prove the progress made in each round.
\end{proof}

\paragraph{Progress made in each round.} Given that we can bound all sources of error, we can finally prove the progress made in each round.
\begin{lemma}\label{lem:non-convex-progress-sample}
	Suppose the updates  satisfy assumptions \ref{asm:variance}--\ref{asm:smoothness}. For any effective step-size $\tilde{\eta}:=K \alpha \eta_l$ satisfying $\tilde{\eta} \leq \frac{1}{24 \beta}\left(\frac{s}{m}\right)^{\frac{2}{3}}$,
	$$
	\left(\mathbb{E}\left[f\left(\boldsymbol{\theta}^t\right)\right]+12 \beta^3 \tilde{\eta}^2 \frac{m}{s} \Xi_t\right) \leq\left(\mathbb{E}\left[f\left(\boldsymbol{\theta}^{t-1}\right)\right]+12 \beta^3 \tilde{\eta}^2 \frac{m}{s} \Xi_{t-1}\right)+\frac{5 \beta \tilde{\eta}^2 \sigma^2}{K s}\left(1+\frac{s}{\alpha^2}\right)-\frac{\tilde{\eta}}{14} \mathbb{E}\left\|\nabla F\left(\boldsymbol{\theta}^{t-1}\right)\right\|^2 .
	$$\label{L16}
\end{lemma}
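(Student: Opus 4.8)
The plan is to treat $\Phi_t := \mathbb{E}[f(\boldsymbol{\theta}^t)] + 12\beta^3\tilde{\eta}^2\tfrac{m}{s}\Xi_t$ as a Lyapunov potential and show $\Phi_t \le \Phi_{t-1} - \tfrac{\tilde\eta}{14}\mathbb{E}\|\nabla F(\boldsymbol{\theta}^{t-1})\|^2 + \tfrac{5\beta\tilde\eta^2\sigma^2}{Ks}(1+\tfrac{s}{\alpha^2})$, which is exactly the claimed inequality. First I would apply $\beta$-smoothness of $F$ to the server step $\boldsymbol{\theta}^t=\boldsymbol{\theta}^{t-1}+\Delta\boldsymbol{\theta}^t$:
\[
\mathbb{E}[f(\boldsymbol{\theta}^t)] \le \mathbb{E}[f(\boldsymbol{\theta}^{t-1})] + \mathbb{E}\!\left\langle \nabla F(\boldsymbol{\theta}^{t-1}),\, \mathbb{E}_{t-1}[\Delta\boldsymbol{\theta}^t]\right\rangle + \tfrac{\beta}{2}\,\mathbb{E}\|\Delta\boldsymbol{\theta}^t\|^2 .
\]

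For the cross term I would use $\mathbb{E}_{t-1}[\Delta\boldsymbol{\theta}^t] = -\tfrac{\tilde\eta}{Km}\sum_{k,i}\nabla F_i(\boldsymbol{\theta}_{i,k-1}^t)$ together with the polarization identity $-\langle a,b\rangle = -\tfrac12\|a\|^2 - \tfrac12\|b\|^2 + \tfrac12\|a-b\|^2$, taking $a=\nabla F(\boldsymbol{\theta}^{t-1}) = \tfrac1{Km}\sum_{k,i}\nabla F_i(\boldsymbol{\theta}^{t-1})$ and $b$ the averaged local gradient. This yields a $-\tfrac{\tilde\eta}{2}\mathbb{E}\|\nabla F(\boldsymbol{\theta}^{t-1})\|^2$ term, a non-positive term $-\tfrac{\tilde\eta}{2}\|b\|^2$ that I discard, and a residual $\tfrac{\tilde\eta}{2}\mathbb{E}\|\tfrac1{Km}\sum_{k,i}(\nabla F_i(\boldsymbol{\theta}_{i,k-1}^t)-\nabla F_i(\boldsymbol{\theta}^{t-1}))\|^2 \le \tfrac{\tilde\eta\beta^2}{2}\mathcal{E}_t$ by Jensen and $\beta$-Lipschitzness. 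For the quadratic term I would invoke the second bound of Lemma~\ref{lem:non-convex-variance-sample}, which controls $\mathbb{E}\|\Delta\boldsymbol{\theta}^t\|^2$ by $4\tilde\eta^2\beta^2(\mathcal{E}_t+\Xi_{t-1}+\mathcal{G}_{t-1}) + 4\tilde\eta^2\mathbb{E}\|\nabla F(\boldsymbol{\theta}^{t-1})\|^2 + 9\tilde\eta^2\sigma^2/(Ks)$; under $\tilde\eta\le\tfrac1{24\beta}(s/m)^{2/3}$ the $\|\nabla F\|^2$ coefficient after this step is still comfortably below $-\tfrac{5\tilde\eta}{12}$.

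Next I would add $12\beta^3\tilde\eta^2\tfrac{m}{s}\Xi_t$ to both sides and apply the control-lag recursion of Lemma~\ref{lem:non-convex-control-error-sample} with $a=\tfrac23$, so $2a-1=\tfrac13$: $\Xi_t \le (1-\tfrac{17s}{36m})\Xi_{t-1} + \tfrac{1}{48\beta^2}(s/m)^{1/3}\|\nabla F(\boldsymbol{\theta}^{t-1})\|^2 + \tfrac{97}{48}(s/m)^{1/3}\mathcal{E}_t + (s/(m\beta^2))\sigma^2/(32Ks)$. Scaled by $12\beta^3\tilde\eta^2 m/s$ this produces precisely $12\beta^3\tilde\eta^2\tfrac{m}{s}\Xi_{t-1}$ (throwing away the contraction slack, which is fine since the final rate is obtained by telescoping through Lemma~\ref{lemma:general}) plus lower-order $\|\nabla F\|^2$, $\mathcal{E}_t$, and $\sigma^2$ pieces of size $O(\tilde\eta)$ after using $\tilde\eta\beta\le\tfrac1{24}(s/m)^{2/3}$. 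Finally I would substitute the drift bound of Lemma~\ref{lem:nonconvex-drift-bound-sampling}, $\tfrac53\beta^2\tilde\eta\mathcal{E}_t \le \tfrac56\beta^3\tilde\eta^2(\Xi_{t-1}+\mathcal{G}_{t-1}) + \tfrac{\tilde\eta}{24\alpha^2}\mathbb{E}\|\nabla F(\boldsymbol{\theta}^{t-1})\|^2 + \tfrac{\tilde\eta^2\beta}{4K\alpha^2}\sigma^2$, to absorb all remaining $\mathcal{E}_t$ occurrences.

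The remaining work is pure constant bookkeeping under $\tilde\eta\le\tfrac1{24\beta}(s/m)^{2/3}$ and $\alpha\ge1$: verifying that (i) the net coefficient of $\mathcal{E}_t$ is $\le0$ so the term drops, (ii) the net coefficients of $\Xi_{t-1}$ and $\mathcal{G}_{t-1}$ on the right are bounded by $12\beta^3\tilde\eta^2 m/s$, so the inequality telescopes cleanly in the potential, (iii) the coefficient of $\mathbb{E}\|\nabla F(\boldsymbol{\theta}^{t-1})\|^2$, which starts at $-\tfrac{\tilde\eta}{2}$ and receives only increments of order $\beta\tilde\eta\cdot\tilde\eta$ and $\tilde\eta/\alpha^2$, remains $\le -\tfrac{\tilde\eta}{14}$, and (iv) the $\sigma^2$ contributions from Lemmas~\ref{lem:non-convex-variance-sample}, \ref{lem:non-convex-control-error-sample} and \ref{lem:nonconvex-drift-bound-sampling} aggregate to at most $\tfrac{5\beta\tilde\eta^2\sigma^2}{Ks}(1+\tfrac{s}{\alpha^2})$. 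This is where the specific numerical thresholds baked into the preceding lemmas ($24$, $48$, $\tfrac{97}{48}$, $\tfrac{17}{36}$, etc.) were reverse-engineered, and checking that these three auxiliary error sequences $\mathcal{E}_t,\Xi_{t-1},\mathcal{G}_{t-1}$ all cancel or telescope while the gradient coefficient never turns positive is the only genuine obstacle; everything else is a direct substitution of the three cited lemmas into the smoothness descent inequality.
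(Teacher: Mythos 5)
Your proposal is correct and follows essentially the same route as the paper's proof: smoothness descent with the polarization identity for the cross term, the variance bound for $\mathbb{E}\|\Delta\boldsymbol{\theta}\|^2$, the control-lag recursion with $a=\tfrac{2}{3}$ scaled by $12\beta^3\tilde{\eta}^2 m/s$, and the drift lemma to absorb $\mathcal{E}_t$, finished by the same constant bookkeeping under $\beta\tilde{\eta}(m/s)^{2/3}\le\tfrac{1}{24}$. One caveat: the contraction slack $-\tfrac{17}{3}\beta^3\tilde{\eta}^2\Xi_{t-1}$ cannot actually be thrown away as your parenthetical suggests, since it is needed to cancel the $\left(2+\tfrac{5}{6}\right)\beta^3\tilde{\eta}^2\Xi_{t-1}$ contributions generated by the variance and drift lemmas, which is exactly what your bookkeeping check (ii) ends up requiring.
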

\begin{proof}
    
Starting from the smoothness of $f$ and taking conditional expectation gives
\begin{align}
	\begin{split}
		\mathbb{E}_{t-1}[f(\boldsymbol{\theta}+\Delta \boldsymbol{\theta})] \leq f(\boldsymbol{\theta})+\left\langle\nabla F(\boldsymbol{\theta}), \mathbb{E}_{t-1}[\Delta \boldsymbol{\theta}]\right\rangle+\frac{\beta}{2} \mathbb{E}_{t-1}\|\Delta \boldsymbol{\theta}\|^2 .
	\end{split}
\end{align}
We as usual dropped the superscript everywhere. Recall that the server update can be written as
\begin{align}
	\begin{split}
		\Delta \boldsymbol{\theta}=-\frac{\tilde{\eta}}{K s} \sum_{k, i \in \mathcal{S}}\left(g_i\left(\boldsymbol{\theta}_{i, k-1}\right)+\boldsymbol{m}-\boldsymbol{c}_i\right), \text { and } \mathbb{E}_{\mathcal{S}}[\Delta \boldsymbol{\theta}]=-\frac{\tilde{\eta}}{K m} \sum_{k, i} g_i\left(\boldsymbol{\theta}_{i, k-1}\right) .
	\end{split}
\end{align}
Substituting this in the previous inequality  to bound $\mathbb{E}\left[\|\Delta \boldsymbol{\theta}\|^2\right]$ gives
\begin{align}
	\begin{split}
		&\mathbb{E}[f(\boldsymbol{\theta}+\Delta \boldsymbol{\theta})]-f(\boldsymbol{\theta}) \\
		& \leq-\frac{\tilde{\eta}}{K m} \sum_{k, i}\left\langle\nabla F(\boldsymbol{\theta}), \mathbb{E}\left[\nabla F_i\left(\boldsymbol{\theta}_{i, k-1}\right)\right]\right\rangle+\frac{\beta}{2} \mathbb{E}\|\Delta \boldsymbol{\theta}\|^2 \\
		& \leq-\frac{\tilde{\eta}}{K m} \sum_{k, i}\left\langle\nabla F(\boldsymbol{\theta}), \mathbb{E}\left[\nabla F_i\left(\boldsymbol{\theta}_{i, k-1}\right)\right]\right\rangle+ \\
		& \quad 2 \tilde{\eta}^2 \beta^3 \mathcal{E}_t+2 \tilde{\eta}^2 \beta^3 \Xi_{t-1}+2 \tilde{\eta}^2 \beta^3 \mathcal{G}_{t-1}+2 \beta \tilde{\eta}^2 \mathbb{E}\|\nabla F(\boldsymbol{\theta})\|^2+\frac{9 \beta \tilde{\eta}^2 \sigma^2}{2 K s} \\
		& \leq-\frac{\tilde{\eta}}{2}\|\nabla F(\boldsymbol{\theta})\|^2+\frac{\tilde{\eta}}{2} \sum_{i, k} \mathbb{E}\left\|\frac{1}{K m} \sum_{i, k} \nabla F_i\left(\boldsymbol{\theta}_{i, k-1}\right)-\nabla F(\boldsymbol{\theta})\right\|^2+ \\
		&2 \tilde{\eta}^2 \beta^3 \mathcal{E}_t+2 \tilde{\eta}^2 \beta^3 \Xi_{t-1}+2 \tilde{\eta}^2 \beta^3 \mathcal{G}_{t-1}+2 \beta \tilde{\eta}^2 \mathbb{E}\|\nabla F(\boldsymbol{\theta})\|^2+\frac{9 \beta \tilde{\eta}^2 \sigma^2}{2 K s}\\
		& \leq-\frac{\tilde{\eta}}{2}\|\nabla F(\boldsymbol{\theta})\|^2+\frac{\tilde{\eta}}{2 K m} \sum_{i, k} \mathbb{E}\left\|\nabla F_i\left(\boldsymbol{\theta}_{i, k-1}\right)-\nabla F_i(\boldsymbol{\theta})\right\|^2+ \\
		& 2 \tilde{\eta}^2 \beta^3 \mathcal{E}_t+2 \tilde{\eta}^2 \beta^3 \Xi_{t-1}+2 \tilde{\eta}^2 \beta^3 \mathcal{G}_{t-1}+2 \beta \tilde{\eta}^2 \mathbb{E}\|\nabla F(\boldsymbol{\theta})\|^2+\frac{9 \beta \tilde{\eta}^2 \sigma^2}{2 K s} \\
		& \leq-\left(\frac{\tilde{\eta}}{2}-2 \beta \tilde{\eta}^2\right)\|\nabla F(\boldsymbol{\theta})\|^2+\left(\frac{\tilde{\eta}}{2}+2 \beta \tilde{\eta}^2\right) \beta^2 \mathcal{E}_t+2 \beta^3 \tilde{\eta}^2 \mathcal{G}_{t-1}+2 \beta^3 \tilde{\eta}^2 \Xi_{t-1}+\frac{9 \beta \tilde{\eta}^2 \sigma^2}{2 K s} .
	\end{split}
\end{align}
The third inequality follows from the observation that $-a b=\frac{1}{2}\left((b-a)^2-a^2\right)-\frac{1}{2} b^2 \leq \frac{1}{2}\left((b-a)^2-a^2\right)$ for any $a, b \in \mathbb{R}$, and the last from the $\beta$-Lipschitzness of $F_i$. Now we use Lemma~\ref{lem:non-convex-control-error-sample} to $\Xi_t$ as
\begin{align}
	\begin{split}
		12 \beta^3 \tilde{\eta}^2 \frac{m}{s} \Xi_t & \leq 12 \beta^3 \tilde{\eta}^2 \frac{m}{s}\left(\left(1-\frac{17s}{36 m}\right) \Xi_{t-1}+\frac{1}{48 \beta^2}\left(\frac{s}{m}\right)^{2 a-1}\left\|\nabla F\left(\boldsymbol{\theta}^{t-1}\right)\right\|^2+\frac{97}{48}\left(\frac{s}{m}\right)^{2 a-1} \mathcal{E}_t+\left(\frac{s}{m\beta^2}\right) \frac{\sigma^2}{32 K s}\right) \\
		& =12 \beta^3 \tilde{\eta}^2 \frac{m}{s} \Xi_{t-1}-\frac{17}{3} \beta^3 \tilde{\eta}^2 \Xi_{t-1}+\frac{1}{4} \beta \tilde{\eta}^2\left(\frac{m}{s}\right)^{2-2 a}\|\nabla F(\boldsymbol{\theta})\|^2+\frac{97}{4} \beta^3 \tilde{\eta}^2\left(\frac{m}{s}\right)^{2-2 a} \mathcal{E}_t+\frac{3 \beta \tilde{\eta}^2 \sigma^2}{8 K s} .
	\end{split}
\end{align}
Now we use Lemma ~\ref{lem:nonconvex-drift-bound-sampling} to bound $\mathcal{G}_{t-1}$ as
\begin{align}
	\begin{split}
		12 \beta^3 \tilde{\eta}^2 \frac{1}{\gamma} \mathcal{G}_{r} & \leq 12 \beta^3 \tilde{\eta}^2 \frac{1}{\gamma}\left(\left(1-\frac{17 \gamma}{36}\right) \mathcal{G}_{t-1}+\frac{1}{48 \beta^2}\left(\gamma\right)^{2 a-1}\left\|\nabla F\left(\boldsymbol{\theta}^{t-1}\right)\right\|^2+\frac{97}{48}\left(\gamma\right)^{2 a-1} \mathcal{G}_{t-1}+\left(\frac{\gamma}{ \beta^2}\right) \frac{\sigma^2}{32 K s}\right) \\
		& =12 \beta^3 \tilde{\eta}^2 \frac{1}{\gamma}\mathcal{G}_{t-1}-\frac{17}{3} \beta^3 \tilde{\eta}^2 \mathcal{G}_{t-1}+\frac{1}{4} \beta \tilde{\eta}^2\left(\frac{1}{\gamma}\right)^{2-2 \alpha}\|\nabla F(\boldsymbol{\theta})\|^2+\frac{97}{4} \beta^3 \tilde{\eta}^2\left(\frac{1}{\gamma}\right)^{2-2 a} \mathcal{E}_t+\frac{3 \beta \tilde{\eta}^2 \sigma^2}{8 K s} .
	\end{split}
\end{align}
Also recall that Lemma \refeq{L16} states that
\begin{align}
	\begin{split}
		\frac{5}{3} \beta^2 \tilde{\eta} \mathcal{E}_t \leq \frac{5}{6} \beta^3 \tilde{\eta}^2 \Xi_{t-1}+\frac{5}{6} \beta^3 \tilde{\eta}^2 \mathcal{G}_{t-1}+\frac{\tilde{\eta}}{24 a^2} \mathbb{E}\left\|\nabla F\left(\boldsymbol{\theta}^{t-1}\right)\right\|^2+\frac{\tilde{\eta}^2 \beta}{4 K \alpha^2} \sigma^2 .
	\end{split}
\end{align}
Adding these bounds on $\Xi_t$ and $\mathcal{E}_t$ to that of $\mathbb{E}[f(\boldsymbol{\theta}+\Delta \boldsymbol{\theta})]$ gives
\begin{align}
	\begin{split}
		(\mathbb{E}[f(\boldsymbol{\theta} & \left.+\Delta \boldsymbol{\theta})]+12 \beta^3 \tilde{\eta}^2 \frac{m}{S} \Xi_t+12 \beta^3 \tilde{\eta}^2 \frac{1}{\gamma} \mathcal{G}_{r}\right) \leq\left(\mathbb{E}[f(\boldsymbol{\theta})]+12 \beta^3 \tilde{\eta}^2 \frac{m}{s} \Xi_{t-1}\right)+\left(\frac{5}{6}-\frac{17}{3}\right) \beta^3 \tilde{\eta}^2 \Xi_{t-1} \\
		&+\left(\frac{5}{6}-\frac{17}{3}\right) \beta^3 \tilde{\eta}^2 \mathcal{G}_{t-1} -\left(\frac{\tilde{\eta}}{2}-2 \beta \tilde{\eta}^2-\frac{1}{4} \beta \tilde{\eta}^2\left(\frac{m}{s}\right)^{2-2 a}\right)\|\nabla F(\boldsymbol{\theta})\|^2\\
		&+\left(\frac{\tilde{\eta}}{2}-\frac{5 \tilde{\eta}}{3}+2 \beta \tilde{\eta}^2+\frac{97}{4} \beta \tilde{\eta}^2\left(\frac{m}{s}\right)^{2-2 a}\right) \beta^2 \mathcal{E}_t+\frac{39 \beta \tilde{\eta}^2 \sigma^2}{8 K s}\left(1+\frac{s}{\alpha^2}\right)
	\end{split}
\end{align}
By our choice of $a=\frac{2}{3}$ and plugging in the bound on step-size $\beta \tilde{\eta}\left(\frac{m}{s}\right)^{2-2a} \leq \frac{1}{24}$ proves the lemma. The non-convex rate of convergence now follows by unrolling the recursion  and selecting an appropriate step-size $\tilde{\eta}$. Finally note that if we initialize $\boldsymbol{c}_i^0=g_i\left(\boldsymbol{\theta}^0\right)$ then we have $\Xi_0=0$.

The \textbf{non-convex rate} of convergence now follows by unrolling the recursion in Lemma~\ref{lem:non-convex-progress-sample} and selecting an appropriate step-size $\tilde\eta$ as in Lemma~\ref{lemma:general}. Finally note that if $\boldsymbol{c}_i^0=g_i\left(\boldsymbol{\theta}^0\right)$ then we have $\Xi_0=0$.

\end{proof}

\section{Effectiveness of local learning rate decay }
\begin{theorem}[Effectiveness of local learning rate decay]
	We define client-drift to be how much the clients move from their starting point. When using a constant learning rate $\eta_k^{(t)}=\eta_l$  , we define the client-side drift to be $\mathcal{E}_t:=\frac{1}{km} \sum_{k=1}^k \sum_{i=1}^m \mathbb{E}\left[\left\|\boldsymbol{v}_{i, k}^t-\boldsymbol{\theta}^{t-1}\right\|^2\right]$, and when using local learning rate decay $\eta_k^{(t)}=\left(\frac{K-2}{K-1}\right)^k$,	$\tilde{\mathcal{E}}_t:=\frac{1}{k m} \sum_{k=1}^k \sum_{i=1}^m \mathbb{E}\left[\left\|\tilde{\boldsymbol{v}}_{i, k}^{t}-\tilde{\boldsymbol{\theta}}^{t-1}\right\|^2\right]$.
	$$
	\begin{aligned}
		& \mathcal{E}_t \leq \frac{\tilde{\eta}}{9 \beta} \mathcal{C}_{t-1}+\frac{\tilde{\eta}}{9 \beta} \mathcal{D}_{t-1}+\frac{1}{75 \alpha^2 \beta}\left(\mathbb{E}\left[f\left(\mathbf{x}^{t-1}\right)\right]-f\left(\mathbf{x}^{\star}\right)\right)+\frac{\tilde{\eta}}{3 K \alpha^2 \beta} \sigma^2 \\
		& \tilde{\mathcal{E}}_t \leq \frac{1}{3}\left[\frac{\tilde{\eta}}{9 \beta} \mathcal{C}_{t-1}+\frac{\tilde{\eta}}{9 \beta} \mathcal{D}_{t-1}+\frac{1}{75 \alpha^2 \beta}\left(\mathbb{E}\left[f\left(\mathbf{x}^{t-1}\right)\right]-f\left(\mathbf{x}^{\star}\right)\right)+\frac{\tilde{\eta}}{3 K \alpha^2 \beta} \sigma^2\right].
	\end{aligned}
	$$
	So local learning rate decay client-side drift $\tilde{\mathcal{E}}_t$ is better than $\mathcal{E}_t$
	
\end{theorem}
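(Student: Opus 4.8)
The plan is to reuse, essentially verbatim, the per-iteration drift recursion established inside the proof of Lemma~\ref{lem:drift-bound-sampling}, but to carry the step size symbolically through the unrolling instead of fixing it to $\eta_l$. For either schedule one has, for $k\in\{1,\dots,K\}$ and $\boldsymbol{v}_{i,0}^t=\boldsymbol{\theta}^{t-1}$,
\[
\frac{1}{m}\sum_i \mathbb{E}\big\|\boldsymbol{v}_{i,k}^{t}-\boldsymbol{\theta}^{t-1}\big\|^2 \le \Big(1+\tfrac{1}{K-1}\Big)\frac{1}{m}\sum_i \mathbb{E}\big\|\boldsymbol{v}_{i,k-1}^{t}-\boldsymbol{\theta}^{t-1}\big\|^2 + \big(\eta_{k-1}^{(t)}\big)^2 B_{t-1},
\]
with $B_{t-1}:=7\sigma^2+6K\beta\big(f(\boldsymbol{\theta}^{t-1})-f(\boldsymbol{\theta}^{\star})\big)+3K\mathcal{C}_{t-1}+3K\mathcal{D}_{t-1}$; the multiplicative factor $1+\frac{1}{K-1}$ comes, exactly as in Lemma~\ref{lem:drift-bound-sampling}, from Young's inequality with $a=\frac{1}{K-1}$ and is insensitive to the schedule. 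For the constant rate the conclusion is immediate: Lemma~\ref{lem:drift-bound-sampling} gives $3\beta\tilde\eta\,\mathcal{E}_t\le\frac{\tilde\eta^2}{3}\mathcal{C}_{t-1}+\frac{\tilde\eta^2}{3}\mathcal{D}_{t-1}+\frac{\tilde\eta}{25\alpha^2}(f-f^{\star})+\frac{\tilde\eta^2}{K\alpha^2}\sigma^2$, and dividing through by $3\beta\tilde\eta$ yields precisely the stated bound for $\mathcal{E}_t$, so the first displayed inequality of the theorem requires no new work.

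For the decayed schedule $\eta_k^{(t)}=\eta_l\big(\frac{K-2}{K-1}\big)^{k}$ I would unroll the recursion to the uniform bound $\frac{1}{m}\sum_i\mathbb{E}\|\tilde{\boldsymbol{v}}_{i,k}^t-\tilde{\boldsymbol{\theta}}^{t-1}\|^2\le B_{t-1}\,\eta_l^2\,S_K$ valid for all $k\le K$, where
\[
S_K:=\sum_{j=0}^{K-1}\Big(1+\tfrac{1}{K-1}\Big)^{K-1-j}\Big(\tfrac{K-2}{K-1}\Big)^{2j},
\]
which replaces the sum $\sum_{j=0}^{K-1}\big(1+\frac{1}{K-1}\big)^{j}=(K-1)\big((1+\frac{1}{K-1})^{K}-1\big)\le 3K$ used for the constant rate (here one uses $q^{K-1-j}\ge q^{k-1-j}$ for $k\le K$ to pass from step $k$ to the uniform $S_K$). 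The decisive step is the elementary estimate $S_K\le K$ for every integer $K\ge2$: writing $S_K=\big(1+\frac{1}{K-1}\big)^{K-1}\sum_{j=0}^{K-1}r^j$ with ratio $r=\frac{(K-2)^2}{K(K-1)}<1$, one bounds $\sum_j r^j\le\frac{1}{1-r}=\frac{K(K-1)}{3K-4}$ and $\big(1+\frac{1}{K-1}\big)^{K-1}\le e$, so $S_K\le\frac{eK(K-1)}{3K-4}\le K$ whenever $e(K-1)\le 3K-4$, i.e.\ for $K\ge5$, while $K=2,3,4$ are checked by direct arithmetic (at $K=2$ it is an equality). Since $B_{t-1}\eta_l^2 S_K\le\frac13 B_{t-1}\eta_l^2\cdot 3K$, every term of the drift estimate is shrunk by the common factor $\frac13$, and dividing by $3\beta\tilde\eta$ as in the constant case gives $\tilde{\mathcal{E}}_t\le\frac13\big[\frac{\tilde\eta}{9\beta}\mathcal{C}_{t-1}+\frac{\tilde\eta}{9\beta}\mathcal{D}_{t-1}+\frac{1}{75\alpha^2\beta}(f-f^{\star})+\frac{\tilde\eta}{3K\alpha^2\beta}\sigma^2\big]$, which is the claimed inequality; combining the two displays yields $\tilde{\mathcal{E}}_t\le\frac13\mathcal{E}_t$.

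I expect the main obstacle to be precisely this constant bookkeeping around $S_K$: one must verify that replacing the constant step size by the geometric schedule turns the apparently divergent factor $(1+\frac{1}{K-1})^{K-1-j}$ into a summable geometric series with common ratio $r<1$, and then pin the resulting $O(1)$ constant down to the exact value needed to reach $\frac13$, noting that $S_K\le K$ is tight at $K=2$ and hence leaves no slack. A secondary point worth stating explicitly is that the effective step size $\tilde\eta=K\eta_l\alpha$ on the right-hand side is kept identical for both schedules even though $\sum_k\eta_k^{(t)}$ is strictly smaller under the decay; this is legitimate because $\tilde\eta$ only serves as an upper bound in the calculations, but it means the comparison $\tilde{\mathcal{E}}_t\le\frac13\mathcal{E}_t$ should not be read as sharper than what the geometric-sum argument actually delivers.
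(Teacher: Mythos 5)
The paper never proves this theorem: Section~H of the appendix contains only the statement, so there is no authorial argument to compare yours against, and your write-up is effectively supplying the missing proof on top of the paper's own drift bound, Lemma~\ref{lem:drift-bound-sampling}. On its merits your route is sound. The per-step recursion with Young parameter $a=\frac{1}{K-1}$ is indeed insensitive to the step-size schedule (all decayed steps still satisfy $\eta_k^{(t)}\le \eta_l\le \frac{1}{81\beta K\alpha}\le 1/\beta$, so the contraction step is unaffected); dividing the conclusion of Lemma~\ref{lem:drift-bound-sampling} by $3\beta\tilde\eta$ reproduces the first display verbatim; and the key estimate $S_K\le K$ checks out: with $r=\frac{(K-2)^2}{K(K-1)}<1$ one gets $S_K\le \frac{eK(K-1)}{3K-4}\le K$ for $K\ge 5$, while $S_2=2$, $S_3=\frac{43}{16}$, $S_4=\frac{2560}{729}$ handle the small cases. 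Since the constant-rate derivation uses the bound $3K$ at exactly the corresponding place, every additive term is multiplied by $\frac13$, and the factor propagates linearly through the remaining substitutions ($\eta_l=\tilde\eta/(K\alpha)$, $\beta\tilde\eta\le\frac1{81}$), giving the second display.

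Two caveats. First, you silently read the decayed schedule as $\eta_k^{(t)}=\eta_l\bigl(\tfrac{K-2}{K-1}\bigr)^k$, inserting the prefactor $\eta_l$ that the statement omits; this is the only reading under which both bounds can be phrased with the same $\tilde\eta=K\eta_l\alpha$, and you rightly flag the normalization issue, but it should be stated as an explicit assumption. Second, your closing assertion that ``combining the two displays yields $\tilde{\mathcal{E}}_t\le\frac13\mathcal{E}_t$'' does not follow: the two drifts are defined along different trajectories ($\boldsymbol{v}_{i,k}^t$ versus $\tilde{\boldsymbol{v}}_{i,k}^t$), so combining two upper bounds cannot give a pointwise comparison of the quantities themselves. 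What you have actually proved --- and all the displayed inequalities assert --- is that the bound under the decayed schedule is one third of the bound under the constant schedule; with that sentence removed or rephrased, the argument is a correct proof of the theorem as displayed.
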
	

\section{Generalization Analysis for FedSAM and  MoFedSAM   under non-convex setting }	

\begin{lemma}
	Suppose Assumptions 2-5 hold. Then for FedSAM with $\eta^t \leq  / \beta KT$,
	$$
	\mathbb{E}\left\|\theta_{i, k}^t-\theta_t\right\| \leq(1+\overline{\mathbf{c}})^{K -1} \tilde{\eta}_{ t}\left(\mathbb{E}\left\|\nabla F\left(\theta_t\right)\right\|+ \sigma_{g, i}+\sigma\right), \quad \forall k=1, \ldots, K,
	$$
	where $\tilde{\eta}_{ t}=\sum_{k=0}^{K-1} \eta^t$, 	 $\overline{c} \leq 1+(1+c)^{K-1} \beta \tilde{\eta}_t \leq 1+(1+c)^{K-1} \beta \sum_{k=0}^{K-1} \eta^t$ \\
\end{lemma}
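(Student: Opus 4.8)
The plan is to adapt the recursion argument used for \texttt{FedSWA} in the non-convex setting (Lemma~\ref{E4}) to the FedSAM local update. The key structural difference is that FedSAM replaces the stochastic gradient $g_i(\boldsymbol{\theta}_{i,k}^t)$ by the sharpness-aware gradient evaluated at the perturbed point $\boldsymbol{\theta}_{i,k}^t + \hat{\epsilon}_i(\boldsymbol{\theta}_{i,k}^t)$, where $\|\hat{\epsilon}_i\| = \rho$ (or $\rho$ scaled by the normalized gradient). So first I would write out the one-step expansion
\begin{align*}
\mathbb{E}\left\|\boldsymbol{\theta}_{i,k+1}^t - \boldsymbol{\theta}_t\right\| &= \mathbb{E}\left\|\boldsymbol{\theta}_{i,k}^t - \eta^t g_i(\boldsymbol{\theta}_{i,k}^t + \hat{\epsilon}_i) - \boldsymbol{\theta}_t\right\| \\
&\leq \mathbb{E}\left\|\boldsymbol{\theta}_{i,k}^t - \boldsymbol{\theta}_t - \eta^t\left(g_i(\boldsymbol{\theta}_{i,k}^t) - g_i(\boldsymbol{\theta}_t)\right)\right\| + \eta^t \mathbb{E}\left\|g_i(\boldsymbol{\theta}_{i,k}^t + \hat{\epsilon}_i) - g_i(\boldsymbol{\theta}_{i,k}^t)\right\| + \eta^t \mathbb{E}\left\|g_i(\boldsymbol{\theta}_t)\right\|,
\end{align*}
then bound the first term by $(1+\beta\eta^t)\mathbb{E}\|\boldsymbol{\theta}_{i,k}^t - \boldsymbol{\theta}_t\|$ via Lemma~\ref{L1} (expansiveness of the gradient map), bound the perturbation term $\mathbb{E}\|g_i(\boldsymbol{\theta}_{i,k}^t+\hat\epsilon_i) - g_i(\boldsymbol{\theta}_{i,k}^t)\| \leq \beta\|\hat\epsilon_i\| = \beta\rho$ by $\beta$-smoothness, and bound $\mathbb{E}\|g_i(\boldsymbol{\theta}_t)\|$ by $\mathbb{E}\|\nabla F(\boldsymbol{\theta}_t)\| + \sigma_{g,i} + \sigma$ using Assumptions~\ref{A3},~\ref{A4},~\ref{A5}.

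The resulting recursion has the form $a_{k+1} \leq (1+\beta\eta^t) a_k + \eta^t\left(\mathbb{E}\|\nabla F(\boldsymbol{\theta}_t)\| + \sigma_{g,i} + \sigma + \beta\rho\right)$ with $a_0 = 0$. Next I would unroll this geometric recursion over $k = 0,\dots,K-1$, using $\sum_{l=0}^{k-1}(1+\beta\eta^t)^{k-1-l} \leq K(1+c)^{K-1}$ where $c$ bounds $\beta\eta^t$, exactly as in Lemma~\ref{E4}. This yields
$$
\mathbb{E}\left\|\boldsymbol{\theta}_{i,k}^t - \boldsymbol{\theta}_t\right\| \leq (1+c)^{K-1}\tilde{\eta}_t\left(\mathbb{E}\left\|\nabla F(\boldsymbol{\theta}_t)\right\| + \sigma_{g,i} + \sigma + \beta\rho\right).
$$
The perturbation radius $\rho$ gives an extra additive term, which is then absorbed into an enlarged constant: defining $\overline{c}$ so that $(1+c)^{K-1}\beta\tilde\eta_t$ together with the $\beta\rho$ contribution is dominated, i.e. $\overline{c} \geq \tilde{c} + (\text{term from }\rho)$, one recovers the claimed bound with $\overline{c}$ strictly larger than the $\tilde{c}$ appearing in the \texttt{FedSWA} analysis — this is precisely the quantitative sense in which FedSAM has a worse generalization constant. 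I would also note that for MoFedSAM the same argument applies with the momentum term handled as in Lemma~\ref{F4}, producing the same form of bound.

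The main obstacle is making the treatment of the SAM perturbation $\hat{\epsilon}_i$ rigorous and uniform: $\hat\epsilon_i$ depends on $\boldsymbol{\theta}_{i,k}^t$ (it is the normalized gradient scaled by $\rho$), so it is not a fixed offset, and one must argue that the difference $g_i(\boldsymbol{\theta}_{i,k}^t + \hat\epsilon_i) - g_i(\boldsymbol{\theta}_{i,k}^t)$ is controlled in expectation purely by $\beta\rho$ regardless of where $\boldsymbol{\theta}_{i,k}^t$ sits — this follows from $\beta$-smoothness since $\|\hat\epsilon_i\| = \rho$ deterministically, but care is needed because the stochastic gradient used to compute $\hat\epsilon_i$ may differ from the one used for the descent step (the "m-sharpness" vs full-batch issue); here I would follow the convention in \cite{qu2022generalized} that treats both with the same minibatch, so the smoothness bound goes through cleanly. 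A secondary subtlety is bookkeeping the constant: one must verify that the enlarged constant $\overline{c}$ indeed satisfies $\overline{c} > \tilde{c}$ as asserted in Table~\ref{table 1}, which amounts to checking that the $\rho$-dependent correction is genuinely positive and does not cancel — this is routine once $\rho$ is taken as a fixed positive hyperparameter.
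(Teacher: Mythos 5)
Your recursion-and-unroll skeleton is exactly the paper's: one-step expansion, expansiveness from Lemma~\ref{L1} giving the factor $(1+\beta\eta^t)$, Assumptions~\ref{A3} and \ref{A5} to bound $\mathbb{E}\|g_i(\theta_t)\|$ by $\mathbb{E}\|\nabla F(\theta_t)\|+\sigma_{g,i}+\sigma$, and geometric unrolling with $\theta_{i,0}^t=\theta_t$. Where you diverge is in modelling the SAM perturbation: the paper's proof of this lemma never introduces $\hat\epsilon_i$ at all --- it writes the FedSAM local update as $\theta_{i,k+1}^t=\theta_{i,k}^t-\eta^t g_i(\theta_{i,k}^t)$, i.e.\ literally the FedSWA recursion of Lemma~\ref{E4} but with a \emph{constant} local step size $\eta^t$ in place of the decaying $\eta_k^t$. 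Consequently the paper's $\overline{c}>\tilde{c}$ comparison in Table~\ref{table 1} is driven entirely by $\tilde\eta_t=\sum_{k=0}^{K-1}\eta^t$ being larger than the decayed sum $\sum_{k=0}^{K-1}\eta_k^t$, not by any $\rho$-dependent correction; your closing claim that the $\beta\rho$ term is ``precisely'' the source of FedSAM's worse constant does not match the paper's argument. Your version is arguably more faithful to what FedSAM actually computes, and the $\beta$-smoothness bound $\mathbb{E}\|g_i(\theta_{i,k}^t+\hat\epsilon_i)-g_i(\theta_{i,k}^t)\|\le\beta\rho$ is fine, so you obtain a valid bound of the same shape with the extra additive $\beta\rho$.

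The one genuine gap is the absorption step. An additive constant $\beta\rho$ inside the bracket cannot simply be folded into the multiplicative factor $(1+c)^{K-1}\tilde\eta_t$ or into $\overline{c}$: at a point where $\mathbb{E}\|\nabla F(\theta_t)\|+\sigma_{g,i}+\sigma$ is small the claimed inequality with the paper's $\overline{c}\le 1+(1+c)^{K-1}\beta\tilde\eta_t$ would be violated by your derived bound unless you either keep $\beta\rho$ explicit or use $\sigma>0$ to write $\beta\rho\le(\beta\rho/\sigma)\,\sigma$ and inflate the constant by the factor $1+\beta\rho/\sigma$ --- in which case the constant you end up with is no longer the $\overline{c}$ defined in the lemma statement. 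So either drop the perturbation (as the paper does, matching its definition of $\overline{c}$) or state the lemma with the $\beta\rho$ term carried explicitly; as written, your proof establishes a slightly different inequality than the one claimed.
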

\textit{Proof.} Considering local update  of FedSAM
\begin{align}
	\begin{split}
		\mathbb{E}\left\|\theta_{i, k+1}-\theta_t\right\| & =\mathbb{E}\left\|\theta_{i, k}^t-\eta^t g_i\left(\theta_{i, k}^t\right)-\theta_t\right\| \\
		& \leq \mathbb{E}\left\|\theta_{i, k}^t-\theta_t-\eta^t\left(g_i\left(\theta_{i, k}^t\right)-g_i\left(\theta_t\right)\right)\right\|+\eta^t \mathbb{E}\left\|g_i\left(\theta_t\right)\right\| \\
		& \leq\left(1+\beta \eta^t\right) \mathbb{E}\left\|\theta_{i, k}^t-\theta_t\right\|+\eta^t \mathbb{E}\left\|g_i\left(\theta_t\right)\right\| \\
		& \leq\left(1+\beta \eta^t\right) \mathbb{E}\left\|\theta_{i, k}^t-\theta_t\right\|+\eta^t\left(\mathbb{E}\left\|g_i\left(\theta_t\right)-\nabla F_i\left(\theta_t\right)\right\|+\mathbb{E}\left\|\nabla F_i\left(\theta_t\right)\right\|\right) \\
		& \leq\left(1+\beta \eta^t\right) \mathbb{E}\left\|\theta_{i, k}^t-\theta_t\right\|+\eta^t\left(\mathbb{E}\left\|\nabla F_i\left(\theta_t\right)\right\|+\sigma\right),
	\end{split}
\end{align}
where we use Assumptions \ref{A2} and \ref{A3}. Unrolling the above and noting $\theta_{i, 0}=\theta_t$ yields
\begin{align}
	\begin{split}
		\mathbb{E}\left\|\theta_{i, k}^t-\theta_t\right\| & \leq \sum_{l=0}^{k-1} \eta^t\left(\mathbb{E}\left\|\nabla F_i\left(\theta_t\right)\right\|+\sigma\right)(1+c)^{k-1-l} \\
		& \leq \sum_{l=0}^{K-1} \eta^t\left(\mathbb{E}\left\|\nabla F_i\left(\theta_t\right)\right\|+\sigma\right)(1+c)^{K-1} \\
		& \leq(1+c)^{K-1} \tilde{\eta}_{ t}\left(\mathbb{E}\left\|\nabla F\left(\theta_t\right)\right\|+ \sigma_{g, i}+\sigma\right),
	\end{split}
\end{align}
where the last inequality follows Assumption \ref{A5}.

\begin{lemma}
	Given Assumptions $1-3$, for $\eta^t \leq  / \beta KT$, we have
	$$
	\mathbb{E}\left\|g_i\left(\theta_{i, k}^t\right)\right\| \leq\left(1+(1+c)^{K-1} \beta \tilde{\eta}_{ t}\right)\left(\mathbb{E}\left\|\nabla F\left(\theta_t\right)\right\|+ \sigma_{g, i}+\sigma\right)
	$$
	where $g_i(\cdot)$ is the sampled gradient of client $i, \tilde{\eta}_{ t}=\sum_{k=0}^{K-1} \eta^t$.
\end{lemma}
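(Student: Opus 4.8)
The plan is to bound $\mathbb{E}\|g_i(\theta_{i,k}^t)\|$ by splitting it through the deterministic gradient $\nabla F_i$ and then peeling off the variance, the heterogeneity, and the drift contributions one at a time, exactly as in the analogous Lemmas \ref{E5}, \ref{F2}, and \ref{F5} for the other algorithms. First I would use the triangle inequality to write $\mathbb{E}\|g_i(\theta_{i,k}^t)\| \le \mathbb{E}\|g_i(\theta_{i,k}^t)-\nabla F_i(\theta_{i,k}^t)\| + \mathbb{E}\|\nabla F_i(\theta_{i,k}^t)\|$, and bound the first term by $\sigma$ using Assumption \ref{A3}. Then I would insert $\nabla F_i(\theta_t)$ and use the triangle inequality again to get $\mathbb{E}\|\nabla F_i(\theta_{i,k}^t)\| \le \mathbb{E}\|\nabla F_i(\theta_t)\| + \mathbb{E}\|\nabla F_i(\theta_{i,k}^t)-\nabla F_i(\theta_t)\|$, where the second piece is at most $\beta\,\mathbb{E}\|\theta_{i,k}^t-\theta_t\|$ by $\beta$-smoothness (Assumption \ref{A4}).

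Next I would insert $\nabla F(\theta_t)$ to split $\mathbb{E}\|\nabla F_i(\theta_t)\| \le \mathbb{E}\|\nabla F(\theta_t)\| + \mathbb{E}\|\nabla F_i(\theta_t)-\nabla F(\theta_t)\|$, recognizing the last term as bounded by $\sigma_{g,i}$ via Assumption \ref{A5}. At this point the bound reads
$$
\mathbb{E}\|g_i(\theta_{i,k}^t)\| \le \mathbb{E}\|\nabla F(\theta_t)\| + \sigma_{g,i} + \sigma + \beta\,\mathbb{E}\|\theta_{i,k}^t - \theta_t\|.
$$
The remaining step is to substitute the drift estimate from the preceding lemma, namely $\mathbb{E}\|\theta_{i,k}^t-\theta_t\| \le (1+c)^{K-1}\tilde\eta_t(\mathbb{E}\|\nabla F(\theta_t)\| + \sigma_{g,i} + \sigma)$, which yields
$$
\mathbb{E}\|g_i(\theta_{i,k}^t)\| \le \left(1 + (1+c)^{K-1}\beta\tilde\eta_t\right)\left(\mathbb{E}\|\nabla F(\theta_t)\| + \sigma_{g,i} + \sigma\right),
$$
the claimed inequality, with the convention that $\tilde\eta_t = \sum_{k=0}^{K-1}\eta^t$ and $\eta^t \le 1/(\beta K T)$ guarantees $\beta\tilde\eta_t$ is small so the multiplicative factor is controlled.

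There is no genuine obstacle here — this is a routine chaining argument identical in structure to the FedSWA and FedMoSWA analogues already proved in the appendix, and it relies only on the triangle inequality together with Assumptions \ref{A3}, \ref{A4}, \ref{A5} and the drift bound from the immediately preceding FedSAM lemma. If anything, the only mild care needed is bookkeeping: making sure the factor $(1+c)^{K-1}\beta\tilde\eta_t$ is consistent with the notation $\overline{c}$ introduced in the preceding lemma (where $\overline{c} \le 1 + (1+c)^{K-1}\beta\tilde\eta_t$), and noting that since FedSAM uses a constant local learning rate $\eta^t$ rather than the cyclical schedule, $\tilde\eta_t = K\eta^t$ simplifies the expressions slightly compared to the FedSWA case. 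The whole proof is two displays plus a sentence identifying which assumption justifies each inequality.
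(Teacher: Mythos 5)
Your proposal matches the paper's own proof step for step: triangle inequality through $\nabla F_i(\theta_{i,k}^t)$, Assumption 3 for the $\sigma$ term, smoothness to convert the drift into $\beta\,\mathbb{E}\|\theta_{i,k}^t-\theta_t\|$, Assumption 5 for $\sigma_{g,i}$, and then the preceding FedSAM drift lemma to absorb everything into the factor $1+(1+c)^{K-1}\beta\tilde\eta_t$. It is correct and essentially identical to the paper's argument, so nothing further is needed.
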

\textit{Proof.}  we obtain
\begin{align}
	\begin{split}
		\mathbb{E}\left\|g_i\left(\theta_{i, k}^t\right)\right\| & \leq \mathbb{E}\left\|g_i\left(\theta_{i, k}^t\right)-\nabla F_i\left(\theta_{i, k}^t\right)\right\|+\mathbb{E}\left\|\nabla F_i\left(\theta_{i, k}^t\right)\right\| \\
		& \leq \mathbb{E}\left\|\nabla F_i\left(\theta_{i, k}^t\right)\right\|+\sigma \\
		& \leq \mathbb{E}\left\|\nabla F_i\left(\theta_t\right)\right\|+\mathbb{E}\left\|\nabla F_i\left(\theta_{i, k}^t\right)-\nabla F_i\left(\theta_t\right)\right\|+\sigma \\
		& \leq \mathbb{E}\left\|\nabla F\left(\theta_t\right)\right\|+\mathbb{E}\left\|\nabla F_i\left(\theta_t\right)-\nabla F\left(\theta_t\right)\right\|+\beta \mathbb{E}\left\|\theta_{i, k}^t-\theta_t\right\|+\sigma \\
		& \leq\left(1+(1+c)^{K-1} \beta \tilde{\alpha}_i\right)\left(\mathbb{E}\left\|\nabla F\left(\theta_t\right)\right\|+ \sigma_{g, i}+\sigma\right) .
	\end{split}
\end{align}
\begin{theorem} (FedSAM). Suppose Assumptions 1-3 hold and consider FedSAM. Let $k=K, \forall i \in[m]$ and $\eta^t \leq \frac{1}{\beta KT}$. Then,
	$$
	\varepsilon_{\text {gen }}\leq \frac{2L c}{m n} \frac{1}{\beta} e^{1+\frac{1}{ T}}\left(\mathbb{E}\left\|\nabla F\left(\theta_t\right)\right\|+\sigma_g+\sigma\right),
	$$
\end{theorem}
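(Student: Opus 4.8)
The plan is to mirror, almost verbatim, the non-convex generalization analysis already carried out above for \texttt{FedSWA}, since FedSAM differs from it in only two respects: it uses a constant local step-size $\eta^t$ in place of the cyclical schedule $\eta_k^t$, and its local update applies the sharpness-aware gradient $\nabla\ell(\cdot)|_{\theta+\hat\epsilon}$ rather than the plain stochastic gradient. The two preparatory lemmas stated just above for FedSAM — the client-drift bound $\mathbb{E}\|\theta_{i,k}^t-\theta_t\|\le(1+\overline{c})^{K-1}\tilde\eta_t(\mathbb{E}\|\nabla F(\theta_t)\|+\sigma_{g,i}+\sigma)$ and the gradient-norm bound $\mathbb{E}\|g_i(\theta_{i,k}^t)\|\le(1+(1+c)^{K-1}\beta\tilde\eta_t)(\mathbb{E}\|\nabla F(\theta_t)\|+\sigma_{g,i}+\sigma)$ — play exactly the roles of the corresponding \texttt{FedSWA} lemmas, so I would invoke them directly and not reprove them.

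First I would set up the stability coupling: run FedSAM on the neighbouring datasets $\mathcal{S}$ and $\mathcal{S}^{(i)}$ with $\theta_0=\theta_0'$, and for each client condition the $k$-th local step on whether the sampled index hits the single perturbed sample. With probability $1-1/n$ it does not, and the ($\beta$-smooth, non-strongly-convex) expansiveness part of Lemma \ref{L1} gives $\|\theta_{i,k+1}^t-\theta_{i,k+1}^{t,\prime}\|\le(1+\beta\eta^t)\|\theta_{i,k}^t-\theta_{i,k}^{t,\prime}\|$; with probability $1/n$ it does, and after adding and subtracting $g_i(\theta_{i,k}^{t,\prime})$ and using that the perturbed and unperturbed sample gradients are identically distributed, the extra term is at most $\tfrac{2\eta^t}{n}\mathbb{E}\|g_i(\theta_{i,k}^t)\|$, which the gradient-norm lemma bounds by $\tfrac{2\eta^t\overline{c}}{n}(\mathbb{E}\|\nabla F(\theta_t)\|+\sigma_{g,i}+\sigma)$. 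Combining the two cases yields the per-step recursion $\mathbb{E}\|\theta_{i,k+1}^t-\theta_{i,k+1}^{t,\prime}\|\le(1+\beta\eta^t)\mathbb{E}\|\theta_{i,k}^t-\theta_{i,k}^{t,\prime}\|+\tfrac{2\eta^t\overline{c}}{n}(\mathbb{E}\|\nabla F(\theta_t)\|+\sigma_{g,i}+\sigma)$. Then I would unroll over $k=0,\dots,K-1$, bound $\prod_k(1+\beta\eta^t)\le e^{\beta\tilde\eta_t}$ with $\tilde\eta_t=K\eta^t$, average over the $m$ clients (FedSAM aggregates by plain averaging, i.e.\ $\alpha=1$), then unroll over $t=0,\dots,T-1$ using $\theta_0=\theta_0'$. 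The resulting sum $\tfrac{2\overline{c}}{mn}\sum_t\exp(\beta\sum_{l>t}\tilde\eta_t)\,\tilde\eta_t e^{\beta\tilde\eta_t}(\mathbb{E}\|\nabla F(\theta_t)\|+\sigma_g+\sigma)$ is controlled exactly as in the \texttt{FedSWA} case: with $\eta^t\le 1/(\beta KT)$ we get $\tilde\eta_t\le 1/(\beta T)$ and $\sum_t\tilde\eta_t\le 1/\beta$, so $\mathbb{E}\|\theta_T-\theta_T'\|\le\tfrac{2\overline{c}}{mn}\tfrac1\beta e^{1+1/T}(\mathbb{E}\|\nabla F(\theta_t)\|+\sigma_g+\sigma)$, with $\overline{c}$ taking the larger value announced in Table \ref{table 1}. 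Finally $\mathbb{E}\|\nabla F(\theta_t)\|\le L$ (Assumption \ref{A2}) together with the Lipschitz property $\varepsilon_{\mathrm{gen}}\le L\,\mathbb{E}\|\theta_T-\theta_T'\|$ converts this to the claimed bound.

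The main obstacle is making precise how the SAM perturbation affects the expansiveness and gradient-norm estimates. Because $\hat\epsilon(\theta)=\rho\,\nabla\ell(\theta)/\|\nabla\ell(\theta)\|$ depends nonlinearly on $\theta$, the sharpness-aware gradient map is not automatically $(1+\beta\eta^t)$-expansive; one has to bound the discrepancy $\|\hat\epsilon(\theta_{i,k}^t)-\hat\epsilon(\theta_{i,k}^{t,\prime})\|$ between the two coupled trajectories and absorb the additional $\beta$-smoothness contribution it generates, and it is exactly this that inflates the per-round amplification from $(1+\beta\eta^t)$ (as in \texttt{FedSWA}) up to $(1+c)$ and hence $\tilde c$ to $\overline{c}$; the constant (rather than decaying) step-size, which makes $\sum_k\eta_k^t$ strictly smaller for \texttt{FedSWA}, is the second reason $\overline{c}>\tilde c$. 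For MoFedSAM the same template applies, carrying the extra momentum/control term through precisely as in the \texttt{FedMoSWA} non-convex proof, so no new idea beyond bookkeeping is required.
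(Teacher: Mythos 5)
Your proposal follows essentially the same route as the paper's own proof: the two-case stability coupling (perturbed sample hit with probability $1/n$), the preparatory client-drift and gradient-norm lemmas, the per-step recursion $(1+\beta\eta^t)\mathbb{E}\|\theta_{i,k}^t-\theta_{i,k}^{t,\prime}\|+\tfrac{2\eta^t\overline{c}}{n}(\mathbb{E}\|\nabla F(\theta_t)\|+\sigma_{g,i}+\sigma)$, unrolling over $k$ and $t$ with $e^{\beta\tilde\eta_t}$, averaging over clients, and converting to $\varepsilon_{\mathrm{gen}}$ via the $L$-Lipschitz assumption; up to that point your argument matches the paper step for step and is fine.

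The one substantive divergence is your final paragraph. The paper's proof does \emph{not} model the SAM perturbation at all: its ``FedSAM'' lemmas treat the local update as plain SGD with a \emph{constant} step size $\eta^t$, and the inflation $\overline{c}>\tilde{c}$ comes solely from the fact that $\tilde\eta_t=\sum_k\eta^t=K\eta^t$ exceeds the decayed sum $\sum_k\eta_k^t$ used by \texttt{FedSWA} (the factor $(1+c)$ is just the generic bound $(1+\beta\eta^t)\le 1+c$, identical in both analyses), not from any discrepancy $\|\hat\epsilon(\theta_{i,k}^t)-\hat\epsilon(\theta_{i,k}^{t,\prime})\|$. If you instead insist on carrying the sharpness-aware gradient $\nabla\ell(\cdot)|_{\theta+\hat\epsilon(\theta)}$ through the stability recursion, your sketch has a genuine gap: the map $\theta\mapsto\hat\epsilon(\theta)=\rho\,\nabla\ell(\theta)/\|\nabla\ell(\theta)\|$ is not Lipschitz (it is discontinuous where the gradient vanishes), so the claim that its contribution can simply be ``absorbed'' into a $(1+c)$-expansiveness factor does not follow from Assumptions 2--5; one would need either an additive bound of the form $\|\hat\epsilon(\theta)-\hat\epsilon(\theta')\|\le 2\rho$ (yielding an extra additive $\beta\rho\eta^t$ term per step rather than a larger multiplicative factor) or an additional assumption (e.g.\ a lower bound on the gradient norm or Hessian-Lipschitzness). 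As written, your attribution of $\overline{c}$ to the perturbation is unsupported; matching the paper, you should either drop that step or state explicitly that the perturbation is ignored in the stability analysis.
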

\textit{Proof.}
For client $i$, there are two cases to consider. In the first case, SGD selects non-perturbed samples in $\mathcal{S}$ and $\mathcal{S}^{(i)}$, which happens with probability $1-1 / n$. Then, we have
\begin{align}
	\begin{split}
		\left\|\theta_{i, k+1}^t-\theta_{i, k+1}^{t,\prime}\right\| \leq\left(1+\beta \eta_{k}^t\right)\left\|\theta_{i, k}^t-\theta_{i, k}^{t,\prime}\right\| .
	\end{split}
\end{align}
In the second case, SGD encounters the perturbed sample at time step $k$, which happens with probability $1 / n$. Then, we have
\begin{align}
	\begin{split}
		\left\|\theta_{i, k+1}^t-\theta_{i, k+1}^{t,\prime}\right\| & =\left\|\theta_{i, k}^t-\theta_{i, k}^{t,\prime}-\eta^t\left(g_i\left(\theta_{i, k}^t\right)-g_i^{\prime}\left(\theta_{i, k}^{t,\prime}\right)\right)\right\| \\
		& \leq\left\|\theta_{i, k}^t-\theta_{i, k}^{t,\prime}-\eta^t\left(g_i\left(\theta_{i, k}^t\right)-g_i\left(\theta_{i, k}^{t,\prime}\right)\right)\right\|+\eta^t\left\|g_i\left(\theta_{i, k}^{t,\prime}\right)-g_i^{\prime}\left(\theta_{i, k}^{t,\prime}\right)\right\| \\
		& \leq\left(1+\beta \eta_{k}^t\right)\left\|\theta_{i, k}^t-\theta_{i, k}^{t,\prime}\right\|+\eta^t\left\|g_i\left(\theta_{i, k}^{t,\prime}\right)-g_i^{\prime}\left(\theta_{i, k}^{t,\prime}\right)\right\| .
	\end{split}
\end{align}
Combining these two cases for client $i$ we have
\begin{align}
	\begin{split}
		\mathbb{E}\left\|\theta_{i, k+1}^t-\theta_{i, k+1}^{t,\prime}\right\| \leq & \left(1+\beta \eta^t\right) \mathbb{E}\left\|\theta_{i, k}^t-\theta_{i, k}^{t,\prime}\right\|+\frac{\eta^t}{n} \mathbb{E}\left\|g_i\left(\theta_{i, k}^{t,\prime}\right)-g_i^{\prime}\left(\theta_{i, k}^{t,\prime}\right)\right\| \\
		\leq & \left(1+\beta \eta_{k}^t\right) \mathbb{E}\left\|\theta_{i, k}^t-\theta_{i, k}^{t,\prime}\right\|+\frac{\eta_{k}^t}{n} \mathbb{E}\left\|g_i\left(\theta_{i, k}^t\right)\right\| \\
		\leq & \left(1+\beta \eta^t\right) \mathbb{E}\left\|\theta_{i, k}^t-\theta_{i, k}^{t,\prime}\right\|+\frac{2 \eta_{k}^t}{n}\left(1+(1+c)^{K-1} \beta \tilde{\eta}_{ t}\right)(\sigma +\mathbb{E}\left\|\nabla F\left(\theta_t\right)\right\|+ \sigma_{g, i}) \\
		\leq & \left(1+\beta \eta^t\right) \mathbb{E}\left\|\theta_{i, k}^t-\theta_{i, k}^{t,\prime}\right\|+\frac{2 \eta^t c}{n}\left(\mathbb{E}\left\|\nabla F\left(\theta_t\right)\right\|+ \sigma_{g, i}+\sigma\right)
	\end{split}
\end{align}
We let $\overline{\mathrm{c}}$ be an upper bound of $1+(1+c)^{K-1} \beta \tilde{\eta}_{ t}$ since $\tilde{\eta}_{ t}$ is bounded above. Then unrolling it gives
\begin{align}
	\begin{split}
		& \mathbb{E}\left\|\theta_{i, K}^t-\theta_{i, K}^{t,\prime}\right\| \leq \prod_{k=0}^{K-1}\left(1+\beta \eta_{k}^t\right) \mathbb{E}\left\|\theta_t-\theta_t^{\prime}\right\|+\left(\frac{2}{n} \sum_{k=0}^{K-1} \eta_{k}^t c\prod_{l=k+1}^{K-1}\left(1+\beta \eta_{l}^t\right)\left(\mathbb{E}\left\|\nabla F\left(\theta_t\right)\right\|+ \sigma_{g, i}+\sigma\right)\right) \\
		& \leq e^{\beta \tilde{\eta}_{ t}} \mathbb{E}\left\|\theta_t-\theta_t^{\prime}\right\|+\frac{2}{n} c \tilde{\eta}_{ t} e^{\beta \tilde{\eta}_{ t}}\left(\mathbb{E}\left\|\nabla F\left(\theta_t\right)\right\|+ \sigma_{g, i}+\sigma\right) . \\
		&
	\end{split}
\end{align}
We have
\begin{align}
	\begin{split}
		& \mathbb{E}\left\|\theta_{t+1}-\theta_{t+1}^{\prime}\right\| \leq \sum_{i=1}^m \frac{1}{m} \mathbb{E}\left\|\theta_{i, K}^t-\theta_{i, K}^{t,\prime}\right\| \\
		& \leq e^{\beta \tilde{\eta}_{ t}} \mathbb{E}\left\|\theta_t-\theta_t^{\prime}\right\|+\frac{2}{mn} c \tilde{\eta}_{ t} e^{\beta \tilde{\eta}_{ t}}\left(\mathbb{E}\left\|\nabla F\left(\theta_t\right)\right\|+ \sigma_{g}+\sigma\right) \\
		&
	\end{split}
\end{align}
Further, unrolling the above over $t$ and noting $\theta_0=\theta_0^{\prime}$, we obtain
\begin{align}
	\begin{split}
		\mathbb{E}\left\|\theta_T-\theta_T^{\prime}\right\| \leq \frac{2 \tilde{c}}{mn} \sum_{t=0}^{T-1} \exp \left(\beta \sum_{l=t+1}^{T-1} \tilde{\eta}_{ t}\right) \tilde{\eta}_{ t} e^{\beta \tilde{\eta}_{ t}}\left(\mathbb{E}\left\|\nabla F\left(\theta_t\right)\right\|+ \sigma_{g}+\sigma\right) ,
	\end{split}
\end{align}
\begin{align}
	\begin{split}
		\mathbb{E}\left\|\theta_T-\theta_T^{\prime}\right\| \leq \frac{2 \overline{c}}{m n} \frac{1}{\beta} e^{\frac{1}{T}+1}\left(\mathbb{E}\left\|\nabla F\left(\theta_t\right)\right\|+\sigma_g+\sigma\right),
	\end{split}
\end{align}

With the Assumption 2,
$$
\varepsilon_{\text {gen }}\leq \frac{2L \overline{c}}{m n} \frac{1}{\beta} e^{1+\frac{1}{ T}}\left(\mathbb{E}\left\|\nabla F\left(\theta_t\right)\right\|+\sigma_g+\sigma\right),
$$
$$
\begin{aligned}
	& \overline{\mathbf{c}} \leq 1+(1+c)^{K-1} \beta \tilde{\eta}_t \leq 1+(1+c)^{K-1} \beta \sum_{k=0}^{K-1} \eta^t \\
	& \tilde{\mathbf{c}} \leq 1+(1+c)^{K-1} \beta \tilde{\eta}_t \leq 1+(1+c)^{K-1} \beta \sum_{k=0}^{K-1} \eta_k^t \\
	& \tilde{\mathbf{c}} \leq \overline{\mathbf{c}}
\end{aligned}
$$
When the diminishing stepsizes are chosen in the statement of the theorem, we conclude the proof. The proof on MoFedSAM can be derived simply from FedSAM.

\end{document}